\documentclass{article} \usepackage{iclr2027_conference,times}

\usepackage{hyperref}
\usepackage{url}

\usepackage{mymacro}
\usepackage{mycommand}
\usepackage{enumitem} 

\allowdisplaybreaks[3]

\title{Reinforcement Learning for Hierarchical Reasoning Rewards: \\ Minimax-Optimal Rates with Transformers}

\author{Naoki Nishikawa \\
The University of Tokyo, RIKEN AIP \\
\texttt{nishikawa-naoki259@g.ecc.u-tokyo.ac.jp}
\And
Taiji Suzuki \\
The University of Tokyo, RIKEN AIP \\
\texttt{taiji@mist.i.u-tokyo.ac.jp}
}

\iclrfinalcopy \begin{document}

\maketitle
\lhead{Preprint.} 
\begin{abstract}
    Reinforcement learning (RL) has become a standard tool for post-training language models on reasoning tasks,
    where the policy is updated by reward feedback while exploring the space of responses.
    Despite its empirical success, theoretical understanding of RL post-training remains limited,
    in particular of why on-policy exploration combined with a neural reward model is effective.
    In this paper, we address this question by modeling the reward as a hierarchical function on the response space:
    the reward consists of infinitely many local components, each of which becomes relevant only after the preceding ones have been resolved.
    We show that a natural Transformer-based actor--critic algorithm,
    which alternates between sampling from the current KL-regularized policy, fitting a Transformer critic to the observed rewards, and updating the policy,
    achieves the minimax optimal rates in the query budget and in the regularization strength up to logarithmic factors, and is minimax optimal for a fixed number of prompts.
    In contrast, we prove that sampling from the fixed reference distribution, as in offline reward modeling,
    can limit regret decay to a logarithmic rate.
    These results show that on-policy exploration progressively zooms in on the region where the reward is concentrated,
    and quantify its benefit for RL post-training.
\end{abstract}

\section{Introduction}
\label{sec:intro}
Reinforcement learning (RL) has emerged as a central ingredient in post-training large language models (LLMs) for complex reasoning. Recent systems have demonstrated substantial gains on mathematical reasoning, coding, and other verifiable tasks by optimizing model-generated responses against outcome-based rewards \citep{shao2024deepseekmath,guo2025deepseek,kimi2025k15}. In contrast to supervised fine-tuning from a fixed collection of demonstrations, RL post-training repeatedly generates responses from the current policy, scores them with a reward model or verifier, and updates the policy with the resulting feedback. This creates a feedback loop: as the model improves, the distribution of responses from which subsequent training data are collected changes as well.

Despite the empirical success of this paradigm, the statistical role of such adaptive data collection is not yet fully understood. Recent theoretical work has identified several benefits of exploration in LLM reasoning and alignment, through the dynamics of reasoning, the generalization of the learned policy, or the implicit exploration of a KL-regularized policy~\citep{foster2025good,kim2025metastable,wang2026planning,wang2026greedy}, and suggests that \emph{where} the training responses come from can be as important as the learning rule applied to them. Yet a basic statistical question remains open: \emph{when and why does adapting the response distribution improve the rate of learning}, in particular for the reasoning tasks that have driven recent progress in RL post-training, in which a correct solution must clear several difficult steps in succession?

To study this question, we formulate LLM post-training as KL-regularized reward maximization over a response space and introduce a hierarchical model of reasoning rewards. The motivation is that complex reasoning rarely reveals all of its difficulties at once: only after one hard step has been cleared does it become apparent that another, again nontrivial, problem must be solved next. In a mathematical proof, for example, one may first need to find the right theorem to invoke; only once this is settled does it become clear which transformation is needed to apply it, and only then do the finer algebraic and logical decisions come into view. We model this by a hierarchy of smooth reward components supported on nested regions of the response space, with progressively more localized reward structure appearing closer to optimal responses.

This hierarchical structure creates a sharp statistical distinction between fixed and adaptive response collection. Under the uniform reference distribution, the nested regions carrying deeper reward components occupy exponentially small fractions of the response space, and observations outside a region carry no information about the component inside it: one cannot tell that a deeper step is needed before getting that far. Learning a deep component is therefore a rare-event problem. On-policy RL removes this barrier: once a coarse reward component has been learned, the policy shifts its mass toward the corresponding promising region, so that the next batch is collected precisely where the next, finer component becomes visible. Repeating this step progressively localizes the data distribution around high-reward responses.

\paragraph{Contributions.}
We make this intuition precise through a minimax analysis. Let \(n\) denote the total number of reward observations, \(M\) the number of prompts, \(\rho\ge0\) the strength of the KL regularization, \(\gamma>1\) the rate at which deeper reward components decay, and \(\alpha\) the smoothness of the components in a \(d\)-dimensional response embedding space. Our contributions are threefold.
\begin{enumerate}[leftmargin=*]
\item \textbf{Hierarchical reward model and minimax characterization.}
We introduce the nested hierarchical reward model and characterize the minimax regret over all (adaptive) algorithms. For fixed $M$, when the regularization is absent or weak, the minimax rate is
$\widetilde{\Theta}\qty(n^{-(\gamma-1)/(2\gamma+1)})$:
the difficulty is how deep in the hierarchy the learner can explore. In the large-budget regime, the rate is
$\widetilde{\Theta}\qty(\rho\,(n/n_0)^{-2\alpha/(2\alpha+d)})$ with $n_0:=M\rho^{-(2+1/\gamma)}$:
the difficulty is the local nonparametric estimation of the reward near the optimum. The upper bounds match the lower bounds up to logarithmic factors and a factor polynomial in $M$, and up to logarithmic factors for fixed $M$.
\item \textbf{A minimax-optimal Transformer actor--critic.}
We give an on-policy actor--critic whose critic is a causal Transformer, refitted at every iteration to fresh on-policy observations, and whose actor is the resulting Gibbs policy. Such a procedure, motivated by standard RL post-training, attains both rates above up to logarithmic factors and is thus minimax optimal for fixed $M$.
\item \textbf{A statistical separation from fixed response collection.}
We establish lower bounds for algorithms whose responses are drawn from the fixed reference distribution $\mu$, as in offline reward modeling, while the estimator and the output policy are unrestricted. Because deeper reward components are exponentially rare under fixed sampling, the regret can decay only logarithmically in $n/M$ when the regularization is weak. The benefit of on-policy RL is therefore statistical, not merely an optimization effect of the policy update.
\end{enumerate}

\paragraph{Related work.}
A first line of work adapts the distribution of tasks or prompts during training: easy-to-hard schedules \citep{li2023understanding,parashar2026curriculum}, adaptive prompt selection from the current performance \citep{rajaraman2026autocurriculum}, and depth- or hint-based curricula \citep{bu2026provable}.
The last makes RL post-training with outcome-only rewards polynomial instead of exponential on a discrete reasoning tree.
There the adaptivity is over prompts or tasks and the intermediate signal is supplied by the curriculum; here the prompt distribution is fixed and the learner adapts only its response distribution, which is possible because the reward is graded rather than outcome-only.

Technically, the closest algorithms are those for continuum-armed bandits, which partition the action space hierarchically and refine it around the maximizer \citep{kleinberg2008multi,bubeck2011x,munos2011optimistic,valko2013stochastic}.
We depart from that line in three ways, all motivated by LLM reasoning: the reward is hierarchical, with a new difficulty appearing at every level; the target is a diverse policy, the Gibbs policy at temperature $\rho>0$, rather than the maximizer alone; and the optimal rates are attained by fitting an LLM-compatible Transformer critic and updating a Gibbs actor, rather than by optimistic tree search.
\citet{nishikawa2026inference} likewise alternate between updating a policy and refitting a neural reward estimator, but for a non-hierarchical reward with a fully connected estimator that must be smoothed before it drives the policy, and their rate is not matched by a lower bound.
\Cref{app:related} discusses these two lines in more detail, together with search over reasoning trajectories, on-policy exploration under KL regularization, and neural function approximation in estimation and RL.

\begin{figure*}[t]
  \centering
  \includegraphics[width=0.96\linewidth]{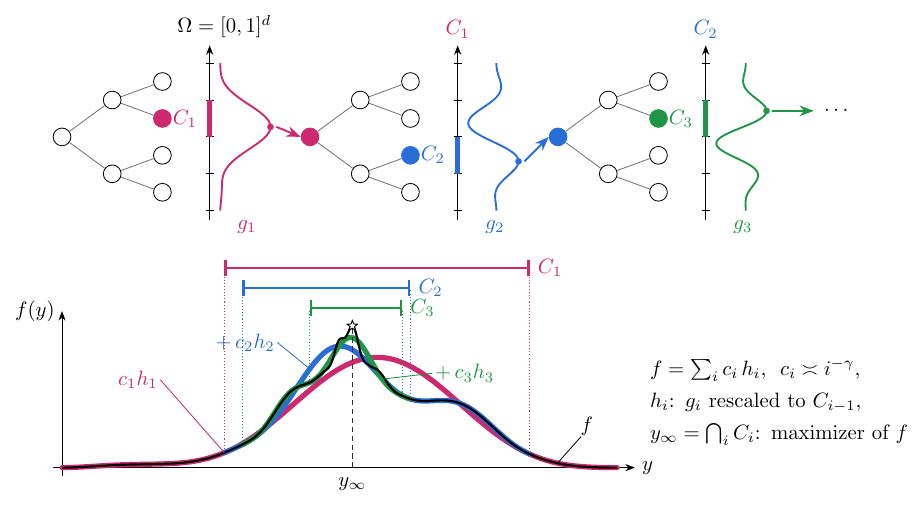}
  \vspace{-2mm}
  \caption{
    A hierarchically nested reward (\cref{def:nested}), for $d=1$ and zoom depths $\ell_i=2$.
    (Top) A response is a path in a tree with $2^d$ children per node, and a prefix of depth $h$ is a dyadic cube of that depth.
    The primitive $g_1$ lives on $\Omega$; inside the cell $C_1$ around its maximizer, $g_2$ is added, rescaled to $C_1$, and so on.
    (Bottom) The resulting reward $f=\sum_ic_ih_i$: bumps of decreasing weight $c_i\asymp i^{-\gamma}$ and decreasing width accumulate around the maximizer $\ttrue=\bigcap_iC_i$.
  }
  \label{fig:concept}
\end{figure*}

\section{Problem Setting}
\label{sec:setting}

We first introduce the class of hierarchically nested rewards.
We then formulate RL post-training as KL-regularized maximization of such a reward.
Finally, we describe how it can be queried.

\subsection{Hierarchically Nested Rewards}
\label{subsec:reward}

We consider multi-step reasoning tasks with a hierarchical reward: at first only the coarse structure of the problem is visible, and solving one step reveals a further problem nested inside it.
We model this structure by functions on the response embedding space $\Omega=[0,1]^d$ in which a new term is added each time the argument enters a small region around the maximizer of the previous term; how a response of a language model is mapped to a point of $\Omega$ is explained in \cref{subsec:regret}.

\paragraph{Hierarchically nested functions.}
Let $\mu$ be the uniform distribution on $\Omega$, $\mu(S)$ being the Lebesgue measure of $S\subset\bR^d$ and $\ind{S}$ its indicator.
A \emph{dyadic cube} of depth $h\ge0$ is a cube of side $2^{-h}$ of the form $\prod_{j\le d}[k_j2^{-h},(k_j+1)2^{-h})$ with integers $0\le k_j<2^h$, and $D_h(y)$ is the depth-$h$ dyadic cube containing $y$, so that $\Omega=D_0(y)\supset D_1(y)\supset\cdots$.
The idea is that a first function $g_1$ is defined on the whole cube; once the argument enters the small dyadic cube around the maximizer of $g_1$, a second function $g_2$ is added on that cube (rescaled to $\Omega$), and so on.
\begin{definition}[Hierarchically nested function class]\label{def:nested}
    Let $\scG=(g_i)_{i\ge1}$ be a sequence of uniformly bounded functions $g_i:\Omega\to\bR$, let $c=(c_i)_{i\ge1}$ be positive numbers with $\sum_ic_i<\infty$, and let $\ell=(\ell_i)_{i\ge1}$ be positive integers (\emph{zoom depths}); write $L_i=\sum_{a\le i}\ell_a$, $L_0=0$.
    For each $i$, fix a maximizer $z^\star_i\in\argmax_{u\in\Omega}g_i(u)$ (ties broken by a fixed rule) and let $Q_i$ be the depth-$\ell_i$ dyadic cube containing $z^\star_i$.
    Define the \emph{active cells} recursively by $C_0=\Omega$ and $C_i=b_{i-1}+2^{-L_{i-1}}Q_i$, of depth $L_i$, where $b_{i-1}$ is the lower-left corner of $C_{i-1}$, so that $C_{i-1}=b_{i-1}+2^{-L_{i-1}}\Omega$.
    The \emph{hierarchically nested function} generated by $(\scG,c,\ell)$ is
    \begin{align}
        f(y)=\sum_{i=1}^\infty c_i\,h_i(y),
        \qquad
        h_i(y)=g^0_i\bigl(2^{L_{i-1}}(y-b_{i-1})\bigr),
        \label{eq:reward}
    \end{align}
    where $g^0_i$ is the extension of $g_i$ by zero to $\bR^d$, so that $h_i$ is a rescaled copy of $g_i$ on $C_{i-1}$ and vanishes outside.
    We write $\scH(\scG,c,\ell)$ for $f$, and $\scH$ for the class of all such functions.
\end{definition}
The cells are nested, $\Omega=C_0\supset C_1\supset\cdots$, and the $i$-th level $c_ih_i$ is visible only inside $C_{i-1}$: solving level $i$ (entering $C_i$) is a prerequisite for collecting the rewards of the deeper levels.
\Cref{fig:concept} illustrates the construction.

We make the following assumption; see \cref{app:notation} for $\alpha$-H\"older smoothness.
\begin{assumption}\label{assmp:main}
    Let $\alpha>0$, $\lambda>0$, $\cmin\in(0,1)$, $\gamma>1$ and integers $1\le\lstar\le\lbar<\infty$ be known constants, and let $c_i=i^{-\gamma}/\zeta(\gamma)$ with $\zeta(\gamma)=\sum_{i\ge1}i^{-\gamma}$.
    The function $f=\scH(\scG,c,\ell)$ has unknown primitives $\scG=(g_i)_{i\ge1}$ and unknown zoom depths $\ell=(\ell_i)_{i\ge1}$ that satisfy the following conditions:
    \assmp{A1} (Smoothness and boundedness) $g_1\in[0,1]$, $g_i\in[-1,1]$, and $g_i^0$ is $\alpha$-H\"older smooth with constant $\lambda$ for all $i$.
    \assmp{A2} (Positive maximum) $g_i(z^\star_i)\ge\cmin$ for all $i$.
    \assmp{A3} (Deep enough zoom) $\ell_i\in[\lstar,\lbar]$ for all $i$, and $3^\gamma2^{-\beta\lstar}\le\min\{1/2,\cmin/(8\lambda)\}$.
    \assmp{A4} (Zero mean) $\int_\Omega g_i\,d\mu=0$ ($i\ge2$).
\end{assumption}
\vspace{-1mm}
We collect the \emph{structural parameters} as $\theta=(d,\alpha,\lambda,\cmin,\gamma,\lstar,\lbar)$, and write $T_k=\sum_{i>k}c_i\asymp k^{-(\gamma-1)}$ for the weight of the levels deeper than $k$.
Constants such as $C,c$ and those implicit in $a\lesssim b$, $a\asymp b$ and $\widetilde O(\cdot)$ depend only on $\theta$ and the noise level $\sigma$ of \cref{subsec:oracle}.

\paragraph{Remark.}
\assmp{A1} is a standard smoothness condition.
Requiring it of the zero extension $g^0_i$ makes a primitive vanish continuously at the boundary of its cell, so that $f$ is continuous although it has infinitely many levels.
\assmp{A2} and \assmp{A3} guarantee a genuine ``correct continuation'' at every level: $C_i$ is so small that the variation of the previous levels inside it is dominated by the gain $\cmin c_i$ of solving level $i$, uniformly in $i$.
\assmp{A4} makes the deeper levels cancel when averaged over a cell containing them, so that the average of $f$ over a region (\cref{subsec:oracle}) reveals only the levels that this region has resolved; by analogy with mathematical research, \emph{the deeper one digs into a problem, the deeper the results that come into view}.
Under \cref{assmp:main}, $f\in[-1,1]$ and its unique maximizer is the common point $\ttrue$ of the nested cells (\cref{lem:geometry}).

\subsection{RL Post-Training as KL-Regularized Reward Maximization}
\label{subsec:regret}

Let $\scX$ be the set of prompts and $\scY$ the set of responses.
A \emph{policy} $P$ specifies, for every prompt $x$, a conditional distribution $P(\cdot\mid x)$ over responses; a language model is a policy, and generating a response means sampling from it.
The quality of a response is measured by an unknown \emph{reward function} $f:\scX\times\scY\to\bR$.
Given a reference policy $\pi_{\rm ref}$, typically the model before post-training, and a regularization strength $\rho\in[0,1]$, post-training maximizes the KL-regularized objective
\begin{align}
    J_\rho(P;x)=\bE_{y\sim P(\cdot\mid x)}[f(x,y)]-\rho\,\KL{P(\cdot\mid x)}{\pi_{\rm ref}(\cdot\mid x)}
    \label{eq:objective}
\end{align}
over prompts $x\sim\nu$, where $\KL\cdot\cdot$ is the Kullback--Leibler divergence; this is the objective of KL-regularized post-training such as PPO and GRPO~\citep{ziegler2019fine,ouyang2022training,schulman2017proximal,shao2024deepseekmath}.
The performance of a policy is its \emph{regret}
\begin{align}
    J^\ast_\rho(x):=\sup\nolimits_PJ_\rho(P;x),\qquad
    \Reg_\rho(P)=\bE_{x\sim\nu}\bigl[J^\ast_\rho(x)-J_\rho(P;x)\bigr].
    \label{eq:regret}
\end{align}
For $\rho=0$ we have $J^\ast_0(x)=\sup_yf(x,y)$, so $\Reg_0$ is the unregularized regret.
The learner does not know $f$ and can only query a noisy reward oracle a limited number of times (\cref{subsec:oracle}), which is the dominant cost when the reward comes from a verifier or a human.

For simplicity, we take the prompts to be $M$ labels, $\scX=[M]$, drawn uniformly, $\nu=\unif([M])$, and the responses to be infinite binary sequences, $\scY=\{0,1\}^{\{0,1,2,\ldots\}}$, generated one bit at a time; the reference policy $\pi_{\rm ref}$ generates the bits uniformly at random.
Prompts are exogenous: at every round of interaction, a fresh prompt is drawn from $\nu$ independently of the past, and the learner cannot choose or reject prompts.
The reward functions of different prompts are unrelated, and we write $f(y;m)$ for $f(m,y)$.

\paragraph{Embedding of responses.}
We embed responses into $\Omega=[0,1]^d$ so that the responses sharing a given prefix form a dyadic cube, and a longer prefix a smaller cube inside it.
Concretely, we interleave the bits across the $d$ coordinates in a round-robin fashion: $\phi:\scY\to\Omega$ with $\phi_j(y)=\sum_{k\ge0}y_{kd+j-1}2^{-(k+1)}$ for $j\in[d]$.
A prefix of length $dh$ then corresponds to the dyadic cube $D_h(\phi(y))$, and generating bits uniformly corresponds to $\phi(y)\sim\mu$.
We identify a response with $\phi(y)$ and a prefix with a cube, so that a policy is a distribution over $\Omega$.
The embedding can be read as part of the language model itself: it places responses that share a long prefix close to each other, so that the reward, as a function on $\Omega$, records how the quality of a response depends on its reasoning prefix.
Any embedding whose regions form a family of dyadic cubes would do; the results do not depend on the particular interleaving.

\paragraph{Hierarchically nested rewards of prompts.}
For every prompt $m\in[M]$ the reward is a hierarchically nested function $f(\cdot;m)=\scH(\scG_m,c,\ell)$ satisfying \cref{assmp:main}, with unknown primitives $\scG_m=(g_{i,m})_{i\ge1}$; the weights $c$ and the zoom depths $\ell$ are shared, and we write $C_i(m)$, $h_i(\cdot;m)$, $\ttrue(m)$ for the cells, levels and maximizer of prompt $m$, and $\scF_{M,\theta}$ for the resulting class of rewards.
Each level is one difficulty of the task: its contribution $c_i\asymp i^{-\gamma}$ decays with the depth at which it appears and is collected only once the earlier ones have been resolved.
The reward is thus a function of the whole response rather than of a final answer, as in post-training pipelines where a verifier or a generative reward model scores the entire trajectory~\citep{shao2025deepseekmathv2,deepseek2026v4}.

\subsection{Reward Oracle and Learning Protocol}
\label{subsec:oracle}

The learner interacts with the environment for $n$ rounds, each of which yields exactly one observation.
At round $t$ a prompt $X_t\sim\nu$ arrives, the learner generates a possibly partial response and queries the reward of \emph{one} prefix, i.e.\ one dyadic cube $D_t$ containing the response.
The oracle returns the average reward over that cube plus noise,
\begin{align}
    Z_t=Q(X_t,D_t)+\xi_t,\qquad
    Q(m,D):=\bE_{y\sim\mu}\qty[f(y;m)\relmiddle| y\in D],
    \label{eq:oracle}
\end{align}
where, conditionally on the history and on $(X_t,D_t)$, the noise is mean zero and $\sigma$-sub-Gaussian, with $\sigma$ (or an upper bound on it) known to the learner.

Equivalently, $Q(m,D)$ is the expected terminal reward of a response with prefix $D$ whose remaining bits are generated by the reference policy, the randomness of the completion being absorbed into the noise.
This value can be estimated by Monte Carlo completions, as in methods that derive process supervision from terminal outcomes~\citep{wang2024mathshepherd,setlur2024rewarding}.
By \assmp{A3} and \assmp{A4}, a prefix of depth $h$ resolves at most the first $\max\{1,\lceil h/\lstar\rceil\}$ levels, and deeper levels do not contribute to $Q(m,D)$.

After the $n$ rounds the learner outputs a policy $\Phat(\cdot\mid m)$ for every prompt.
It knows $\theta$ and $\sigma$, but neither the primitives, nor the zoom depths, nor the cells, nor the maximizers.

\section{Transformer Actor--Critic Attains the Minimax Rates}
\label{sec:upper}

In this section, we present an actor--critic algorithm in which the critic is a Transformer trained by empirical risk minimization on the full class and the actor is a KL-regularized (Gibbs) policy, and prove its regret bound.

\subsection{Transformer Critic}
\label{subsec:transformer}

The critic takes a prompt and a prefix and outputs a scalar.
Its input is a token sequence $X=(x_1,\ldots,x_T)$ over the vocabulary $\scV=[M]\cup\{0,1,\STOP,\PAD,\RO\}$: the prompt token $m$, the $dh$ bits of the prefix, a $\STOP$ token, $\PAD$ tokens, and a readout token $\RO$ at the last position, the $\PAD$ tokens bringing every input to the same length $T$.
Tokens are represented by one-hot vectors, $X\in\bR^{\abs{\scV}\times T}$.
Following \citet{takakura2023approximation}, the Transformer architecture has three components: (i) a position-wise FNN layer, (ii) a self-attention layer, and (iii) an embedding layer.

\textbf{(i)} An FNN of depth $L$ and width $W$ is $f(x):=(A_L\eta(\cdot)+b_L)\circ\cdots\circ(A_1x+b_1)$ with $A_i\in\bR^{d_{i+1}\times d_i}$, $b_i\in\bR^{d_{i+1}}$, $\max_id_i\le W$ and the ReLU activation $\eta(x)=\max\{x,0\}$ applied element-wise; $\Psi(L,W,S,B)$ is the class of such FNNs with $\max_i\{\norm{A_i}_\infty,\norm{b_i}_\infty\}\le B$ and $\sum_{i\le L}\norm{A_i}_0+\norm{b_i}_0\le S$.
\textbf{(ii)} A causal self-attention layer with embedding dimension $D$ and parameters $K,Q,V\in\bR^{D\times D}$ maps $X\in\bR^{D\times T}$ to $g(X)_i:=x_i+VX_{1:i}\Softmax((KX_{1:i})^\top(Qx_i))$, $i\in[T]$, where $X_{1:i}=(x_1,\ldots,x_i)$ and $\Softmax(z)=[e^{z_j}/\sum_{j'}e^{z_{j'}}]_j$, so that each position attends only to itself and the preceding ones; $\scA(D,B)$ denotes the class of such layers with $\max\{\norm{K}_\infty,\norm{Q}_\infty,\norm{V}_\infty\}\le B$.
\textbf{(iii)} The embedding layer is $\enc_P(X)=EX+P$ with a learnable $E\in\bR^{D\times\abs{\scV}}$ and the fixed sinusoidal positional encoding $P=[p_t]_{t=1}^T\in\bR^{D\times T}$, $p_{t,2r}=\sin(t/10000^{2r/D})$, $p_{t,2r+1}=\cos(t/10000^{2r/D})$.

We define the class of Transformers $\Tcls=\Tcls(\Nblk,D,L,W,S,B)$ by
\begin{align*}
    \Tcls
    :=\qty{
        \clip\circ f_{\Nblk}\circ g_{\Nblk}\circ\cdots\circ f_1\circ g_1\circ\enc_P
        \relmiddle|
        \begin{array}{@{}l@{}}
            \norm{E}_\infty\le B,\ g_b\in\scA(D,B),\\[1.5mm] f_b\in\Psi(L,W,S,B)\ \ (b\in[\Nblk])
        \end{array}
    },
\end{align*}
where $\Nblk$ is the number of blocks (the depth of the Transformer; a block consists of a self-attention layer $g_b$ followed by an FNN layer $f_b$), the FNNs are applied column-wise, the last FNN $f_{\Nblk}$ has a one-dimensional output, and $\clip=\clip_{[-1,1]}$ is applied element-wise.
A network $F\in\Tcls$ thus maps an input sequence to a sequence of scalars in $[-1,1]$, and the critic's prediction is the entry at the readout position, $F(X)_T$; we identify $F$ with this scalar function.
The class is that of \citet{takakura2023approximation}, with causal attention in place of sliding-window attention, a single head, and the output read at the last position.

\subsection{Algorithm}
\label{subsec:algorithm}

\begin{algorithm}[t]
    \caption{Transformer actor--critic with prefix-level exploration}
    \label{alg:actor-critic}
    \begin{algorithmic}[1]
        \Statex {\bfseries Input:} budget $n$, prompts $M$, regularization $\rho$, and the schedule of \cref{subsec:algorithm}.
        \State Initialize the score $W_0\equiv0$.
        \For{$h=0,1,\ldots,\Hall-1$}
            \State \textbf{Actor:} $P_h(dy\mid m)\propto\exp\qty(W_h(m,y)/\vartheta_h)\,\mu(dy)$,
            \Statex\qquad i.e., $P_h(\cdot\mid m)=\argmax_P\{\bE_{y\sim P}[W_h(m,y)]-\vartheta_h\KL{P}{\mu}\}$. \label{ln:actor}
            \State \textbf{Collect:} for $t=1,\ldots,b_h$: receive $X_t\sim\nu$; generate $Y_t\sim P_h(\cdot\mid X_t)$ prefix by prefix, stopping at the prefix $D_t=\Query_h(X_t,Y_t)$; observe $Z_t=Q(X_t,D_t)+\xi_t$. \label{ln:collect}
            \State \textbf{Critic:} $\That_h\in\argmin_{T\in\Tcls_h}\frac1{b_h}\sum_{t=1}^{b_h}\qty[T(X_t,D_t)-Z_t]^2$. \label{ln:erm}
            \State \textbf{Update:} $W_{h+1}=W_h+I_h\cdot\clip_{[-\Delta_h,\Delta_h]}\bigl(\That_h(\cdot,\Query_h(\cdot))-W_h\bigr)$ pointwise in $(m,y)$, \label{ln:score}
            \Statex\qquad with the mask $I_h(m,y)\in\{0,1\}$ and the clipping width $\Delta_h(m,y)$.
        \EndFor
        \State \textbf{return:} $\Phat(dy\mid m)\propto\exp\qty(W_{\Hall}(m,y)/\vartheta_{\Hall})\,\mu(dy)$.
    \end{algorithmic}
\end{algorithm}

\Cref{alg:actor-critic} is an on-policy actor--critic of the kind used in RL post-training.
It maintains a \emph{score} $W_h(m,y)$, the current estimate of the reward, and an \emph{actor} $P_h$, the Gibbs policy of that score at a temperature $\vartheta_h$; since the score is piecewise constant on dyadic cubes, the actor is sampled autoregressively, the probability of the next $d$ bits being the ratio of the Gibbs masses of the children.
Each iteration collects a fresh batch of on-policy responses and queries their rewards, fits the critic to them, and updates the actor.
Every decision about a sample is a function of its own prefix, the shared critics and the shared rules.
What distinguishes the algorithm from a vanilla actor--critic is the choice of the queried prefix.

\paragraph{Depth curriculum and adaptive resolution.}
Level $i$ of the reward is visible only inside $C_{i-1}(m)$, so the learner has to zoom in along the correct path, while deep bits of a response that is already wrong at a shallow level carry no information \assmp{A4}.
\Cref{alg:actor-critic} therefore uses a \emph{depth curriculum}: at iteration $h$ the actor's confidence is judged at depth $h$, which increases by one per iteration, and a response that is confident down to that depth is queried $\ell_{\rm look}$ depths deeper, the \emph{lookahead}.
Training on reasoning of increasing length or difficulty is standard practice in RL post-training~\citep{luo2025deepscaler,kimi2025k15} and an instance of curriculum learning~\citep{bengio2009curriculum}.
The critic is a \emph{process} reward model, trained on prefixes of all depths~\citep{lightman2023let,wang2024mathshepherd}, which allows a response to be stopped early and queried at a coarse prefix whenever the actor is not confident about it (\emph{adaptive resolution}).
The query rule is
\begin{align}
    \Query_h(m,y)=
    \begin{cases}
        D_k(y)&\text{at the first depth }1\le k\le h\text{ with }P_h(D_k(y)\mid m)<\kappa_0,\\
        D_{h+\ell_{\rm look}}(y)&\text{if there is no such depth},
    \end{cases}
    \label{eq:query-rule}
\end{align}
with a constant threshold $\kappa_0>0$: generation stops at the first prefix whose actor mass falls below $\kappa_0$ and that prefix is queried; if none does, the response is extended and queried $\ell_{\rm look}$ levels deeper.
The rule uses only the probability that the current actor assigns to a prefix, which an autoregressive model already computes while generating it.
The lookahead is $\ell_{\rm look}=1$ in the small-regularization setting, where the iterations simply continue as long as the budget allows and the output is the last actor, and $\ell_{\rm look}=R+1$ with $R=O(\log\Lf)$ and $\Lf:=\log(e+n+M+\rho^{-1})$ in the large-budget setting, where they are followed by the step described next.

\paragraph{Training the critic.}
The critic $\That_h$ is the least-squares fit over the entire class $\Tcls_h$ to all records of the batch, including those of the responses that stopped early; all parameters are fitted, and no quantity is computed or memorized per cube.
Its risk is controlled by the approximation power and the covering number of $\Tcls_h$, as in the nonparametric analysis of least squares over deep network classes~\citep{schmidt2020nonparametric,takakura2023approximation}.
The batch of iteration $h$ has size $b_h\asymp M^{\zeta_M}c_{h+\ell_{\rm look}}^{-2}$ up to logarithms, with $\zeta_M:=2+d/(2\alpha)$: an iteration must reach the accuracy $c_{h+\ell_{\rm look}}$ of the level it queries (the schedules are given in \cref{app:small-params,app:parameters}).

\paragraph{Actor update.}
Two devices standard in RL post-training are used in the update.
In the front phase, the score is replaced only on the cubes of actor mass at least $\kappa_0$, where the critic has just been trained on fresh on-policy data, and the mask $I_h(m,y)=\ind{P_h(D_h(y)\mid m)\ge\kappa_0}$ keeps the old score elsewhere: this is a trust region in the spirit of TRPO and PPO~\citep{schulman2015trust,schulman2017proximal}.
In the front phase no clipping is needed, $\Delta_h\equiv2$; the final update is clipped, as in PPO, since a Gibbs policy at temperature $\rho$ amplifies an overestimate of the score exponentially.

\paragraph{Final refinement step (large budgets only).}
After $H\asymp\rho^{-1/\gamma}+\log(n/\rho)$ exploration iterations the actor has resolved as many levels as $\rho$ can distinguish, and the remaining budget $b_H=n-\sum_{h<H}b_h$ is spent on one last iteration, which samples at temperature $\rho$, sets $I_H\equiv1$, and uses
\begin{align}
    \Query_H(m,y)=D_{q(m,y)+\ell_{\rm look}+r}(y),\qquad
    \Delta_H(m,y)=\rho+\eta\,c_{1+\lfloor q(m,y)/\lbar\rfloor},
    \label{eq:final-step}
\end{align}
where $q(m,y)$ is the last exploration iteration at which the region of $y$ was updated, $r$ is a refinement depth and $\eta$ is a structural constant.
Querying $r$ levels deeper than the last update balances approximation against estimation: the reward is H\"older in the local coordinates of each shell $C_{a-1}\setminus C_a$, so a finer query buys a better fit until the estimation error takes over.

\subsection{Regret Upper Bound}
\label{subsec:upper-bound}

We give two upper bounds, one for each regime of $\rho$, proved in \cref{app:small,app:upper}.
\begin{theorem}\label{thm:upper}
    Suppose that $f\in\scF_{M,\theta}$, that the noise is conditionally $\sigma$-sub-Gaussian, that $\rho\in[0,1]$ and that $n\ge M^{\zeta_M}$ with $\zeta_M=2+d/(2\alpha)$.
    Run \cref{alg:actor-critic} with the hyperparameters of \cref{app:small-params} in the first case below and of \cref{app:parameters} in the second; in both cases it uses at most $n$ queries, and its output satisfies, for a fixed exponent $c_{\log}>0$ and with $\widetilde O$ hiding logarithmic factors,
    \begin{align}
        \bE\qty[\Reg_\rho(\Phat)]\ \le\
        \begin{cases}
            \ \widetilde O\qty(\qty(n/M^{\zeta_M})^{-\frac{\gamma-1}{2\gamma+1}})
            &\text{if }\rho=0\text{ or }\rho\lesssim(n/M)^{-\frac\gamma{2\gamma+1}},
            \\[2mm]
            \ \widetilde O\qty(\rho\qty(n\rho^{2+1/\gamma}/M)^{-\frac{2\alpha}{2\alpha+d}})
            &\text{if }\rho\gtrsim\qty(n/M^{\zeta_M})^{-\frac\gamma{2\gamma+1}}\Lf^{c_{\log}},
        \end{cases}
        \label{eq:upper}
    \end{align}
\end{theorem}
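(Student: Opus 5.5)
The proof proceeds by induction over the exploration iterations. It tracks a good event $\scE_h$ on which, for every prompt $m$, the actor $P_h(\cdot\mid m)$ has mass at least a constant on $C_{k(h)}(m)$. Here $k(h)\asymp h/\lbar$ is the number of levels resolved by depth $h$. On the same event the score $W_h(m,\cdot)$ agrees with $\sum_{i\le k(h)}c_ih_i(\cdot;m)$ up to an error $\lesssim c_{k(h)+1}$ on cubes of mass at least $\kappa_0$.

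\textbf{Per-iteration critic analysis.} We first bound the critic error. The queried cubes are dyadic cubes of depth at most $h+\ell_{\rm look}$. By \assmp{A3}--\assmp{A4}, the target $Q(m,D)$ on such a cube depends only on the first $O(h/\lstar)$ levels, so it is a function of the prompt token and a finite bit prefix. We construct a Transformer in $\Tcls_h$ that approximates this map to accuracy $c_{h+\ell_{\rm look}}$, following \citet{takakura2023approximation}. Causal attention aggregates the prefix bits into a coordinate of the cube, and an FNN then approximates the Hölder pieces. The standard least-squares oracle inequality for sub-Gaussian noise with covering numbers of $\Tcls_h$ \citep{schmidt2020nonparametric} bounds the $L^2(\text{sampling})$ error by approximation plus $\widetilde O(\mathrm{size}(\Tcls_h)/b_h)$. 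The $M$-dependence in $b_h\asymp M^{\zeta_M}c_{h+\ell_{\rm look}}^{-2}$ is chosen so that this $L^2$ bound converts into a uniform bound on every cube of actor mass $\ge\kappa_0$. The conversion divides by the sampling probability $\gtrsim\kappa_0/M$ and uses the smoothness to pass from $L^2$ to sup, which produces the $M^{2+d/(2\alpha)}$ factor.

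\textbf{Induction step.} On $\scE_h$ with this critic accuracy, the masked update corrects the score exactly where mass $\ge\kappa_0$ lies. By \assmp{A2}--\assmp{A3}, the gain $\cmin c_{i}$ from entering $C_i$ dominates the variation of the earlier levels and the estimation error. Consequently, the Gibbs policy at temperature $\vartheta_h\asymp c_{k(h)+1}$ keeps constant mass on the correct child cube and exponentially small mass on wrong siblings. Cubes that fall below $\kappa_0$ are queried coarsely and are never refined; we show their total contribution to the regret is $O(c_{k(h)})$. A union bound over the $\Hall$ iterations gives $\bP(\scE_{\Hall}^c)\le n^{-1}$. This induction, especially keeping wrong-branch mass controlled while clipping with $\Delta_h\equiv2$, is the main obstacle. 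I would first try a potential argument that compares $P_h$ to the Gibbs policy of the truncated reward.

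\textbf{Regret conversion.} In case one, the budget constraint reads $\sum_{h<H}b_h\asymp M^{\zeta_M}H^{2\gamma+1}\le n$, so $H\asymp(n/M^{\zeta_M})^{1/(2\gamma+1)}$. The regret is then at most $T_{k(H)}+\rho\log$-terms $\lesssim H^{-(\gamma-1)}$, and the condition on $\rho$ makes the KL term lower order. In case two, we stop at $H\asymp\rho^{-1/\gamma}$, where $c_H\asymp\rho$; deeper levels change $J_\rho$ by at most $O(\rho)$ times local estimation error. The final step then becomes a local nonparametric regression on each shell $C_{a-1}\setminus C_a$. For the Gibbs policy, the regret of the clipped score $W$ is bounded by $\lesssim\rho^{-1}\bE_{P^\ast}[(W-f)^2]$ via the KL-regularized second-order expansion, and the clipping $\Delta_H$ keeps the exponential factor bounded. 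Balancing the refinement depth $r$ yields an $L^2$ error of $(n_{\rm eff})^{-2\alpha/(2\alpha+d)}$. The effective sample size per prompt on the relevant region is $n\rho^{2+1/\gamma}/M$, and multiplying by $\rho$ gives the second bound in \eqref{eq:upper}. Adding the failure probability contributes $O(n^{-1})$, which is absorbed.
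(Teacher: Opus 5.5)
Your budget accounting ($\sum_{h<H}b_h\asymp M^{\zeta_M}H^{2\gamma+1}$) agrees with the paper. So does dividing the joint $L^2$ risk by the sampling probability of an update cube, and so does the small-$\rho$ conversion $T_H+\rho\,dH\log2$. The large-budget case, however, breaks at the step you treat in one sentence.

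\textbf{The regret transfer in the large-budget case.} The bound $\Reg_\rho\lesssim\rho^{-1}\bE_{P^\ast}[(W-f)^2]$ is false in general. The regret is $\rho^{-1}\int_0^1(1-s)\Var_{P_{w_s}}(f-v)\,ds$, with variances under the interpolated Gibbs policies. An $O(1)$ overestimate on a set of mass $e^{-1/(2\rho)}$ has negligible squared error, yet its regret tends to $1/2$ (the example preceding \cref{lem:transfer}).

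Clipping does not keep the exponential factor bounded here either.
\begin{itemize}
\item The final critic is fitted on responses from $P_{\rm start}$, the Gibbs policy of the held score $W_H$ at temperature $\rho$.
\item $W_H$ is accurate only to $\eta a_q$, at the scale of each cube's last update. This is of order one on cubes abandoned early.
\item At the endpoint $w_1=f$, which no clipping can move, $dP_{w_1}/dP_{\rm start}$ can be of order $e^{\eta a_q/\rho}$.
\item Clipping to width $O(\rho)$ would also exclude $f$ from the clipping interval, leaving a bias that does not decay with $n$.
\end{itemize}

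The paper needs a chain your plan lacks:
\begin{itemize}
\item pointwise calibration $|f-W_H|\le\eta a_q$ (\cref{lem:calibration}), which is why the lookahead is $R+1\asymp\log(1/\eta)$ in this regime;
\item the clipping width $\rho+\eta a_q$, so that the clip is a projection toward $f$;
\item the exponential moment $\int e^{r_{\rm mom}\eta a_q/\rho}\,dP_{\rm start}\le C$, obtained from the reward gap on shallow shells, which forces $\eta\asymp1/r_{\rm mom}$ (\cref{lem:exp-moment});
\item H\"older's inequality with $r_{\rm mom}\asymp\log$, which costs only a factor $\bar u^{-1/r_{\rm mom}}\le e$.
\end{itemize}
The same lemma gives $\sum_jP_{\rm start}(A_j\mid m)a_j^2\lesssim\rho^2$, which is where the $\rho^2$ in your effective sample size actually comes from.

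Stopping at $H\asymp\rho^{-1/\gamma}$ is also not enough. The levels below that depth still weigh $\asymp\rho^{1-1/\gamma}\gg\rho$. The paper neutralizes them by running $\asymp\log(n/\rho)$ further iterations, so that the core $P^\ast_H(m)$ has Gibbs mass at most $\rho^2/(n+1)^2$ (\eqref{eq:core-mass}). With $H=C\rho^{-1/\gamma}$ alone, that mass is only bounded by $e^{-c\rho^{-1/\gamma}}$, which does not decay with $n$.

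\textbf{The front phase.} Two ingredients are missing here as well.

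First, the size of the critic. Reducing $Q(m,D)$ to ``the first $O(h/\lstar)$ levels'' leaves $O(h)$ components. The first has amplitude $c_1\asymp1$ but must be matched to accuracy $\asymp h^{-\gamma}$. So the approximation size, and with it $b_h$, acquires a factor at least polynomial in $h$ (about $h^{\gamma d/\alpha}$). That destroys $\sum_{h<H}b_h\asymp M^{\zeta_M}H^{2\gamma+1}$ and hence the exponent.

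The paper keeps the entropy at $M^{1+d/(2\alpha)}$ times logarithmic factors, uniformly in $h$, by window localization (\cref{lem:localize,lem:window-target}). By the cold tail of the actor, all but $Ce^{-cJ}$ of the query law lies below $P^\ast_{h-J}(m)$ with $J=O(\log)$. There the old levels form one H\"older function of seminorm $\lesssim a_{h-J}\lesssim(J+S)^\gamma a_{h+S}$ in local coordinates, and only $O(J+S)$ newer levels remain.

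Second, the induction invariant. Accuracy $c_{k(h)+1}$ on the cubes of mass at least $\kappa_0$ says nothing about the stale cubes the mask leaves untouched. Yet these enter the Gibbs normalization with errors up to $\frac\eta4a_{\mathrm{depth}(D)}$, which can be far above $\vartheta_h$. Also, at temperature $\vartheta_h\asymp c_{k(h)}$, the wrong siblings inside $C_{k(h)-1}(m)$ do not get exponentially small mass. Their cube averages may exceed that of the correct cube by up to $3c_{k(h)}$ (\cref{lem:gap}, \eqref{eq:gap-inner}).

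The paper instead carries birth accuracy on the whole partition (\cref{def:birth}). The gap of \cref{lem:gap} on each shell absorbs the error at that shell's birth scale. This yields mass at least $p_\ast$ on $P^\ast_h(m)$ and at most $Ce^{-cj}$ outside $P^\ast_{h-j}(m)$ (\cref{lem:mass}). Your proposed comparison with the Gibbs policy of $f^{[k]}$ gives no uniform density ratio, for the same reason as the stale cubes.

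Finally, $M^{\zeta_M}$ does not come from passing from $L^2$ to sup through smoothness; the ERM error has no smoothness to use. The conversion is exact because each update cube is an atom of mass at least $p_S/M$ (\cref{lem:joint}). The factor $M^{d/(2\alpha)}$ enters through $N\asymp M^{d/(2\alpha)}$, which is needed to push the approximation error below $a_{h+S}^2/M$, while the entropy grows like $MN$.
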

\Cref{thm:upper} leaves a band between its two cases, namely $(n/M)^{-\gamma/(2\gamma+1)}\lesssim\rho\lesssim(n/M^{\zeta_M})^{-\gamma/(2\gamma+1)}\Lf^{c_{\log}}$, in which the algorithm is well defined but we prove no guarantee.
Up to logarithmic factors, the gap between the two thresholds is the price of using a single critic for all prompts: the guaranteed depth is $(n/M^{\zeta_M})^{1/(2\gamma+1)}$ rather than $(n/M)^{1/(2\gamma+1)}$, since estimation theory bounds the $L^2$ error under the joint query law, not the error on every prefix that a prompt updates (\cref{lem:joint}).
For a fixed number of prompts the gap is only polylogarithmic, and outside the band both rates are the minimax rates of \cref{sec:lower} up to logarithmic factors.

\paragraph{Proof sketch.}
We sketch the proofs through the three inequalities that produce the two rates.

First, the actor keeps exploring the correct path.
The two facts support each other by induction on $h$.
If the score is accurate on each cube at the scale of the depth at which that cube was created, then the reward gap between the correct cube of depth $h$ and any cube that left the path exceeds the errors made so far, and the Gibbs actor satisfies $P_h(D_h(\ttrue(m))\mid m)\ge p_\ast>0$ uniformly in $h$ and in the reward (\cref{lem:mass}).
The correct cube is therefore refined at every iteration, and the fresh batch drawn there restores the accuracy one level deeper; this is what keeps the queries concentrated where the next level is visible, and the same computation shows that the mass leaving the path $j$ levels earlier decays like $e^{-cj}$, so the queried prefixes stay in a window of logarithmic length and the critic only has to approximate finitely many H\"older functions there.

Second, the depth reached is fixed by the budget.
Resolving level $i$ requires an accuracy $\asymp c_i\asymp i^{-\gamma}$ on a cube average, so the batches needed to reach depth $H$ satisfy
\begin{align}
    \sum\nolimits_{h<H}b_h\ \asymp\ M^{\zeta_M}\sum\nolimits_{h<H}c_h^{-2}\ \asymp\ M^{\zeta_M}H^{2\gamma+1}\ \le\ n,
    \label{eq:sketch-cost}
\end{align}
up to logarithms, the power $\zeta_M$ being the price of converting the risk of the shared critic into accuracy at each updated prefix.
In the small-regularization case this is the whole argument: the output is the last actor, uniform on cubes of depth $H$, its regret is the weight $T_H\asymp H^{-(\gamma-1)}$ of the unresolved levels, and \eqref{eq:sketch-cost} gives $H\asymp(n/M^{\zeta_M})^{1/(2\gamma+1)}$, hence the first exponent of \eqref{eq:upper}.
The regularization only adds the penalty $\rho\,dH\log2$, which stays below $T_H$ in the first case.

Third, in the large-budget case the score has to be turned into a regret bound.
A Gibbs policy at temperature $\rho$ cannot distinguish levels of weight below $\rho$, i.e.\ levels deeper than $K\asymp\rho^{-1/\gamma}$; reaching that depth costs $M^{\zeta_M}\rho^{-(2+1/\gamma)}$ queries by \eqref{eq:sketch-cost}, up to logarithms.
What remains is the fine shape of the reward around the correct path, which the last iteration estimates with one more critic.
The regret is an integral of variances under the interpolated Gibbs policies.
Pointwise accuracy of the held score, clipping and exponential-moment control bound their density ratios relative to the collecting policy, giving schematically
\begin{align}
    \bE\bigl[\Reg_\rho(\Phat)\bigr]\ \lesssim\
    \rho^{-1}\bigl[(\text{critic error})+(\text{projection error})\bigr]
    +(\text{deep-core and failure errors})
    \label{eq:sketch-transfer}
\end{align}
(\cref{prop:final}).
Here the projection error is the squared error from replacing the reward by its cube averages; the core and failure terms are lower order.
Since the reward is $\alpha$-H\"older in $d$ local coordinates on each shell, the squared error obeys the nonparametric tradeoff, and balancing it against \eqref{eq:sketch-cost} gives the second case of \eqref{eq:upper}.

\section{Minimax Lower Bound}
\label{sec:lower}

We now show that the rates of \cref{thm:upper} cannot be improved by any algorithm, including algorithms that select prefixes adaptively without any restriction on computation, memory, or the use of past data.
Let $\scN_\sigma$ be the class of noise distributions that are conditionally mean zero and $\sigma$-sub-Gaussian as in \cref{subsec:oracle}, and define the minimax regret
\begin{align}
    \Rmin(n,M,\rho):=\inf\nolimits_{\scA}\sup\nolimits_{f\in\scF_{M,\theta},\,N\in\scN_\sigma}\bE_{f,N,\scA}\bigl[\Reg_\rho(\Phat_{\scA})\bigr],
    \label{eq:minimax}
\end{align}
where the infimum is over all learning algorithms that use at most $n$ queries in the protocol of \cref{subsec:oracle}.
\begin{theorem}\label{thm:lower}
    Fix $\sigma>0$, suppose that $\lambda\ge\Lambda_{\alpha,d}$ and let $\rho\in[0,1]$.
    For all $M\ge1$ and $n\ge M$,
    \begin{align}
        \Rmin(n,M,\rho)\ \gtrsim\
        \begin{cases}
            \ (n/M)^{-\frac{\gamma-1}{2\gamma+1}}
            &\text{if }\rho=0\text{ or }\rho\lesssim(n/M)^{-\frac\gamma{2\gamma+1}},
            \\[2mm]
            \ \rho\min\qty{1,\qty(n\rho^{2+1/\gamma}/M)^{-\frac{2\alpha}{2\alpha+d}}}
            &\text{for every }\rho>0,
        \end{cases}
        \label{eq:lower}
    \end{align}
\end{theorem}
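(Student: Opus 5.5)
We prove the two cases by two separate hypothesis-testing constructions, both reduced to one prompt. Since prompts arrive uniformly and their rewards are unrelated, regret averages over prompts. A learner with $n$ rounds receives $\mathrm{Bin}(n,1/M)$ observations per prompt, so it effectively has about $n/M$ queries per prompt. Lower bounds for a single prompt with budget $\asymp n/M$ therefore transfer to the $M$-prompt problem, conditioning on the count and using its concentration. The condition $\lambda\ge\Lambda_{\alpha,d}$ only ensures that the bump functions used below are admissible primitives under \assmp{A1}--\assmp{A4}. Concretely, we fix a smooth compactly supported $\psi$ with zero mean and a positive maximum, and build all primitives from rescaled copies of it.

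\emph{First case (depth).} We fix the first $K-1$ primitives and set $g_i$ for $i>K$ to a fixed admissible default. At level $K$ we place $g_K$ as one of $N=2^{d\ell_K}$ candidates. Each candidate is a zero-mean bump whose maximizer lies in a distinct depth-$\ell_K$ subcube $Q$ of $\Omega$. We also add a null hypothesis in which level $K$ is replaced by a primitive with no preferred subcube, or symmetrize so that the candidates share a common baseline. By \assmp{A4}, a query on a cube $D$ reveals level $K$ only if $D\subset C_{K-1}$, and then the signal it carries is at most $c_K\asymp K^{-\gamma}$. Each query therefore contributes KL at most $c_K^2/(2\sigma^2)$ and discriminates among at most $O(1)$ candidates per unit of information. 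A Fano or Assouad argument over the candidate cubes then shows the following: with $n/M\lesssim c_K^{-2}$ queries per prompt, the learner cannot identify $Q_K$ with constant probability.

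We then convert this failure into regret. If the output policy does not concentrate on $C_K$, it loses a constant fraction of $T_{K-1}\asymp K^{-(\gamma-1)}$, because entering the wrong subcube forfeits level $K$ and all deeper levels; \assmp{A2}--\assmp{A3} make this loss of order $c_K$ plus the tail. Choosing $K\asymp(n/M)^{1/(2\gamma+1)}$ gives the bound $(n/M)^{-(\gamma-1)/(2\gamma+1)}$.

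The subtle point is the choice of $K$. The naive choice $K\asymp(n/M)^{1/(2\gamma)}$ only gives $c_K^{-2}$ queries for a single level. The extra $+1$ in the exponent must come from requiring all of levels $K,\dots,2K$ to be learned, each at cost $c_i^{-2}$, while each unresolved level costs $\asymp K^{-\gamma}$ per level over $K$ levels. We would therefore use Assouad over $K$ consecutive levels. The $j$th coordinate is the location of $Q_{K+j}$, and the cost is additive across coordinates. A level is unresolved if it receives fewer than $c_K^{-2}$ queries, and with budget $n/M\asymp K\cdot K^{2\gamma}$ at least half the levels are unresolved. The regret is then $\gtrsim K\cdot c_K\asymp K^{1-\gamma}$. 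The condition $\rho\lesssim(n/M)^{-\gamma/(2\gamma+1)}\asymp c_K$ guarantees that the KL penalty does not erase the advantage of concentrating, so the regret under $J_\rho$ is still of order $c_K$ per missed level. This uses the fact that the Gibbs optimum at temperature $\rho\lesssim c_K$ still places constant mass on the correct subcube.

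\emph{Second case (local nonparametric).} We fix the first $K\asymp\rho^{-1/\gamma}$ levels completely, so the core is known. The cell $C_K$ has volume $\asymp 2^{-dKL}$, but the Gibbs policy at temperature $\rho$ spreads mass over a region of constant Gibbs weight. In the shells around the path, we perturb one primitive of weight $c_a\asymp\rho$ with $a\asymp K$ by $\pm$ a Varshamov--Gilbert family of $m^d$ local bumps. Each bump has height $\epsilon\asymp m^{-\alpha}$ in local coordinates, is zero-mean, and is Hölder with constant $\lambda$.

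The regret of a KL-regularized policy is $\asymp\rho\,\chi^2$-type. Misestimating the reward by $\delta$ on a set of Gibbs mass $w$ costs $\asymp w\delta^2/\rho$, and here $\delta\asymp c_a\epsilon\asymp\rho\epsilon$. The total regret is therefore $\asymp\rho\epsilon^2$ when half the signs are wrong. On the information side, each query inside the relevant shell sees a perturbation of size $\rho\epsilon$ averaged over its cube. The total KL for distinguishing one sign is at most the number of queries hitting that bump times $\rho^2\epsilon^2/\sigma^2$. The shell itself has reference mass $\asymp\rho^{1/\gamma}$-type factors relative to the whole region. Combining these, the effective sample size is $n\rho^{2+1/\gamma}/M$, and balancing $m^{d}\asymp(n\rho^{2+1/\gamma}/M)\epsilon^2$ with $\epsilon=m^{-\alpha}$ gives regret $\rho(n\rho^{2+1/\gamma}/M)^{-2\alpha/(2\alpha+d)}$, capped at $\rho$. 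The cap of $1$ in the $\min$ covers the case where even $m=1$ is not detectable.

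The main obstacle is the bookkeeping for adaptive queries at arbitrary cube sizes. A query on a coarse cube averages many bumps, and a query on a fine cube yields little mass. We would bound the per-query KL by $\sum_{\text{bumps}}(\text{overlap fraction})^2$ times $\rho^2\epsilon^2$, summed via Cauchy--Schwarz and zero-mean cancellation. This gives a total KL of at most $n_{\text{eff}}\rho^2\epsilon^2/m^d$ per coordinate, exactly as in the nonadaptive case.
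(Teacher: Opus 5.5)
\textbf{First case.} Your loss side is fine: on the band, the regret dominates $\sum_j c_j\Pr(Y\notin C_j(\omega))$. The gap is on the information side of the Assouad step over levels $K,\dots,2K$. With single-bit flips $\omega\mapsto\omega^{(i)}$, information is not additive across levels.

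Take the query cube $D=C_j(\omega)$, or its sibling inside $C_j(\omega)$; this is exactly what a learner that has identified the band bits up to $j$ queries next. Flipping any band bit $i\le j$ moves $D$ off the path at level $i$. So levels $i,\dots,j$ are all collected on $D$ under $\omega$ and none of them under $\omega^{(i)}$. Hence $|Q_\omega(D)-Q_{\omega^{(i)}}(D)|\gtrsim(j-i+1)c_K$, and $\sum_i\bigl(Q_\omega(D)-Q_{\omega^{(i)}}(D)\bigr)^2\gtrsim c_K^2(j-K+1)^3$, not $O(c_K^2)$.

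For the natural sequential learner, $\sum_i\mathrm{KL}(P_\omega\,\|\,P_{\omega^{(i)}})$ is therefore of order $k\,c_K^2K^3$, and Pinsker plus Assouad no longer yields the exponent $(\gamma-1)/(2\gamma+1)$. Single-flip KL accounting charges the learner for information about bits it already knows. Your sentence ``a level is unresolved if it receives fewer than $c_K^{-2}$ queries'' assumes each query is charged to one level, which is precisely what has to be proved.

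The paper supplies this with a different comparison.
\begin{itemize}
\item The kernel $\Gamma_i$ keeps the bits before $i$, flips bit $i$, and \emph{resamples} the bits after $i$.
\item For fixed $y$, the reward is constant, equal to $F_j$, on the spine sets $\{\omega_{s:j-1}=\mathsf a_{s:j-1},\ \omega_j\ne\mathsf a_j\}$, with $|F_{j+1}-F_j|\le 2c_j$.
\item The information operator therefore has block entries $2^{-(j-i)/2}(F_j-F_i)^2$, and the Schur test bounds its norm by $C_{\rm adv}c_s^2$, uniformly in the depth of the cube.
\item This bound is applied to the square-root likelihood vector of each history, so pairs the learner has already separated are down-weighted. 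The $\Gamma$-weighted sum $\Phi_t$ of Hellinger affinities between transcript laws then drops by at most $C_{\rm adv}c_s^2/(8\sigma^2)$ per query, however the queries adapt.
\item Decoding the output forces $\Phi_k\le 2s\sqrt\epsilon$, while $\Phi_0=s$.
\end{itemize}

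\textbf{Second case.} Perturbing a single level $i_0\asymp\rho^{-1/\gamma}$ cannot produce the factor $\rho^{1/\gamma}$ in $n\rho^{2+1/\gamma}/M$. The first $K$ levels are fixed, so an adaptive learner knows where the perturbed shell is and can spend all $k\asymp n/M$ queries there. Your own overlap/Jensen bound limits the per-query information to $\lesssim\rho^2\epsilon^2$ summed over the $m^d$ signs, so the signs stay hidden only while $k\rho^2\epsilon^2\lesssim m^d$.

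This family therefore certifies at most order $\rho\,(n\rho^2/M)^{-p}$, with $p=2\alpha/(2\alpha+d)$, even if the shell carried Gibbs mass of order one. That is smaller than the claimed bound by the factor $\rho^{p/\gamma}$. The reference mass of the shell plays no role for adaptive queries. A reference-mass factor is what drives the fixed-sampling bound of Theorem~\ref{thm:fixed}, and there it is exponentially small.

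What is missing is the paper's flat band.
\begin{itemize}
\item All primitives place their maximizers in a common cube $Q_\flat$ whose rescaling map has a fixed point $z$.
\item The values $a_i=g_i(z)$ are tuned so that $c_ia_i=a\rho$, with $a=d\log(1/q)$, on $L\asymp\rho^{-1/\gamma}$ consecutive levels. The optimal Gibbs policy then puts mass $\asymp\rho^{1/\gamma}$ on each of these shells.
\item \emph{All} $L$ levels are perturbed, with $N^d$ signs each and reward amplitude $\delta_{\rm p}\asymp\rho N^{-\alpha}$.
\item The sensitivity inequality $\sum_{i,j}\langle\Phi_{i,j}\rangle_D^2\le1$ forces one query's information $\lesssim\delta_{\rm p}^2$ to be shared by $LN^d$ signs. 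They stay hidden up to $N^{2\alpha+d}\asymp k\rho^2/L\asymp k\rho^{2+1/\gamma}$.
\item Hellinger decoding with $\mathrm{Reg}_\rho(\hat P)=\rho\,\mathrm{KL}(\hat P\,\|\,P^\ast_\rho)$ then gives regret of order $\rho N^{-2\alpha}$.
\end{itemize}

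Two lesser points.
\begin{itemize}
\item For $\rho>0$ in the first case, your claim about where the Gibbs optimum puts mass does not directly bound the learner's regret. It suffices to use $G_m-J^\ast_\rho\le U\rho^{1-1/\gamma}$ together with $\rho^{1-1/\gamma}=(1+t)^{-(\gamma-1)/(2\gamma+1)}u^{(\gamma-1)/(2\gamma+1)}$, where $u=(1+t)\rho^{2+1/\gamma}$.
\item In the prompt reduction, reveal the other prompts' rewards to rule out cross-prompt adaptivity. Then use Markov's inequality for $N_m\sim\mathrm{Bin}(n,1/M)$ rather than concentration, since $n/M$ may be $O(1)$.
\end{itemize}
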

The proof is given in \cref{app:lower}.
The condition $\lambda\ge\Lambda_{\alpha,d}$ only ensures that the class $\scF_{M,\theta}$ is rich enough to contain the hard instances, with the sufficient constant $\Lambda_{\alpha,d}$ defined in \eqref{eq:richness-threshold} for a bump of amplitude one and radius $1/8$; the bounds hold even if the zoom depths $\ell_i\equiv\lstar$ are revealed to the learner.

The table below compares \cref{thm:upper,thm:lower} regime by regime; all entries are up to polylogarithmic factors.
The first cases of \eqref{eq:upper} and \eqref{eq:lower} hold under the same condition on $\rho$, and the second case of \eqref{eq:lower} holds for every $\rho>0$.
In both regimes the two columns agree in the budget $n$ and in the regularization $\rho$, and for a fixed number of prompts they agree completely: there the algorithm is \emph{minimax optimal up to logarithmic factors}.
Since the lower bounds hold for every algorithm, no amount of computation or memory can improve on our on-policy scheme by more than logarithmic factors.
Apart from logarithmic factors, the two columns differ only through $M$, in the rate when the regularization is small and in the range of $\rho$ when the budget is large; both differences come from converting the joint $L^2$ error of the shared critic into accuracy at every updated prefix, which makes the cost of reaching depth $H$ in \eqref{eq:sketch-cost} grow as $M^{\zeta_M}$ rather than $M$ in the number of prompts.

\begin{center}
\begin{tabular}{lll}
\hline
 & Lower bound (\cref{thm:lower}) & \cref{alg:actor-critic} (\cref{thm:upper}) \\
\hline
Small regularization & $(n/M)^{-\frac{\gamma-1}{2\gamma+1}}$ & $\qty(n/M^{\zeta_M})^{-\frac{\gamma-1}{2\gamma+1}}$ \\
Large budget & $\rho\qty(n\rho^{2+1/\gamma}/M)^{-\frac{2\alpha}{2\alpha+d}}$ & $\rho\qty(n\rho^{2+1/\gamma}/M)^{-\frac{2\alpha}{2\alpha+d}}$ \\
\hline
\end{tabular}
\end{center}

\paragraph{Remark (the two bottlenecks).}
The two bounds describe two different obstructions.
\emph{Small regularization: the budget.} The budget runs out before the levels of weight $c_i>\rho$ are resolved, so the regret is dominated by the weight of the unresolved levels and neither $\alpha$ nor $d$ appears. The bound is the same for $\rho=0$ and for every $\rho\lesssim(n/M)^{-\gamma/(2\gamma+1)}$, because in that range the regularization changes the optimal value only by $O(\rho^{1-1/\gamma})$, a small fraction of $(n/M)^{-(\gamma-1)/(2\gamma+1)}$.
\emph{Large budget: the regularization.} The correct path can be resolved down to the last level of weight $c_i>\rho$, at depth $\rho^{-1/\gamma}$, and what is left is to estimate the fine shape of the reward around it.
The second bound holds for every $\rho>0$, and the first, where it applies, is the larger one.

\vspace{-1.5mm}
\section{Lower Bound for Fixed-Distribution Sampling: The Benefit of On-Policy Exploration}
\label{sec:fixed}

The algorithm of \cref{sec:upper} is \emph{on-policy}: the responses whose rewards are queried are generated by the current actor.
A common alternative in reward modeling is to collect responses from a fixed base distribution and to fit the reward model offline.
In this section, we show that this restriction is costly in the hierarchical setting.
Let $\scAunif$ be the class of algorithms that, at every round $t$, generate the response $Y_t\sim\mu$ uniformly at random (independently of the past), may choose the depth of the queried prefix $D_t\mathrel{\ni}Y_t$ after seeing $(X_t,Y_t)$ and the history, and output an arbitrary policy $\Phat$ at the end.
The only restriction is the sampling distribution: the estimator, the choice of the query depth and the final policy are unrestricted, and past data may be reused freely.
Let $\RC(n,M,\rho)$ be the minimax regret \eqref{eq:minimax} with the infimum restricted to $\scAunif$.

\begin{theorem}[The price of fixed sampling]\label{thm:fixed}
    Suppose that $\sigma>0$ and $\lambda\ge\Lambda_{\alpha,d}$.
    There are constants $\rho_0,c_0>0$ such that, for all $n\ge M\ge1$,
    \begin{align}
        \RC(n,M,\rho)\ \gtrsim\
        \begin{cases}
            \ \qty(\log\tfrac{en}M)^{-(\gamma-1)}
            &\text{if }\rho\lesssim\qty(\log\tfrac{en}M)^{-\gamma},
            \\[2mm]
            \ \rho\min\qty{1,\qty(n\rho^{2+1/\gamma}e^{-c_0\rho^{-1/\gamma}}/M)^{-\frac{2\alpha}{2\alpha+d}}}
            &\text{if }\rho\in(0,\rho_0].
        \end{cases}
        \label{eq:fixed}
    \end{align}
    The first bound holds for every $\sigma\ge0$, that is, even for a noiseless oracle.
\end{theorem}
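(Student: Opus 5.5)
Both bounds come from a rare-event argument: under uniform sampling, a query carries information about level $i$ of prompt $m$ only if the sampled response lies in the active cell $C_{i-1}(m)$, and $\mu(C_{i-1}(m))=2^{-dL_{i-1}}\le 2^{-d\lstar(i-1)}$. We fix $\ell_i\equiv\lstar$ and reveal it. For each prompt we build a family of hard instances that agree on levels $1,\dots,k$ and differ only in levels $>k$. Every such instance must lie in $\scF_{M,\theta}$; this is where $\lambda\ge\Lambda_{\alpha,d}$ is used, since it lets us place zero-mean bumps of amplitude one, as in the proof of \cref{thm:lower}. Assumption \assmp{A4} then gives, as noted in \cref{subsec:oracle}, that $Q(m,D)$ is the same for all instances unless $D\subseteq C_k(m)$. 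A query can have $D_t\subseteq C_k(X_t)$ only if $Y_t\in C_k(X_t)$, because $D_t\ni Y_t$; the choice of depth is free, but the sample location is not. Since $Y_t\sim\mu$ is drawn independently of the past and of the instance, the number of rounds that land in $C_k(m)$ for any fixed $m$ is stochastically at most $\mathrm{Bin}(n,2^{-d\lstar k}/M)$, whatever the algorithm does.

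For the first bound we use a noiseless-compatible argument. Choose $k=\lfloor c\log(en/M)\rfloor$ with $c$ small, so that $n2^{-d\lstar k}/M\le 1/4$. For each $m$, draw the position of $Q_{k+1}$ uniformly among the $2^{d\lstar}$ subcubes of $C_k(m)$, and similarly for deeper levels. With probability at least $3/4$ no round ever falls in $C_k(m)$. On that event the transcript has the same law under every instance, even with $\sigma=0$, so the output $\Phat(\cdot\mid m)$ is independent of where $C_{k+1}(m)$ lies. Lower-bound the unregularized gap: by \assmp{A2}--\assmp{A3} and \cref{lem:geometry}, a response outside $C_{k+1}(m)$ loses at least a constant times $c_{k+1}$, and more generally at least $\sum_{i>k}c_i\cmin/2\gtrsim T_k$ once we account for levels $k+1,\dots,2k$ being hidden jointly. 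Averaging over the random placement, $\Phat$ puts mass at least $1-2^{-d\lstar}$ outside $C_{k+1}(m)$. For the stronger $T_k$ bound, apply the same averaging level by level: the conditional mass on the correct child at each hidden level is at most $2^{-d\lstar}$ on average, so the expected reward collected is bounded away from $\sum_{i>k}c_i$. With regularization, $J^\ast_\rho$ differs from $\sup f$ by $O(\rho\, k)+O(\rho^{1-1/\gamma})$, and the regret of any policy is at least $\sup f-\bE_P f-\rho\cdot(\text{entropy gain})$. Under $\rho\lesssim(\log(en/M))^{-\gamma}$ this correction is at most a small fraction of $T_k\asymp k^{-(\gamma-1)}$, which gives $(\log\frac{en}M)^{-(\gamma-1)}$.

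For the second bound, set $K\asymp\rho^{-1/\gamma}$ and reveal levels $1,\dots,K$ to the learner. Inside $C_K(m)$ we reproduce the local nonparametric construction of the second case of \cref{thm:lower}. That construction has bumps of amplitude $\delta$ on a grid of $N\asymp\delta^{-d/\alpha}$ cells, carried by the shell or deeper-level structure in local coordinates, and its regret under a Gibbs policy of temperature $\rho$ is at least $\rho\cdot(\delta/\rho)^2\cdot(\text{mass near the core})$. We reuse that reduction (Assouad over the bump signs, with a KL-regret link) without change. The only difference is the effective sample size. Each observation informative about a bump must come from a sample inside $C_K(m)$ (or inside the relevant region of the core, which has $\mu$-mass $e^{-c_0\rho^{-1/\gamma}}\rho^{\cdot}$ after the same bookkeeping), so the expected number of informative samples per prompt is $n e^{-c_0\rho^{-1/\gamma}}/M$ instead of $n/M$, with the polynomial $\rho^{2+1/\gamma}$ factor inherited from the on-policy analysis. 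Each such sample has KL contribution at most $\delta^2/\sigma^2$, so the Assouad bound holds with $n$ replaced by $n e^{-c_0\rho^{-1/\gamma}}$. Optimizing $\delta$ gives the stated expression, with the $\min\{1,\cdot\}$ corresponding to $\delta$ saturating at the amplitude constraint, and $\rho\le\rho_0$ guarantees $K\ge1$ and the validity of the geometry.

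We expect the main obstacle to be the first bound's claim of regret $\gtrsim T_k$ rather than only $\gtrsim c_{k+1}$. We will handle it by randomizing all hidden levels $k+1,\dots,2k$ independently and showing, by a chain-rule argument over the nested children, that the expected reward of a policy ignorant of the placements is at most $\sum_{i\le k}c_i+\sum_{i>k}c_i\bigl(\bE g_i+2^{-d\lstar}\bigr)$. Here $\bE g_i=0$ outside $C_{i}$ by \assmp{A4}, and the rescaling keeps the negative parts controlled by \assmp{A3}.
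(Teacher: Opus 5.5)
Your plan follows the paper's proof. For the logarithmic bound you hide one level behind a zero-mean primitive, so that cube averages agree unless the queried cube lies inside the common parent cell (\cref{lem:cancellation}). You then couple the experiments on the event that no round of prompt $m$ draws $Y$ in that cell. For the exponential bound you reuse the flat-band family of \cref{thm:lower} and multiply the per-round information by the volume of the cell carrying the perturbations. Several steps of the first bound are wrong as written, however.

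\emph{(i) The choice of $k$.} The constant in $k=\lfloor c\log(en/M)\rfloor$ must be large, not small. The condition $n2^{-d\lstar k}/M\le 1/4$ requires $2^{d\lstar k}\ge 4n/M$; with $c$ small the no-hit probability tends to zero.

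\emph{(ii) The ``main obstacle'' does not exist.} By \eqref{eq:geo-lower}, every $y\notin C_{k+1}(m)$ lies in a shell $E_a(m)$ with $a\le k+1$ and loses at least $\kappa T_a\ge\kappa T_{k+1}\gtrsim T_k$. This holds because all deeper levels are supported in $C_a(m)$, and \assmp{A3} keeps the variation of the earlier levels below a fraction of the gain. So a single hidden level already costs order $T_k$ on every response outside the true cell, and you do not need to hide levels $k+1,\dots,2k$ jointly. The bound in your last paragraph is also not correct as stated: the optimum is not $\sum_ic_i$, and \assmp{A4} says $\int_\Omega g_i\,d\mu=0$, not that $g_i$ averages to zero off $C_i$.

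\emph{(iii) The prior is not supported on the class.} The uniform prior over all $2^{d\lstar}$ subcubes does not lie in $\scF_{M,\theta}$. For a maximizer in a boundary subcube, $g_i^0$ would have to drop from at least $\cmin$ to $0$ within distance $2^{-\lstar}$ of $\partial\Omega$. But \assmp{A3} gives $\lambda 2^{-\beta\lstar}<\cmin$, so \assmp{A1} rules this out; a compensating negative bump also needs room. Use two admissible placements with disjoint cells, as in \eqref{eq:two-choice}. On the no-hit event the fair bit is independent of the output, and at each $y$ at least one of the two losses is $\ge\kappa T_{k+1}$. This gives Bayes regret $\ge\kappa T_{k+1}/2$ there.

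For the second bound, state explicitly where the perturbations sit and why. They must be those of the flat band $I\le i\le J$, with $I=\lceil u_0\rho^{-1/\gamma}\rceil$. On the band, $P^\ast_\rho$ gives every perturbed cell mass $\gtrsim h^d\rho^{1/\gamma}$ by \eqref{eq:flat-mass}, which is exactly what the regret weight requires. Moving the perturbations ``into the core'', deeper than the band, would destroy that mass. All band perturbations are supported in $C_{I-1}(m)$, of volume $q^{d(I-1)}\le e^{-c_0\rho^{-1/\gamma}}$ for $\rho\le\rho_0$. By Jensen and disjoint supports, the squared sensitivity of one round, summed over all signs, is at most $\ind{Y_t\in C_{I-1}(X_t)}$ whatever depth is queried. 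This summed bound, not a per-sign one, is what enters the Assouad step. Finally, the factor $\rho^{2+1/\gamma}$ is not inherited from the on-policy analysis. It comes from the lower-bound family itself: the amplitude is $\delta_{\rm p}\asymp\rho N^{-\alpha}$, and the $LMN^d$ signs, with $L\asymp\rho^{-1/\gamma}$ band levels, share the information budget.
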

The proofs are given in \cref{app:fixed}.
The two bounds describe two budget regimes.
While $\rho\lesssim(\log(en/M))^{-\gamma}$, the regret of any fixed-sampling method decays at most logarithmically in $n/M$, whereas in the large-budget regime of \cref{thm:upper}, the on-policy actor--critic achieves a polynomial rate; the two conditions are compatible for small $\rho$.
For larger budgets the fixed-sampling regret does decay polynomially, but it reaches that same rate only with the multiplicative price $\exp(c_0\rho^{-1/\gamma})$, whereas the on-policy bound depends on $\rho^{-1}$ polynomially.
Two complements are proved there: for a \emph{fixed} $\rho>0$ and $n\to\infty$ the exponent of $n$ is not worsened by fixed sampling (\cref{prop:fixed-rho}); and the first bound is tight in its regime (\cref{prop:uniform-grid}).

\paragraph{Why on-policy sampling escapes.}
Once the actor has learned that a prefix is promising it generates responses from that prefix, so the cell of the next level is visited with constant probability rather than with probability $2^{-d\lstar k}$.
Each iteration of \cref{alg:actor-critic} then resolves one level at a cost of $c_k^{-2}\asymp k^{2\gamma}$ queries instead of $2^{d\lstar k}$, which is the statistical benefit of RL post-training over offline reward modeling in this setting.

\vspace{-1.5mm}
\section{Conclusion}
\label{sec:conclusion}

We studied RL post-training with hierarchical rewards, in which the reward of a binary response is a sum of infinitely many H\"older primitives revealed level by level.
A Transformer actor--critic of the standard form, alternating on-policy sampling from a KL-regularized policy, least-squares training of a shared Transformer critic and a Gibbs policy update, matches the minimax rates in the budget and in the regularization up to logarithmic factors, and is minimax optimal for a fixed number of prompts.
The analysis identifies the role of each component: the prefix mass of the current policy selects where to query, the temperature schedule keeps the correct path confident, the full-class ERM adapts to the unknown cells, and a calibrated, clipped final update transfers the estimation error to the regret.
Any method that keeps sampling from the fixed reference distribution $\mu$, in contrast, suffers a logarithmic rate or an exponential price in $\rho^{-1/\gamma}$, which quantifies the benefit of on-policy exploration.
\paragraph{Limitations and future work.}
Our upper bounds are not optimal in the number of prompts $M$, for the reason given in \cref{sec:lower}; removing the extra factor, for instance through a prompt-wise estimation guarantee or an analysis of the exploration in expectation over the prompts, is left open.
The two upper bounds are also separated by the band described after \cref{thm:upper}, where the same algorithm is well defined but we prove no guarantee, and the two parameter settings differ in the lookahead and in the presence of the refinement step.
We assumed exact ERM and exact Gibbs sampling; the optimization remains to be analyzed.
An empirical study of the prefix-mass exploration rule on synthetic hierarchical rewards and reasoning benchmarks is left for future work.

\subsection*{Acknowledgments}
NN was partially supported by JST ACT-X (JPMJAX24CK) and JST BOOST (JPMJBS2418).
TS was partially supported by JST CREST (JPMJCR2115) and JSPS KAKENHI (24K02905). This work was partially supported by JST ERATO (JPMJER2601). This research is supported by the National Research Foundation, Singapore and the Ministry of Digital Development and Information under the AI Visiting Professorship Programme (award number AIVP-2024-004). Any opinions, findings and conclusions or recommendations expressed in this material are those of the author(s) and do not reflect the views of National Research Foundation, Singapore and the Ministry of Digital Development and Information.

\subsection*{AI use statement}
In this work, generative AI tools assisted with developing the theoretical model and formulating mathematical claims, provided key ingredients for proofs, and assisted with revising mathematical statements, developing and checking proof arguments, and drafting and editing proofs.
Additionally, we used generative AI tools for drafting and revising parts of the manuscript, for the code that generates the figure, and for identifying and formatting references.
We have reviewed all AI-assisted work.
Every theorem, lemma and proof was verified by the authors, and all AI-drafted text was revised by them.
We take responsibility for the final content of this work, including text, claims or artifacts produced with the aid of generative AI.

\begingroup
\appto\UrlBreaks{\do\-}

\endgroup

\appendix
\newpage
\section{Additional Related Work}
\label{app:related}

This appendix expands the discussion of related work in \cref{sec:intro}.

\paragraph{Prompt-side curricula.}
Prior work provides theoretical analyses of how easy-to-hard task schedules \citep{li2023understanding,parashar2026curriculum}, performance-based prompt selection \citep{rajaraman2026autocurriculum}, and depth- or hint-based curricula \citep{bu2026provable} improve learning efficiency. These curricula adjust the tasks or prompts used for training. In our setting, the prompt distribution is fixed; reward feedback instead guides the learner to adapt its response distribution toward promising regions. Accordingly, the baseline in our separation is sampling from the fixed reference distribution, not RL without a curriculum, and the two forms of adaptivity can be combined.

\paragraph{Search and exploration over reasoning trajectories.}
\citet{kim2025metastable} model chain-of-thought as a metastable Markov process and analyze how search discovers rare transitions between clusters of reasoning states. \citet{wang2026planning} study language models that generate paths between given vertices of a graph, analyzing how exploration and policy updates affect planning accuracy, convergence, and output diversity. In contrast, we ask how much adapting the response distribution can reduce the number of noisy reward observations needed for policy learning, relative to sampling from the fixed reference distribution.

\paragraph{On-policy exploration under KL regularization.}
Exploration and iterative learning under KL regularization have been analyzed in linear-softmax and preference-feedback frameworks \citep{foster2025good,xiong2024iterative}. \citet{zhao2025logarithmic} bound the suboptimality of Gibbs policies through squared reward-estimation errors, a mechanism that our analysis also uses (\cref{app:clipping}), while \citet{wang2026greedy} show that sampling from the Gibbs policy of an estimated reward can itself provide sufficient exploration under sufficiently strong regularization. In contrast, we study Gibbs-policy learning with noisy prefix-reward observations and hierarchically nested nonparametric rewards.

\paragraph{Continuum-armed bandits.}
Continuum-armed bandit methods concentrate sampling on promising regions by refining the action space, as in zooming and HOO \citep{kleinberg2008multi,bubeck2011x}; SOO and StoSOO handle unknown local smoothness \citep{munos2011optimistic,valko2013stochastic}. Our algorithm shares this coarse-to-fine exploration principle, but our problem formulation reflects two features of LLM post-training. Specifically, we use KL regularization to retain response diversity and hierarchically nested rewards to model reasoning tasks in which resolving one difficulty reveals the next. In addition, our rates are attained not by optimistic tree search with explicit confidence bounds but by an LLM-compatible actor--critic, which fits a Transformer critic to on-policy reward feedback and updates the actor as a Gibbs policy.

\paragraph{Neural function approximation in estimation and RL.}
Approximation and estimation theory for deep networks provides statistical guarantees for nonparametric regression \citep{schmidt2020nonparametric,suzuki2019adaptivity,hayakawa2020minimax}, and \citet{takakura2023approximation} establish corresponding results for Transformers with sequence inputs. These results underpin our bounds on the Transformer critic's estimation error. In neural value-function learning for MDPs, \citet{fan2020theoretical} use a fixed sampling distribution, while \citet{yang2020provably} use optimism-based exploration. Both analyses link approximation and estimation errors to policy-performance guarantees. In contrast, under a hierarchical reward structure modeling reasoning, we study how much adapting the response distribution can improve learning from reward feedback relative to sampling from the fixed reference distribution.

\section{Common Preliminaries and Tools}
\label{app:common}

This section collects the material that is used by more than one of the proofs that follow: the conventions and notation of the whole appendix (\cref{app:notation}), the geometry of hierarchically nested rewards (\cref{app:geometry}), the Gibbs mass of the correct cube under a birth-accurate score (\cref{app:gibbs-mass}), and the approximation and estimation theory of the Transformer critic on the relevant window (\cref{app:transformer}).
The geometry of \cref{app:geometry} is used by the upper bounds of \cref{app:upper,app:small} and again by the lower bounds of \cref{app:lower,app:fixed}; the remaining subsections are common to the two upper bounds.
The results specific to one theorem, together with the hyperparameter schedules that they use, are stated in the section devoted to that theorem.

\subsection{Conventions and Notation}
\label{app:notation}

\paragraph{$\alpha$-H\"older smoothness.}
For $\alpha>0$, put $k=\lceil\alpha\rceil-1$, $\tau=\alpha-k\in(0,1]$ and $\beta=\min\{\alpha,1\}$.
For $v\in C^k(\bR^d)$ define
\begin{align}
    |v|_\alpha:=\max\left\{\sum_{1\le|s|\le k}\|D^s v\|_\infty,\ \max_{|s|=k}[D^s v]_\tau\right\},
    \qquad [w]_\tau:=\sup_{u\ne v}\frac{|w(u)-w(v)|}{\|u-v\|_\infty^\tau},
    \label{eq:holder-convention}
\end{align}
where $s$ is a multi-index and the empty sum is zero. On a cube, $\|v\|_{C^\alpha}:=\|v\|_\infty+|v|_\alpha$ uses the same restricted norms.
We call $v:\bR^d\to\bR$ \emph{$\alpha$-H\"older smooth with constant $\lambda$} if $v\in C^k(\bR^d)$ and $|v|_\alpha\le\lambda$.
Thus integer smoothness means that derivatives of order $\alpha-1$ are Lipschitz; for $\alpha\le1$, $|v|_\alpha=[v]_\alpha$.
Condition \assmp{A1} preserves the original H\"older condition for $\alpha\le1$ and implies $|g_i^0(u)-g_i^0(v)|\le\lambda\|u-v\|_\infty^\beta$ for every $\alpha>0$: when $\alpha>1$, integrate the gradient along the segment and use $\sum_j\|\partial_j g_i^0\|_\infty\le\lambda$. We use $C^\alpha$ (or $\alpha$-H\"older) for the full smoothness of \eqref{eq:holder-convention}, and $\beta$ for function-value oscillations.

\paragraph{Conventions.}
Throughout the appendix, $C,C',c,c',\ldots$ denote positive constants that depend only on the structural parameters $\theta=(d,\alpha,\lambda,\cmin,\gamma,\lstar,\lbar)$ and on $\sigma$, and never on $n$, $M$, $\rho$; their values may change from line to line.
We fix a reward $f\in\scF_{M,\theta}$ and suppress the prompt index $m$ when a single prompt is considered.
We use the following notation:
\begin{align*}
    &c_i=\frac{i^{-\gamma}}{\zeta(\gamma)},\qquad
    T_k=\sum_{i>k}c_i,\qquad
    L_i=\sum_{a\le i}\ell_a,\qquad
    E_a(m)=C_{a-1}(m)\setminus C_a(m),\\
    &f^{[k]}=\sum_{i\le k}c_ih_i,\qquad
    G_m=\max_yf(y;m),\qquad
    r_\star=2^{-\beta\lstar},\qquad
    \chi=3^\gamma r_\star,\qquad
    B_0=2^{d\lbar},\\
    &\kappa=\frac{\cmin}2,\qquad
    K=\rho^{-1/\gamma},\qquad
    \Lf=\log(e+n+M+\rho^{-1}),\\
    &p=\frac{2\alpha}{2\alpha+d},\qquad
    p_0=\frac{\gamma-1}{2\gamma+1}.
\end{align*}
The abbreviations $p$ and $p_0$ for the two exponents are used in the appendix only (they are recalled where needed).
Both upper bounds measure the accuracy of a score at the depth of the cube that carries it, and both pay a power of $M$ in the batch sizes; we write
\begin{align}
    a_j=c_{1+\lfloor j/\lbar\rfloor}\ \ (j\ge0),\qquad
    \zeta_M=2+\frac d{2\alpha},\qquad
    \ell_N=1+\log(N+1),
    \label{eq:a_j}
\end{align}
for the known scale attached to a depth $j$ (so that $a_j\asymp c_{k(j)}\asymp(j+1)^{-\gamma}$ for $k(j)=\min\{i:j\le L_i\}$, by \assmp{A3}), the exponent of $M$ in the budget, and a logarithmic factor of an approximation size $N$.

\Cref{tab:notation-global} collects the symbols of this list that recur throughout the appendix, and \cref{tab:notation-upper} the quantities that the two upper bounds share.
The schedules that belong to one proof only are listed in \cref{tab:notation-large} for the large-budget bound of \cref{thm:upper} and in \cref{app:small-params} for the small-regularization bound of \cref{thm:upper}.

\begin{table}[h]
\caption{Notation used throughout the appendix.}
\label{tab:notation-global}
\begin{center}
\begin{tabular}{ll}
\hline
Symbol & Meaning \\
\hline
$c_i=i^{-\gamma}/\zeta(\gamma)$ & weight of level $i$ of the hierarchy \\
$T_k=\sum_{i>k}c_i$ & weight of the levels deeper than $k$ \\
$L_i=\sum_{a\le i}\ell_a$ & depth at which level $i$ is fully resolved \\
$C_i(m)$, $E_a(m)$ & active cell of level $i$; shell $C_{a-1}\setminus C_a$ \\
$f^{[k]}=\sum_{i\le k}c_ih_i$ & reward truncated at level $k$ \\
$G_m=\max_yf(y;m)$ & optimal reward of prompt $m$ \\
$Q(m,D)$ & cube average \eqref{eq:oracle} queried by the oracle \\
$\kappa=\cmin/2$ & gap constant of \cref{lem:gap} \\
$r_\star$, $\chi$, $B_0$ & zoom contraction, its $\gamma$-weighted version, branching \\
$p=\frac{2\alpha}{2\alpha+d}$, $p_0=\frac{\gamma-1}{2\gamma+1}$ & the two exponents of \cref{thm:upper} \\
\hline
\end{tabular}
\end{center}
\end{table}

\begin{table}[h]
\caption{Notation shared by the two upper bounds.}
\label{tab:notation-upper}
\begin{center}
\begin{tabular}{lll}
\hline
Symbol & Meaning & Defined in \\
\hline
$a_j=c_{1+\lfloor j/\lbar\rfloor}$ & accuracy scale attached to depth $j$ & \eqref{eq:a_j} \\
$\zeta_M=2+\frac d{2\alpha}$ & exponent of $M$ in the batch sizes & \eqref{eq:a_j} \\
$W_h$, $\scD_h(m)$ & score at phase $h$; partition on which it is constant & \cref{lem:mask} \\
$\eta$ & birth-accuracy level & \cref{def:birth} \\
$P^\ast_h(m)$, $k(h)$ & correct depth-$h$ cube; the level it has reached & \cref{app:gibbs-mass} \\
$p_\ast$ & Gibbs mass of the correct cube & \cref{lem:mass} \\
$\vartheta_h$, $\kappa_0$ & temperature schedule; mass threshold & \cref{app:gibbs-mass} \\
$S$ & lookahead: query depth minus phase & \cref{subsec:algorithm} \\
$H$ & number of front-phase iterations & \cref{app:parameters,app:small-params} \\
$\varepsilon_h=\frac\eta4a_{h+S}$ & accuracy required of the critic of phase $h$ & \eqref{eq:event-E} \\
$\scW_h$, $K_w$, $J$, $N$ & query window; its size, length, approximation size & \cref{lem:window} \\
$p_S$ & actor mass of a single fine query & \cref{lem:fine} \\
$b_h$, $\delta_h$ & batch size and failure budget of phase $h$ & \cref{app:parameters,app:small-params} \\
$\scE$ & event that all front-phase critics are accurate & \eqref{eq:event-E} \\
\hline
\end{tabular}
\end{center}
\end{table}
By \assmp{A3},
\begin{align}
    \chi\le\frac12,\qquad \lambda\chi\le\frac{\cmin}8,
    \label{eq:chi}
\end{align}
and for $k\ge1$,
\begin{align}
    c_Tk^{-(\gamma-1)}\le T_k\le C_Tk^{-(\gamma-1)},\qquad
    c_T=\frac{2^{-(\gamma-1)}}{(\gamma-1)\zeta(\gamma)},\quad
    C_T=\frac1{(\gamma-1)\zeta(\gamma)}.
    \label{eq:tail}
\end{align}
For a cube $D$ we write $Q(D)=Q(m,D)$ for the cube average \eqref{eq:oracle}; the cube average always carries its arguments, and the cubes $Q_0,Q_1,Q_\flat$ of the lower-bound constructions never do.
The letters $D,L,W,S,T,B$ that name the parameters of the Transformer class in \cref{subsec:transformer} all denote other objects in this appendix ($D$ is a cube, $L_i$ a depth, $W_h$ a score, $S$ the lookahead, $T_k$ a tail weight and $B_0$ a branching number), so the class is written here as
\begin{align*}
    \Tcls\qty(\Nblk,D_{\rm emb},L_{\rm FFN},W_{\rm FFN},S_{\rm FFN},B_{\rm par})
    \quad\text{on inputs of length }\Tin,
\end{align*}
where $D_{\rm emb}$ is the embedding dimension, $L_{\rm FFN},W_{\rm FFN},S_{\rm FFN}$ the depth, width and sparsity of the FNN layers (also called FFN layers here), $B_{\rm par}$ the bound on the parameters and $\Tin$ the input length; $\Nblk$ is the number of blocks, as in the main text.
Boundaries of dyadic cubes are $\mu$-null sets and are assigned by the fixed convention of \cref{subsec:reward}; all statements below are understood up to such null sets.

\subsection{Geometry of Hierarchical Rewards}
\label{app:geometry}

The loss of a response is comparable to the tail weight of the level at which it leaves the correct path: \eqref{eq:geo-lower} makes exploration necessary and \eqref{eq:geo} makes reaching a deep cell sufficient.
\begin{lemma}[Basic geometry]\label{lem:geometry}
    Under \cref{assmp:main}, for every $m$, the series \eqref{eq:reward} converges uniformly, $f(\cdot;m)\in[-1,1]$, and $\bigcap_{i\ge0}C_i(m)=\{\ttrue(m)\}$ is the unique maximizer of $f(\cdot;m)$.
    Moreover, with $C_{\rm near}=2+\lambda2^\gamma r_\star/(1-\chi)$,
    \begin{align}
        G_m-f(y;m)&\ge\kappa T_i&&(y\in E_i(m)),
        \label{eq:geo-lower}\\
        0\le G_m-f(y;m)&\le C_{\rm near}T_k&&(y\in C_k(m)).
        \label{eq:geo}
    \end{align}
\end{lemma}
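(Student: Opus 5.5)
The plan is to work throughout with a single prompt and write $y^\star=\ttrue$. I first dispose of the soft claims. Since $g^0_i$ is $\alpha$-H\"older on $\bR^d$ it is continuous, so each $h_i$ is continuous on $\Omega$. By \assmp{A1}, $|h_i|\le1$, and $\sum_ic_i=1$ by the normalisation of \cref{assmp:main}. The Weierstrass M-test then gives uniform convergence of \eqref{eq:reward}, continuity of $f$, and $|f|\le1$.

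The cells $C_i$ are nested dyadic cubes of depth $L_i\ge i\lstar\to\infty$, so their closures shrink to a single point, which I call $y^\star$. Every $y\ne y^\star$ lies in exactly one shell $E_i=C_{i-1}\setminus C_i$, up to the null-set convention. Once \eqref{eq:geo-lower} is proved, it gives $f(y)\le f(y^\star)-\kappa T_i<f(y^\star)$. Hence $y^\star$ is the unique maximiser, $G_m=f(y^\star)$, and the inequality $0\le G_m-f$ in \eqref{eq:geo} follows. So everything reduces to two difference estimates for $f(y^\star)-f(y)$, which I split level by level.

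\emph{Three per-level facts.} The first is a shallow-level oscillation bound. For $j\le i$ and $y,y'\in C_i$, both rescaled points $2^{L_{j-1}}(y-b_{j-1})$ and $2^{L_{j-1}}(y'-b_{j-1})$ lie in a cube of side $2^{-(L_i-L_{j-1})}$. Using the $\beta$-H\"older bound of \cref{app:notation}, this gives
\begin{align*}
|h_j(y)-h_j(y')|\le\lambda r_\star^{\,i-j+1}.
\end{align*}
The same bound with exponent $i-j$ holds for $y,y'\in C_{i-1}$. The second fact concerns the value at $y^\star$. Because $y^\star\in C_j$, its rescaled point in level $j$ lies in $Q_j$, within distance $2^{-\ell_j}$ of $z^\star_j$. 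Therefore
\begin{align*}
h_j(y^\star)\ge g_j(z^\star_j)-\lambda r_\star\ge\cmin-\lambda r_\star,
\qquad
h_j(y^\star)\ge\max g_j-\lambda r_\star.
\end{align*}
The third fact is a support property: $h_j$ vanishes outside $C_{j-1}$.

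\emph{Upper bound \eqref{eq:geo}.} Take $y\in C_k$. The levels $j>k$ contribute at most $2T_k$ in absolute value. The levels $j\le k$ contribute at most $\lambda\sum_{j\le k}c_jr_\star^{k-j+1}$. For this sum I plan to use the weight comparison
\begin{align*}
c_{k-s}/c_k=\bigl(k/(k-s)\bigr)^\gamma\le(s+1)^\gamma\le3^{\gamma s}
\end{align*}
for $0\le s<k$. With it the sum is at most $\lambda c_kr_\star\sum_s\chi^s=\lambda c_kr_\star/(1-\chi)$. Finally $c_k\le2^\gamma c_{k+1}\le2^\gamma T_k$, which yields exactly the constant $C_{\rm near}$.

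\emph{Lower bound \eqref{eq:geo-lower}.} Take $y\in E_i$. The deeper levels $j>i$ vanish at $y$, since $y\notin C_i\supset C_{j-1}$. At $y^\star$ they contribute at least $(\cmin-\lambda r_\star)T_i$. Level $i$ itself contributes at least $-c_i\lambda r_\star$, because $h_i(y)\le\max g_i$ while $h_i(y^\star)\ge\max g_i-\lambda r_\star$. The levels $j<i$ involve only points of $C_{i-1}$, and by the same geometric-sum argument they contribute at least $-\lambda c_i\chi/(1-\chi)$ or so. Writing $c_i\le2^\gamma T_i$ and using $\lambda\chi\le\cmin/8$ and $\chi\le1/2$ from \eqref{eq:chi}, I expect the total loss to be at most roughly $\cmin T_i/2$. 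This leaves $G_m-f(y)\ge\kappa T_i$.

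The main obstacle is this constant bookkeeping in the lower bound. The $\lambda r_\star$ losses from the shallow levels, from level $i$, and from $\sum_{j>i}c_jh_j(y^\star)$ must all fit inside $\cmin/2$. The factor $3^\gamma$ in $\chi$ is there to absorb the ratio $c_{i-s}/c_i$ together with the $2^\gamma$ from $c_i\le2^\gamma T_i$. If the crude bound $(s+1)^\gamma\le3^{\gamma s}$ does not close the argument, I would split off the term $s=0$ and bound it separately.
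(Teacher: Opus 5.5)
Your plan follows the paper's proof step for step. The soft claims go by the M-test, and uniqueness of the maximizer is deduced from \eqref{eq:geo-lower} on every shell. You use the same three per-level facts, and your upper bound \eqref{eq:geo} is carried out exactly as in the paper, landing on $C_{\rm near}$.

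The gap is in the lower bound, at precisely the bookkeeping you flag, and the remedy you propose does not close it. For the shallow levels $j<i$ the two points share only $C_{i-1}$, so level $j$ costs $\lambda c_j r_\star^{\,i-j}$. That is one power of $r_\star$ fewer than in the upper bound. Your route, $c_{i-s}\le3^{\gamma s}c_i$ followed by $c_i\le2^\gamma T_i$, bounds their total loss by $2^\gamma\lambda\frac{\chi}{1-\chi}T_i$. With the only facts you plan to use, $\lambda\chi\le\cmin/8$ and $\chi\le1/2$, this is $2^\gamma\cmin T_i/4$. That exceeds the entire budget $\cmin T_i/2$ because $\gamma>1$, before the level-$i$ loss and the $\lambda r_\star T_i$ shortfall of the deep levels are even counted. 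Splitting off $s=0$ does not help: the shallow sum starts at $s=1$, and the deficit is the extra factor $2^\gamma$ on its $s=1$ term ($2\cdot2>3$).

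The fix is to normalize by $c_{i+1}\le T_i$ from the start, not by $c_i$.
\begin{itemize}
\item For $1\le s<i$ one has $c_{i-s}/c_{i+1}=\bigl((i+1)/(i-s)\bigr)^\gamma\le(s+2)^\gamma\le3^{\gamma s}$. This is exactly what the factor $3^\gamma$ in $\chi$ is calibrated for.
\item For level $i$, use $c_i\le2^\gamma c_{i+1}$.
\item The level-$i$ and shallow losses are then at most $\lambda c_{i+1}\bigl(2^\gamma r_\star+\frac{\chi}{1-\chi}\bigr)\le\lambda c_{i+1}(\chi+2\chi)\le\frac{3\cmin}8c_{i+1}$.
\item The deep levels give at least $(\cmin-\lambda r_\star)T_i\ge\frac{7\cmin}8T_i$.
\end{itemize}
Since $c_{i+1}\le T_i$, the difference is at least $\frac{\cmin}2T_i=\kappa T_i$. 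This is the paper's computation \eqref{eq:geo-proof-2}.
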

\begin{proof}
    Uniform convergence and $f\in[-1,1]$ follow from $\abs{h_i}\le1$ and $\sum_ic_i=1$.

    Convergence and boundedness are immediate from $\abs{h_i}\le1$ and $\sum_ic_i=1$.
    For the two gaps we compare an arbitrary response with the maximizer: the levels above the exit contribute nothing, the levels below it contribute at least a fixed fraction of their weight by \assmp{A2}, and the levels already resolved contribute an error that the zoom condition \assmp{A3} keeps below a fraction of the gain; summing the resulting geometric series gives \eqref{eq:geo-lower}, and the same comparison inside a cell gives \eqref{eq:geo}.

    \emph{Lower bound on the loss outside the cells.}
    Let $z=\ttrue(m)\in\bigcap_iC_i(m)$ and $y\in E_a(m)$.
    For every $i$ we have $z\in C_i(m)$, i.e., $z$ lies in the cube of relative depth $\ell_i$ that contains the maximizer of $g_{i,m}$, so by \assmp{A1}, \assmp{A2} and \eqref{eq:chi},
    \begin{align}
        h_i(z;m)\ge g_{i,m}(z^\star_{i,m})-\lambda r_\star\ge\cmin-\frac{\cmin}8=\frac{7\cmin}8\qquad(i\ge1);
        \label{eq:geo-proof-1}
    \end{align}
    the same bound holds for $i\le k$ and any $z\in C_k(m)$, which is the form used in \cref{lem:gap}.
    For $i>a$ we have $h_i(y;m)=0$.
    For $i<a$, both $y$ and $z$ lie in $C_{a-1}(m)$, which is a cube of relative side at most $2^{-\lstar(a-i)}$ in the coordinates of $g_{i,m}$, so that
    \begin{align*}
        \abs{h_i(z;m)-h_i(y;m)}\le\lambda r_\star^{a-i}\qquad(i<a),
    \end{align*}
    while for $i=a$ we have $h_a(y;m)\le g^\star_{a,m}\le h_a(z;m)+\lambda r_\star$.
    Since $(a+1)/i\le3^{a-i}$ for $i<a$, we obtain
    \begin{align}
        \lambda\qty(c_ar_\star+\sum_{i<a}c_ir_\star^{a-i})
        \le\lambda c_{a+1}\qty(2^\gamma r_\star+\sum_{b\ge1}3^{\gamma b}r_\star^b)
        =\lambda c_{a+1}\qty(2^\gamma r_\star+\frac{\chi}{1-\chi})
        \le\frac{3\cmin}8c_{a+1}.
        \label{eq:geo-proof-2}
    \end{align}
    Combining \eqref{eq:geo-proof-1} and \eqref{eq:geo-proof-2},
    \begin{align*}
        f(z;m)-f(y;m)
        \ge\frac{7\cmin}8\sum_{i=a+1}^\infty c_i-\frac{3\cmin}8c_{a+1}
        \ge\frac{\cmin}2T_a=\kappa T_a.
    \end{align*}
    Since $G_m\ge f(z;m)$, this gives \eqref{eq:geo-lower}, and shows that no $y\notin\bigcap_iC_i(m)$ is a maximizer.

    \emph{Upper bound on the loss inside the cells.}
    If $y\in C_k(m)$ and $z\in C_{k'}(m)$ with $k'\ge k$, then by the same geometric series and $\abs{h_i}\le1$,
    \begin{align*}
        f(z;m)-f(y;m)
        \le\sum_{i\le k}c_i\lambda r_\star^{k-i+1}+2T_k
        \le C_{\rm near}T_k,
    \end{align*}
    which gives \eqref{eq:geo}; the maximum is attained at the common point of the cells by continuity.
\end{proof}

A query at depth $h$ sees only the first $\max\{1,\lceil h/\lstar\rceil\}$ levels: a deeper primitive lies either inside the queried cube, where it has mean zero \assmp{A4}, or outside it.
\begin{lemma}[Cube averages resolve only finitely many levels]\label{lem:cube-average}
    Let $D$ be a dyadic cube of depth $h$ and $K_h=\max\{1,\lceil h/\lstar\rceil\}$. Then
    \begin{align}
        Q(m,D)=\frac1{\mu(D)}\int_Df^{[K_h]}(y;m)\dd{\mu}(y).
        \label{eq:cube-average}
    \end{align}
    More generally, for any $i\ge2$ with $L_{i-1}\ge h$, either $C_{i-1}(m)\subset D$ or the interiors of $C_{i-1}(m)$ and $D$ are disjoint, and in both cases $\int_Dh_i(\cdot;m)\dd{\mu}=0$.
\end{lemma}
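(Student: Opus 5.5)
The plan is to prove the second statement (the dichotomy and the vanishing integral) first and then deduce \eqref{eq:cube-average} by summing over levels. Fix $m$ and $i\ge2$ with $L_{i-1}\ge h$.

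\emph{The dichotomy.} The cell $C_{i-1}(m)$ is, by \cref{def:nested}, a dyadic cube of depth $L_{i-1}$. This holds because each $Q_a$ is a depth-$\ell_a$ dyadic cube, and an affine map $b_{a-1}+2^{-L_{a-1}}(\cdot)$ sends dyadic cubes of depth $\ell_a$ into dyadic cubes of depth $L_{a-1}+\ell_a$, provided $b_{a-1}$ is a dyadic corner of depth $L_{a-1}$; an induction on $a$ supplies this. Two dyadic cubes are either nested or have disjoint interiors. Since $L_{i-1}\ge h$ equals the depth of $C_{i-1}(m)$ and is at least the depth of $D$, only the inclusion $C_{i-1}(m)\subset D$ or interior-disjointness can occur.

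\emph{The vanishing integral.} The level $h_i(\cdot;m)$ vanishes outside $C_{i-1}(m)$. In the disjoint case this gives $\int_Dh_i=0$ up to a null boundary. In the inclusion case,
\begin{align*}
\int_Dh_i\,d\mu=\int_{C_{i-1}(m)}g_{i,m}\bigl(2^{L_{i-1}}(y-b_{i-1})\bigr)\,d\mu(y)=2^{-dL_{i-1}}\int_\Omega g_{i,m}\,d\mu=0,
\end{align*}
where the middle equality is the change of variables and the last is \assmp{A4}, valid since $i\ge2$.

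\emph{The formula \eqref{eq:cube-average}.} Write $f=f^{[K_h]}+\sum_{i>K_h}c_ih_i$. The series converges uniformly by \cref{lem:geometry}, so integration over $D$ may be exchanged with the sum. It remains to check that every $i>K_h$ satisfies $i\ge2$ and $L_{i-1}\ge h$. First, $i\ge K_h+1\ge2$ because $K_h\ge1$. Second, by \assmp{A3},
\begin{align*}
L_{i-1}\ge(i-1)\lstar\ge K_h\lstar\ge\lceil h/\lstar\rceil\lstar\ge h.
\end{align*}
Hence every tail term integrates to zero over $D$, and dividing by $\mu(D)$ gives \eqref{eq:cube-average}.

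The only delicate step is the dyadic-alignment claim for $C_{i-1}(m)$, namely that $b_{i-1}$ lies on the depth-$L_{i-1}$ dyadic lattice. This follows by induction: $b_i=b_{i-1}+2^{-L_{i-1}}q_i$ with $q_i$ the corner of $Q_i$, which lies in $2^{-\ell_i}\mathbb Z^d$, so $b_i\in2^{-L_i}\mathbb Z^d$. Boundary conventions affect only null sets.
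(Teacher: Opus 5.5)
Your proposal is correct and follows the paper's proof. It uses the same three steps: the nested-or-interior-disjoint dichotomy for a dyadic cube of depth $L_{i-1}\ge h$, the change of variables plus the zero-mean condition (A4) in the inclusion case, and $L_{i-1}\ge(i-1)\lstar\ge h$ for every $i>K_h$. Your extra work fills in details the paper leaves implicit: the induction on the corners $b_i$ showing that $C_{i-1}(m)$ really is a dyadic cube of depth $L_{i-1}$, and the appeal to uniform convergence to integrate the series term by term.
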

\begin{proof}
    If $L_{i-1}\ge h$, then $C_{i-1}(m)$ is at least as deep as $D$; since two dyadic cubes are either nested or have disjoint interiors, it is contained in $D$ or disjoint from it.
    In the former case, by \assmp{A4},
    \begin{align*}
        \int_Dh_i(\cdot;m)\dd{\mu}=\mu(C_{i-1}(m))\int_\Omega g_{i,m}\dd{\mu}=0.
    \end{align*}
    Since $L_{i-1}\ge\lstar(i-1)\ge h$ for $i>K_h$, \eqref{eq:cube-average} follows.
\end{proof}

Leaving the correct path at level $a$ costs the whole weight $\sum_{i>a}c_i$ of the levels below $a$, up to the resolution $c_k$ of the current depth.
\begin{lemma}[Gap between the correct cube and the others]\label{lem:gap}
    Fix $m$ and let $\scD$ be a finite partition of $\Omega$ into dyadic cubes.
    Let $D_\ast\in\scD$ be the cube containing $\ttrue(m)$, let $j\ge1$ be its depth and
    \begin{align*}
        k=\min\{i\ge1:j\le L_i\},\qquad\text{so that}\qquad C_k(m)\subset D_\ast\subset C_{k-1}(m).
    \end{align*}
    Then every $D\in\scD\setminus\{D_\ast\}$ satisfies either $D\subset E_a(m)$ for a unique $a\le k-1$, or $D\subset C_{k-1}(m)\setminus D_\ast$, and
    \begin{align}
        Q(D_\ast)-Q(D)&\ge\kappa\sum_{i=a+1}^kc_i-3c_k&&(D\subset E_a(m),\ a<k),
        \label{eq:gap}\\
        Q(D_\ast)-Q(D)&\ge-3c_k&&(D\subset C_{k-1}(m)\setminus D_\ast).
        \label{eq:gap-inner}
    \end{align}
    Moreover,
    \begin{align}
        \sup_{u,v\in C_{k-1}(m)}\abs{f^{[k]}(u;m)-f^{[k]}(v;m)}\le3c_k.
        \label{eq:osc-cell}
    \end{align}
\end{lemma}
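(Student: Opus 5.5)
The plan is to prove the oscillation bound \eqref{eq:osc-cell} first and derive both gap inequalities from it, using the cube-average identity of \cref{lem:cube-average} and the lower bound \eqref{eq:geo-proof-1} inside the cells. The classification of $D$ is purely combinatorial. Since $\scD$ is a partition into dyadic cubes and $D\ne D_\ast$, the cube $D$ is disjoint from $D_\ast$ up to null sets, so it does not contain $\ttrue(m)$. Let $a$ be the largest index with $D\subset C_a(m)$. Two cases arise. If $a\le k-2$, or $a=k-1$ with $D$ not contained in $C_k(m)$, then either $D\subset E_{a+1}$, or $D$ is contained in $C_{k-1}\setminus D_\ast$. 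To make this clean I would argue by depth. If $D$ is no deeper than some $C_{a'}$ that it meets, the nesting of dyadic cubes forces $D\supset C_{a'}\ni\ttrue$, which is a contradiction unless $D$ lies in $C_{k-1}$. Relabelling gives the unique $a\le k-1$ with $D\subset E_a$.

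For \eqref{eq:osc-cell}, fix $u,v\in C_{k-1}$. The term $i=k$ contributes $c_k|h_k(u)-h_k(v)|\le 2c_k$, because $|h_k|\le1$. For $i<k$, both points lie in $C_{k-1}$, which has relative side $2^{-\lstar(k-1-i+1)}$ in the coordinates of $g_i$. The H\"older bound then gives $c_i\lambda r_\star^{k-i}$. Summing as in \eqref{eq:geo-proof-2}, with $c_i\le 3^{\gamma(k-i)}c_k$, yields $\lambda c_k\chi/(1-\chi)\le c_k$ by \eqref{eq:chi}. The total is at most $3c_k$.

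For \eqref{eq:gap-inner}, both $D_\ast$ and $D$ have depth $\ge j$ or at least lie in $C_{k-1}$. \Cref{lem:cube-average} lets us replace $f$ by a truncation. We need to check that only levels $\le k$ survive: levels $i>k$ have $C_{i-1}\subset C_k\subset D_\ast$ or $C_{i-1}$ disjoint from $D$, and since $D_\ast$ has depth $j\le L_k\le L_{i-1}$, \cref{lem:cube-average} kills them in both averages. Then $Q(D_\ast)$ and $Q(D)$ are averages of $f^{[k]}$ over subsets of $C_{k-1}$, and \eqref{eq:osc-cell} gives a difference of at least $-3c_k$.

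For \eqref{eq:gap}, take $D\subset E_a$ with $a<k$. Because $D\cap C_a=\emptyset$, every level $i>a$ vanishes on $D$. Levels $i\le a$ vary on $C_{a-1}$ by the geometric bound, and the term $i=a$ satisfies $h_a(y)\le h_a(z)+\lambda r_\star$ as in the proof of \cref{lem:geometry}. On $D_\ast$ the levels $a<i\le k$ are at least $7\cmin/8$ by \eqref{eq:geo-proof-1}, up to the oscillation of $f^{[k]}$ in $C_{k-1}$, which is bounded by $3c_k$. I would compare both averages against $\ttrue$. From $Q(D_\ast)\ge f^{[k]}(\ttrue)-3c_k$ and $Q(D)\le \sup_D f^{[a]}$ we get
\begin{align*}
Q(D_\ast)-Q(D)\ge\tfrac{7\cmin}8\sum_{i=a+1}^kc_i-\tfrac{3\cmin}8c_{a+1}-3c_k\ge\kappa\sum_{i=a+1}^kc_i-3c_k.
\end{align*}
The main obstacle is bookkeeping. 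We must verify that the deeper levels indeed average to zero on $D_\ast$, which needs $L_{i-1}\ge j$ for $i>k$, and that the error $3\cmin c_{a+1}/8$ is absorbed by $\frac{3\cmin}8\sum_{a+1}^k c_i$. The latter holds because the sum includes $c_{a+1}$.
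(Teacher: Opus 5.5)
Your proposal is correct and follows essentially the same route as the paper: the classification by dyadic nesting and disjointness from $D_\ast$, the reduction of both cube averages to $f^{[k]}$ (via \cref{lem:cube-average} for $D_\ast$, and via the supports of the deeper levels for $D$), the geometric-series bound \eqref{eq:osc-cell}, and the pointwise gap taken from the proof of \cref{lem:geometry}. Your comparison point $\ttrue(m)$ is just one admissible choice of the paper's arbitrary $z\in C_k(m)$, and proving \eqref{eq:osc-cell} before the two gap inequalities is only a reordering.
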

\begin{proof}
    If $D\ne D_\ast$ contained $C_a(m)$ for some $a\le k-1$, it would contain $D_\ast$, contradicting disjointness; hence, for each $a\le k-1$, $D$ is inside $C_a(m)$ or disjoint from it, which gives the classification.
    By \cref{lem:cube-average}, $Q(D_\ast)$ is the average of $f^{[k]}$ over $D_\ast$; moreover $f=f^{[a]}=f^{[k]}$ on $E_a(m)$ and $f=f^{[k]}$ on $C_{k-1}(m)\setminus C_k(m)$.
    The proof of \cref{lem:geometry} gives, for $z\in C_k(m)$ and $y\in E_a(m)$,
    \begin{align*}
        f^{[k]}(z;m)-f^{[k]}(y;m)\ge\kappa\sum_{i=a+1}^kc_i,
    \end{align*}
    and, for $u,v\in C_{k-1}(m)$,
    \begin{align*}
        \abs{f^{[k]}(u;m)-f^{[k]}(v;m)}\le2c_k+\lambda\sum_{i<k}c_ir_\star^{k-i}\le3c_k,
    \end{align*}
    which is \eqref{eq:osc-cell}.
    Since $Q(D_\ast)$ is an average over a subset of $C_{k-1}(m)$, \eqref{eq:osc-cell} gives $Q(D_\ast)\ge f^{[k]}(z;m)-3c_k$ for every $z\in C_k(m)$, and averaging the first display over $y\in D$ yields \eqref{eq:gap} and \eqref{eq:gap-inner}.
\end{proof}

Inside a cube that the hierarchy has not yet split, cube averages at a fixed depth differ by at most the weight of the current level; \eqref{eq:osc-shell} is the pointwise version on a shell.
\begin{lemma}[Oscillation of cube averages at finite depth]\label{lem:oscillation}
    Let $A$ be a dyadic cube of depth $h$, let $k(h)=\min\{i\ge1:h\le L_i\}$ (so that $L_{k(h)-1}<h\le L_{k(h)}$ for $h\ge1$), and let $R\ge0$.
    For all dyadic cubes $D,D'\subset A$ of depth at most $h+R+1$,
    \begin{align}
        \abs{Q(D)-Q(D')}\le C(R+1)c_{k(h)}.
        \label{eq:osc}
    \end{align}
    Moreover, if $A\subset E_i(m)$ for some $i$, then for every dyadic cube $D\subset A$ and $y\in D$,
    \begin{align}
        \abs{f(y;m)-Q(D)}\le C_{\rm osc}c_i.
        \label{eq:osc-shell}
    \end{align}
\end{lemma}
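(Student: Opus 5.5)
We prove \eqref{eq:osc} by reducing both cube averages to averages of a truncated reward, and then bound the oscillation of that truncated reward on $A$ level by level. We obtain \eqref{eq:osc-shell} by the same kind of comparison, carried out pointwise on a shell.

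\emph{Reduction to a truncation.} Let $D,D'\subset A$ have depth at most $h+R+1$. By \cref{lem:cube-average}, $Q(D)$ and $Q(D')$ are averages of $f^{[K]}$ over $D$ and $D'$, with $K=\max\{1,\lceil (h+R+1)/\lstar\rceil\}$. Hence
\begin{align*}
    \abs{Q(D)-Q(D')}\le\sup_{u,v\in A}\abs{f^{[K]}(u;m)-f^{[K]}(v;m)}.
\end{align*}
We split $f^{[K]}=f^{[k(h)-1]}+\sum_{k(h)\le i\le K}c_ih_i$.

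\emph{Shallow levels.} Take $i<k(h)$. Then $L_i<h$, so $A$ has relative side at most $2^{-(h-L_{i-1})}\le2^{-\lstar(k(h)-1-i)-1}$ in the coordinates of $g_{i,m}$. If $A$ is not contained in $C_{i-1}(m)$, then $h_i$ vanishes on $A$; otherwise $A$ lies inside $C_{i-1}(m)$. In either case the $\beta$-H\"older bound of \assmp{A1} gives oscillation at most $\lambda r_\star^{k(h)-1-i}$. Weighting by $c_i$ and using $c_i\le 3^{\gamma(k(h)-i)}c_{k(h)}$ (as in \eqref{eq:geo-proof-2}), the sum over $i<k(h)$ is at most $\lambda c_{k(h)}\sum_{b\ge1}3^{\gamma b}r_\star^{b-1}\le Cc_{k(h)}$ by \eqref{eq:chi}.

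\emph{Deep levels.} The number of indices $k(h)\le i\le K$ is at most $K-k(h)+1\le (R+1)/\lstar+2\le C(R+1)$, because $L_{k(h)-1}<h$ and $\lstar\ge1$. Each such term has oscillation at most $2c_i\le2c_{k(h)}$, since $\abs{h_i}\le1$. Summing the two parts gives \eqref{eq:osc}.

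\emph{Pointwise bound on a shell.} Suppose $A\subset E_i(m)$. On $E_i$ we have $f=f^{[i]}$, because every level $j>i$ is supported in $C_{j-1}\subset C_i$. Moreover, for $D\subset A\subset E_i$, $Q(D)$ is the average of $f$ over $D$: the deeper primitives vanish on $D$ directly, and no appeal to \assmp{A4} is needed. So $\abs{f(y)-Q(D)}\le\sup_{u,v\in E_i}\abs{f^{[i]}(u)-f^{[i]}(v)}$. This is bounded by $3c_i$ using \eqref{eq:osc-cell} with $k=i$, since $E_i\subset C_{i-1}$.

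The main obstacle is the counting step for the deep levels, which controls how many levels a depth-$(h+R+1)$ cube can resolve beyond $k(h)$ without picking up a factor worse than $R+1$. It is handled by the relation between $K_h$ and $k(h)$ through $\lstar$ and \assmp{A3}. The geometric summation for the shallow levels is routine given \eqref{eq:chi}.
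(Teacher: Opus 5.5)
\textbf{Gap: the deep-level count.} Your shallow-level geometric sum is correct. Your shell argument is also correct: you use $f=f^{[i]}$ on $E_i(m)\subset C_{i-1}(m)$ and then \eqref{eq:osc-cell} with $k=i$, which is a slightly more direct version of the paper's ``same way'' remark. The gap is in counting the deep levels.

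With $K=\max\{1,\lceil(h+R+1)/\lstar\rceil\}$, bounding $K-k(h)+1$ requires a \emph{lower} bound on $k(h)$. But $L_{k(h)-1}<h$ only gives the upper bound $k(h)<h/\lstar+1$, and the best general lower bound is $k(h)\ge h/\lbar$. Take all $\ell_i=\lbar>\lstar$ and $h=L_k=\lbar k$. Then $k(h)=k$ while $K\ge\lbar k/\lstar$, so $K-k(h)+1\ge(\lbar/\lstar-1)k$, which is unbounded in $h$.

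This is not just a loose count. Take $A=C_k(m)$. The levels $k+2,\ldots,K$ are supported inside $A$, are at least $7\cmin/8$ at $\ttrue(m)$ by \eqref{eq:geo-proof-1}, and vanish on $E_{k+1}(m)$. The first $k+1$ levels oscillate by at most $3c_{k+1}$ on $A$ by \eqref{eq:osc-cell}. Hence $\sup_{u,v\in A}\abs{f^{[K]}(u;m)-f^{[K]}(v;m)}\ge\frac{7\cmin}8\sum_{i=k+2}^{K}c_i-3c_{k+1}$, which is of order $k\,c_k$ for large $k$. So the intermediate quantity you pass through cannot yield $C(R+1)c_{k(h)}$ with $C$ depending only on $\theta$, uniformly in $h$ and in the zoom sequence, as the statement requires. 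The crude index of \cref{lem:cube-average} overcounts the resolved levels by a factor up to $\lbar/\lstar$. The pointwise supremum then picks up all those spurious levels, and discards their cancellation over $D$ from \assmp{A4}.

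\textbf{Repair.} Truncate by depth rather than by the crude count. By the second clause of \cref{lem:cube-average}, every level with $L_{i-1}\ge\mathrm{depth}(D)$ integrates to zero over $D$. So $Q(D)$ and $Q(D')$ are both averages of $f^{[K']}$ with $K'=\max\{i\ge1:L_{i-1}<h+R+1\}$. The levels $k(h)<i\le K'$ satisfy $h\le L_{k(h)}\le L_{i-1}<h+R+1$. Consecutive $L$'s differ by at least $\lstar$, so there are at most $(R+1)/\lstar+1$ such levels; adding level $k(h)$ recovers your bound $(R+1)/\lstar+2$.

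This is exactly the paper's split, which classifies levels by the position of $L_{i-1}$ relative to $h$ and $h+R+1$. The paper puts level $k(h)$ with the shallow levels, since $L_{k(h)-1}<h$. With this change the rest of your argument goes through unchanged.
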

\begin{proof}
    A level $i$ with $L_{i-1}<h$ varies on $A$ by at most $\lambda c_i2^{-\beta(h-L_{i-1})}$ by \assmp{A1}.
    Since $c_i/c_{k(h)}\le(k(h)-i+1)^\gamma$ and $h-L_{i-1}\ge\lstar(k(h)-i)$, the sum over such levels is at most
    \begin{align*}
        \lambda c_{k(h)}\sum_{r'\ge0}(r'+1)^\gamma2^{-\beta\lstar r'}\le Cc_{k(h)}.
    \end{align*}
    Levels with $h\le L_{i-1}<h+R+1$ number at most $(R+1)/\lstar+1$, and each has amplitude at most $c_i\le c_{k(h)}$.
    Deeper levels do not contribute to $Q(D)$ by \cref{lem:cube-average}.
    This proves \eqref{eq:osc}; \eqref{eq:osc-shell} follows in the same way, using that on $E_i(m)$ only the levels $\le i$ are active and $c_a/c_i\le(i-a+1)^\gamma$ for $a\le i$.
\end{proof}

On a shell the reward is a finite sum of primitives, hence $\alpha$-H\"older in the $d$ local coordinates of that shell, with a seminorm of the order of $a_j$.
\begin{lemma}[The reward on a shell is H\"older in local coordinates]\label{lem:holder-shell}
    Let $P^\ast_j=b_j(m)+2^{-j}\Omega$ be the depth-$j$ cube on the correct path, $A_j=P^\ast_j\setminus P^\ast_{j+1}$, and $i(j)=\min\{i:L_i>j\}$. On $A_j$, $f=f^{[i(j)]}$, and the function
    \begin{align*}
        g_{m,j}(z)=f^{[i(j)]}\qty(b_j(m)+2^{-j}z;m),\qquad z\in\bR^d,
    \end{align*}
    satisfies $\abs{g_{m,j}}\le1$ and
    \begin{align}
        \|g_{m,j}-g_{m,j}(0)\|_{C^\alpha(\Omega)}+[g_{m,j}]_\beta\le Cc_{i(j)}\le Ca_j,
        \label{eq:holder-shell}
    \end{align}
    with $a_j$ as in \eqref{eq:a_j}.
\end{lemma}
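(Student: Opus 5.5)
The plan is to prove \cref{lem:holder-shell} in three steps: show that on $A_j$ only finitely many levels are active, write $g_{m,j}$ as a finite sum of rescaled primitives, and bound each rescaled primitive using \assmp{A1}. The weights $c_i$ then supply the scale $c_{i(j)}$, and \assmp{A3} makes the sum over coarse levels converge geometrically.

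\emph{Identifying the active levels.} For $i>i(j)$ we have $L_{i-1}\ge L_{i(j)}>j$. Hence $C_{i-1}(m)$ is a dyadic cube on the correct path strictly deeper than $j$, so $C_{i-1}(m)\subset P^\ast_{j+1}$. Since $h_i$ vanishes outside $C_{i-1}(m)$, $h_i=0$ on $A_j$. Therefore $f=f^{[i(j)]}$ on $A_j$, and $|g_{m,j}|\le\sum_ic_i=1$.

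\emph{Writing $g_{m,j}$ as a sum of rescaled primitives.} For $i\le i(j)$ we have $L_{i-1}\le j$, so $P^\ast_j\subset C_{i-1}(m)$. Composing the two affine maps gives
\[
g_{m,j}(z)=\sum_{i\le i(j)}c_i\,g^0_{i,m}\bigl(u_i+2^{-(j-L_{i-1})}z\bigr),
\]
where $u_i$ are fixed offsets. This identity holds for all $z\in\bR^d$, because $f^{[i(j)]}$ is defined globally by the zero extensions.

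\emph{Bounding each term.} Dilating by $s=2^{-(j-L_{i-1})}\le1$ multiplies every derivative of order $|s'|\ge1$, and the H\"older seminorm of order $\tau$ of the top derivatives, by at most $s^{\min(|s'|,\alpha)}\le s^{\beta}$. So by \assmp{A1} each term satisfies $|\cdot|_\alpha\le\lambda c_i2^{-\beta(j-L_{i-1})}$, and likewise $[\cdot]_\beta\le\lambda c_i2^{-\beta(j-L_{i-1})}$, using the $\beta$-oscillation bound recalled in \cref{app:notation}. The term $i=i(j)$ contributes at most $\lambda c_{i(j)}$. For $i<i(j)$, we have $j-L_{i-1}\ge L_{i(j)-1}-L_{i-1}\ge\lstar(i(j)-i)$ and $c_i/c_{i(j)}\le(i(j)-i+1)^\gamma$. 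Summing gives
\[
\lambda c_{i(j)}\sum_{r\ge0}(r+1)^\gamma r_\star^{r}\le Cc_{i(j)},
\]
which is finite since $r_\star\le\chi\le1/2$ by \eqref{eq:chi}. This is the same series as in \cref{lem:oscillation}.

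\emph{The sup-norm part and the comparison with $a_j$.} The remaining piece $\|g_{m,j}-g_{m,j}(0)\|_\infty$ on $\Omega$ is at most $[g_{m,j}]_\beta\cdot1^\beta$, so it is controlled by the same bound. Finally, $i(j)\ge1+\lfloor j/\lbar\rfloor$ fails in general, so I use the reverse comparison: $L_{i(j)-1}\le j$ with $\ell_a\ge\lstar$ gives $i(j)\le1+j/\lstar$, hence $c_{i(j)}\lesssim(j+1)^{-\gamma}\asymp a_j$, with constants depending on $\lstar,\lbar,\gamma$.

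The main obstacle is the dilation bookkeeping for $\alpha>1$. The seminorm $|v|_\alpha$ mixes derivatives of all orders from $1$ to $k$ with the top H\"older quotient, so one must check that a contraction by $s\le1$ scales each piece by at least $s^\beta$. That is true because each order $r\ge1$ gets $s^r\le s$, and the top quotient gets $s^{\alpha}\le s^{\beta}$. Once this is verified, the geometric summation is routine.
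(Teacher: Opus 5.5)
You are following the paper's own proof up to the last step: the active levels, the decomposition of $g_{m,j}$ into rescaled primitives $c_i\,g^0_{i,m}(u_i+2^{-(j-L_{i-1})}z)$, the derivative-by-derivative scaling, the geometric series, and the sup bound all match. The derivative-by-derivative scaling uses the fact that every exponent is at least $\beta$. The geometric series is $\lambda c_{i(j)}\sum_{r\ge0}(r+1)^\gamma r_\star^r$. The sup bound is $\|g_{m,j}-g_{m,j}(0)\|_\infty\le[g_{m,j}]_\beta$ on $\Omega$.

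The step that fails is the last one, the comparison $c_{i(j)}\le Ca_j$. Since $c_i$ decreases in $i$, this inequality needs a \emph{lower} bound on $i(j)$. Your upper bound $i(j)\le1+\lfloor j/\lstar\rfloor$ only gives $c_{i(j)}\ge c_{1+\lfloor j/\lstar\rfloor}\gtrsim(j+1)^{-\gamma}$, that is, $c_{i(j)}\gtrsim a_j$. That is the opposite direction; it is the one used later, for example $a_j\le C_ac_{i(j)}$ in \cref{lem:calibration}. Writing $\lesssim$ at that point is therefore a non sequitur.

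Your claim that $i(j)\ge1+\lfloor j/\lbar\rfloor$ fails in general is also false. By definition $L_{i(j)}>j$, and \assmp{A3} gives $L_{i(j)}\le\lbar\,i(j)$, so $i(j)>j/\lbar$. Since $i(j)$ is an integer, $i(j)\ge1+\lfloor j/\lbar\rfloor$. Hence $c_{i(j)}\le c_{1+\lfloor j/\lbar\rfloor}=a_j$ with constant one, and this replaces your final step. The paper states this step as the two indices differing by at most a bounded factor, which covers both directions; the direction needed in \eqref{eq:holder-shell} is exactly this lower bound. With that correction your proof is complete.
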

\begin{proof}
    Levels $i>i(j)$ are supported in $P^\ast_{j+1}$, hence vanish on $A_j$.
    Put $t_a=2^{-(j-L_{a-1})}\le1$ for $a\le i(j)$.
    By \assmp{A1}, a derivative of order $1\le|s|\le k$ of the rescaled primitive has sup norm at most $\lambda t_a^{|s|}$, its order-$k$ derivative has $\tau$-H\"older seminorm at most $\lambda t_a^{\alpha}$, and its function-value $\beta$-H\"older seminorm is at most $\lambda t_a^\beta$.
    Each exponent is at least $\beta$, and subtraction of its value at zero also bounds its sup norm on $\Omega$ by $\lambda t_a^\beta$.
    Thus, by the same geometric series as in \cref{lem:oscillation},
    \begin{align*}
        \|g_{m,j}-g_{m,j}(0)\|_{C^\alpha(\Omega)}+[g_{m,j}]_\beta
        \le C\lambda\sum_{a\le i(j)}c_a2^{-\beta(j-L_{a-1})}\le Cc_{i(j)}.
    \end{align*}
    Also, $c_{i(j)}\le Ca_j$ because the indices $i(j)\le1+\lfloor j/\lstar\rfloor$ and $1+\lfloor j/\lbar\rfloor$ differ by at most the bounded factor $\lceil2\lbar/\lstar\rceil+1$.
\end{proof}

\subsection{Gibbs Mass of the Correct Cube under Birth-Accurate Scores}
\label{app:gibbs-mass}

Throughout, $P^\ast_h(m)=D_h(\ttrue(m))$ denotes the correct depth-$h$ cube and $k(h)=\min\{i\ge1:h\le L_i\}$, so that $C_{k(h)}(m)\subset P^\ast_h(m)\subset C_{k(h)-1}(m)$; we write $k(0)=1$.
The scores of \cref{alg:actor-critic} are analyzed through the following notion.
A cube created early is only ever estimated at the coarse scale of its own depth, so the accuracy the analysis can require of a score is scale-dependent.
\begin{definition}[Birth-accurate score]\label{def:birth}
    Let $\scD$ be a finite partition of $\Omega$ into dyadic cubes and let $W$ be constant on the cubes of $\scD$.
    We say that $W$ is \emph{$\eta$-birth-accurate on $\scD$} if
    \begin{align}
        \abs{W(D)-Q(m,D)}\le\frac\eta4\,a_{\mathrm{depth}(D)}\qquad\text{for all }D\in\scD.
        \label{eq:birth}
    \end{align}
\end{definition}
The accuracy allowed on a cube is tied to the depth of the cube, i.e., to the scale of the level that a query at this depth resolves, and not to a common final accuracy.

\begin{lemma}[Gibbs mass and cold tail]\label{lem:mass}
    Fix a prompt $m$ and a phase $h\ge1$, and let $\scD_h$ be the dyadic leaf partition on which the current score $W_h$ is constant, assumed to be such that $P^\ast_h(m)$ is a union of cubes of $\scD_h$.
    Let $k=k(h)$ and suppose that $W_h$ is $\eta$-birth-accurate on $\scD_h$ in the sense of \cref{def:birth}.
    Suppose that
    \begin{align}
        0<t_-\le\frac{\vartheta}{c_k}\le t_+,\qquad B_0\exp\qty(-\frac\kappa{2t_+})\le\frac12,\qquad \eta\le\eta_0,
        \label{eq:temp-cond}
    \end{align}
    where $\eta_0$ is a structural constant.
    Then, uniformly in the partition, in $h$ and in the zoom sequence, the actor $P_h=P_{W_h,\vartheta}$ of \cref{alg:actor-critic}, line~\ref{ln:actor}, satisfies
    \begin{align}
        P_{W_h,\vartheta}\qty(P^\ast_h(m)\relmiddle|m)\ge p_\ast:=\qty[1+2B_0e^{C_{\rm gap}/t_-}]^{-1},
        \label{eq:mass}\\
        P_{W_h,\vartheta}\qty(\qty(P^\ast_{h-j}(m))^c\relmiddle|m)\le Ce^{-cj}\qquad(0\le j\le h),
        \label{eq:cold-tail}
    \end{align}
    with a structural constant $C_{\rm gap}$.
    For $h=0$, $P^\ast_0=\Omega$ has mass one.
\end{lemma}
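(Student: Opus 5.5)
We compare Gibbs masses cube by cube. Since $W_h$ is constant on the cubes of $\scD_h$, the actor gives each $D\in\scD_h$ mass proportional to $\mu(D)e^{W_h(D)/\vartheta}$. First we lower bound the numerator. Take the leaf $D_\ast\in\scD_h$ containing $\ttrue(m)$; it lies inside $P^\ast_h(m)$, so its depth $j_\ast$ is at least $h$. We lower bound the mass of $P^\ast_h(m)$ by that of $D_\ast$, or better by all leaves inside $P^\ast_h(m)$. All competing cubes $D\notin P^\ast_h(m)$ are grouped by the level at which they leave the path. By \cref{lem:gap} applied to the partition $\scD_h$, a cube $D\subset E_a(m)$ with $a<k$ satisfies $Q(D_\ast)-Q(D)\ge\kappa\sum_{i=a+1}^{k}c_i-3c_{k'}$, with $k'=k(j_\ast)\ge k$. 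Birth accuracy \eqref{eq:birth} changes each score by at most $\frac\eta4a_{\mathrm{depth}}$. For $D\subset E_a(m)$ the depth of $D$ is at least $L_{a-1}$, hence $a_{\mathrm{depth}(D)}\lesssim c_a$. We get
\begin{align*}
W_h(D_\ast)-W_h(D)\ \ge\ \kappa\sum_{i=a+1}^{k}c_i-Cc_k-\tfrac{\eta}{4}C\,c_a .
\end{align*}
The term $\eta c_a$ is the dangerous one, since $c_a$ may be much larger than $c_k$. It is absorbed into $\frac\kappa2\sum_{i=a+1}^{k}c_i\ge\frac\kappa2c_{a+1}\gtrsim c_a$ as soon as $a<k$ and $\eta\le\eta_0$ is small. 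Hence the gap is at least $\frac\kappa2\sum_{i=a+1}^k c_i-C_{\rm gap}c_k$. Cubes inside $C_{k-1}(m)\setminus P^\ast_h(m)$ have gap at least $-C_{\rm gap}c_k$ by \eqref{eq:gap-inner}; here $a_{\mathrm{depth}}\lesssim c_k$ because their depth is at least $L_{k-1}$.

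Next we sum the masses. The union of cubes leaving at level $a$ lies in $E_a(m)$, with $\mu(E_a)\le\mu(C_{a-1})$, while $\mu(P^\ast_h)=2^{-dh}$. Comparing densities directly loses volume factors, so instead we compare shell by shell along the tree. At each depth $j\le h$, the sibling cubes of $P^\ast_{j}$ inside $P^\ast_{j-1}$ number at most $2^d-1$. Their union has mass at most $(2^d-1)\,\mu(P^\ast_j)\sup e^{(W-W_\ast)/\vartheta}$, up to the internal refinement. More carefully, we compare the Gibbs mass of each sibling subtree with the maximal density in it against $D_\ast$'s density times $\mu(P^\ast_h)$. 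This is the main obstacle: the volume of $P^\ast_h$ is $2^{-d(h-j)}$ times that of the depth-$j$ sibling, and this loss must be beaten by the exponential gap $e^{-\kappa\sum_{i=a+1}^k c_i/(2\vartheta)}$. With $\vartheta\le t_+c_k$ and about $(h-j)/\lbar$ levels between depth $j$ and depth $h$, each carrying weight at least $c_k$, the gap exponent is at least $\frac{\kappa}{2t_+}\cdot\frac{h-j}{\lbar}$. The condition $B_0e^{-\kappa/(2t_+)}\le\frac12$ then makes the combined per-level factor $B_0e^{-\kappa/(2t_+)}$ at most $\frac12$. Summing over levels gives a geometric series bounded by $2$. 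This uses $B_0=2^{d\lbar}$ as the number of depth-$L_a$ descendants per level.

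Finally we collect the bounds. The within-cell cubes contribute at most a factor $e^{C_{\rm gap}c_k/\vartheta}\le e^{C_{\rm gap}/t_-}$ times the mass of $P^\ast_h$'s own volume scale. The off-path mass is therefore at most $2B_0e^{C_{\rm gap}/t_-}$ times the on-path mass, which gives \eqref{eq:mass}. For \eqref{eq:cold-tail}, the complement of $P^\ast_{h-j}$ consists only of cubes leaving at depth below $h-j$. These cubes are at least $j/\lbar$ levels off the path, so the same geometric series started at index $j/\lbar$ yields $2^{-j/\lbar}\cdot C=Ce^{-cj}$. None of the constants depends on $h$, the partition, or $\ell$, except through $\lstar,\lbar$.
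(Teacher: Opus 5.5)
Your treatment of the off-path cubes matches the paper's. The birth error $\eta c_a$ is absorbed into half of the gap, every level between $a+1$ and $k$ contributes at least $c_k$, and the volume ratio $B_0^{k-a+1}$ is beaten by $e^{-\kappa(k-a)/(2t_+)}$ through $B_0e^{-\kappa/(2t_+)}\le\frac12$. The cold tail is the same series restricted to $k-a\gtrsim j/\lbar$.

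The gap is in the on-path mass. You take as reference the leaf $D_\ast\ni\ttrue(m)$ of $\scD_h$ and compare every off-path mass with ``$D_\ast$'s density times $\mu(P^\ast_h(m))$''. But $\mu(P^\ast_h)e^{W_h(D_\ast)/\vartheta}$ is not a lower bound on $\int_{P^\ast_h}e^{W_h/\vartheta}\,d\mu$ once $D_\ast$ is deeper than $h$.
\begin{itemize}
\item The leaves of $P^\ast_h$ not contained in $C_{k+1}(m)$ cover at least half of its volume, because $h\le L_k$. Their cube averages are within $O(c_k)$ of $f^{[k]}(\ttrue(m))$.
\item $Q(D_\ast)$, in contrast, also collects the gains of the levels $k+1,\ldots,k(j_\ast)-1$, which add up to order $T_k$ when $D_\ast$ is deep.
\item At $\vartheta\asymp c_k$ this is a factor $e^{-cT_k/c_k}=e^{-ck}$. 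For a fine uniform partition of $P^\ast_h$ one checks that the on-path mass really is that much smaller than your reference.
\item Even for the front-phase partitions, whose leaves in $P^\ast_h$ have depth $h-1+S$, the discrepancy grows with the lookahead $S$.
\end{itemize}
The other reading, keeping only $\mu(D_\ast)e^{W_h(D_\ast)/\vartheta}$, loses the volume factor $2^{-d(j_\ast-h)}$. The temperature condition cannot win this back uniformly, since the extra gain is at most of order $T_k/c_k\asymp k$ while $j_\ast-h$ is unbounded. Either way your $p_\ast$ depends on the partition, whereas the lemma asserts a structural constant uniform in the partition.

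The missing idea is to anchor everything at $Q(P^\ast_h)$ rather than at $Q(D_\ast)$. Apply \cref{lem:gap} to the coarsened partition $\{P^\ast_h\}\cup\{D\in\scD_h:D\cap P^\ast_h=\emptyset\}$. Then the correct cube is $P^\ast_h$ itself, and the gaps $\kappa\sum_{i=a+1}^kc_i-3c_k$ (resp.\ $-3c_k$) are measured from $Q(P^\ast_h)$. For the inside, every leaf $D\subset P^\ast_h$ has depth at least $h$, hence $W_h(D)\ge Q(D)-C\eta c_k$. Moreover $Q(P^\ast_h)$ is the $\mu$-weighted average of these $Q(D)$ by the tower identity. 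Jensen's inequality then gives
\[
\int_{P^\ast_h}e^{W_h/\vartheta}\,d\mu=\sum_{D\subset P^\ast_h}\mu(D)e^{W_h(D)/\vartheta}\ge\mu(P^\ast_h)\exp\Bigl(\frac{Q(P^\ast_h)-C\eta c_k}{\vartheta}\Bigr),
\]
with no comparison between different leaves of $P^\ast_h$. With this denominator and these gaps, your shell-by-shell summation goes through essentially as written. It yields $p_\ast=[1+2B_0e^{C_{\rm gap}/t_-}]^{-1}$ and the cold tail, uniformly in the partition.
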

\begin{proof}
    The proof compares the Gibbs weight inside the correct cube with the total Gibbs weight outside it, through the identity
    \begin{align*}
        \frac1{P_{W_h,\vartheta}(P^\ast_h\mid m)}
        =1+\frac{\int_{\Omega\setminus P^\ast_h}e^{W_h/\vartheta}\dd{\mu}}{\int_{P^\ast_h}e^{W_h/\vartheta}\dd{\mu}}.
    \end{align*}
    The step \emph{Inside} lower-bounds the denominator by Jensen's inequality, and the step \emph{Outside} upper-bounds the numerator shell by shell, using the reward gap of \cref{lem:gap} and the birth accuracy of the score; combining the two gives the constant lower bound \eqref{eq:mass}, and summing the same geometric series over the deep shells only gives the cold tail \eqref{eq:cold-tail}.

    \emph{Inside: lower bound on the Gibbs weight of the correct cube.}
    The cubes $D\in\scD_h$ inside $P^\ast_h$ have depth at least $h$, so $a_{\mathrm{depth}(D)}\le a_h\le Cc_k$ and $W_h(D)\ge Q(D)-C\eta c_k$ on them.
    Since $Q(P^\ast_h)$ is the volume-weighted average of the $Q(D)$, $D\subset P^\ast_h$ (tower identity), Jensen's inequality gives
    \begin{align}
        \int_{P^\ast_h}e^{W_h/\vartheta}\dd{\mu}
        =\sum_{D\subset P^\ast_h}\mu(D)e^{W_h(D)/\vartheta}
        \ge\mu(P^\ast_h)\exp\qty(\frac{Q(P^\ast_h)-C\eta c_k}{\vartheta});
        \label{eq:mass-inside}
    \end{align}
    no constancy of $W_h$ inside $P^\ast_h$ is needed.

    \emph{Outside: upper bound on the Gibbs weight of the outer shells.}
    Apply \cref{lem:gap} to the partition $\{P^\ast_h\}\cup\{D\in\scD_h:D\cap P^\ast_h=\emptyset\}$, in which $P^\ast_h$ is the cube containing $\ttrue(m)$: every $D\in\scD_h$ disjoint from $P^\ast_h$ satisfies either $D\subset E_a(m)$ for a unique $a<k$, or $D\subset C_{k-1}(m)\setminus P^\ast_h$, and \eqref{eq:gap}, \eqref{eq:gap-inner} hold with $D_\ast=P^\ast_h$.
    A cube $D\subset E_a(m)$ is strictly inside $C_{a-1}(m)$, so $\mathrm{depth}(D)\ge L_{a-1}+1$ and $a_{\mathrm{depth}(D)}\le Cc_a$ by \assmp{A3}; similarly $a_{\mathrm{depth}(D)}\le Cc_k$ for $D\subset C_{k-1}\setminus P^\ast_h$.
    Hence, by \eqref{eq:birth}, \eqref{eq:gap} and $c_a\le2^\gamma c_{a+1}\le2^\gamma\sum_{i=a+1}^kc_i$, for $D\subset E_a(m)$,
    \begin{align*}
        Q(P^\ast_h)-C\eta c_k-W_h(D)
        &\ge\kappa\sum_{i=a+1}^kc_i-3c_k-C\eta(c_a+c_k)\\
        &\ge\frac\kappa2\sum_{i=a+1}^kc_i-C_{\rm gap}c_k
        \ge\qty[\frac\kappa2(k-a)-C_{\rm gap}]c_k,
    \end{align*}
    once $\eta\le\eta_0$ with $\eta_0$ small (the old errors are absorbed into half of the gap), and the same bound with the right-hand side $-C_{\rm gap}c_k$ holds for $D\subset C_{k-1}\setminus P^\ast_h$.
    By $h\le L_k$ and \assmp{A3}, the volumes satisfy $\mu(E_a)/\mu(P^\ast_h)\le B_0^{k-a+1}$ and $\mu(C_{k-1})/\mu(P^\ast_h)\le B_0$.
    Summing the volumes of the cubes in each shell and using \eqref{eq:mass-inside} and \eqref{eq:temp-cond},
    \begin{align*}
        \frac1{P_{W_h,\vartheta}(P^\ast_h\mid m)}
        &\le1+\sum_{D\cap P^\ast_h=\emptyset}\frac{\mu(D)}{\mu(P^\ast_h)}\exp\qty(-\frac{Q(P^\ast_h)-C\eta c_k-W_h(D)}{\vartheta})\\
        &\le1+B_0e^{C_{\rm gap}/t_-}\qty[1+\sum_{b'=1}^{k-1}\qty(B_0e^{-\kappa/(2t_+)})^{b'}]\\
        &\le1+2B_0e^{C_{\rm gap}/t_-},
    \end{align*}
    which is \eqref{eq:mass}.
    For \eqref{eq:cold-tail}, a point outside $P^\ast_{h-j}$ lies in a shell $E_a$ with $L_{a-1}<h-j$, hence $L_a<h-j+\lbar$ and $k-a\ge j/\lbar-1$; summing the same geometric series only over these shells gives $P_{W_h,\vartheta}((P^\ast_{h-j})^c\mid m)\le C\sum_{b'\ge j/\lbar-1}2^{-b'}\le Ce^{-cj}$.
\end{proof}

\paragraph{The temperature schedule satisfies \eqref{eq:temp-cond}.}
Both hyperparameter settings use the same schedule $\vartheta_h=t_0\,c_{1+\lfloor h/\lstar\rfloor}$ (\cref{app:parameters,app:small-params}), and we check here, once for both, that it meets the condition \eqref{eq:temp-cond} of \cref{lem:mass}.
Let $D_0=\lceil2\lbar/\lstar\rceil+1$.
If depth $h$ corresponds to the level $k(h)$, then $1+\lfloor h/\lstar\rfloor$ and $k(h)$ differ by at most a factor $D_0$, so
\begin{align}
    \frac{\vartheta_h}{c_{k(h)}}\in\qty[t_02^{-\gamma}D_0^{-\gamma},\ 2^\gamma t_0].
    \label{eq:temp-range}
\end{align}
We fix $t_0$ so small that $B_0\exp(-\kappa/(2^{\gamma+1}t_0))\le1/2$; then \cref{lem:mass} applies at every step with $t_+=2^\gamma t_0$ and $t_-=t_02^{-\gamma}D_0^{-\gamma}$, and $p_\ast$ is a structural constant (independent of $S$, $n$, $M$, $\rho$).

\subsection{Window-Restricted Transformer Approximation and Estimation}
\label{app:transformer}

The critic is fitted on data collected by the actor, and the two upper bounds both need the same chain of arguments to turn this into a pointwise accuracy statement:
\begin{align*}
\text{windowed target decomposition}
\;\to\;
\text{Transformer realization}
\;\to\;
\text{covering number}
\;\to\;
\text{risk}.
\end{align*}
\Cref{lem:position} shows that a causal Transformer can read the token positions that identify a queried cube; \cref{lem:covering} bounds the covering number of the full class; \cref{lem:oracle} converts an approximation error and a covering number into a bound on the conditional risk of the empirical minimizer; \cref{lem:window-target} shows that, on the window that the actor actually queries, the infinite hierarchical target reduces to finitely many $d$-dimensional H\"older functions; and \cref{lem:window} realizes approximations of those functions inside the Transformer class.
How the resulting risk bound is turned into accuracy at every updated node is not a generic estimation question and is treated where it is used (\cref{lem:joint}).

\paragraph{Position separation by the sinusoidal encoding.}
Take $D_{\rm emb}$ to be a multiple of $16$.
The coordinates $r'=0$ and $r'=D_{\rm emb}/16$ of the encoding contain
\begin{align}
    \varphi(t)=\qty(\sin t,\ \cos t,\ \sin(t/\sqrt{10}),\ \cos(t/\sqrt{10})),
    \label{eq:varphi}
\end{align}
since $10000^{2(D_{\rm emb}/16)/D_{\rm emb}}=\sqrt{10}$.
The constructions below read the bits of a queried cube by attention, which requires the fixed sinusoidal encoding to separate positions well enough.
\begin{lemma}[Position separation]\label{lem:position}
    For integers $1\le r'\le \Tin$,
    \begin{align}
        \Gamma(r'):=(1-\cos r')+\qty(1-\cos(r'/\sqrt{10}))\ge\frac1{40\Tin^2}.
        \label{eq:position}
    \end{align}
    Consequently, for integer positions $s,t\in[1,\Tin]$,
    \begin{align*}
        \ev{\varphi(s),\varphi(t)}=2\ \ (s=t),\qquad
        \ev{\varphi(s),\varphi(t)}\le2-\frac1{40\Tin^2}\ \ (s\ne t),
    \end{align*}
    and a softmax attention with query $a_T\varphi(s)$, $a_T=40\Tin^2\log(8\Tin)$, and keys $\varphi(t)$ puts weight at least $7/8$ on position $s$.
\end{lemma}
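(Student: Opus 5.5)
The plan is to reduce \eqref{eq:position} to a lower bound on simultaneous Diophantine approximation, using that $\sqrt{10}$ is a quadratic irrational. Write $1-\cos x=2\sin^2(x/2)$, and let $2\pi k$ and $2\pi j$ be the multiples of $2\pi$ nearest to $r'$ and to $r'/\sqrt{10}$. Put $\delta_1=r'-2\pi k$ and $\delta_2=r'/\sqrt{10}-2\pi j$, so that $|\delta_1|,|\delta_2|\le\pi$. On $[-\pi,\pi]$ we have $1-\cos\delta\ge 2\delta^2/\pi^2$, hence
\begin{align*}
    \Gamma(r')\ \ge\ \frac2{\pi^2}\qty(\delta_1^2+\delta_2^2)\ \ge\ \frac{2}{11\pi^2}\qty(|\delta_1|+\sqrt{10}\,|\delta_2|)^2,
\end{align*}
where the last step is Cauchy--Schwarz, $(|\delta_1|+\sqrt{10}|\delta_2|)^2\le 11(\delta_1^2+\delta_2^2)$. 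It therefore suffices to show that $|\delta_1|+\sqrt{10}|\delta_2|\gtrsim 1/\Tin$.

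\emph{Degenerate cases.} If $j=0$, then $|\delta_2|=r'/\sqrt{10}\ge1/\sqrt{10}$, and the bound is immediate. If $k=0$ and $j\ge1$, then $r'\ge\sqrt{10}\,\pi$, which contradicts $|\delta_1|=r'\le\pi$; so this case cannot occur.

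\emph{Main case $k,j\ge1$.} We have
\begin{align*}
    2\pi\abs{k-\sqrt{10}\,j}=\abs{\delta_1-\sqrt{10}\,\delta_2}\le|\delta_1|+\sqrt{10}\,|\delta_2|.
\end{align*}
Since $\sqrt{10}$ is irrational, $k^2-10j^2$ is a nonzero integer. This gives
\begin{align*}
    |k-\sqrt{10}\,j|\ \ge\ \frac1{k+\sqrt{10}\,j}.
\end{align*}
From $k\le(r'+\pi)/(2\pi)$ and $\sqrt{10}\,j\le(r'+\sqrt{10}\pi)/(2\pi)$ we get $k+\sqrt{10}j\le(2\Tin+5\pi)/(2\pi)$. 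Hence $|\delta_1|+\sqrt{10}|\delta_2|\ge 4\pi^2/(2\Tin+5\pi)$.

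\emph{Constant check.} Substituting into the first display, and using $\Tin\ge1$ to absorb the $5\pi$ term, gives a constant comfortably better than $1/(40\Tin^2)$. This bookkeeping is the only delicate point. The main obstacle is the irrationality step, and it is handled by the norm argument $k^2-10j^2\ne0$.

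\emph{Consequences.} By the angle-difference formula,
\begin{align*}
    \ev{\varphi(s),\varphi(t)}=\cos(s-t)+\cos\qty(\frac{s-t}{\sqrt{10}})=2-\Gamma(|s-t|).
\end{align*}
This equals $2$ when $s=t$, and it is at most $2-1/(40\Tin^2)$ when $s\ne t$, by \eqref{eq:position} with $r'=|s-t|\in[1,\Tin]$.

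For the attention claim, divide every softmax weight by that of position $s$. Each of the at most $\Tin$ other positions then has relative weight at most $\exp(-a_T/(40\Tin^2))=1/(8\Tin)$. The weight on $s$ is therefore at least $1/(1+\Tin/(8\Tin))=8/9\ge7/8$.
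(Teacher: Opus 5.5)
Your proof is essentially the paper's argument. It uses the nearest multiples of $2\pi$, the bound $1-\cos x\ge 2x^2/\pi^2$ on $[-\pi,\pi]$, Cauchy--Schwarz with weights $(1,\sqrt{10})$, and the norm argument $k^2-10j^2\neq0$. Your main-case constant and the softmax computation are correct.

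The one slip is in the degenerate case $j=0$. There $|\delta_2|\ge1/\sqrt{10}$ alone gives only $\Gamma(r')\ge 1/(5\pi^2)\approx0.020$, which is below $1/40$ when $\Tin=1$, so ``immediate'' is not justified as written. The fix is what the paper does in its case $(k,l)=(0,0)$:
\begin{itemize}
\item if $j=k=0$, then $\delta_1=r'\ge1$ and $\Gamma(r')\ge2/\pi^2$;
\item if $j=0$ and $k\ge1$, then $r'\ge\pi$, hence $r'\ge4$ and $|\delta_2|\ge4/\sqrt{10}$.
\end{itemize}
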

\begin{proof}
    Let $k,l$ be the nearest integers to $r'/(2\pi)$ and $r'/(2\pi\sqrt{10})$, and put $u=r'-2\pi k$, $v=r'/\sqrt{10}-2\pi l$, so that $\abs{u},\abs{v}\le\pi$ and $\abs{k},\abs{l}\le r'$.
    If $(k,l)\ne(0,0)$, then $k^2-10l^2$ is a nonzero integer, hence
    \begin{align*}
        \abs{k-\sqrt{10}\,l}\ge\frac1{\abs{k}+\sqrt{10}\,\abs{l}}\ge\frac1{5r'}.
    \end{align*}
    By the Cauchy--Schwarz inequality, $u^2+v^2\ge4\pi^2/(275r'^2)$, and $1-\cos x\ge2x^2/\pi^2$ on $[-\pi,\pi]$ gives
    \begin{align*}
        \Gamma(r')\ge\frac{2(u^2+v^2)}{\pi^2}\ge\frac8{275r'^2}\ge\frac1{40\Tin^2}.
    \end{align*}
    If $k=l=0$, then $u=r'\ge1$ and the bound is immediate.
    The softmax claim follows because the logit gap is at least $a_T/(40\Tin^2)=\log(8\Tin)$ and there are at most $\Tin$ positions.
\end{proof}
A ReLU ``plateau'' $r(z)=2(z-1/4)_+-2(z-3/4)_+$ recovers a one-hot vector exactly from a vector whose correct coordinate is $\ge7/8$ and whose other coordinates are $\le1/8$.
We also use the exact ReLU gate
\begin{align}
    G(z,c)=(z+C_{\rm g}(c-1))_+-(-z+C_{\rm g}(c-1))_+=cz\qquad(c\in\{0,1\},\ \abs{z}\le C_{\rm g}),
    \label{eq:gate}
\end{align}
and the identity $h=(h)_+-(-h)_+$ to copy the coordinates that an FFN must preserve (the FFNs have no residual connection, so every FFN reproduces the part of the state that later blocks need; this at most doubles the width).

The complexity of the critic is measured by the covering entropy of the whole class of \cref{subsec:transformer}, all parameters being free; no sub-class adapted to the unknown reward is used.
\begin{lemma}[Covering number of the full class]\label{lem:covering}
    Let $\Tcls=\Tcls(\Nblk,D_{\rm emb},L_{\rm FFN},W_{\rm FFN},S_{\rm FFN},B_{\rm par})$ with $B_{\rm par}\ge1$ and $W_{\rm FFN}\ge D_{\rm emb}$, let $P$ be its number of parameters, and let $A_0=C(D_{\rm emb}+W_{\rm FFN}+1)(B_{\rm par}+1)$.
    For $0<\varepsilon\le1$, the $\varepsilon$-covering number of $\Tcls$ in the sup norm over all valid inputs of length $\Tin$ satisfies
    \begin{align}
        \log\scN_\infty(\varepsilon,\Tcls)
        \le\Nblk S_{\rm FFN}\log\qty(2L_{\rm FFN}(W_{\rm FFN}+1)^2)+P\log\qty(1+\frac{2B_{\rm par}A_0^{C\Nblk^2(L_{\rm FFN}+2)}}{\varepsilon}).
        \label{eq:covering}
    \end{align}
\end{lemma}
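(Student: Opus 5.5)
The plan is to follow the standard discretization argument for parameterized networks (as in \citet{schmidt2020nonparametric,takakura2023approximation}). It has two parts: a combinatorial count of the sparsity patterns of the FFN layers, and a Lipschitz bound of the network output in its parameters. The first term of \eqref{eq:covering} counts the supports. Each FNN layer $f_b$ has at most $L_{\rm FFN}(W_{\rm FFN}+1)^2$ candidate entries, of which at most $S_{\rm FFN}$ are nonzero. The number of support patterns per block is therefore at most $(L_{\rm FFN}(W_{\rm FFN}+1)^2)^{S_{\rm FFN}}$, up to the factor $2$ from choosing the sizes. Over $\Nblk$ blocks this gives the first logarithm. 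For a fixed support, the parameters range over a box $[-B_{\rm par},B_{\rm par}]^P$, and a grid of mesh $\delta$ in $\|\cdot\|_\infty$ has at most $(1+2B_{\rm par}/\delta)^P$ points. The whole task then reduces to showing that parameter perturbations of size $\delta$ move the output by at most $\delta A_0^{C\Nblk^2(L_{\rm FFN}+2)}$ uniformly over valid inputs. Choosing $\delta=\varepsilon A_0^{-C\Nblk^2(L_{\rm FFN}+2)}$ then yields the second term. The final $\clip$ is $1$-Lipschitz and can be ignored.

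For the Lipschitz bound I will track two quantities through the layers: a sup-norm bound $R_b$ on the hidden state and a perturbation bound $\Delta_b$. I start from $\|\enc_P(X)\|_\infty\le B_{\rm par}+1$ (one-hot input, $|p_t|\le1$), so $R_0\le A_0$. An FNN layer with entries bounded by $B_{\rm par}$ and width $W_{\rm FFN}$ has the following two properties:
\begin{itemize}
\item it multiplies the sup norm by at most $(W_{\rm FFN}+1)B_{\rm par}\le A_0$ per linear map, so $R$ grows by the factor $A_0^{L_{\rm FFN}}$;
\item by the usual telescoping over layers, a parameter perturbation $\delta$ contributes at most $L_{\rm FFN}\delta A_0^{L_{\rm FFN}}(R+1)$, while an input perturbation is amplified by $A_0^{L_{\rm FFN}}$, since ReLU is $1$-Lipschitz.
\end{itemize}
The attention layer $g(X)_i=x_i+VX_{1:i}\Softmax(\cdot)$ needs more care. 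Its output satisfies $\|g(X)\|_\infty\le(1+D_{\rm emb}B_{\rm par})R$, because softmax weights form a convex combination. Its sensitivity is governed by the softmax Lipschitz constant, and the logits $(KX)^\top QX$ have size at most $(D_{\rm emb}B_{\rm par})^2R^2$.

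The main obstacle is that the softmax derivative in the logits is bounded by $1$ (in $\ell_\infty\to\ell_1$ up to a factor $2$). The logit perturbation, however, scales like $R^2$ times the perturbations of $K$, $Q$ and the input. Since $R_b$ itself grows like $A_0^{b(L_{\rm FFN}+2)}$, each attention layer squares the running bound. Rather than letting this compound exponentially, I will note that the resulting recursion has the form
\begin{align*}
    \Delta_{b+1}\le A_0^{C(L_{\rm FFN}+2)}R_b^2\,(\Delta_b+\delta).
\end{align*}
Inserting the bound $\log_{A_0}R_b\le Cb(L_{\rm FFN}+2)$ and summing over $b\le\Nblk$ gives $\log_{A_0}(\Delta_{\Nblk}/\delta)\le C\sum_b b(L_{\rm FFN}+2)\le C\Nblk^2(L_{\rm FFN}+2)$. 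This is exactly the exponent in \eqref{eq:covering}. The embedding matrix $E$ contributes only to $\Delta_0\le\delta$.

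Finally, I assemble the two parts. The cover is the union, over supports, of the parameter grids. Its log-cardinality is the sum of the two terms, and every network in $\Tcls$ is within $\varepsilon$ of a grid network in sup norm over all inputs of length $\Tin$. Note that the bounds hold uniformly in $\Tin$: the softmax averaging removes any dependence on the number of positions, and the residual connection is accounted for by the $+1$ in $A_0$.
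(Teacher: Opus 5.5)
Your proposal is correct and takes essentially the same route as the paper. You count the FFN sparsity patterns, quantize the $P$ parameters on a grid of step $\varepsilon A_0^{-C\Nblk^2(L_{\rm FFN}+2)}$, and bound the Lipschitz constant in the parameters by tracking state bounds (convex-combination attention plus residual, then the FFN) together with the $2$-Lipschitz softmax, whose $R_b^2$ factor per block, with $\log_{A_0}R_b=O(b(L_{\rm FFN}+2))$, sums to the $\Nblk^2$ exponent. The paper's proof is a terser version of this argument, with the same observations that the clipping is $1$-Lipschitz and that $\Tin$ does not enter because the attention weights sum to one.
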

\begin{proof}
    There are at most $(2L_{\rm FFN}(W_{\rm FFN}+1)^2)^{S_{\rm FFN}}$ sparsity patterns per FFN.
    For a fixed pattern, quantize all parameters to a grid of step $\varrho$, which gives $(1+2B_{\rm par}/\varrho)^P$ networks.
    Since one-hot inputs and the positional encoding are bounded, the embedded states are bounded by $2B_{\rm par}$.
    If the states before a block are bounded by $R_0$, they are bounded by $(1+D_{\rm emb}B_{\rm par})R_0$ after the attention (its output is a convex combination of values) and by $A_0^{C(L_{\rm FFN}+2)}(R_0+1)$ after the FFN, so all states are bounded by $A_0^{C\Nblk(L_{\rm FFN}+2)}$.
    The softmax is $2$-Lipschitz from the sup norm of the logits to the $\ell_1$ norm of the weights, so the output of one block is $A_0^{C(L_{\rm FFN}+2)}(R_0+1)^C$-Lipschitz in the parameters and in the input; composing the blocks, the output is $A_0^{C\Nblk^2(L_{\rm FFN}+2)}$-Lipschitz in the parameters, and the output clipping is $1$-Lipschitz.
    The input length $\Tin$ does not enter, because the attention weights sum to one.
    Choosing $\varrho=\varepsilon A_0^{-C\Nblk^2(L_{\rm FFN}+2)}$ gives \eqref{eq:covering}.
\end{proof}
This is the argument of \citet[Theorem 5.3]{takakura2023approximation} and \citet[Lemma 5]{schmidt2020nonparametric}, adapted to causal attention, a learned token embedding of size $O(M)$ and the fixed positional encoding.

The batch is collected by the current actor, so the inputs are i.i.d.\ only given the history and the noise is only conditionally sub-Gaussian; the oracle inequality is stated in that form.
\begin{lemma}[ERM oracle inequality under conditionally sub-Gaussian noise]\label{lem:oracle}
    Fix the history at the beginning of a batch.
    Let $U_1,\ldots,U_{n_b}$ be i.i.d.\ inputs with law $\Pi$ and $Z_t=Q(U_t)+\xi_t$ with $\abs{Q}\le1$, where $\xi_t$ is conditionally mean-zero and $\sigma$-sub-Gaussian given the past and $U_t$.
    Let $\scF$ be a class of functions with values in $[-1,1]$ whose sup-norm covering entropy is $V(\varepsilon)$, and let $\hat f$ be any empirical risk minimizer of the squared loss over $\scF$.
    Then, for $n_b\ge2$ and $\delta\in(0,1)$, with probability at least $1-\delta$,
    \begin{align}
        \norm{\hat f-Q}^2_{L^2(\Pi)}
        \le C\inf_{f\in\scF}\norm{f-Q}^2_{L^2(\Pi)}+C(1+\sigma^2)\frac{V(n_b^{-2})+\log(2n_b/\delta)}{n_b},
        \label{eq:oracle-ineq}
    \end{align}
    and the same bound holds in expectation with $\log(2n_b)$ in place of $\log(2n_b/\delta)$.
\end{lemma}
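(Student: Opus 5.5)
The plan is the standard localization argument for least squares with a finite sup-norm cover (as in \citet[Theorem 2]{schmidt2020nonparametric}), run conditionally on the history so that only conditional sub-Gaussianity of the noise is used. Write $\|\cdot\|_n$ for the empirical $L^2$ norm on $U_1,\ldots,U_{n_b}$ and fix any $f^\circ\in\scF$. The basic inequality for the empirical risk minimizer gives
\begin{align*}
\|\hat f-Q\|_n^2\le\|f^\circ-Q\|_n^2+\frac2{n_b}\sum_{t=1}^{n_b}\xi_t\bigl(\hat f(U_t)-f^\circ(U_t)\bigr).
\end{align*}
Take a sup-norm $\varepsilon$-net $\{f_1,\ldots,f_N\}$ of $\scF$ with $\varepsilon=n_b^{-2}$ and $\log N=V(n_b^{-2})$. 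Let $f_j$ be the net element closest to $\hat f$. Replacing $\hat f$ by $f_j$ in the noise term costs at most $2\varepsilon\cdot\frac1{n_b}\sum_t|\xi_t|$. This is $O(\sigma n_b^{-2}\sqrt{\log(1/\delta)})$ with high probability, and in expectation, by sub-Gaussian moment bounds, so it is negligible.

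\textbf{Step 1: the noise term.} Conditionally on $U_{1:n_b}$ the noise is a martingale difference sequence with sub-Gaussian increments. The design points are drawn before the noise in each round, but the conditioning is on the past and $U_t$ only. For each fixed $j$, the process $\sum_t\xi_t(f_j-f^\circ)(U_t)$ is therefore a martingale with conditionally $\sigma|f_j-f^\circ|(U_t)$-sub-Gaussian increments. The exponential supermartingale bound then gives, with probability $1-\delta/2$ and simultaneously over $j$,
\begin{align*}
\frac1{n_b}\sum_t\xi_t(f_j-f^\circ)(U_t)\le\sigma\|f_j-f^\circ\|_n\sqrt{\frac{2\log(2N/\delta)}{n_b}}.
\end{align*}
Applying $2ab\le a^2/4+4b^2$ and $\|f_j-f^\circ\|_n\le\|\hat f-Q\|_n+\|f^\circ-Q\|_n+\varepsilon$ yields
\begin{align*}
\|\hat f-Q\|_n^2\le C\|f^\circ-Q\|_n^2+C\sigma^2\frac{V+\log(2/\delta)}{n_b}+O(n_b^{-2}).
\end{align*}

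\textbf{Step 2: from empirical to population norm.} The pairs $U_t$ are i.i.d.\ with law $\Pi$ given the history, and all functions take values in $[-1,1]$, so $(f-Q)^2\in[0,4]$. For each net element, Bernstein's inequality (variance bounded by $4\times$ mean) gives, uniformly over the net with probability $1-\delta/2$,
\begin{align*}
\|f_j-Q\|^2_{L^2(\Pi)}\le2\|f_j-Q\|_n^2+C\frac{\log(2N/\delta)}{n_b},
\end{align*}
and the reverse inequality holds for the single fixed $f^\circ$. Passing between $\hat f$ and $f_j$ costs $O(\varepsilon)$ in both norms. Combining with Step 1 and taking the infimum over $f^\circ$ gives \eqref{eq:oracle-ineq}, where the term $\log(2n_b/\delta)$ absorbs the $\log(1/\delta)$ and discretization contributions.

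\textbf{Step 3: the expectation bound.} Integrate the tail bound with $\delta=1/n_b$. On the failure event, bound the left-hand side by $4$, which contributes $4/n_b$. Together with the noise-moment term from the discretization, this gives the stated form with $\log(2n_b)$.

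\textbf{Main obstacle.} The delicate point is Step 1. The noise is only conditionally sub-Gaussian given the past and $U_t$, so it need not be independent of the later design points $U_{t+1},\ldots$. Conditioning on all of $U_{1:n_b}$ at once could break the martingale structure. I would avoid this by applying the supermartingale argument in the natural filtration $\mathcal F_t=\sigma(\text{history},U_{1:t},\xi_{1:t-1})$ with the predictable weights $(f_j-f^\circ)(U_t)$. A uniform self-normalized bound, via a peeling over dyadic values of $\|f_j-f^\circ\|_n$ or the mixture method, then replaces the fixed-norm statement. This costs only an extra $\log n_b$, which is already in the bound.
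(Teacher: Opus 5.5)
Your proposal is correct and follows essentially the paper's proof. Both use the basic inequality and an $n_b^{-2}$ sup-norm net. The noise term is handled by an exponential supermartingale in the natural filtration, with weights $g(U_t)$ fixed before $\xi_t$ and the discretization error paid by the size of the noise. Bernstein on the net gives the empirical-to-population step, and integrating the tail gives the expectation bound.

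The only difference is that the peeling or mixture argument you anticipate is unnecessary. Fixing the supermartingale parameter at $u'\asymp\sigma^{-2}$, as the paper does, gives directly $\frac{2}{n_b}\sum_t\xi_t g(U_t)\le\frac1{16}\|g\|_n^2+C\sigma^2\log(N/\delta)/n_b$ uniformly over the net. This is exactly the linearized form your AM--GM step produces anyway, so the random normalization $\|f_j-f^\circ\|_n$ never has to be handled. Your opening remark about conditioning on all of $U_{1:n_b}$ should simply be dropped, since, as you note yourself, the assumptions do not justify it.
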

\begin{proof}
    Let $f_0\in\scF$ be fixed, and write $g=f-f_0$, $b_0=f_0-Q$ and $\Pi_{n_b}$ for the empirical measure.
    The basic inequality of ERM reads
    \begin{align}
        \Pi_{n_b}(\hat f-Q)^2\le\Pi_{n_b}(f_0-Q)^2+2\Pi_{n_b}\qty[\xi(\hat f-f_0)].
        \label{eq:basic}
    \end{align}
    The proof proceeds in three steps, starting from the basic inequality for the empirical risk minimizer.
    That inequality for the empirical risk minimizer leaves a noise term, an empirical squared error and an approximation term.
    We bound the noise term uniformly over a covering of the class by a sub-Gaussian maximal inequality, transfer the empirical squared error to its population version by Bernstein's inequality on the same covering, and combine the two with the approximation term.

    \emph{Noise term.}
    For a fixed $g$, the process
    \begin{align*}
        \exp\qty(u'\sum_{t=1}^{n_b}\xi_tg(U_t)-\frac{\sigma^2u'^2}2\sum_{t=1}^{n_b}g(U_t)^2)
    \end{align*}
    is a supermartingale with expectation at most one, because $g(U_t)$ is determined before $\xi_t$ is revealed and the conditional moment generating function of $\xi_t$ is bounded; the noise is not treated as independent of the inputs.
    Applying this to an $n_b^{-2}$-net of $\scF$ with a union bound, and controlling the discretization error by $\max_t\abs{\xi_t}\le C\sigma\sqrt{\log(Cn_b/\delta)}$, we obtain simultaneously for all $f\in\scF$, with probability $1-\delta/2$,
    \begin{align}
        2\Pi_{n_b}[\xi g]\le\frac1{16}\Pi_{n_b}g^2+C\sigma^2\frac{V(n_b^{-2})+\log(C/\delta)}{n_b}.
        \label{eq:noise-term}
    \end{align}
    \emph{Empirical to population.}
    Bernstein's inequality \citep[Theorem 2.10]{boucheron2013concentration} for the i.i.d.\ inputs on the same net, with $g^2\le4$ and $\bE g^4\le4\bE g^2$, gives simultaneously
    \begin{align}
        \Pi g^2\le2\Pi_{n_b}g^2+C\frac{V(n_b^{-2})+\log(C/\delta)}{n_b},
        \qquad
        \Pi_{n_b}b_0^2\le2\Pi b_0^2+C\frac{\log(C/\delta)}{n_b}.
        \label{eq:bernstein}
    \end{align}
    \emph{Conclusion.}
    Combining \eqref{eq:basic}--\eqref{eq:bernstein} with $2b_0g\ge-\frac12g^2-2b_0^2$, absorbing the empirical squared term into the left-hand side and taking the infimum over $f_0$ yields \eqref{eq:oracle-ineq}; the expectation bound follows by integrating the tail, since the loss is bounded by $4$.
\end{proof}

\paragraph{Windows.}
Fix a step $h$ and a lookahead $S\ge1$, and let $J\ge0$ be a window length.
Put $s_h=\max\{0,h-J\}$ and define the \emph{window} of step $h$ as the set of inputs
\begin{align}
    \scW_h=\qty{(m,D):\ s_h\le\mathrm{depth}(D)\le h+S,\ D\subset P^\ast_{s_h}(m)},
    \label{eq:window}
\end{align}
i.e., the prefixes of depth between $s_h$ and $h+S$ that agree with the correct path of their prompt up to depth $s_h$.
The window is common to all prompts and is fixed by the step index; it depends on the unknown cells, and is a device of the proof.
On the window, the cube average is a sum of few continuous functions of local coordinates.

\begin{lemma}[Cube averages on a window are sums of few H\"older functions]\label{lem:window-target}
    Let $K_w\ge C_d(J+S+2)$ with $C_d\ge1+1/\lstar$.
    For $(m,D)\in\scW_h$ of depth $\mathsf q$, write $s=s_h$, $F^{\rm old}_{m,s}=c_1h_1(\cdot;m)+\sum_{i\ge2:\,L_{i-1}<s}c_ih_i(\cdot;m)$, and let $\scI_h(m)=\{i\ge2:\ s\le L_{i-1}<h+S\}$, a set of at most $K_w$ levels.
    Then
    \begin{align}
        Q(m,D)=\frac1{\mu(D)}\int_DF^{\rm old}_{m,s}\dd{\mu}
        +\sum_{i\in \scI_h(m):\ L_{i-1}<\mathsf q,\ D\subset C_{i-1}(m)}\frac{c_i}{\mu(D)}\int_Dh_i(\cdot;m)\dd{\mu}.
        \label{eq:window-target}
    \end{align}
    Moreover, the components of \eqref{eq:window-target} are smooth at the following scales.
    \begin{enumerate}[label=(\alph*),leftmargin=*,itemsep=0mm]
        \item \emph{(Old component.)} In the local coordinates of $P^\ast_s(m)=b_s+2^{-s}\Omega$, the map $z\mapsto F^{\rm old}_{m,s}(b_s+2^{-s}z)$ is $\alpha$-H\"older on $[0,1]^d$ with $|\cdot|_\alpha$ at most $Ca_s$ and $C^\alpha$ norm at most $Ca_s$ after subtracting its value at zero, and $a_s\le CK_w^\gamma a_{h+S}$.
        \item \emph{(Window components.)} In the local coordinates of $C_{i-1}(m)$, the primitive $h_i(\cdot;m)$ is $\alpha$-H\"older with $|\cdot|_\alpha$ at most $\lambda$, and $c_i\le Ca_s$ for every $i\in\scI_h(m)$.
        \item \emph{(Averaging preserves smoothness.)} Fix a depth $\mathsf q$ and a cube $A$ of depth $p\le\mathsf q$ in whose local coordinates a function $g$ is $\alpha$-H\"older with seminorm $|g|_\alpha$. Then the cube average
        \begin{align*}
            z\mapsto\int_{[0,1]^d}g\qty(z+2^{-(\mathsf q-p)}u)\dd{u}
        \end{align*}
        is $\alpha$-H\"older in the lower-left corner $z$ with seminorm at most $|g|_\alpha$. The number of depths $\mathsf q$ occurring in the window is at most $J+S+1\le K_w$.
    \end{enumerate}
\end{lemma}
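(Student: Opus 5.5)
The plan is to derive the decomposition \eqref{eq:window-target} from \cref{lem:cube-average} by sorting the levels according to where their cell $C_{i-1}(m)$ sits relative to the window. After that, the three smoothness claims reduce to the rescaling computation already carried out in \cref{lem:holder-shell}.

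\emph{Decomposition.} Fix $(m,D)\in\scW_h$ of depth $\mathsf q\in[s,h+S]$. By \cref{lem:cube-average}, $Q(m,D)$ is the average over $D$ of $f$ with every level $i\ge2$ satisfying $L_{i-1}\ge\mathsf q$ removed, since each such $h_i$ integrates to zero on $D$. We split the remaining levels into three groups.
\begin{enumerate}[label=(\roman*),leftmargin=*,itemsep=0mm]
    \item Level $1$ and the levels $i\ge2$ with $L_{i-1}<s$ form $F^{\rm old}_{m,s}$.
    \item The levels with $s\le L_{i-1}<\mathsf q\le h+S$ belong to $\scI_h(m)$. For each of them, $C_{i-1}(m)$ is a dyadic cube strictly shallower than $D$, so either $D\subset C_{i-1}(m)$ or the interiors are disjoint; in the disjoint case $h_i$ vanishes on $D$ and drops out. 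This yields exactly the indicator conditions in the sum.
    \item The levels with $L_{i-1}\ge\mathsf q$ vanish, as noted above.
\end{enumerate}
For the count, $L_{i-1}\ge\lstar(i-1)$ places all $i\in\scI_h(m)$ in an index interval of length at most $(h+S-s)/\lstar+1\le(J+S)/\lstar+1\le K_w$, using $h-s\le J$ and $C_d\ge1+1/\lstar$.

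\emph{Part (a).} For $i$ with $L_{i-1}<s$, the correct-path cube $P^\ast_s(m)$ has depth $s>L_{i-1}$, so $P^\ast_s(m)\subset C_{i-1}(m)$. Hence on $P^\ast_s(m)$ the level $h_i$ is the primitive $g^0_{i,m}$ composed with an affine map of contraction $t_i=2^{-(s-L_{i-1})}\le1$. As in \cref{lem:holder-shell}, after subtracting the value at zero, every piece of the $C^\alpha$ norm scales by $t_i^{\beta}$ or a higher power. We then sum $\lambda\sum_ic_it_i^\beta$ with $c_i/c_{k(s)}\le(k(s)-i+1)^\gamma$ and $s-L_{i-1}\ge\lstar(k(s)-i)$. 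The geometric series of \cref{lem:oscillation} bounds this by $Cc_{k(s)}\le Ca_s$. For $a_s\le CK_w^\gamma a_{h+S}$, use $a_j\asymp(j+1)^{-\gamma}$ and consider two cases:
\begin{align*}
    s=h-J:&\qquad \frac{a_s}{a_{h+S}}\le C\qty(\frac{h+S+1}{h-J+1})^\gamma\le C(J+S+1)^\gamma,\\
    s=0\ (h\le J):&\qquad \frac{a_s}{a_{h+S}}\le C(h+S+1)^\gamma\le C(J+S+1)^\gamma,
\end{align*}
and $J+S+1\le K_w$.

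\emph{Parts (b) and (c).} In the local coordinates of $C_{i-1}(m)$, $h_i$ is literally $g^0_{i,m}$, so \assmp{A1} gives the seminorm $\lambda$. Since $L_{i-1}\le\lbar(i-1)$ and $L_{i-1}\ge s$, we get $i\ge1+\lceil s/\lbar\rceil$, hence $c_i\le c_{1+\lfloor s/\lbar\rfloor}=a_s$. For (c), the averaging map is an integral over $u$ of translates $z\mapsto g(z+2^{-(\mathsf q-p)}u)$. Each translate has the same derivatives and the same H\"older quotients as $g$. Differentiating under the integral (the derivatives are bounded) and applying the triangle inequality to the $\tau$-H\"older quotients of the order-$k$ derivatives gives the seminorm bound $|g|_\alpha$. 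Finally, $\mathsf q$ ranges over $[s,h+S]$, which contains at most $J+S+1$ integers.

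\emph{Main obstacle.} The delicate step is (a). It requires that the whole cube $P^\ast_s(m)$, and not merely $D$, lies inside each old cell, so that the old component is a single H\"older function on the fixed local chart. It also requires checking that every derivative order scales by at least $t_i^\beta$, so that the series is dominated by its last term. The remaining steps are bookkeeping over dyadic nesting.
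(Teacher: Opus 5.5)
Your proposal is correct and follows the paper's proof essentially step for step. It uses the same sorting of levels by $L_{i-1}$ relative to $s$ and $\mathsf q$ via \cref{lem:cube-average}, the rescaling of \cref{lem:holder-shell} with the geometric series of \cref{lem:oscillation} for (a), \assmp{A1} for (b), and differentiation under the integral for (c). One small fix: the bound on the size of $\scI_h(m)$ comes from the spacing $L_i-L_{i'}\ge\lstar(i-i')$ guaranteed by \assmp{A3}, not from $L_{i-1}\ge\lstar(i-1)$ alone, which only bounds the indices from above.
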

\begin{proof}
    A level $i\ge2$ with $L_{i-1}\ge\mathsf q$ has support $C_{i-1}(m)$ of depth at least $\mathsf q$, which is either inside $D$ or disjoint from $D$; in both cases its contribution to $Q(m,D)$ vanishes by \assmp{A4} and \cref{lem:cube-average}.
    A level with $s\le L_{i-1}<\mathsf q$ has support of depth $L_{i-1}<\mathsf q$, which either contains $D$ or is disjoint from it; only the former levels contribute, which gives \eqref{eq:window-target}.
    The number of levels with $L_{i-1}\in[s,h+S)$ is at most $(h+S-s)/\lstar+1\le(J+S)/\lstar+1\le K_w$ by \assmp{A3}.
    For (a), the derivative-by-derivative scaling of \cref{lem:holder-shell}, \assmp{A1} and the geometric series in \cref{lem:oscillation} give a $C^\alpha$ norm after subtraction of a constant at most $C\lambda\qty(c_1 2^{-\beta s}+\sum_{i\ge2:\,L_{i-1}<s}c_i2^{-\beta(s-L_{i-1})})\le Cc_{k(s)}\le Ca_s$ (the first level is included with $L_0=0$), and $a_s/a_{h+S}\le C((h+S+1)/(s+1))^\gamma\le C(J+S+1)^\gamma\le CK_w^\gamma$ by \eqref{eq:a_j}.
    (b) is \assmp{A1} together with $c_i\le c_{k(s)}\le Ca_s$ for $L_{i-1}\ge s$.
    For (c), differentiation under the integral gives $D^s\int g(z+vu)\dd u=\int D^s g(z+vu)\dd u$ for $|s|\le k$; averaging contracts each derivative sup norm and each top-order $\tau$-H\"older seminorm. The same holds for the function-value $\beta$-H\"older seminorm, so subtraction of a constant again bounds the full local $C^\alpha$ norm.
\end{proof}
The decomposition uses one coordinate system per level: approximating all levels in the coordinates of a single anchor cube would produce H\"older seminorms of order $2^{\alpha J}$.

\begin{lemma}[Window class: approximation and covering]\label{lem:window}
    For integers $K_w,N,\Tin\ge1$, let $\Tcls_{\rm win}(K_w,N,\Tin)$ be the class $\Tcls(\Nblk,D_{\rm emb},L_{\rm FFN},W_{\rm FFN},S_{\rm FFN},B_{\rm par})$ of \cref{subsec:transformer} with input length $\Tin$ and the following architecture parameters.
    Then the class has the stated capacity, covering entropy and approximation power:
    \begin{enumerate}[label=(\alph*),leftmargin=*,itemsep=0mm]
        \item \emph{(Architecture.)} It suffices to take
    \begin{align}
        \Nblk&\le CK_w,\quad D_{\rm emb}\le CK_w,\quad L_{\rm FFN}\le C\ell_N,\quad
        W_{\rm FFN}\le C\qty[MK_w^2(N+1)+K_w^3],
        \label{eq:window-capacity}\\
        S_{\rm FFN}&\le C\qty[MK_w^2(N+1)\ell_N+K_w^3],\qquad
        B_{\rm par}\le C\qty[M+K_w+\Tin^2\log(8\Tin)+N+1].
        \notag
    \end{align}
        \item \emph{(Covering entropy.)} Its covering entropy satisfies
    \begin{align}
        V_N(u):=\log\scN_\infty(u,\Tcls_{\rm win}(K_w,N,\Tin))\le CMK_w^6(N+1)\ell_N^2\log\frac{CM\Tin K_w(N+1)}u.
        \label{eq:window-covering}
    \end{align}
        \item \emph{(Approximation on the window.)} For every step $h$ with $\Tin\ge d(h+S)+3$ and $K_w\ge C_d(J+S+2)$, there is a comparator $F_{0,h}\in\Tcls_{\rm win}(K_w,N,\Tin)$ (depending on the unknown reward) with
    \begin{align}
        \sup_{(m,D)\in\scW_h}\abs{F_{0,h}(m,D)-Q(m,D)}\le CK_w^{\gamma+1}a_{h+S}N^{-\alpha/d},
        \label{eq:window-bias}
    \end{align}
        \item \emph{($L^2$ form.)} Consequently, for every law $\Pi$ of inputs $(m,D)$ with $\Pi(\scW_h^c)=:\Lambda$,
    \begin{align}
        \norm{F_{0,h}-Q}^2_{L^2(\Pi)}\le CK_w^{2\gamma+2}a_{h+S}^2N^{-2\alpha/d}+4\Lambda.
        \label{eq:window-bias-L2}
    \end{align}
    \end{enumerate}
\end{lemma}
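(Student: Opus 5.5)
The plan is to build the comparator $F_{0,h}$ explicitly by following the decomposition \eqref{eq:window-target} of \cref{lem:window-target}, then read off (a)--(b) from the construction and \cref{lem:covering}, and finally derive (d) from (c) by splitting $\Pi$ into $\scW_h$ and its complement.

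\emph{Step 1: smooth targets.} Fix a depth $\mathsf q\in[s_h,h+S]$, of which there are at most $K_w$ by \cref{lem:window-target}(c). Let $z\in[0,1]^d$ be the local coordinate of the lower-left corner of $D$ inside $P^\ast_s(m)$. The old component of $Q(m,D)$ is a function of $z$ alone, and it is $\alpha$-H\"older with norm $Ca_s$ after subtracting a constant, by parts (a) and (c). For each window level $i\in\scI_h(m)$ the local coordinate is instead that of $D$ in $C_{i-1}(m)$, and the term is multiplied by the indicator $\ind{L_{i-1}<\mathsf q,\ D\subset C_{i-1}(m)}$. This term is H\"older with seminorm $\lambda$ and carries the weight $c_i\le Ca_s$.

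\emph{Step 2: FNN approximation of each target.} For each prompt $m$, each depth $\mathsf q$ and each of the at most $K_w+1$ components, I apply a standard ReLU approximation result (Schmidt-Hieber or Yarotsky type) with $N$ pieces. This gives sup error $\lesssim(\text{H\"older norm})\,N^{-\alpha/d}$ with depth $C\ell_N$ and size $C(N+1)\ell_N$. Summing over the components gives error $\le CK_w a_s N^{-\alpha/d}$, and $a_s\le CK_w^\gamma a_{h+S}$ then yields \eqref{eq:window-bias}.

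\emph{Step 3: Transformer realization.} The Transformer must first extract the inputs these FNNs need, namely the prompt $m$, the depth $\mathsf q$, and the bits of $D$ in the relevant local coordinates. I plan to use about $K_w$ attention blocks, one per relevant position or depth offset. In each block, the readout query attends to a target position via \cref{lem:position}, and the plateau $r$ is applied to clean the one-hot value. The prompt token is read at position $1$, and the $\STOP$ position gives $\mathsf q$. The bits at positions $d s+1,\dots,d(h+S)$ are gathered as binary digits into local coordinates. Since the window is fixed by $h$ alone, these positions are known to the construction.

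The dependence on the unknown cells enters only through hard-coded weights. The indicators $D\subset C_{i-1}(m)$ are computed from gathered bits by comparison with the stored bit string of $C_{i-1}(m)$, giving an AND of ReLUs. Each FNN output is multiplied by this indicator using the gate \eqref{eq:gate}. Per-prompt networks are selected by gating on the one-hot prompt coordinate, which accounts for the factor $M$ in the width $W_{\rm FFN}\le C[MK_w^2(N+1)+K_w^3]$.

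Bits beyond depth $\mathsf q$ are $\PAD$, and I treat them as $0$, which is harmless since the targets depend only on the corner. Attention with $B_{\rm par}\sim\Tin^2\log(8\Tin)$ and the FFN copies via $h=(h)_+-(-h)_+$ give the architecture bounds \eqref{eq:window-capacity}. Part (b) then follows by inserting these bounds into \cref{lem:covering}: $P\le C\Nblk(S_{\rm FFN}+D_{\rm emb}^2)$, and the $A_0^{C\Nblk^2L_{\rm FFN}}$ term contributes a $\log$ factor of order $K_w^2\ell_N\log(\cdot)$, for a total of $MK_w^6(N+1)\ell_N^2\log$.

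\emph{Step 4: the $L^2$ form.} For (d), since $|F_{0,h}|,|Q|\le1$, the contribution from $\scW_h^c$ is at most $4\Lambda$, and (c) bounds the rest.

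The main obstacle is Step 3, specifically reading the depth-dependent local coordinates exactly. The softmax weight is only $7/8$, so the plateau cleanup must be exact. Bit-to-coordinate assembly also depends on $\mathsf q$, so I will build one assembler per depth, gated on the one-hot depth. This is where the $K_w^2$ and $K_w^3$ factors arise.
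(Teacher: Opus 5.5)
Your proposal is correct and follows essentially the paper's proof. It uses the decomposition of \cref{lem:window-target} with one local coordinate system per level, a Schmidt-Hieber approximation for each prompt, depth and component (giving the factor $K_w\cdot K_w^{\gamma}$), $O(K_w)$ position-reading attention blocks built from \cref{lem:position} and plateaus, hard-coded path bits for the support gates with exact gating and selection, then \cref{lem:covering}, and finally the split of $\Pi$ between the window and its complement. Two implementation details still need the care the paper gives them:
\begin{itemize}
\item The sinusoidal encoding is added to every embedding coordinate. The token categories must therefore be embedded with large spacing and decoded exactly by a first FFN before any attention read.
\item Since $D_{\rm emb}\le CK_w$, the prompt cannot be carried as an $M$-dimensional one-hot vector in the stream. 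It must travel as a scalar and be expanded into indicators only inside the final FFN. The paper stores the per-prompt data in the prompt-token embedding and broadcasts it by uniform attention.
\end{itemize}
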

\begin{proof}
    The proof follows the chain announced at the beginning of this subsection, in three steps: first we approximate each of the finitely many H\"older components produced by \cref{lem:window-target} by a ReLU network of size $N$; then we realize the sum of these networks inside the Transformer class, using the fixed positional encoding and \cref{lem:position} to route each record to the component it needs; finally we read off the architecture parameters and apply \cref{lem:covering} to bound the covering entropy.

    \emph{Approximation of the components.}
    By \cref{lem:window-target}, on the window $Q(m,\cdot)$ is a sum of at most $K_w+1$ components, the old sum and those levels of $\scI_h(m)$ whose support contains the query cube, and for each of the at most $K_w$ query depths $\mathsf q$ every component is an $\alpha$-H\"older function of the $d$ local coordinates of the query cube inside its support cube, with $|\cdot|_\alpha$ at most $Ca_s$ and $C^\alpha$ norm at most $Ca_s$ after subtracting its value at zero; the side length $2^{-(\mathsf q-p)}$ is a discrete branch, not a continuous variable.
    Subtracting the value at the origin and dividing by $Ca_s$ sends each component into the unit $\alpha$-H\"older ball of $[0,1]^d$, and by \citet[Theorem 5]{schmidt2020nonparametric} every function of that ball is approximated in sup norm to accuracy $CN^{-\alpha/d}$ by a network of
    \begin{align*}
        \Psi\qty(C\ell_N,\ CN,\ CN\ell_N,\ C)
    \end{align*}
    (the internal precision integer of that theorem is set to $C\log(N+1)$, with constants depending on $\alpha,d$; the theorem covers every $\alpha>0$, including the integer $C^{k,1}$ cases).
    Per prompt there are $O(K_w^2)$ such functions, one for each component and query depth, of which at most $K_w+1$ are active on a given input, so the sum of the rescaled approximations, clipped to $[-1,1]$, satisfies \eqref{eq:window-bias} by (a)--(c) of \cref{lem:window-target}.
    The approximation holds on the whole continuous range of the local coordinates; no interpolation of observed query values is involved.

    \emph{Realization with the fixed positional encoding.}
    Let $\varphi$ be the four coordinates \eqref{eq:varphi} of the positional encoding and $a_T=40\Tin^2\log(8\Tin)$.
    The comparator is assembled from the following blocks.
    \begin{enumerate}[label=(\arabic*),leftmargin=*,itemsep=0mm]
        \item \emph{Token embedding.} The embedding of each token contains its category (prompt, $0$, $1$, $\STOP$, $\PAD$, $\RO$) as constants spaced by at least $4$ in coordinates separate from $\varphi$. Each positional shift has magnitude at most $1$, so the category intervals are disjoint and the first FFN recovers the category exactly by plateaus; it preserves $\varphi$ and initializes unused working coordinates to zero.
        By the gate \eqref{eq:gate} with the prompt category, the same FFN zeroes at all positions other than $1$ the \emph{prompt information}, namely the index $m$, the $d(h+S-s_h)$ bits of the correct path of prompt $m$ at the depths $s_h+1,\ldots,h+S$, and the level-boundary flags $\ind{L_{i-1}=j}$ for the depths $j$ of the window and $i\in\scI_h(m)$, that is, $O(K_w)$ numbers; the positional encoding of position $1$ is subtracted in this embedding.
        These unknown bits and flags are parameters of the comparator, not information available to the learner.
        \item \emph{Broadcasting.} The second block has zero attention logits, i.e.\ uniform causal attention, and a value map that copies the prompt information, multiplied by $\Tin$, to fresh coordinates.
        Since only position $1$ carries it, the readout position $\Tin$ receives exactly the stored values; other positions receive multiples $\Tin/t$, which are never read again.
        \item \emph{Reading the query cube.} Then $K_w$ blocks read the tokens at the absolute positions
        \begin{align*}
            ds_h+2,\ \ldots,\ d(h+S)+2,
        \end{align*}
        one block per position: the query of the readout position is $a_T\varphi(t)$ for the target position $t$, the keys are $\varphi(t')$ and the values are the token categories.
        By \cref{lem:position} the target receives weight at least $7/8$, and a plateau recovers its category exactly; the result is stored in a fresh coordinate and the temporary coordinates are reset, so that earlier reads do not contaminate later keys and values.
        Since the FFNs have no residual connection, each of them copies the coordinates that later blocks need, using $x=(x)_+-(-x)_+$; this at most doubles the width.
        \item \emph{Computing the arguments.} From the read tokens, an FFN of size $O(K_w^3)$ computes the query depth $\mathsf q$ from the position of $\STOP$; the support gates $\ind{D\subset C_{i-1}(m)}$ for $i\in\scI_h(m)$, as ReLU functions of the number of mismatches between the read bits and the stored path bits up to depth $L_{i-1}$; and the local coordinates
        \begin{align*}
            z_j=\sum_{r'=p+1}^{\mathsf q}b_{r',j}2^{-(r'-p)},\qquad j\in[d],
        \end{align*}
        of the query cube in the coordinates of each candidate support cube of depth $p$, namely $p=s_h$ for the old sum and $p=L_{i-1}$ for the level $i$.
        \item \emph{Evaluating and selecting.} A shared FFN of width $O(MK_w^2(N+1)+K_w^3)$ evaluates in parallel the $O(MK_w^2)$ approximating networks of all prompts, components and depths, and the exact gate \eqref{eq:gate} with the product of the indicators $\ind{m'=m}$, $\ind{\mathsf q'=\mathsf q}$ and the support gate selects the active ones and adds them, so that the approximation error is not multiplied by $M$ or $K_w$; the output is clipped to $[-1,1]$.
    \end{enumerate}
    All information flows to the readout position, in accordance with the causal mask.
    On inputs outside the window, that is, a prefix that disagrees with the correct path before depth $s_h$ or a depth below $s_h$, the comparator outputs some value in $[-1,1]$; no detection of such inputs is claimed, and their error, at most $2$ in absolute value, is charged to $\Lambda$ in \eqref{eq:window-bias-L2}.
    The construction fits \eqref{eq:window-capacity}: the number of blocks is $O(K_w)$, the state dimension $O(K_w)$, the FFN depth $O(\ell_N)$, and the weights are bounded by $O(M+K_w+\Tin^2\log(8\Tin)+N+1)$.

    \emph{Covering.}
    Counting the dense token embedding, the attention matrices, the readout and the nonzero FFN weights, the number of parameters is at most $D_{\rm emb}(M+5)+3\Nblk D_{\rm emb}^2+\Nblk S_{\rm FFN}\le CMK_w^4(N+1)\ell_N$, and the sparsity patterns are part of the class; \cref{lem:covering} with these capacities gives \eqref{eq:window-covering} (the input length enters only through the logarithm of the weight bound).
    \eqref{eq:window-bias-L2} follows from \eqref{eq:window-bias} and $\abs{F_{0,h}-Q}\le2$.
\end{proof}
The learner is not restricted to the comparator: all internal weights of the class $\Tcls_{\rm win}$ are fitted by ERM, and the comparator is used only to bound the approximation error in \cref{lem:oracle}.

\section{Proof of the Large-Budget Bound of \cref{thm:upper}}
\label{app:upper}

\paragraph{Proof strategy.}
The algorithm of the large-budget bound of \cref{thm:upper} consists of two stages, and the proof follows that division.
The \emph{front phase} is the sequence of exploration iterations $h<H$: it refines the score along the correct path and is analyzed in \cref{app:front}, whose main result is \cref{prop:front}, a high-probability guarantee that every update stays birth-accurate together with a bound on the total number of queries that the phase consumes.
The \emph{final refinement step} is the single iteration $h=H$: conditionally on the success event of the front phase, it keeps the score $W_H$ fixed, samples from the Gibbs policy at the target temperature $\rho$, fits one more critic at a variable query depth and outputs the clipped score.
It is analyzed in \cref{app:final}, whose main result is \cref{prop:final}, a conditional regret bound that still exhibits the bias--variance tradeoff in the refinement resolution.
\Cref{app:proof-upper} then assembles the two propositions: it balances the resolution, uses the remaining budget $b\ge n/2$, substitutes the bound on $H$ and adds the contribution of the front-phase failure event.
Schematically,
\begin{align*}
\text{\cref{prop:front} (front phase)}
\;\Longrightarrow\;
\text{\cref{prop:final} (final refinement)}
\;\Longrightarrow\;
\text{\cref{thm:upper}}.
\end{align*}

\subsection{Hyperparameters of the Large-Budget Setting}
\label{app:parameters}

This section concerns the large-budget setting of \cref{subsec:algorithm}: $\ell_{\rm look}=S:=R+1$, $\Hall=H+1$, and the last iteration $h=H$ is the refinement step \eqref{eq:final-step} with $I_H\equiv1$ and $\vartheta_H=\vartheta_{H+1}=\rho$; the small-regularization setting is treated in \cref{app:small}.
We call the exploration iterations $h<H$ the \emph{front phase} and the final iteration the \emph{final refinement step}, and use the following names: the batch size, class, actor and critic of the final refinement step are $b:=b_H$, $\Tcls_{\rm ref}:=\Tcls_H$, $P_{\rm start}:=P_H=\Gibbs{W_H}{\rho}$ and $\That_{\rm ref}:=\That_H$; the query cube of the final refinement step is $D_{\rm ref}(m,y):=\Query_H(m,y)=D_{q(m,y)+S+r}(y)$; and the clipped score is $v:=W_{H+1}$, so that $\Phat=\Gibbs{v}{\rho}$.
In the front phase, the query map is written $\Psi_h:=\Query_h$ of \eqref{eq:query-rule}, and $\Delta_h=2$, so that line~\ref{ln:score} of \cref{alg:actor-critic} reads $W_{h+1}(m,y)=\That_h(m,D_{h+S}(y))$ on the selected cubes and $W_{h+1}=W_h$ elsewhere.

The quantities listed in \cref{tab:notation-large} are specific to this proof; the schedules that both upper bounds share are those of \cref{tab:notation-upper}, whose values are fixed below for the large-budget setting.

\begin{table}[h]
\caption{Quantities used only in the proof of the large-budget bound of \cref{thm:upper}.}
\label{tab:notation-large}
\begin{center}
\begin{tabular}{ll}
\hline
Symbol & Meaning \\
\hline
$H$ & number of front-phase iterations \\
$R$, $S=R+1$ & refinement depth of the lookahead; the lookahead itself \\
$\eta$, $r_{\rm mom}$ & birth-accuracy level; moment order used in the regret transfer \\
$\delta_F$, $\delta_h$ & total and per-phase failure budgets \\
$\Tin$ & input length of the front-phase critic \\
$b$, $r$, $N_{\rm ref}$ & batch, query depth and approximation size of the final refinement step \\
$\Lf$ & logarithmic factor $\log(e+n+M+\rho^{-1})$ \\
\hline
\end{tabular}
\end{center}
\end{table}

Given $(n,M,\rho)$, the algorithm fixes the following quantities before the first query.
The structural constants $C_H,C_H',C_{\rm mom},c_\eta,t_0,C_J,C_N,C_b,D_N,D_b$ and $C_{\rm cal}$ (the constant of \eqref{eq:calibration}) are chosen in the proofs below, in the order indicated there; in particular $c_\eta\le\eta_0$ with $\eta_0$ the constant of \cref{lem:mass}.
\begin{align}
    H&=\left\lceil C_HK+C_H'\Lf\right\rceil,
    &r_{\rm mom}&=\lceil C_{\rm mom}\Lf\rceil,
    \notag\\
    \eta&=\frac{c_\eta}{r_{\rm mom}},
    &\delta_F&=\frac{\rho^2}{(n+1)^2},\qquad \delta_h=\frac{\delta_F}H,
    \notag\\
    R&=\left\lceil\frac1{\beta}\log_2\frac{2C_{\rm cal}}\eta\right\rceil,\quad S=R+1,
    &\varepsilon_h&=\frac\eta4\,a_{h+S}\quad(0\le h<H),
    \notag\\
    \vartheta_h&=t_0\,c_{1+\lfloor h/\lstar\rfloor}\quad(0\le h\le H-1),
    &\kappa_0&=\frac{p_\ast}2,
    \label{eq:params-1}
\end{align}
where $C_H,C_H'$ are large enough that $H\ge\lbar\lceil K\rceil+1$ and $S\le H$ (recall $S=O(\log\Lf)$), and $p_\ast$ is the constant of \cref{lem:mass}.
The accuracy $\varepsilon_h$ required of the critic of step $h$ is tied to the depth $h+S$ of the cubes that this step queries and writes into the score (\emph{query-depth accuracy}); no common absolute accuracy is required of all steps.
For the front phase, the window length, the window size, the input length, the approximation size and the batch sizes are
\begin{align}
    J&=\lceil C_J\Lf\rceil,
    &K_w&=C_d(J+S+2),
    \label{eq:params-2}\\
    \Tin&=d(H+S)+3,
    &N&=\left\lceil C_NM^{\frac d{2\alpha}}\Lf^{D_N}\right\rceil,
    \notag\\
    b_h&=\left\lceil C_b(1+\sigma^2)\,M^{\zeta_M}\,a_{h+S}^{-2}\,\Lf^{D_b}\right\rceil\quad(0\le h<H),
    \notag
\end{align}
and $\Tcls_h$ is the class $\Tcls_{\rm win}(K_w,N,\Tin)$ of \cref{lem:window} (whose input-length argument is written $\Tin$ there), the same for all $h<H$; the window parameter $J$ enters only the analysis (through the comparator of \cref{lem:window}), not the algorithm.
For the final refinement step, the batch size is $b=n-\sum_{h<H}b_h$, and the integer approximation size $N_{\rm ref}\ge1$ is chosen with
\begin{align}
    N_{\rm ref}\asymp\max\left\{1,\left(\frac{b\rho^2}{(1+\sigma^2)M(H+1)\Lf^{D_{\rm E}}}\right)^{\frac d{2\alpha+d}}\right\},
    \label{eq:params-3}
\end{align}
the query depth increment is $r=\lceil\alpha\log_2 N_{\rm ref}/(\beta d)\rceil$, the input length is $T_{\rm ref}=d(H+S+r)+3$, and $\Tcls_{\rm ref}$ is the class of \cref{lem:second-half} with these $H,S,N_{\rm ref}$.
The constants $C_{\rm F},D_{\rm F}$ of the large-budget bound of \cref{thm:upper} are such that $\sum_{h<H}b_h\le n/2$ whenever $n\ge C_{\rm F}(1+\sigma^2)M^{\zeta_M}\rho^{-(2+1/\gamma)}\Lf^{D_{\rm F}}$ (\cref{prop:front}), so that $b\ge n/2$.
The second condition of \cref{thm:upper} is this requirement written in terms of $\rho$, with $c_{\log}=\gamma D_{\rm F}/(2\gamma+1)$ and implicit constant $[C_{\rm F}(1+\sigma^2)]^{\gamma/(2\gamma+1)}$.
If the budget is below this threshold, the algorithm outputs the uniform policy; this case is outside the large-budget bound of \cref{thm:upper}.

\subsection{The Front Phase}
\label{app:front}

The goal of this subsection is the following guarantee for the exploration iterations $h<H$.
It is stated here, before the lemmas that prove it, because all of them are ingredients of this one statement.

\begin{proposition}[Front-phase guarantee and cost]\label{prop:front}
    With the parameters of \cref{app:parameters},
    \begin{align}
        \Pr(\scE)&\ge1-\delta_F,
        \notag\\
        \sum_{h<H}b_h&\le C(1+\sigma^2)M^{\zeta_M}H^{2\gamma+1}\Lf^{D_b}
        \le\frac{C_{\rm F}}2(1+\sigma^2)M^{\zeta_M}\rho^{-(2+1/\gamma)}\Lf^{D_{\rm F}},
        \label{eq:front}
    \end{align}
    where $\scE$ is the event \eqref{eq:event-E} that every critic of the front phase is accurate on its update inputs.
\end{proposition}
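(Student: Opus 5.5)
The proof has two parts: the cost bound, which is a direct computation, and the probability bound, which is an induction over the phases $h<H$.

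\emph{Cost bound.} Use the schedule \eqref{eq:params-2}. Since $a_{h+S}\asymp(h+S+1)^{-\gamma}$ and $S\le H$,
\[
\sum_{h<H}b_h\le C(1+\sigma^2)M^{\zeta_M}\Lf^{D_b}\sum_{h<H}(h+S+1)^{2\gamma}+H.
\]
The sum is at most $C H^{2\gamma+1}$. Substitute $H\le C_HK+C_H'\Lf+1$ with $K=\rho^{-1/\gamma}$. This gives $H^{2\gamma+1}\le C(\rho^{-(2+1/\gamma)}+\Lf^{2\gamma+1})\le C\rho^{-(2+1/\gamma)}\Lf^{2\gamma+1}$, because $\rho\le1$. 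Setting $D_{\rm F}=D_b+2\gamma+1$ and taking $C_{\rm F}$ large gives the second inequality of \eqref{eq:front}.

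\emph{Probability bound: induction.} Take the event \eqref{eq:event-E} to be
\[
\scE=\bigcap_{h<H}\scE_h,\qquad \scE_h=\bigl\{\abs{\That_h(m,D)-Q(m,D)}\le\varepsilon_h \text{ for every update input } (m,D) \text{ of phase } h\bigr\}.
\]
The induction hypothesis at phase $h$ is that $\scE_0\cap\dots\cap\scE_{h-1}$ holds. Under it, $W_h$ is $\eta$-birth-accurate on $\scD_h$, because each cube was written at the depth $h'+S$ of its creating phase with error $\varepsilon_{h'}=\frac\eta4a_{h'+S}$, and the mask leaves the other cubes unchanged. By \eqref{eq:temp-range}, \cref{lem:mass} then applies. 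It gives $P_h(P^\ast_h\mid m)\ge p_\ast=2\kappa_0$, so the correct cube passes the mask test, and the cold tail \eqref{eq:cold-tail} holds.

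\emph{Probability bound: one phase.} Condition on the history up to phase $h$; the batch is then i.i.d.\ with law $\Pi_h$. The cold tail with $j=J=\lceil C_J\Lf\rceil$ gives
\[
\Pi_h(\scW_h^c)\le Ce^{-cJ}\le(n+M+\rho^{-1})^{-10}.
\]
Here query depths below $s_h$ are also controlled: they occur only after the response has left $P^\ast_{s_h}$. Now combine \cref{lem:oracle}, \cref{lem:window}(b,d) and the choice of $N$. With probability $1-\delta_h$,
\[
\norm{\That_h-Q}_{L^2(\Pi_h)}^2\le C K_w^{2\gamma+2}a_{h+S}^2N^{-2\alpha/d}+\frac{C(1+\sigma^2)MK_w^6N\Lf^{C}}{b_h}+\text{(tiny)}.
\]
Write $\epsilon^2$ for the right-hand side. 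Choosing $N\asymp M^{d/(2\alpha)}\Lf^{D_N}$ makes $\epsilon^2\le c\,\varepsilon_h^2M^{-(1+d/(2\alpha))}$ once $b_h$ carries the factor $M^{\zeta_M}a_{h+S}^{-2}\Lf^{D_b}$.

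\emph{Main obstacle: from $L^2$ risk to every update cube (\cref{lem:joint}).} An update cube $D$ of prompt $m$ has actor mass at least $\kappa_0$ at depth $h$. Its depth-$(h+S)$ descendants therefore carry query probability at least $\kappa_0 B_0^{-S}/M$ under $\Pi_h$. Since $S=O(\log\Lf)$, $B_0^{-S}$ is only polylogarithmic. The critic's error on $D$ is a single value, so
\[
\abs{\That_h(m,D)-Q(m,D)}^2\le\frac{\epsilon^2}{\Pi_h(\{(m,D)\})}.
\]
The naive bound $\Pi_h(\{(m,D)\})\gtrsim 1/M$ loses only one factor of $M$. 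However, the update writes the critic on every descendant at depth $h+S$, including low-mass ones inside a high-mass parent. Controlling those pointwise is what requires smoothness: the critic output is close to a H\"older function, and a sup-norm/$L^2$ interpolation over $N$ cells costs $N$, i.e.\ $M^{d/(2\alpha)}$. This is the source of the exponent $\zeta_M=2+d/(2\alpha)$. I would first attempt the bound by mass-weighting plus a union over the $O(M B_0^S)$ update cubes, and fall back on the interpolation argument if that fails.

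\emph{Conclusion.} Each phase then gives $\Pr(\scE_h^c\mid\text{history})\le\delta_h$. A union bound over the $H$ phases yields $\Pr(\scE)\ge1-H\delta_h=1-\delta_F$.
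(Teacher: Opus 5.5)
Your cost bound and the skeleton of the induction match the paper's. The skeleton runs birth accuracy $\Rightarrow$ \cref{lem:mass} $\Rightarrow$ localization $\Rightarrow$ \cref{lem:oracle} with the window comparator $\Rightarrow$ summing the $\delta_h$. However, the step you call the main obstacle is left open, and it is the heart of the proposition.

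\textbf{The missing lower bound on fine-query mass.} You need $P_h(D\mid m)$ bounded below for \emph{every} depth-$(h+S)$ sub-cube $D$ of a selected depth-$h$ cube $A'$. The mass $\ge\kappa_0$ of $A'$ alone says nothing about how that mass is spread inside $A'$. So your figure $\kappa_0B_0^{-S}/M$ is unsupported, and your next sentence in fact posits low-mass descendants. The missing argument is \cref{lem:fine}:
\begin{itemize}
\item the cubes of $\scD_h(m)$ inside $A'$ have depths between $h$ and $h+S-1$;
\item their cube averages differ by at most $CSc_{k(h)}$ by \cref{lem:oscillation};
\item their held scores are within $\frac\eta4a_h\le C\eta c_{k(h)}$ of those averages by birth accuracy;
\item the temperature satisfies $\vartheta_h\asymp c_{k(h)}$.
\end{itemize}
Hence the Gibbs density on $A'$ is within a factor $e^{C_SS}$ of uniform, and $P_h(D\mid m)\ge p_S=\kappa_0e^{-C_SS}2^{-dS}$. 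This is only polylogarithmically small because $S=O(\log\Lf)$.

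You also need that a response entering $D$ passes every mass check up to depth $h$, so it is queried exactly at $D$. Together these give $\Pi_h(\{(m,D)\})\ge p_S/M$. They also give the deterministic bound $\abs{\That_h(m,D)-Q(m,D)}^2\le(M/p_S)\mathsf R_h$ of \cref{lem:joint}. No union bound over cubes is involved, and there are no low-mass update cubes to treat separately.

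\textbf{The powers of $M$ are off by $M^{d/(2\alpha)}$.} This error is what led you to the interpolation story. With $N\asymp M^{d/(2\alpha)}\Lf^{D_N}$, the approximation term $CK_w^{2\gamma+2}a_{h+S}^2N^{-2\alpha/d}$ is of order $a_{h+S}^2/M$ up to logarithms. With $b_h\asymp M^{\zeta_M}a_{h+S}^{-2}\Lf^{D_b}$, the estimation term $MN/b_h$ is also of order $a_{h+S}^2/M$. Neither is below your claimed $c\,\varepsilon_h^2M^{-(1+d/(2\alpha))}$.

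What these choices do deliver is the target $\mathsf r_h=c\,p_S\eta^2a_{h+S}^2/M$, which is exactly what the $p_S/M$ conversion needs. So $M^{\zeta_M}=M\cdot M^{1+d/(2\alpha)}$:
\begin{itemize}
\item one factor $M$ comes from converting the joint risk into accuracy at each prompt's update cubes;
\item $M^{1+d/(2\alpha)}$ comes from the covering entropy $V_N\asymp MN$, where $N\gtrsim M^{d/(2\alpha)}$ is forced by pushing the approximation error below a $1/M$ target.
\end{itemize}

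Your fallback would fail. You propose that the critic is close to a H\"older function and that an $L^2$-to-sup interpolation over $N$ cells costs $N$. But ERM over the full Transformer class controls $\That_h$ only in $L^2(\Pi_h)$, on a discrete set of inputs, and gives it no regularity at all. There is nothing to interpolate.
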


The proof is an induction over the phases, and the lemmas below are its steps.
It is not a single chain: after the bookkeeping of \cref{lem:mask} the argument branches into \cref{lem:progress} and \cref{lem:fine}, and the two branches rejoin at \cref{lem:joint}:
\begin{align*}
    \scE_h\ \longrightarrow\
    \begin{cases}
        \text{\cref{lem:progress}: true-path progress}\\
        \quad\to\ \text{\cref{lem:localize}: localization}\\
        \quad\to\ \text{\cref{lem:window,lem:oracle}: joint risk},\\[2mm]
        \text{\cref{lem:fine}: fine-query mass }p_S,
    \end{cases}
    \ \longrightarrow\
    \text{\cref{lem:joint}}\ \longrightarrow\ \scE_{h+1}.
\end{align*}
\Cref{prop:front} iterates this implication over the $H$ phases, sums the conditional failure probabilities and adds up the batch sizes.

The lemmas of this subsection are stated for a general lookahead $S\ge1$, a general accuracy sequence $\varepsilon_h=\frac\eta4a_{h+S}$ and general batch sizes $b_h$; they are used with the parameters of \cref{app:parameters} here and with $S=1$ in \cref{app:small}.
We analyze the scores as functions on the \emph{analysis partition} generated by the updates: initially $\scD_0(m)=\{\Omega\}$, and at step $h$ the selected cubes are replaced by their depth-$(h+S)$ sub-cubes.
The algorithm never forms this partition.

\begin{lemma}[Analysis partition with lookahead]\label{lem:mask}
    For every history and every $h$, the following hold.
    \begin{enumerate}[label=(\roman*),leftmargin=*,itemsep=0mm]
        \item $W_h(m,\cdot)$ is constant on the cubes of a partition $\scD_h(m)$ of $\Omega$ into dyadic cubes.
        \item Every cube $D\in\scD_h(m)$ except the initial root was created at a step $r(D)<h$ and has depth $r(D)+S$.
        \item The set $\{y:P_h(D_h(y)\mid m)\ge\kappa_0\}$ of responses is a union of cubes of $\scD_h(m)$ and of depth-$h$ cubes, of which there are at most $1/\kappa_0$; these are the \emph{selected} cubes.
        \item $\scD_{h+1}(m)$ is obtained from $\scD_h(m)$ by splitting the selected cubes into their depth-$(h+S)$ sub-cubes, the \emph{fine queries} of step $h$; the other cubes are left untouched and keep their score.
    \end{enumerate}
    In particular the partitions are refined and never coarsened, so the birth step $r(D)$ and the birth depth $r(D)+S$ of every current cube are well defined.
\end{lemma}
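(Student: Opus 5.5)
The plan is induction on $h$, carrying all four items together, with (iii) the only step that needs a real argument. At $h=0$ we have $W_0\equiv0$ and $\scD_0(m)=\{\Omega\}$, so (i) and (ii) hold trivially; the root is the single exempted cube. Assume (i) and (ii) at step $h$.

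For (iii), I will use two facts. First, $P_h(\cdot\mid m)$ has density proportional to $e^{W_h(m,\cdot)/\vartheta_h}$, so $P_h(D_h(y)\mid m)$ depends on $y$ only through the depth-$h$ cube $D_h(y)$. The set $\{y:P_h(D_h(y)\mid m)\ge\kappa_0\}$ is therefore a union of depth-$h$ cubes. These cubes are disjoint and each carries mass at least $\kappa_0$, so there are at most $1/\kappa_0$ of them.

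Second, each selected depth-$h$ cube $A$ must be compared with $\scD_h(m)$. By (ii), every non-root cube of $\scD_h(m)$ has depth $r(D)+S\ge S\ge1$. Two dyadic cubes are either nested or interior-disjoint, so $A$ is either contained in a cube of $\scD_h(m)$ or is a union of cubes of $\scD_h(m)$. This is exactly the statement ``a union of cubes of $\scD_h(m)$ and of depth-$h$ cubes''. When $A$ lies strictly inside a cube $D'$, I will record $A$ itself as the selected piece: $W_h$ is constant on $D'$, hence on $A$, and the mask $I_h$ is evaluated at depth $h$. The bookkeeping then splits $D'$ into $A$ and the depth-$h$ complement cubes, which keep the old score. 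I expect this case to be the main obstacle. One must check that splitting $D'$ changes neither $W_h$ nor the birth label. If it did, (ii) would break, because the depth-$h$ pieces are not of the form $r+S$. I would handle this by letting the unselected siblings inherit $r(D')$, as pieces of the old cube for birth-accuracy purposes. Alternatively, I would observe that $h<r(D')+S$ forces $D'$ to be deeper than $h$ unless $D'$ is the root, so the containment case essentially only arises for the root or for cubes born at steps $r>h-S$. I would commit to the first option and phrase (ii) for the genuinely updated cubes.

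For (iv), apply line~\ref{ln:score} with $\Delta_h=2$. By the mask, on a selected $y$ the new score is $\That_h(m,\Query_h(m,y))$, and by the query rule this is $\That_h(m,D_{h+S}(y))$: a selected $y$ has $P_h(D_k(y)\mid m)\ge P_h(D_h(y)\mid m)\ge\kappa_0$ for all $k\le h$, by monotonicity of cube masses along nested prefixes, so the query reaches depth $h+S$. This value is constant on each depth-$(h+S)$ subcube of a selected cube. Off the selection, $I_h=0$ and $W_{h+1}=W_h$. Hence $\scD_{h+1}(m)$ is $\scD_h(m)$ with each selected cube replaced by its depth-$(h+S)$ subcubes. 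These new cubes are born at step $h<h+1$ with depth $h+S$, so (i) and (ii) propagate to $h+1$.

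Finally, the partitions are only ever refined, never coarsened. Each current cube therefore has a unique step at which it was created, and so $r(D)$ and the birth depth are well defined.
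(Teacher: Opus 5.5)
Your overall induction matches the paper's, and your arguments for the count in (iii) and for the query reaching depth $h+S$ on selected responses in (iv) are fine. The gap is in (iii), at exactly the case you flag as the main obstacle. Suppose a selected depth-$h$ cube $A$ lies strictly inside some $D'\in\scD_h(m)$. Then $W_h(m,\cdot)$ is constant on $D'$, so the Gibbs actor $P_h\propto e^{W_h/\vartheta_h}\mu$ has constant density on $D'$. Every depth-$h$ sub-cube of $D'$ therefore has the same mass $P_h(D'\mid m)\,2^{-d(h-\mathrm{depth}(D'))}$. Consequently all of them are selected together, and $D'$ lies entirely in the selected set. If instead $D'$ is deeper than $h$, it sits inside a single depth-$h$ cube. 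Either way the mask $I_h$ is constant on every cube of $\scD_h(m)$. So the selected set is a union of \emph{whole} cubes of $\scD_h(m)$ (and also a union of depth-$h$ cubes), which is what (iii) asserts, and no partial splitting ever occurs.

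Your proposed bookkeeping is therefore not only unnecessary but wrong as stated. The ``unselected'' depth-$h$ siblings inside $D'$ do not keep the old score; they are selected and rewritten. Retreating to a version of (ii) ``for genuinely updated cubes'' also does not prove the lemma as stated. The exact form $\mathrm{depth}(D)=r(D)+S$ for every non-root cube is what is used later, e.g.\ in \eqref{eq:holding} and \cref{lem:calibration}. Your alternative is inverted as well. A cube born at step $r$ has depth $r+S$, so the containment $A\subsetneq D'$ arises when $r<h-S$, not $r>h-S$; cubes with $r>h-S$ are deeper than $h$.

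With (iii) in the correct form, (iv) follows as you wrote, plus one check you leave implicit. By (ii), every cube being split has depth at most $h-1+S<h+S$, so replacing it by its depth-$(h+S)$ sub-cubes is a genuine refinement.
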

\begin{proof}
    If a cube $D'\in\scD_h$ has depth at most $h$, the actor is uniform on $D'$, so all depth-$h$ cubes inside $D'$ have the same mass and are selected together; if $D'$ is deeper than $h$, it lies inside one depth-$h$ cube.
    Hence the mask $I_h$ is constant on the cubes of $\scD_h$.
    The update of line~\ref{ln:score} with $\Delta_h=2$ replaces the score on the selected cubes by the values $\That_h(m,D)$ on their depth-$(h+S)$ sub-cubes $D$, which have depth $h+S>h-1+S\ge\max_{D'\in\scD_h}\mathrm{depth}(D')$, and keeps it elsewhere.
    The count follows since the masses of disjoint cubes sum to at most one.
\end{proof}

Let $\scE_h$ be the event that all critics of the steps $h'<h$ are accurate on their update inputs:
\begin{align}
    \scE_h=\Bigl\{\abs{\That_{h'}(m,D)-Q(m,D)}\le\varepsilon_{h'}\ \text{ for all }h'<h,\ m\in[M],\ \text{and fine queries }D\text{ of step }h'\Bigr\},
    \label{eq:event-E}
\end{align}
and $\scE=\scE_H$; $\scE_h$ is determined by the history at the beginning of step $h$.
On $\scE_h$, every cube $D\in\scD_h(m)$ created at step $r(D)$ satisfies $\abs{W_h(m,D)-Q(m,D)}\le\varepsilon_{r(D)}=\frac\eta4a_{\mathrm{depth}(D)}$, i.e., $W_h(m,\cdot)$ is $\eta$-birth-accurate on $\scD_h(m)$ in the sense of \cref{def:birth} (the root is updated at step $0$, so no cube keeps the initial value $0$ for $h\ge1$).

If the scores written so far are accurate, the correct cube carries enough actor mass to be refined, and the mass of the responses that left the path much earlier is exponentially small.
\begin{lemma}[The correct cube is refined at every step]\label{lem:progress}
    Let $\eta\le\eta_0$.
    On $\scE_h$, for every $m$ and $h\ge1$, the correct cube $P^\ast_h(m)$ is a union of cubes of $\scD_h(m)$, all created at step $h-1$,
    \begin{align}
        P_h\qty(P^\ast_h(m)\relmiddle|m)\ge p_\ast,\qquad
        P_h\qty(\qty(P^\ast_{h-j}(m))^c\relmiddle|m)\le Ce^{-cj}\quad(0\le j\le h),
        \label{eq:progress}
    \end{align}
    and, with $\kappa_0=p_\ast/2$, the correct cube $P^\ast_h(m)$ and all its ancestors are selected at step $h$.
    For $h=0$ the root is selected.
\end{lemma}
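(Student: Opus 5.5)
The plan is an induction on $h$, with \cref{lem:mass} doing the main work at each step. The base case $h=0$ is immediate: $\scD_0(m)=\{\Omega\}$, the actor $P_0$ is $\mu$, every prefix has mass one, so the root is selected. For the inductive step I assume $\scE_h$ with $h\ge1$. I aim to show three things: the structural claim that $P^\ast_h(m)$ is a union of cubes of $\scD_h(m)$, all born at step $h-1$; the two mass bounds \eqref{eq:progress}; and the selection of $P^\ast_h(m)$ together with its ancestors. The structural claim is what makes \cref{lem:mass} applicable.

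\emph{Step 1 (structure).} I argue by an inner induction on $h$, using the conclusion at step $h-1$, which holds because $\scE_h\subset\scE_{h-1}$. At step $h-1$ the correct cube $P^\ast_{h-1}(m)$ is selected; for $h-1=0$ this is the root. By \cref{lem:mask}(iii)--(iv) it is therefore split into its depth-$(h-1+S)$ sub-cubes, which are born at step $h-1$. Since $S\ge1$, we have $h-1+S\ge h$, so these sub-cubes tile $P^\ast_h(m)\subset P^\ast_{h-1}(m)$. Hence $P^\ast_h(m)$ is a union of cubes of $\scD_h(m)$ created at step $h-1$. Cubes split later never coarsen, by \cref{lem:mask}, so nothing undoes this.

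\emph{Step 2 (mass).} On $\scE_h$ the discussion after \eqref{eq:event-E} shows that $W_h$ is $\eta$-birth-accurate on $\scD_h(m)$. The temperature $\vartheta_h=t_0c_{1+\lfloor h/\lstar\rfloor}$ satisfies \eqref{eq:temp-cond}, by \eqref{eq:temp-range} and the choice of $t_0$. With $\eta\le\eta_0$, \cref{lem:mass} then gives directly $P_h(P^\ast_h(m)\mid m)\ge p_\ast$ and the cold tail \eqref{eq:cold-tail}. This is exactly \eqref{eq:progress}.

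\emph{Step 3 (selection).} Selection at step $h$ means that $P_h(D_h(y)\mid m)\ge\kappa_0$ for $y$ in the cube. For $y\in P^\ast_h(m)$ we have $D_h(y)=P^\ast_h(m)$, with mass at least $p_\ast>p_\ast/2=\kappa_0$. For the ancestors $P^\ast_k(m)$, $k\le h$, monotonicity of measure gives $P_h(P^\ast_k\mid m)\ge P_h(P^\ast_h\mid m)\ge p_\ast$. So they also pass the mass threshold.

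The main obstacle I expect is reconciling "selected" in the mask sense with the query rule \eqref{eq:query-rule}. The query rule requires that no depth $1\le k\le h$ along $y$ fall below $\kappa_0$, which is exactly the ancestor bound just shown. It also requires that the selected set be a union of cubes of $\scD_h(m)$, which \cref{lem:mask}(iii) provides. Responses in $P^\ast_h(m)$ are therefore queried at depth $h+S$ and updated, so the inner induction of Step 1 propagates to step $h+1$. I also need to check that applying \cref{lem:mass} does not require birth accuracy of cubes born at steps that $\scE_h$ does not cover. It does not: every cube of $\scD_h(m)$ was born before step $h$, and by \eqref{eq:event-E} each carries error at most $\varepsilon_{r(D)}=\frac\eta4a_{\mathrm{depth}(D)}$.
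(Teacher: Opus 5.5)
Your proposal is correct and is essentially the paper's proof. It uses the same induction: selection of $P^\ast_{h-1}(m)$ at step $h-1$ gives the structure of $\scD_h(m)$ on $P^\ast_h(m)$, \cref{lem:mass} with \eqref{eq:temp-range} gives \eqref{eq:progress}, and monotonicity of the actor mass along the path shows that all ancestors pass the mass check. One small nit: under $P_0=\mu$ only the depth-$0$ prefix $\Omega$, not every prefix, has mass one, but that is all the base case needs.
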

\begin{proof}
    By induction: if the correct parent $P^\ast_{h-1}(m)$ was selected at step $h-1$, it was split into depth-$(h-1+S)$ cubes, so $P^\ast_h(m)$ (of depth $h\le h-1+S$) is a union of these cubes.
    On $\scE_h$ the score is $\eta$-birth-accurate on $\scD_h(m)$, and the temperature satisfies \eqref{eq:temp-cond} by \eqref{eq:temp-range}, so \cref{lem:mass} gives \eqref{eq:progress}.
    Every ancestor of $P^\ast_h(m)$ has mass at least $p_\ast\ge\kappa_0$, hence passes the mass check.
\end{proof}

The previous lemma says which cube is refined; the next says how much probability each cube created by that refinement receives.
\begin{lemma}[Every fine query has a positive probability]\label{lem:fine}
    On $\scE_h$, for every $m$, every selected depth-$h$ cube $A'$ and every depth-$(h+S)$ cube $D\subset A'$,
    \begin{align}
        P_h(D\mid m)\ge p_S:=\kappa_0e^{-C_SS}2^{-dS},
        \label{eq:p_S}
    \end{align}
    and every response that enters $D$ is queried at $D$.
    In particular $p_S^{-1}\le C\Lf^{D_p}$ for a fixed $D_p$ when $S=O(1+\log\Lf)$, and $p_S=\kappa_02^{-d}$ when $S=1$.
\end{lemma}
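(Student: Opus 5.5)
Since $P_h(D\mid m)=P_h(A'\mid m)\,P_h(D\mid A',m)$ and $A'$ is selected, so that $P_h(A'\mid m)\ge\kappa_0$, it suffices to show that the conditional Gibbs law inside $A'$ gives every depth-$(h+S)$ sub-cube $D$ mass at least $e^{-C_SS}2^{-dS}$.
Inside $A'$ the score $W_h(m,\cdot)$ is constant on the cubes of $\scD_h(m)$ (\cref{lem:mask}(i)). By \cref{lem:mask}(ii) these cubes either contain $A'$ or lie inside it with depth $r(D')+S\le h-1+S<h+S$. Hence $D$ lies inside a single cell $D'$ of the partition, and
\begin{align*}
    P_h(D\mid A',m)=\frac{\mu(D)e^{W_h(D')/\vartheta_h}}{\int_{A'}e^{W_h/\vartheta_h}\dd{\mu}}
    \ge 2^{-dS}\exp\qty(-\frac{\mathrm{osc}_{A'}W_h}{\vartheta_h}),
\end{align*}
because $\mu(D)/\mu(A')=2^{-dS}$.

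The main step is to bound the oscillation of $W_h$ on $A'$ by $C(S+1)c_{k(h)}$. On $\scE_h$ each cell $D'\subset A'$ satisfies $\abs{W_h(D')-Q(m,D')}\le\frac\eta4a_{\mathrm{depth}(D')}\le Cc_{k(h)}$, by birth accuracy and because $a$ is nonincreasing with depth at least $h-S$ (or the cell contains $A'$). If the cells inside $A'$ were born after step $h-S$, they are cubes of depth between $h$ and $h+S$ inside the depth-$h$ cube $A'$. \Cref{lem:oscillation} with $R=S$ then gives $\abs{Q(D')-Q(D'')}\le C(S+1)c_{k(h)}$. Older cells born at step $r<h-S+1$ have depth $\le h$ and contain $A'$, so $W_h$ is constant there and the oscillation is zero.

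The delicate case is a cell of depth in $(r+S)$ with $r$ well before $h$, which could sit in a coarser ancestor cube. To handle it, I would apply \cref{lem:oscillation} to the depth-$h'$ ancestor $A''$ with $h'=\min$ depth among the cells meeting $A'$. Since all such cells have depth $\ge h'$ and $\le h-1+S$, the bound becomes $C(h-h'+S+1)c_{k(h')}$. A priori $h-h'\le S$, because a cell strictly inside $A'$ has depth $>h$ and was born at a step $\ge h-S+1$. So $h'\ge h$, and the estimate above applies with $k(h')=k(h)$. I expect this bookkeeping of birth steps to be the only real obstacle.

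With the temperature lower bound $\vartheta_h\ge t_-c_{k(h)}$ from \eqref{eq:temp-range}, the exponent is at most $C(S+1)/t_-\le C_SS$ for $S\ge1$. This gives \eqref{eq:p_S}.

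The query claim follows from \eqref{eq:query-rule}. Every depth-$k\le h$ prefix of $y\in D$ is an ancestor of, or equal to, $A'$, so its mass is $\ge P_h(A'\mid m)\ge\kappa_0$ and the response is not stopped early. It is therefore queried at $D_{h+S}(y)=D$.

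Finally, for $S=O(1+\log\Lf)$ we get $p_S^{-1}=\kappa_0^{-1}e^{C_SS}2^{dS}\le C\Lf^{D_p}$. For $S=1$, every cell inside $A'$ has depth $\le h$, so $W_h$ is constant on $A'$ and $p_S=\kappa_02^{-d}$ exactly.
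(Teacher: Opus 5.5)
Your proof is correct and follows the paper's argument: you factor through the selected cube's mass $\kappa_0$, bound the density of $P_h$ on $A'$ relative to uniform using \cref{lem:oscillation}, birth accuracy and $\vartheta_h\asymp c_{k(h)}$, and check that every ancestor of $D$ passes the mass test. Your ``delicate case'' is vacuous, since dyadic cubes are nested or disjoint. Either one cell of $\scD_h(m)$ contains $A'$, so the actor is uniform there, or every cell meeting $A'$ lies inside it with depth in $[h,h+S-1]$; this is exactly the paper's two-case split, and in the second case the birth-accuracy bound should use $a_{\mathrm{depth}(D')}\le a_h$ rather than depth at least $h-S$.
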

\begin{proof}
    If $A'$ is contained in one cube of $\scD_h(m)$, the actor is uniform on $A'$ and $P_h(D\mid m)=2^{-dS}P_h(A'\mid m)\ge\kappa_02^{-dS}$.
    Otherwise the cubes of $\scD_h(m)$ inside $A'$ have depths between $h$ and $h+S-1$, their cube averages differ by at most $CSc_{k(h)}$ by \cref{lem:oscillation} (with $R+1=S$), and their scores are within $\frac\eta4a_h\le C\eta c_{k(h)}$ of the averages by birth accuracy; as $\vartheta_h\asymp c_{k(h)}$, the density of $P_h$ on $A'$ is within a factor $e^{C_SS}$ of uniform, which gives \eqref{eq:p_S}.
    A response entering $D$ passes the mass checks of all ancestors of $D$ up to depth $h$ (their masses are at least that of $A'$), so it is not stopped and is queried at depth $h+S$, i.e., at $D$.
    The bound holds for every selected cube, not only for the correct one.
\end{proof}

Up to an exponentially small leakage, the queried inputs of step $h$ fall in the window $\scW_h$, which is the finite region on which \cref{lem:window-target,lem:window} operate.
\begin{lemma}[Window localization]\label{lem:localize}
    Let $\Pi_h$ be the joint law of the queried input $(X_t,\Psi_h(X_t,Y_t))$ of step $h$ (a fresh prompt and one on-policy response).
    On $\scE_h$,
    \begin{align}
        \Lambda_h:=\Pi_h(\scW_h^c)\le Ce^{-cJ},
        \label{eq:localize}
    \end{align}
    and $\Lambda_h=0$ if $s_h=0$.
\end{lemma}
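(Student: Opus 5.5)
Conditionally on the history at the start of step $h$, the queried input of a single round is $(X_t,\Psi_h(X_t,Y_t))$ with $X_t\sim\nu$ and $Y_t\sim P_h(\cdot\mid X_t)$. The plan is to find the ways this input can leave $\scW_h$, bound each by the cold tail of \cref{lem:progress}, and average over the prompt. By \eqref{eq:window}, $(m,D)\notin\scW_h$ happens only if $\mathrm{depth}(D)<s_h$, or $\mathrm{depth}(D)>h+S$, or $D\not\subset P^\ast_{s_h}(m)$. The second case never occurs, since every query of \eqref{eq:query-rule} has depth at most $h+S$. If $s_h=0$, the condition $D\subset P^\ast_0(m)=\Omega$ is vacuous and every depth is $\ge0$, so $\Lambda_h=0$. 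This settles the second claim.

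Now let $s_h=h-J>0$ and fix $m$ and a response $y$. I will show that $(m,\Psi_h(m,y))\notin\scW_h$ forces $y\notin P^\ast_{s_h}(m)$.
\begin{itemize}
\item Suppose first that $y\in P^\ast_{s_h}(m)$ and the query is a full fine query $D_{h+S}(y)$. Then $D\subset D_{s_h}(y)=P^\ast_{s_h}(m)$, and the depth lies in the window.
\item Suppose instead that $y\in P^\ast_{s_h}(m)$ and the response was stopped at a depth $k\le h$ with $P_h(D_k(y)\mid m)<\kappa_0$. By \cref{lem:progress}, on $\scE_h$ the correct cube $P^\ast_h(m)$ and all its ancestors have mass at least $p_\ast\ge\kappa_0$. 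For $k\le s_h$ the cube $D_k(y)$ is an ancestor $P^\ast_k(m)$, so it cannot be the stopping prefix. Hence $k>s_h$, which gives $\mathrm{depth}(D)=k\in(s_h,h]$ and $D\subset D_{s_h}(y)=P^\ast_{s_h}(m)$, so the query is again in the window.
\end{itemize}
In both cases a response inside $P^\ast_{s_h}(m)$ produces a window input. Thus $\{(m,\Psi_h(m,y))\notin\scW_h\}\subset\{y\notin P^\ast_{s_h}(m)\}$.

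It follows that
\begin{align*}
\Lambda_h\le\frac1M\sum_{m\in[M]}P_h\bigl((P^\ast_{h-J}(m))^c\bigm| m\bigr)\le Ce^{-cJ},
\end{align*}
by the cold-tail bound \eqref{eq:progress} with $j=J\le h$; the inequality holds for each prompt, hence also in the average. The only delicate point is the stopping rule, that is, making sure an early stop never happens above depth $s_h$ on the correct path. This rests on the selection of all ancestors of $P^\ast_h(m)$ in \cref{lem:progress}, which requires $\kappa_0=p_\ast/2$ and birth accuracy on $\scE_h$. The rest is bookkeeping. The case $h=0$ is trivial because $s_0=0$.
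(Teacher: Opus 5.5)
Your proof is correct and follows the paper's argument. You show that a response inside $P^\ast_{s_h}(m)$ cannot be stopped at a depth $\le s_h$, because all ancestors of $P^\ast_h(m)$ have mass at least $p_\ast>\kappa_0$ by \cref{lem:progress}, so it always yields a window input; this gives $\scW_h^c\subset\{Y\notin P^\ast_{s_h}(X)\}$, and the cold-tail bound \eqref{eq:progress} with $j=J$ finishes the proof, while your separate handling of $s_h=0$ and of the stopping depth $k\in(s_h,h]$ only spells out what the paper states in one line.
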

\begin{proof}
    Every query has depth at most $h+S$.
    If the response lies in $P^\ast_{s_h}(m)$, all its prefixes up to depth $s_h$ are ancestors of $P^\ast_h(m)$, which pass the mass check by \cref{lem:progress}, so the response is not stopped before depth $s_h$ and its queried cube, of depth at least $s_h$, is contained in $P^\ast_{s_h}(m)$, i.e., the input lies in $\scW_h$.
    Hence $\scW_h^c\subset\{Y\notin P^\ast_{s_h}(X)\}$, whose probability is at most $Ce^{-c(h-s_h)}\le Ce^{-cJ}$ by \eqref{eq:progress}.
\end{proof}

The oracle inequality bounds the risk averaged over the query law, whereas the induction needs accuracy at \emph{every} cube written into the score; each of them has probability at least $p_S$, so a joint risk of order $p_S\varepsilon_h^2/M$ suffices, and this is where the factor $M^{\zeta_M}$ is paid.
\begin{lemma}[Joint risk of the full-class ERM and accuracy on the update inputs]\label{lem:joint}
    Let $\Tcls_h=\Tcls_{\rm win}(K_w,N,\Tin)$ with $K_w\ge C_d(J+S+2)$ and $\Tin\ge d(h+S)+3$, and let $\That_h$ be the ERM of line~\ref{ln:erm} on the $b_h$ records of step $h$.
    Conditionally on a history in $\scE_h$, with probability at least $1-\delta_h$,
    \begin{align}
        \mathsf R_h:=\norm{\That_h-Q}^2_{L^2(\Pi_h)}
        \le C\qty[K_w^{2\gamma+2}a_{h+S}^2N^{-2\alpha/d}+\Lambda_h+(1+\sigma^2)\frac{V_N(b_h^{-2})+\log(2b_h/\delta_h)}{b_h}],
        \label{eq:joint-risk}
    \end{align}
    and, deterministically, for every $m$ and every fine query $D$ of step $h$,
    \begin{align}
        \abs{\That_h(m,D)-Q(m,D)}^2\le\frac M{p_S}\mathsf R_h.
        \label{eq:joint-to-node}
    \end{align}
\end{lemma}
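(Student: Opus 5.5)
The proof has two independent parts: a risk bound obtained from the oracle inequality, and a deterministic pointwise-from-$L^2$ conversion.

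\emph{Risk bound \eqref{eq:joint-risk}.} Condition on a history in $\scE_h$. The $b_h$ records of step $h$ are then i.i.d.\ with law $\Pi_h$: each record combines a fresh prompt, a fresh on-policy response from the fixed actor $P_h$, and the deterministic query rule $\Psi_h$. The noise is conditionally mean zero and $\sigma$-sub-Gaussian, so \cref{lem:oracle} applies to the class $\Tcls_h$. Its complexity term is $V_N(b_h^{-2})$ from \eqref{eq:window-covering}.

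For the approximation term I take the comparator $F_{0,h}$ of \cref{lem:window}(c). This requires $K_w\ge C_d(J+S+2)$ and $\Tin\ge d(h+S)+3$, both of which are assumed. Its $L^2(\Pi_h)$ error is bounded by \eqref{eq:window-bias-L2} with $\Lambda=\Lambda_h$. Substituting both pieces into \eqref{eq:oracle-ineq} with confidence $\delta_h$ gives \eqref{eq:joint-risk} directly. \Cref{lem:localize} is not needed at this point; it enters later, when $\Lambda_h$ is made small.

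\emph{Pointwise bound \eqref{eq:joint-to-node}.} Fix $m$ and a fine query $D$ of step $h$. By \cref{lem:mask}(iv), $D$ is a depth-$(h+S)$ subcube of a selected depth-$h$ cube. The input $(m,D)$ is a single atom of $\Pi_h$. Its mass is
\[
\Pi_h(\{(m,D)\})=\nu(m)\,\Pr_{Y\sim P_h(\cdot\mid m)}\bigl(\Psi_h(m,Y)=D\bigr).
\]
By \cref{lem:fine}, every response entering $D$ is queried exactly at $D$, so the probability on the right is at least $P_h(D\mid m)\ge p_S$. Since $\nu(m)=1/M$, the atom has mass at least $p_S/M$.

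The integrand $(\That_h-Q)^2$ is nonnegative, so
\[
\mathsf R_h\ge\frac{p_S}{M}\bigl(\That_h(m,D)-Q(m,D)\bigr)^2,
\]
which rearranges to \eqref{eq:joint-to-node}. This step holds for every realization, which is why the statement calls it deterministic. It does use \cref{lem:fine}, and that lemma requires $\scE_h$, which we are conditioning on.

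\emph{Main obstacle.} The one delicate point is justifying the i.i.d.\ structure required by \cref{lem:oracle}. Both $P_h$ and $\Psi_h$ must be measurable functions of the past history only, and the queried cube must be a function of $(X_t,Y_t)$ alone, so that $\Pi_h$ is a fixed law given the history. I would state this explicitly before invoking the lemma. After that, the only remaining issue is bookkeeping: the probability-$(1-\delta_h)$ event is conditional on the history.
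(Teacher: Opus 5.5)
Your proposal is correct and follows the paper's proof step for step. Conditionally on the history, the records are i.i.d.\ with law $\Pi_h$. \cref{lem:oracle} with the window comparator of \cref{lem:window} (through \eqref{eq:window-bias-L2} with $\Lambda=\Lambda_h$ and \eqref{eq:window-covering}) then gives \eqref{eq:joint-risk}. Finally, the atom mass $\Pi_h(\{(m,D)\})\ge p_S/M$ from \cref{lem:fine} converts the joint risk into \eqref{eq:joint-to-node}.

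The paper writes that atom mass as the equality $P_h(D\mid m)/M$, since a queried cube always contains its response and every response entering $D$ is queried at $D$; your inequality is all that is needed.
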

\begin{proof}
    Conditionally on the history at the beginning of step $h$, the actor, the query rule and the class are fixed, the inputs $(X_t,\Psi_h(X_t,Y_t))$ are i.i.d.\ with law $\Pi_h$ (a fresh exogenous prompt and an independent rollout), the targets are $Z_t=Q(X_t,\Psi_h(X_t,Y_t))+\xi_t$ with conditionally $\sigma$-sub-Gaussian noise, and all records, including the stopped ones, enter the unweighted loss.
    \Cref{lem:oracle} with $\scF=\Tcls_h$ and the comparator of \cref{lem:window} gives \eqref{eq:joint-risk}, using \eqref{eq:window-bias-L2} with $\Lambda=\Lambda_h$ and \eqref{eq:window-covering}.
    For \eqref{eq:joint-to-node}: the joint probability of the input $(m,D)$ is $\Pi_h(\{(m,D)\})=P_h(D\mid m)/M\ge p_S/M$ by \cref{lem:fine}, and $\mathsf R_h\ge\Pi_h(\{(m,D)\})\abs{\That_h(m,D)-Q(m,D)}^2$.
\end{proof}
The factor $M$ in \eqref{eq:joint-to-node} is paid in full: the joint $L^2$ risk is converted into the accuracy on every update input of every prompt, and this is what makes the batch sizes grow like $M^{\zeta_M}$ rather than $M$; no prompt-wise estimation guarantee is used.

\begin{proof}[Proof of \cref{prop:front}]
    The accuracy statement is proved by induction over the phases: conditionally on a history in $\scE_h$, the parameters are chosen so that the two error terms of the joint risk bound \eqref{eq:joint-risk} are below the level that \eqref{eq:joint-to-node} needs in order to give the required accuracy at every node updated at phase $h$, which is exactly $\scE_{h+1}$; summing the conditional failure probabilities over the phases gives the first half of \eqref{eq:front}.
    The cost statement then only adds up the batch sizes and uses the bound on $H$.

    \emph{Accuracy.}
    Condition on a history in $\scE_h$.
    By \eqref{eq:joint-to-node} it suffices that the right-hand side of the joint risk bound \eqref{eq:joint-risk} be at most
    \begin{align}
        \mathsf r_h:=c\,\frac{p_S\eta^2a_{h+S}^2}M,
        \label{eq:target-risk}
    \end{align}
    and we check that each of its three terms is at most $\mathsf r_h$ separately.
    \begin{itemize}[leftmargin=*,itemsep=0mm]
        \item \emph{Leakage.} By \cref{lem:localize} and $H\le n$, choosing $C_J$ large gives
        \begin{align*}
            \Lambda_h\le Ce^{-cJ}\le c\,\frac{p_S\eta^2a_{H+S}^2}M\le\mathsf r_h
            \qquad(h<H),
        \end{align*}
        where we used $\eta^{-1}=O(\Lf)$, $p_S^{-1}=O(\Lf^{D_p})$, $a_{H+S}^{-2}=O(H^{2\gamma})$ and $M\le n$ on the range of the theorem.
        \item \emph{Approximation.} With $K_w=O(\Lf)$ and $N=\lceil C_NM^{d/(2\alpha)}\Lf^{D_N}\rceil$ for $D_N$ large, the approximation term $CK_w^{2\gamma+2}a_{h+S}^2N^{-2\alpha/d}$ of \eqref{eq:joint-risk} is at most $\mathsf r_h$.
        \item \emph{Estimation.} Since $b_h\le n$, \eqref{eq:window-covering} and the choice of $\delta_h$ give
        \begin{align*}
            V_N(b_h^{-2})\le CM^{1+\frac d{2\alpha}}\Lf^{D_N+9},
            \qquad
            \log\frac{2b_h}{\delta_h}\le C\Lf,
        \end{align*}
        so that with $D_b\ge D_N+D_p+11$ and $C_b$ large the estimation term of \eqref{eq:joint-risk} is at most $\mathsf r_h$ as well.
    \end{itemize}
    Hence, by \eqref{eq:joint-to-node}, with conditional probability at least $1-\delta_h$ every fine query of step $h$ is $\varepsilon_h=\frac\eta4a_{h+S}$-accurate, i.e.\ $\scE_{h+1}$ holds.
    Summing the conditional failure probabilities up to the first failure over the $H$ steps,
    \begin{align*}
        \Pr(\scE^c)\le\sum_{h<H}\delta_h=\delta_F.
    \end{align*}
    No independence between the noise and the future occurrence of inputs is assumed, and no conditioning on future successes is used.

    \emph{Cost.}
    Since $S\le H$ and $a_j^{-2}\asymp(j+1)^{2\gamma}$,
    \begin{align*}
        \sum_{h<H}b_h
        \le C(1+\sigma^2)M^{\zeta_M}\Lf^{D_b}\sum_{h<H}a_{h+S}^{-2}
        \le C(1+\sigma^2)M^{\zeta_M}H^{2\gamma+1}\Lf^{D_b},
    \end{align*}
    which is the first inequality of \eqref{eq:front}, the rounding in \eqref{eq:params-2} being absorbed into $C$; the second follows from $H\le CK\Lf$ with $C_{\rm F},D_{\rm F}$ large.
    All records are counted: stopped responses, queries outside the window and the batches after a failure.
\end{proof}

\subsection{The Final Refinement Step}
\label{app:final}

Condition on the successful front-phase event $\scE$ of \cref{prop:front}.
No query, critic fit or score update takes place between the two stages: the front-phase score $W_H$ is kept fixed and the Gibbs temperature is set to the target regularization level, so that the responses of the last iteration are drawn from $P_{\rm start}=\Gibbs{W_H}{\rho}$.
The goal of this subsection is the following conditional guarantee, expressed in terms of the resolution $N_{\rm ref}$ chosen in \eqref{eq:params-3}, to display the approximation and estimation errors separately.

\begin{proposition}[Final-refinement guarantee]\label{prop:final}
    With the parameters of \cref{app:parameters} on the large-budget range of \cref{thm:upper}, including the refinement resolution $N_{\rm ref}$ chosen in \eqref{eq:params-3}, the output $\Phat=\Gibbs{v}{\rho}$ of \cref{alg:actor-critic} satisfies
    \begin{align}
        \bE\qty[\Reg_\rho(\Phat)\relmiddle|\scE]
        \le C\rho N_{\rm ref}^{-\frac{2\alpha}d}
        +\frac{C(1+\sigma^2)M(H+1)(N_{\rm ref}+1)}{b\rho}\Lf^{D_{\rm E}}
        +\frac{C\delta_C}\rho,
        \label{eq:final-prop}
    \end{align}
    where $b$ is the batch of the final iteration and $\delta_C$ is the mass of the deep core \eqref{eq:core-mass}.
\end{proposition}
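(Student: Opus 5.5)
The plan is to write the regret of a Gibbs policy as a variance integral and to transfer the critic's $L^2$ error under the collecting policy $P_{\rm start}$ to the regret. Fix a prompt $m$. The optimal policy is $\Gibbs{f(\cdot;m)}{\rho}$, and for any score $v$ the standard identity
\begin{align*}
    J^\ast_\rho(m)-J_\rho(\Gibbs{v}{\rho};m)=\rho\,\KL{\Gibbs{v}{\rho}}{\Gibbs{f}{\rho}}
\end{align*}
holds. Writing $u_s=f+s(v-f)$ and $P_s=\Gibbs{u_s}{\rho}$, this divergence equals $\rho^{-1}\int_0^1 s\,\mathrm{Var}_{P_s}(v-f)\,ds$. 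This is the mechanism of \citet{zhao2025logarithmic} recalled in \cref{app:clipping}. Hence the regret is at most $\rho^{-1}\sup_s\bE_{P_s}[(v-f-c)^2]$ for any constant $c$. The task is then to compare $P_s$ with $P_{\rm start}$ on the bulk of $\Omega$ and to decompose $v-f$ into critic error plus projection error.

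\emph{Step 1 (density ratios).} On $\scE$ the held score $W_H$ is birth-accurate on $\scD_H(m)$ (\cref{lem:mask,lem:progress}). By \cref{lem:oscillation} it is therefore within $O(a_j)$ of $f$ on each shell of depth $j$. The clipping $\Delta_H=\rho+\eta a_{q}$ forces $\abs{v-W_H}\le\rho+\eta a_q$. It follows that $u_s-W_H$ is bounded on each region by $\rho+C a_{q}$, up to a common additive constant. Regions deeper than $H$, whose weights $a_q\lesssim\rho$, give ratios $dP_s/dP_{\rm start}\le e^{C}$. On shallow shells, where $a_q\gg\rho$, the ratio can be as large as $e^{Ca_q/\rho}$. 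However, both policies give these shells mass $\lesssim e^{-c(\kappa T_a-Ca_q)/\rho}$ by the gap of \cref{lem:geometry}, because $\eta=c_\eta/r_{\rm mom}$ is small. I would handle this with H\"older's inequality at moment order $r_{\rm mom}\asymp\Lf$: $\bE_{P_s}[X]\le\|dP_s/dP_{\rm start}\|_{L^{r_{\rm mom}}(P_{\rm start})}\|X\|_{L^{r'}}$. The choice $\eta r_{\rm mom}=c_\eta$ keeps the exponential moment bounded, and the bounded range $\abs{X}\le4$ turns the $L^{r'}$ norm into the $L^1$ norm at a logarithmic cost. The deep core $P^\ast_{H+S}$, where neither $W_H$ nor the refinement resolves anything beyond scale $\rho$, I would simply bound crudely, producing the $C\delta_C/\rho$ term.

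\emph{Step 2 (error split).} On the bulk, $v-f=(v-\That_{\rm ref}\circ D_{\rm ref})+(\That_{\rm ref}(D_{\rm ref})-Q(D_{\rm ref}))+(Q(D_{\rm ref})-f)$. The first term is nonzero only where the clip is active, which requires the critic error to exceed $\Delta_H-\abs{W_H-Q}\ge\rho/2$, and it is dominated by the middle term. The projection term is handled by \cref{lem:holder-shell}: on a shell of depth $j$ the reward is $\alpha$-H\"older in local coordinates with seminorm $Ca_j$. Querying $r$ levels deeper, with $2^{-\beta r d/\alpha}\asymp N_{\rm ref}^{-1}$, gives squared error $\lesssim a_j^2 2^{-2\beta r}$. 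Subtracting the optimal local constant is allowed by the variance form, and after weighting with $a_j^2\lesssim\rho^2$ on the relevant region this yields the first term $\rho N_{\rm ref}^{-2\alpha/d}$. For regions with $a_j\gg\rho$, the term comes with the exponentially small mass from Step 1.

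\emph{Step 3 (critic).} Apply \cref{lem:oracle} under $\Pi_{\rm ref}$, the law of $(X,D_{\rm ref}(X,Y))$ with $Y\sim P_{\rm start}$, using the class of \cref{lem:second-half}. Its comparator approximates the cube averages on the $O(H+S)$ shell coordinate systems at resolution $N_{\rm ref}$, with covering entropy $\lesssim M(H+1)(N_{\rm ref}+1)\Lf^{D_{\rm E}}$, as in \cref{lem:window}. The estimation error, divided by $\rho$, gives the middle term of \eqref{eq:final-prop}. Its approximation error is of the projection order and is absorbed into the first term.

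The main obstacle is Step 1: making the density-ratio/H\"older transfer uniform over the shallow shells, where $W_H$ is only accurate at scale $a_q\gg\rho$. It must be arranged so that the exponential amplification $e^{Ca_q/\rho}$ is beaten by the Gibbs gap, with only logarithmic losses. This is exactly where the calibration constant $C_{\rm cal}$ in the choice of $R$ and the clipping width must be tuned.
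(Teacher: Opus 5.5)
Your plan follows the paper's route. You write the regret of the Gibbs output as the variance integral \eqref{eq:bregman} and control density ratios against $P_{\rm start}$ by H\"older's inequality at order $r_{\rm mom}$ together with an exponential moment of $a_q/\rho$. You bound the shell-wise projection error with the weight $\sum_jP_{\rm start}(A_j)a_j^2\lesssim\rho^2$, and use the critic bound of \cref{lem:second-half}.

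The gap is in Step~1, and the rest of the argument depends on it. You only establish that $W_H$ is within $O(a_j)$ of $f$, via \cref{lem:oscillation}, whose constant $C_{\rm osc}$ is not small. This gives $|u_s-W_H|\le\rho+Ca_q$, and hence ratios up to $e^{Ca_q/\rho}$ with a non-small $C$. H\"older at order $r_{\rm mom}$ then needs $\int\exp(r_{\rm mom}Ca_q/\rho)\,dP_{\rm start}=O(1)$. On an exit shell $E_i$, the only available bound on the integrand is $\exp((r_{\rm mom}Cc_i-\kappa T_i)/\rho+O(K))$. Since $c_i/T_i\asymp1/i$, this is exponentially large in $1/\rho$ for every $i\lesssim r_{\rm mom}\asymp\Lf$. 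The smallness $\eta r_{\rm mom}=c_\eta$ that you invoke helps only if the accuracy of $W_H$ itself is of order $\eta a_q$, not merely the clip width.

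The same defect breaks Step~2. Your claim that the clipping term is dominated by the critic error presupposes $|W_H-Q(D_{\rm ref})|\le\eta a_q$. With only $Ca_q$, the interval $[W_H-\Delta_H,W_H+\Delta_H]$ need not contain $Q(D_{\rm ref})$ or $f$. The clipped score then carries a bias of order $a_q\gg\rho$ on the shallow shells, even when the critic is exact.

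The missing idea is the pointwise calibration of \cref{lem:calibration}: $|f-W_H|\le\eta a_q$ off the core. It comes from the lookahead, not from birth accuracy alone.
\begin{itemize}
\item The correct parent $P_j$ is selected at step $j$ (\cref{lem:progress}). So every final cube $A'$ in the shell $A_j$ has last update index $q\ge j$ and depth $q+S$, i.e.\ side $2^{-(q+S-j)}$ in the local coordinates of $P_j$.
\item By \cref{lem:holder-shell}, $f$ oscillates on $A'$ by at most $Cc_{i(j)}2^{-\beta(q+S-j)}\le C_{\rm cal}a_q2^{-\beta S}$. This uses $c_{i(j)}/a_q\le C(1+q-j)^\gamma$ and $\sup_u(1+u)^\gamma2^{-\beta u}<\infty$.
\item The choice $R=\lceil\beta^{-1}\log_2(2C_{\rm cal}/\eta)\rceil$ makes this at most $\eta a_q/2$. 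Adding the birth error $\frac\eta4a_{q+S}$ gives the claim.
\end{itemize}
With calibration in hand, three things follow. First, $f$ and $Q(D_{\rm ref})$ lie in the clip interval, so clipping is a projection toward $f$. Second, the ratio along the interpolation is at most $Ce^{\eta a_q/\rho}$, once you also lower-bound $Z_w/Z_H$ by Jensen and the first moment of $a_q/\rho$; your outline omits this step. Third, \cref{lem:exp-moment} bounds $\int e^{r_{\rm mom}\eta a_q/\rho}\,dP_{\rm start}$ using the gap on exit levels $i\le\theta_0K$ and $a_q\lesssim\rho$ beyond them.

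Finally, the H\"older loss must be a constant factor, not a logarithmic one, to obtain the log-free first and third terms of \eqref{eq:final-prop}. The paper arranges this by enlarging the risk bound $\bar u$ to at least $c\rho^2/(n+1)^2$, so that $\bar u^{-1/r_{\rm mom}}\le e$.
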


The three terms are the approximation error of a resolution-$N_{\rm ref}$ fit, the estimation error of one critic fitted on $b$ records, and the mass of the deep core.
The proof is the chain
\begin{align*}
\text{\cref{lem:calibration}: pointwise calibration}
&\to
\text{\cref{lem:exp-moment}: Gibbs mass and moments}\\
&\to
\text{\cref{lem:second-half}: final critic estimation}\\
&\to
\text{\cref{lem:projection}: continuous reward and clipping}\\
&\to
\text{\cref{lem:transfer}: regret transfer}.
\end{align*}
\paragraph{Calibration and Gibbs control.}
For the rest of the proof we fix a good front-phase history (the event $\scE$) and a prompt $m$; all conditional expectations $\bE[\,\cdot\mid\scE]$ below are taken given that history.
We write
\begin{align*}
    P_j=P^\ast_j(m)\ \ (0\le j\le H),\qquad
    A_j=P_j\setminus P_{j+1}\ \ (0\le j<H),\\
    C_{\rm core}=P_H,\qquad
    P_{\rm start}=\Gibbs{W_H}{\rho}.
\end{align*}
Only for the proof, consider the analysis partition $\scD_H$ of the final score $W_H(m,\cdot)$.
By \cref{lem:mask}, each update refines the previous partition and never coarsens it, so the last update index $q$ is constant on each final cube $A'\in\scD_H$, and, on $\scE$,
\begin{align}
    \mathrm{depth}(A')=q(A')+S,\quad
    W_H(A')=\That_{q(A')}(m,A'),\quad
    \abs{W_H(A')-Q(m,A')}\le\tfrac\eta4a_{q(A')+S}.
    \label{eq:holding}
\end{align}
Since the root is updated at $h=0$, no cube keeps the initial value $0$, and $W_H\le G_m+\frac\eta4a_{q+S}$ everywhere.
The algorithm never enumerates $\scD_H$; the index $q(m,y)$ of \eqref{eq:final-step} equals $q(A')$ for the cube $A'\ni y$, and is evaluated for each sample from the saved actors.

A response that leaves the correct path at depth $j$ lies in a cube last updated at some step $q\ge j$, so the error of the held score there is measured at the scale $a_q$ of that update rather than at a common scale.
\begin{lemma}[Calibration of the held score]\label{lem:calibration}
    On $\scE$, for every final cube $A'\in\scD_H$ contained in the shell $A_j$ ($j<H$),
    \begin{align}
        q(A')\ge j,
        \label{eq:q-ge-j}
    \end{align}
    and for every $y\notin C_{\rm core}$ with last update index $q=q(m,y)$,
    \begin{align}
        \abs{f(y;m)-W_H(m,y)}\le\frac\eta4a_{q+S}+C_{\rm cal}a_q2^{-\beta S}\le\eta a_q.
        \label{eq:calibration}
    \end{align}
\end{lemma}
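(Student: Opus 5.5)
The proof has two parts. The combinatorial claim \eqref{eq:q-ge-j} uses the structure of the analysis partition. The pointwise bound \eqref{eq:calibration} combines birth accuracy \eqref{eq:holding} with an oscillation estimate on the final cube.

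\emph{Step 1: $q(A')\ge j$.} Let $A'\in\scD_H$ with $A'\subset A_j=P_j\setminus P_{j+1}$. By \cref{lem:progress}, on $\scE$ the correct cube $P^\ast_h(m)$ is selected at every step $h<H$. By \cref{lem:mask}(iv), at step $h$ it is split into depth-$(h+S)$ sub-cubes. Consider the cube $A''\in\scD_j$ containing $A'$. Since $P_j$ is a union of cubes of $\scD_j$ (\cref{lem:progress}), $A''\subset P_j$. The cube $A''$ was created at step $j-1$ with depth $j-1+S$, or it is the root when $j=0$. At step $j$ the correct cube $P_j$ is selected and split, so $A''$ is refined into depth-$(j+S)$ cubes. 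Hence the final cube $A'$ was last touched at some step $q(A')\ge j$. Formally, the cubes of $\scD_h$ inside $P_h$ all have birth step $h-1$, and $A'\subset P_j$ survives into $\scD_{j+1}$ only through a step-$j$ update. This gives \eqref{eq:q-ge-j}.

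\emph{Step 2: the pointwise bound.} Fix $y\notin C_{\rm core}$ and let $A'\ni y$ be its final cube, with $q=q(A')$ and depth $q+S$. Write
\[
|f(y)-W_H(y)|\le|W_H(A')-Q(m,A')|+|f(y)-Q(m,A')|.
\]
The first term is at most $\frac\eta4a_{q+S}$ by \eqref{eq:holding}. For the second term, $A'$ lies in some shell $A_j$ with $j\le q$, and its depth is $q+S$. On $A_j$ we have $f=f^{[i(j)]}$ by \cref{lem:holder-shell}. Moreover, $A'$ lies inside a depth-$q$ cube $A^{(q)}\subset P_j$. In the local coordinates of the depth-$q$ cube, the relevant levels are Hölder with seminorm $\le Ca_q$: apply \cref{lem:holder-shell} at depth $q$ rather than at depth $j$, using $q\ge j$ so that all active levels on $A'$ are among those with $L_{i-1}\le$ roughly $q$.

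There is one subtlety. Levels whose cells $C_{i-1}$ have depth between $q$ and $q+S$ may still intersect $A'$. However, $A'\subset A_j$ and $A_j$ avoids $P_{j+1}\supset C_{i-1}$ for such $i$, so those levels vanish on $A'$ whenever their cells lie on the correct path. Then $A'$ has relative side $2^{-S}$ inside $A^{(q)}$, so the $\beta$-Hölder oscillation of $f$ on $A'$ is at most $C a_q2^{-\beta S}$. Averaging gives $|f(y)-Q(m,A')|\le C_{\rm cal}a_q2^{-\beta S}$. This is the same geometric-series computation as \eqref{eq:osc-shell}, but with the finer scale $2^{-S}$ tracked.

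\emph{Step 3: final inequality.} Use $a_{q+S}\le a_q$ and the choice $R=\lceil\beta^{-1}\log_2(2C_{\rm cal}/\eta)\rceil$, $S=R+1$, which gives $C_{\rm cal}2^{-\beta S}\le\eta/2$. Then the sum is $\le(\frac14+\frac12)\eta a_q\le\eta a_q$.

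The main obstacle is Step 2's oscillation bound at scale $a_q$ uniformly in $q\ge j$. One must check that every level active on $A'$ has weight $\lesssim a_q$ times its Hölder scale, which requires $A'\subset A_j$ to exclude the deeper correct-path levels. One must also check that off-path levels are genuinely absent because $A'$ lies outside $P_{j+1}$. I would settle this first by rewriting $f$ on $A_j$ via \cref{lem:holder-shell} and rescaling the coordinates to the depth-$q$ cube.
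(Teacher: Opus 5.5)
Your proposal is correct and matches the paper's proof: \eqref{eq:q-ge-j} comes from $P_j$ being selected and split at depth $j+S$ at step $j$, with later steps only refining. Then \eqref{eq:calibration} follows from the birth accuracy \eqref{eq:holding}, an oscillation bound $C_{\rm cal}a_q2^{-\beta S}$ on $A'$, and the choice of $R$.

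The only difference is bookkeeping in the oscillation step. The paper works in the coordinates of $P_j$, gets $Cc_{i(j)}2^{-\beta(q+S-j)}$, and absorbs $c_{i(j)}/a_q\le C(1+q-j)^\gamma$ into $2^{-\beta(q-j)}$. You instead rescale to the depth-$q$ ancestor of $A'$. \cref{lem:holder-shell} itself is stated for $P^\ast_q$, which that ancestor is not when $q>j$. Your rescaling still works because the per-level scaling in its proof depends only on the depth of the cube, and the active levels $a\le i(j)$ are among those summed at depth $q$, giving seminorm at most $Cc_{i(q)}\le Ca_q$.
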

\begin{proof}
    By \cref{lem:progress} the correct parent $P_j$ is selected at step $j$, so the whole shell $A_j\subset P_j$ satisfies the mass condition at step $j$ and is updated at depth $j+S$; the holding of later steps never moves a query depth back, which gives \eqref{eq:q-ge-j}, and a final cube of depth $q+S\ge j+1$ is contained in one shell by dyadic consistency.
    Let $i(j)=\min\{i:L_i>j\}$, so that $f=f^{[i(j)]}$ on $A_j$.
    By \eqref{eq:holder-shell} (the cube $A'$ has side $2^{-(q+S-j)}$ in the local coordinates of $P_j$),
    \begin{align*}
        \sup_{y,z\in A'}\abs{f(y;m)-f(z;m)}\le Cc_{i(j)}2^{-\beta(q+S-j)}.
    \end{align*}
    The known scales satisfy
    \begin{align*}
        \frac{c_{i(j)}}{a_q}\le C(1+q-j)^\gamma,\qquad
        a_q\le a_j\le C_ac_{i(j)},\qquad
        \sup_{u\ge0}(1+u)^\gamma2^{-\beta u}<\infty,
    \end{align*}
    so the oscillation of $f$ on $A'$ is at most $C_{\rm cal}a_q2^{-\beta S}$, which is at most $\eta a_q/2$ by the choice of $R$ in \eqref{eq:params-1} ($2^{-\beta S}\le2^{-\beta R}\le\eta/(2C_{\rm cal})$).
    Combining with \eqref{eq:holding} and $\frac\eta4a_{q+S}\le\frac\eta4a_q$ gives \eqref{eq:calibration}.
    The accuracy of the held score is the one of its own birth, at the scale of its query depth; no common final accuracy of all steps is used.
    This is a guarantee for the held score itself; no re-selection of the critic of step $j$ is needed.
\end{proof}

\paragraph{Deep core mass and exponential moments.}

The calibration scale $a_q$ varies over the response space, and the next lemma controls its distribution under $P_{\rm start}$; the exponential moment is what bounds the density ratio in the regret transfer.
\begin{lemma}[Normalization, deep-core mass and moments of the scale]\label{lem:exp-moment}
    Let $K=\rho^{-1/\gamma}$, $k_0=\lceil K\rceil$, and let $C_H$ be large enough that $H\ge\lbar k_0+1$.
    On $\scE$, there are constants $U,C_{\exp}$, independent of $n,M,\rho,r_{\rm mom}$ and $S$, such that
    \begin{align}
        Z_H:=\int e^{W_H/\rho}\dd{\mu}\ge\exp\qty(\frac{G_m}\rho-UK-C\eta),
        \label{eq:ZB}
    \end{align}
    \begin{align}
        P_{\rm start}(C_{\rm core}\mid m)+P_{f,\rho}(C_{\rm core}\mid m)
        \le C\exp\qty(-dH\log2+UK+C\eta)
        \le\delta_C:=\frac{\rho^2}{(n+1)^2},
        \label{eq:core-mass}
    \end{align}
    and, with $q=q(m,y)$ and for every fixed $\varpi\ge0$,
    \begin{align}
        &\int\exp\qty(\frac{r_{\rm mom}\eta a_q}{\rho})\dd{P_{\rm start}}\le C_{\exp},
        \qquad
        \int\qty(\frac{a_q}\rho)^\varpi\dd{P_{\rm start}}\le C_\varpi,
        \label{eq:exp-moment}\\
        &\sum_{j<H}P_{\rm start}(A_j\mid m)a_j^2\le C\rho^2.\notag
    \end{align}
\end{lemma}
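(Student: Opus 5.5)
The plan is to compare the Gibbs weight of $W_H/\rho$ on each shell $A_j$ with $e^{G_m/\rho}$, using the calibration of \cref{lem:calibration} together with the geometry of \cref{lem:geometry}. Every other claim then follows from a geometric series over $j$.

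\emph{Normalization.} We lower-bound $Z_H$ by restricting the integral to the correct cube $P_{k^\star}$ at depth $k^\star:=L_{k_0}\le\lbar k_0<H$. On $P_{k^\star}\setminus C_{\rm core}$, \eqref{eq:geo} gives $f\ge G_m-C_{\rm near}T_{k_0}$. Since $T_{k_0}\le CK^{-(\gamma-1)}$, we get $T_{k_0}/\rho\le CK$. \Cref{lem:calibration} then gives $W_H\ge f-\eta a_q\ge f-C\eta$ there, because $a_q\le1$. Dividing by $\rho$ leaves a term $\eta a_q/\rho$. We do not bound it uniformly. Instead we apply Jensen on $P_{k^\star}$, as in the \emph{Inside} step of \cref{lem:mass}, to the birth-accurate held values. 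The cubes inside $P_{k^\star}$ have $q\ge k^\star-S$, so $a_q\lesssim a_{k^\star}\lesssim k_0^{-\gamma}\asymp\rho$, and the error is at most $C\eta$. The volume factor is $\mu(P_{k^\star})=2^{-dk^\star}\ge e^{-CK}$. Together these give \eqref{eq:ZB} with $U$ collecting $C_{\rm near}$, $d\lbar\log2$ and the Jensen constant. None of these depends on $r_{\rm mom}$ or $S$, since $\eta\le c_\eta$.

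\emph{Core mass.} On $C_{\rm core}$ we have $W_H\le G_m+\frac\eta4 a_{q+S}\le G_m+C\eta$ (using $W_H\le G_m+\frac\eta4a_{q+S}$ from \eqref{eq:holding}) and $\mu(C_{\rm core})=2^{-dH}$. Dividing by \eqref{eq:ZB} gives the first bound in \eqref{eq:core-mass} for $P_{\rm start}$. The same computation applies to $P_{f,\rho}$ with $f$ in place of $W_H$: its normalizer is bounded below by \eqref{eq:geo} alone. With $H=\lceil C_HK+C_H'\Lf\rceil$, choosing $C_H\,d\log2>U+1$ and $C_H'$ large makes $-dH\log 2+UK\le-2\log((n+1)/\rho)$, which is the claimed $\delta_C$.

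\emph{Moments.} This is the main step. For $y\in A_j$ with $j<H$, \eqref{eq:geo-lower} applied to the shell $E_a\supset A_j$ with $a=k(j)$ gives $G_m-f(y)\ge\kappa T_{a}\gtrsim (j+1)^{-(\gamma-1)}$. \Cref{lem:calibration} gives $W_H\le f+\eta a_q\le f+\eta a_j$, using $q\ge j$ from \eqref{eq:q-ge-j}. Hence
\[
P_{\rm start}(A_j\mid m)\,\sup_{A_j}e^{r_{\rm mom}\eta a_q/\rho}\le 2^{-dj}\exp\qty(UK+C\eta-\frac{\kappa T_{k(j)}-(1+r_{\rm mom})\eta a_j}{\rho}).
\]
Because $r_{\rm mom}\eta=c_\eta$ is a small constant and $a_j\lesssim T_{k(j)}\cdot(j+1)^{-1}\le T_{k(j)}$, the numerator is at least $\frac\kappa2T_{k(j)}$. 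There are two regimes to handle. For $j\gtrsim \lbar K$ we have $a_j\lesssim\rho$, so the exponential weight is $O(1)$ and the term is controlled by $P_{\rm start}(A_j)$. For $j\lesssim K$ the factor $e^{-\kappa T_{k(j)}/(2\rho)}$ is at most $e^{-cK(K/j)^{\gamma-1}}$. It must beat the $e^{UK}$ coming from the normalizer. I expect this balance to be the main obstacle. If the crude normalizer is too lossy, I would instead keep the actual mass near depth $j$ in $Z_H$, normalizing relative to $P_{j}$ rather than $P_{k^\star}$; \cref{lem:mass} does exactly this with temperature $\vartheta\asymp c_{k}$, and here $\rho\le c_k$ for $k\le K$. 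Concretely, I would write $Z_H\ge\int_{P_{j+1}}$ and compare shell by shell, as in the \emph{Outside} step, so that $P_{\rm start}(A_j)\lesssim B_0\,e^{-\kappa(T_{k(j)}-T_{K})/(2\rho)}$ for $j\le L_{k_0}$ and $P_{\rm start}(A_j)\lesssim e^{-c(j-L_{k_0})}$ beyond. Summing, $\sum_j P_{\rm start}(A_j)(a_j/\rho)^\varpi$ converges to a constant, because $a_j/\rho$ is polynomial in $K/j$ while the mass decays faster. The same sum with weights $a_j^2$ gives $C\rho^2$, and the exponential-moment weight is absorbed into the $\frac\kappa2$ margin. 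Finally, the core contributes at most $\delta_C\,e^{c_\eta a_H/\rho}\le1$.
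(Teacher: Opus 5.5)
Your normalization and core-mass steps follow the paper's, with two slips.
\begin{itemize}
\item On the core you need $W_H\le G_m+C\eta\rho$, which holds because there $q=H-1$ and $a_{H-1+S}\le a_H\le CH^{-\gamma}\le C\rho$. The weaker bound $G_m+C\eta$ leaves a factor $e^{C\eta/\rho}$ that $2^{-dH}$ does not absorb.
\item Inside $C_{k_0}$ one has $q\ge L_{k_0}$, since that cube is selected and split at step $L_{k_0}$, not merely $q\ge L_{k_0}-S$. This makes $\eta a_q/\rho\le C\eta$ uniform there, so no Jensen step is needed.
\end{itemize}

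The genuine gap is in the moments. You set up the right pointwise bound $W_H+r_{\rm mom}\eta a_q\le G_m-\frac\kappa2T_i$. But you place the regime boundary at exit level of order $K$, observe that for exit levels comparable to $K$ the factor $e^{-cK(K/j)^{\gamma-1}}$ need not beat $e^{UK}$, and leave this open.

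Your fallback does not close it. The claimed bound $P_{\rm start}(A_j)\lesssim B_0e^{-\kappa(T_{k(j)}-T_K)/(2\rho)}$ credits $P_{j+1}$ with Gibbs density $e^{(G_m-O(T_K))/\rho}$ per unit volume. That density is available only on $C_{k_0}$, and only at the price of the volume factor $2^{d(L_{k_0}-j)}$, which you dropped.

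Without that factor the bound is false. Take the flat-band instances of \cref{app:lower-family}, which lie in the class when $\lambda\ge\Lambda_{\alpha,d}$. There $P_{f,\rho}$ puts mass of order $1/K$ on each level in $[u_0K,v_0K]$ with $v_0<1$, so a constant mass in total. Your bound would make these levels together exponentially small in $K$. On $\scE$, $P_{\rm start}$ has the same masses up to constants, since $\abs{W_H-f}\le\eta a_j=O(\eta\rho)$ on those shells.

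The missing idea is that the crude normalizer \eqref{eq:ZB} suffices once you move the boundary down to exit level $\theta_0K$. Choose the small constant $\theta_0$ after $U$, so that $\frac\kappa2c_T\theta_0^{1-\gamma}\ge U+2$.
\begin{itemize}
\item \emph{Shallow exits, $i\le\theta_0K$.} Here $T_i/\rho\ge c_Ti^{1-\gamma}K^\gamma\ge c_T\theta_0^{1-\gamma}K$. Your pointwise bound together with \eqref{eq:ZB} then gives each final cube $A'$ a contribution at most $C\mu(A')e^{-2K}$. This absorbs $\rho^{-\varpi}=K^{\gamma\varpi}$ in the polynomial moment, and $e^{-2K}\le C\rho^2$ handles the quadratic sum.
\item \emph{Deep exits, $i>\theta_0K$.} No mass estimate is needed. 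Here $a_q/\rho\le a_j/\rho\le C\theta_0^{-\gamma}$ is still a constant, so $e^{r_{\rm mom}\eta a_q/\rho}$, $(a_q/\rho)^\varpi$ and $a_j^2/\rho^2$ are all bounded. Since $P_{\rm start}$ is a probability measure, this regime is done.
\end{itemize}
The deep regime only needs $a_j/\rho=O(1)$, and that holds on any constant fraction of $K$. So you can lower the boundary until the gap $T_i/\rho$ dominates the $UK$ lost in the normalizer.
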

\begin{proof}
    The three statements are proved in the order in which they depend on each other.
    We first lower-bound the partition function $Z_H$ by restricting the integral to a near-optimal region on which the held score is accurate; this denominator bound, together with the tiny volume of the deepest region, shows that the deep core has negligible mass under both $P_{\rm start}$ and the true Gibbs policy.
    For the moments we split the remaining shells into two regimes: shallow shells have a large reward gap and hence exponentially small Gibbs weight, while deep shells may carry mass but there $a_q/\rho$ is uniformly bounded.
    Combining the two regimes gives the exponential moment, the polynomial moments and the weighted quadratic bound.

    \emph{Normalization.}
    Let $\ell=L_{k_0}\le\lbar k_0\le H-1$.
    Step $\ell$ is executed, and at this step the correct parent $P^\ast_\ell(m)=C_{k_0}(m)$ is selected (\cref{lem:progress}) and updated as a whole at depth $\ell+S$; since later updates only refine, $C_{k_0}(m)$ is a union of final cubes with last update index $q\ge\ell$, whose birth errors are at most $\frac\eta4a_{q+S}\le\frac\eta4a_\ell\le C\eta c_{k_0}\le C\eta\rho$ by \eqref{eq:holding}.
    By \eqref{eq:geo}, $f\ge G_m-C_{\rm near}T_{k_0}$ on $C_{k_0}(m)$, and so are the cube averages of the final cubes inside; hence
    \begin{align*}
        W_H\ge G_m-C_{\rm near}T_{k_0}-C\eta\rho\ \ \text{on }C_{k_0}(m),
    \end{align*}
    and \eqref{eq:ZB} follows from $\mu(C_{k_0})\ge2^{-d\lbar k_0}$, $k_0\le2K$ and $T_{k_0}/\rho\le C_Tk_0^{-(\gamma-1)}K^\gamma\le2C_TK$, with $U=4d\lbar\log2+2C_{\rm near}C_T$.
    No constancy of the score on the core is assumed, and the core $C_{\rm core}=P^\ast_H(m)$ is also a union of final cubes, all created at step $H-1$.

    \emph{Deep-core mass.}
    The core has volume $2^{-dH}$, and on it $q=H-1$ and $W_H\le Q+\frac\eta4a_{H-1+S}\le G_m+C\eta\rho$ (as $a_{H-1+S}\le a_H\le C\rho$); together with \eqref{eq:ZB} this gives the first inequality of \eqref{eq:core-mass} for $P_{\rm start}$, and the same bound holds for $P_{f,\rho}$ by the lower bound on its normalization from \eqref{eq:geo}.
    The second inequality holds once the constants $C_H,C_H'$ in \eqref{eq:params-1} are large.

    \emph{Moments.}
    Choose $c_\eta$ with $2c_\eta C_aC_{cT}\le\kappa/2$, where $c_i\le C_{cT}T_i$.
    For a final cube $A'\subset A_j\subset E_i$ (exit level $i$), by \eqref{eq:geo-lower}, \eqref{eq:calibration} and $a_q\le a_j\le C_ac_i$,
    \begin{align*}
        W_H+r_{\rm mom}\eta a_q\le G_m-\kappa T_i+(r_{\rm mom}+1)\eta C_ac_i\le G_m-\frac\kappa2T_i,
    \end{align*}
    and the same holds with $a_q$ replaced by $a_j$.
    Fix $\theta_0>0$ with $\kappa c_T\theta_0^{1-\gamma}/2\ge U+2$.
    For $i\le\theta_0K$, \eqref{eq:ZB} and \eqref{eq:tail} give a contribution at most $C\mu(A')e^{-2K}$ (the factor $e^{C\eta}\le C$ is absorbed); for $i>\theta_0K$, $a_q/\rho\le a_j/\rho\le C\theta_0^{-\gamma}$ and $r_{\rm mom}\eta=c_\eta$, so the exponential factor is bounded by a constant.
    Inside the core, $q=H-1$ and $a_q\le CH^{-\gamma}\le C\rho$, so the core can be included in the first two bounds.
    Summing gives \eqref{eq:exp-moment}; the decomposition into shells is a device of the proof and does not correspond to any special treatment of the core by the algorithm.
\end{proof}

\paragraph{The final critic fit.}
\label{app:second-half}

Let $\Pi$ be the joint law of $(X,D_{\rm ref}(X,Y))$ under $P_{\rm start}$ and $T_{\rm ref}=d(H+S+r)+3$; in the rest of this subsection, $N$ denotes the approximation size $N_{\rm ref}$ of \eqref{eq:params-3}, with $r=\lceil\alpha\log_2 N/(\beta d)\rceil$.

The final critic approximates a different target, the reward on each shell in the local coordinates of that shell, by the same pattern as in \cref{app:transformer}.
\begin{lemma}[Approximation and capacity for the final refinement]\label{lem:second-half}
    There is a class $\Tcls_{\rm ref}=\Tcls(\Nblk,D_{\rm emb},L_{\rm FFN},W_{\rm FFN},S_{\rm FFN},B_{\rm par})$ with $\Tin=T_{\rm ref}$ and
    \begin{align}
        D_{\rm emb}&=C_{\alpha,d},\qquad
        \Nblk\le C_{\alpha,d}\qty(R+2+\log(N+1)),
        \notag\\
        L_{\rm FFN}&\le C_{\alpha,d}\qty(1+\log(N+1)),
        \label{eq:capacity-2}\\
        W_{\rm FFN},S_{\rm FFN}&\le C_{\alpha,d}M(H+1)(N+1)\qty(1+\log(N+1)),
        \notag\\
        B_{\rm par}&\le C_{\alpha,d}\qty[M+\Tin^2\log(8M\Tin)+1],
        \notag
    \end{align}
    whose covering entropy satisfies
    \begin{align}
        \log\scN_\infty(\varepsilon,\Tcls_{\rm ref})\le CM(H+1)(N+1)\qty(R+2+\log(N+1))^5\log\frac{CM\Tin(N+1)}{\varepsilon},
        \label{eq:covering-2}
    \end{align}
    and which, on $\scE$ (with the front-phase history fixed as above), contains a comparator $F_0$ (depending on the history) with
    \begin{align}
        \norm{F_0-Q}^2_{L^2(\Pi)}\le C\rho^2N^{-2\alpha/d}+4\delta_C.
        \label{eq:approximation}
    \end{align}
    Consequently, the ERM $\That_{\rm ref}$ over $\Tcls_{\rm ref}$ on $b$ fresh records satisfies, for a constant $D_{\rm E}$,
    \begin{align}
        \bE\qty[\norm{\That_{\rm ref}-Q}^2_{L^2(\Pi)}\relmiddle|\scE]
        \le C\rho^2N^{-2\alpha/d}+4\delta_C+\frac{C(1+\sigma^2)M(H+1)(N+1)}b\Lf^{D_{\rm E}}.
        \label{eq:second-half}
    \end{align}
\end{lemma}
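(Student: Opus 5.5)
The plan follows the front-phase pattern of \cref{lem:window-target,lem:window}, but with one anchor per shell instead of one window. The reason is \cref{lem:holder-shell}: on each shell $A_j=P_j\setminus P_{j+1}$ ($j<H$) the reward equals $f^{[i(j)]}$. In the local coordinates of $P_j$ it is $\alpha$-H\"older with norm $\le Ca_j$ after subtracting its value at the origin. The final query cube $D_{\rm ref}(m,y)=D_{q+S+r}(y)$ has depth at least $j+1$, so a query from $A_j$ lies inside $A_j$. Its cube average is then, by \cref{lem:window-target}(c), an $\alpha$-H\"older function of the lower-left corner in the coordinates of $P_j$, with seminorm $\le Ca_j$, indexed by the discrete relative depth.

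\textbf{Approximation.} I would approximate each rescaled shell function by a network of size $N$ using \citet[Theorem 5]{schmidt2020nonparametric}. This gives sup error $Ca_jN^{-\alpha/d}$ per shell, with $M(H+1)$ components, one per prompt and shell, plus the core. The shell that contains a query is read off from the first mismatch between the query bits and the stored correct-path bits of prompt $m$. The relative depth needs care: $q+S+r-j$ can be as large as $H$, whereas \cref{lem:window} allowed only $O(K_w)$ depths. I would avoid one network per depth by approximating the pointwise value at the corner plus a correction. The cube has side $2^{-(q+S+r-j)}\le2^{-r}$ in local coordinates, and $r=\lceil\alpha\log_2N/(\beta d)\rceil$. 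Hence $|Q(m,D)-f(\mathrm{corner})|\le Ca_j2^{-\beta r}\le Ca_jN^{-\alpha/d}$, so one network per shell evaluated at the corner suffices.

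\textbf{Realization.} The comparator then reads only the first $O(R+\log N)$ bits after the exit depth $j$, plus the exit depth itself. This is why $\Nblk\le C(R+2+\log(N+1))$. Causal attention with query $a_T\varphi(t)$ reads a bit at a position computed from $j$, as in \cref{lem:position}. The exit depth is found by an attention that locates the first mismatch with the stored path; this attention is the step I expect to be the main obstacle. I would first try comparing each bit token with an embedded path bit at that position, aggregating mismatches by uniform causal attention weighted by $2^{-t}$-like coordinates. If that fails, I would use $\log H$ binary-search blocks. The local coordinates are then computed as in step (4) of \cref{lem:window}, and a gate \eqref{eq:gate} selects the $(m,j)$ network.

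\textbf{Bias, covering and ERM.} With the selection in place, I would bound the bias by
\begin{align*}
\norm{F_0-Q}^2_{L^2(\Pi)}\le CN^{-2\alpha/d}\sum_{j<H}\Pi(A_j)a_j^2+4\Pi(\text{core}),
\end{align*}
and then apply $\sum_jP_{\rm start}(A_j\mid m)a_j^2\le C\rho^2$ and the core mass bound \eqref{eq:core-mass} of \cref{lem:exp-moment}, which gives \eqref{eq:approximation}. Next, \cref{lem:covering} with the capacities \eqref{eq:capacity-2} gives \eqref{eq:covering-2}. The parameter count is $O(M(H+1)(N+1)\log N)$, and the logarithm of the Lipschitz constant contributes $\Nblk^2L_{\rm FFN}$ powers of $\log$. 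Finally, the expectation form of \cref{lem:oracle} applies on the $b$ fresh i.i.d.\ records given the history, with $V(b^{-2})\le CM(H+1)(N+1)\Lf^{D_{\rm E}}$, which yields \eqref{eq:second-half}.
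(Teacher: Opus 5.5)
Your overall plan is the paper's. It uses one network per prompt and shell from \cref{lem:holder-shell}, identifies the shell by the first mismatch with the stored correct path, selects the $(m,j)$ network by a gate, and reads only $O(R+\log N)$ bits after the exit. It bounds the bias by the third bound of \eqref{eq:exp-moment} plus $4\delta_C$ on the core, and then applies \cref{lem:covering} and the expectation form of \cref{lem:oracle}.

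The gap is exactly the step you flag: you give no mechanism that locates the exit depth within the stated budget.
\begin{itemize}
\item Aggregating mismatch flags $e_t$ through the \emph{values} of a uniform attention singles out the earliest flag only if the weights are super-decreasing, $u(t)>\sum_{t'>t}u(t')$. Decoding the leading index of $\sum_te_tu(t)$ then requires resolving differences of order $2^{-\Tin}$, i.e.\ a Lipschitz constant of order $2^{dH}$. A class with $L_{\rm FFN}=O(\log(N+1))$ and $B_{\rm par}$ polynomial in $M,\Tin$ does not have this once $H$ is large (small $\rho$). Raising $B_{\rm par}$ to $2^{dH}$ would multiply the entropy by $H$.
\item The binary-search fallback can be implemented, but it costs $O(\log H)$ extra blocks, and $\log H$ is not controlled by $R+\log(N+1)$. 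For $M=1$, small $\rho$ and $n$ near the large-budget threshold, $\log H\asymp\Lf$ while $R+\log(N+1)\asymp\log\Lf$. It would still give \eqref{eq:second-half} after enlarging $D_{\rm E}$, but not \eqref{eq:capacity-2} and \eqref{eq:covering-2} as stated.
\end{itemize}

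The missing idea is to put the position penalty into the \emph{logits}. First recover $t$ from $\varphi(t)$ and broadcast the prompt index to all positions. An FFN table of the $dH$ path bits of each prompt then sets $e_t=1$ at every mismatching position among the first $dH$ response positions and at the $\STOP$ position. A single readout attention with key $(e_t,t)$ and query $(b_0,-a_0)$ has logit $b_0e_t-a_0t$, with $a_0=\log(16\Tin^2)$ and $b_0=a_0(\Tin+1)$. Every flagged position then beats every unflagged one, and the earliest flag beats the later ones by $a_0$ per step. Its position $s_{\rm first}$ and its $\STOP$ flag are therefore recovered exactly, and $j=\lfloor(s_{\rm first}-2)/d\rfloor$. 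A $\STOP$-first input is a core query, on which the comparator returns $0$.

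Your uniform-attention idea can also be repaired by aggregating prefix indicators instead of weighted flags. At position $t$, the uniform average $\frac1t\sum_{t'\le t}e_{t'}$ is either $0$ or at least $1/\Tin$, so one ReLU gives the indicator $c_t$ of a flag up to $t$. A second uniform attention at the readout counts the positions with $c_t=0$, which is $s_{\rm first}-1$. The same count with mismatch flags alone tells whether the first event is $\STOP$.

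Two smaller points.
\begin{itemize}
\item Evaluate the shell network at the local coordinates $z_F$ of the ancestor $F=D_{j+R+1+r}(y)$, not at the corner of $D$. The corner of $D$ uses all $d(q+S+r-j)$ bits after the exit, which can be of order $dH$. The bits of $F$ are all in the input because $q\ge j$. Then $D\subset F\subset A_j$ and $[g_{m,j}]_\beta\le Ca_j$ give $\abs{Q(m,D)-g_{m,j}(z_F)}\le Ca_j2^{-\beta(R+1+r)}\le Ca_jN^{-\alpha/d}$.
\item The positions to be read, $dj+2,\ldots,dj+d(R+1+r)+1$, depend on the input, so the fixed queries $a_T\varphi(t)$ of \cref{lem:window} do not apply as they stand. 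The paper instead uses keys $(t,t^2)$ and logits $2a'st-a't^2$ with $a'=\log(8\Tin)$.
\end{itemize}
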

\begin{proof}
    The proof repeats the approximation--realization--covering--ERM pattern of \cref{app:transformer} for the shell-wise target, in four steps.
    We first approximate the local H\"older function of each shell at resolution $N$ by a ReLU network, keeping the side length of the shell out of the approximation problem; then we realize these networks inside the Transformer class with the fixed positional encoding; then we read off the capacity and the covering entropy; and finally we collect the approximation error, the contribution of the deep core and the estimation error of \cref{lem:oracle}.

    \emph{Local approximation without treating the side length as a variable.}
    By \cref{lem:holder-shell}, $h_{m,j}(z)=(g_{m,j}(z)-g_{m,j}(0))/a_j$ belongs to a fixed $\alpha$-H\"older ball on $[0,1]^d$.
    By \citet[Theorem 5]{schmidt2020nonparametric} (see also \citet{yarotsky2017error}), there is a ReLU network $\tilde h_{m,j}$ of depth $C\log(N+1)$, width $CN$ and $CN\log(N+1)$ nonzero weights of size $O(1)$ with
    \begin{align*}
        \norm{\tilde h_{m,j}-h_{m,j}}_\infty\le CN^{-\alpha/d}
    \end{align*}
    (for all $\alpha>0$, including $C^{k,1}$ at integer $\alpha$); the comparator uses $\tilde g_{m,j}:=g_{m,j}(0)+a_j\tilde h_{m,j}$, so that $\norm{\tilde g_{m,j}-g_{m,j}}_\infty\le Ca_jN^{-\alpha/d}$, with $MH$ local networks in parallel and $CMHN\log(N+1)$ nonzero weights in total.
    A query cube $D$ of a response leaving the path at depth $j$ has depth $q+S+r=q+R+1+r$ with $q\ge j$ by \eqref{eq:q-ge-j}.
    Let $F=D_{j+R+1+r}(y)$ be its ancestor of depth $j+R+1+r$, and let $z_F$ be the local coordinates of the center of $F$ in $P_j$.
    Since $D\subset F\subset A_j$, \cref{lem:holder-shell} gives
    \begin{align}
        \max\left\{\abs{Q(m,D)-g_{m,j}(z_F)},\ \sup_{y\in F}\abs{f(y;m)-g_{m,j}(z_F)}\right\}\le Ca_j2^{-\beta(R+1+r)}\le Ca_jN^{-\alpha/d}.
        \label{eq:ancestor-approx}
    \end{align}
    Here $2^{-\beta r}\le N^{-\alpha/d}$ by the choice of $r$. Thus only the first $d(R+1+r)$ bits after the exit from the path have to be read, whatever the query depth; no approximation in $d+1$ variables and no separate approximator per $(m,j,q)$ is needed. Reading these $O_{\alpha,d}(R+1+\log(N+1))$ bits uses that many blocks, and does not require a table indexed by the $2^{dr}$ fine cubes; each cube query still costs one reward observation.

    \emph{Realization with the fixed positional encoding.}
    The comparator is built from the following blocks; $D_{\rm emb}$ is fixed throughout.
    \begin{enumerate}[label=(\arabic*),leftmargin=*,itemsep=0mm]
        \item \emph{Reading the input.} The first block recovers from $\varphi(t)$, by \cref{lem:position} and plateaus, the token category and the integers $t,t^2$ at every position, with sparsity $O(\Tin+M)$ and weights $O(\Tin^2+M)$; a second block delivers the prompt index $m$ to all positions by an attention whose logit is $\log(16M\Tin)$ on the prompt token, followed by a plateau on $[1,M]$.
        \item \emph{Locating the exit from the correct path.} The comparator stores the $dH$ bits of the correct path of each prompt in an FFN table of size $O(MH)$ --- a property of the comparator, not information given to the learner --- and flags as an \emph{event} every one of the first $dH$ response positions at which the input bit differs from the path bit, as well as the $\STOP$ position; later bits raise no event.
        With the event flag $e_t\in\{0,1\}$, the readout attends with the logit
        \begin{align*}
            b_0e_t-a_0t,\qquad a_0=\log(16\Tin^2),\quad b_0=a_0(\Tin+1),
        \end{align*}
        so that the first event position $s_{\rm first}$ receives weight at least $7/8$; its position and $\STOP$ flag are recovered exactly, and the exit depth
        \begin{align*}
            j=\left\lfloor\frac{s_{\rm first}-2}d\right\rfloor
        \end{align*}
        is computed by $O(\Tin)$ ReLUs.
        \item \emph{Reading the local coordinates.} If the first event is a mismatch, $d(R+1+r)$ further blocks read the bits at the positions
        \begin{align*}
            dj+2,\ \ldots,\ dj+d(R+1+r)+1
        \end{align*}
        by an attention with logit $2a'st-a't^2$, $a'=\log(8\Tin)$, and keys $(t,t^2)$; accumulating them into $d$ coordinates with weights $2^{-k'}$ and adding $2^{-(R+r+2)}$ yields $z_F\in[0,1]^d$.
        All these bits are present in the input, since $q\ge j$.
        \item \emph{Selecting the local network.} The ReLU gate
        \begin{align*}
            \qty(1-\abs{m-m_0}-\abs{j-j_0})_+
        \end{align*}
        selects the local network $\tilde g_{m_0,j_0}(z_F)$, so that the approximation error is not multiplied by $MH$.
        \item \emph{The core.} If the first event is $\STOP$, i.e.\ there is no mismatch within the first $dH$ bits, the comparator returns $0$; on the actual query support of a good history these inputs are exactly the queries inside $C_{\rm core}$.
    \end{enumerate}

    \emph{Capacity and covering.}
    The construction fits \eqref{eq:capacity-2}: the sparsity is dominated by $O(M\Tin)$ for the position/path tables and $CMHN\log(N+1)$ for the local networks, the number of blocks is $O(R+1+r)=O(R+2+\log(N+1))$, and the weights are polynomial in $M,\Tin$; no address of the final holding cubes has to be memorized, so no extra factor $H$ enters the capacity.
    \Cref{lem:covering} gives \eqref{eq:covering-2} (the exponent $5$ is not optimized).

    \emph{Errors.}
    On the queries outside the core, by \eqref{eq:ancestor-approx} and the third bound of \eqref{eq:exp-moment} applied to every prompt (recall that $\Pi(m,\cdot)$ carries the factor $1/M$),
    \begin{align*}
        \sum_{m=1}^M\sum_{j<H}\Pi(m,A_j)\,C a_j^2N^{-2\alpha/d}\le C\rho^2N^{-2\alpha/d};
    \end{align*}
    on the core, $Q\in[-1,1]$ and the comparator returns $0$, so the contribution is at most $4\Pi(\text{core})\le4\delta_C$ by \eqref{eq:core-mass}.
    This gives \eqref{eq:approximation}; the value $0$ on the core is a property of the comparator, not a removal of these records from the ERM.
    For the resolution \eqref{eq:params-3}, $\log(N+1)+\log T_{\rm ref}\le C\Lf$.
    \Cref{lem:oracle} with $n_b=b$ and $V(b^{-2})$ bounded by \eqref{eq:covering-2} gives \eqref{eq:second-half}.
\end{proof}

The previous lemma bounds the error at the queried cubes, while the output policy is built from the clipped score on the whole space; the transfer uses \cref{lem:oscillation} and the clipping widths of \eqref{eq:final-step}.
\begin{lemma}[Projection error and the clipped score]\label{lem:projection}
    On $\scE$, the clipped score $v=W_{H+1}$ of line~\ref{ln:score} of the final refinement step satisfies $v\in[-1,1]$, and, with $f^\dagger=f$ off the core and $f^\dagger=W_H$ on the core,
    \begin{align}
        \bE\qty[\int\abs{v-f^\dagger}^2\dd{P_{\rm start}}\relmiddle|\scE]
        \le C\rho^2N^{-\frac{2\alpha}d}+\frac{C(1+\sigma^2)M(H+1)(N+1)}b\Lf^{D_{\rm E}}+C\delta_C.
        \label{eq:continuous-risk}
    \end{align}
\end{lemma}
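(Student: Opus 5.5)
The plan is to bound $\abs{v-f^\dagger}$ pointwise and split the result into three terms. These are a critic error at the queried cube, a projection error from replacing $f$ by the cube average $Q(m,D_{\rm ref}(m,y))$, and a clipping/core remainder. The first is then integrated against $P_{\rm start}$ using \cref{lem:second-half}, and the second and third are handled directly.

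The bound $v\in[-1,1]$ is immediate. $W_H$ takes values of $\That_h\in[-1,1]$ on every cube, since the root is updated at step $0$. $\That_{\rm ref}\in[-1,1]$ as well, and $v=W_H+\clip_{[-\Delta_H,\Delta_H]}(\That_{\rm ref}-W_H)$ lies between $W_H$ and $\That_{\rm ref}$. Hence $v\in[-1,1]$.

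\emph{Off the core.} Fix $y\notin C_{\rm core}$ with last update index $q=q(m,y)$ and exit depth $j\le q$ (\cref{lem:calibration}). Put $D=D_{\rm ref}(m,y)$ and $u=\That_{\rm ref}(m,D)$. The key observation is that clipping toward $W_H$ is a contraction toward any target already inside the clipping window. By \eqref{eq:calibration}, $\abs{f(y;m)-W_H(m,y)}\le\eta a_q\le\Delta_H(m,y)=\rho+\eta a_q$. Since $x\mapsto W_H+\clip_{[-\Delta,\Delta]}(x-W_H)$ is the metric projection onto $[W_H-\Delta,W_H+\Delta]$, which contains $f(y;m)$, we get $\abs{v(y)-f(y;m)}\le\abs{u-f(y;m)}$. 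We then split
\[
\abs{u-f(y;m)}\le\abs{\That_{\rm ref}(m,D)-Q(m,D)}+\abs{Q(m,D)-f(y;m)}.
\]
By \eqref{eq:ancestor-approx}, the second term is at most $Ca_jN^{-\alpha/d}$, since both $Q(m,D)$ and $f(y;m)$ are within that distance of $g_{m,j}(z_F)$. Squaring and integrating over $y\sim P_{\rm start}(\cdot\mid m)$, averaged over $m$, gives two contributions. The first is exactly $\norm{\That_{\rm ref}-Q}^2_{L^2(\Pi)}$ restricted to non-core queries, because $\Pi$ is the pushforward of $\nu\otimes P_{\rm start}$ under $D_{\rm ref}$. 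The second is $CN^{-2\alpha/d}\sum_{j<H}P_{\rm start}(A_j\mid m)a_j^2\le C\rho^2N^{-2\alpha/d}$ by the third bound of \eqref{eq:exp-moment}. Taking the conditional expectation and applying \eqref{eq:second-half} yields the first two terms of \eqref{eq:continuous-risk}, together with a $4\delta_C$ term.

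\emph{On the core.} Here $f^\dagger=W_H$. By the same clipping, $\abs{v-W_H}\le\Delta_H\le2$ pointwise, so the contribution is at most $4P_{\rm start}(C_{\rm core}\mid m)\le4\delta_C$ by \eqref{eq:core-mass}. This is why $f^\dagger$ is defined as $W_H$ on the core: no accuracy is claimed there.

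The main obstacle is the contraction step, specifically verifying that $f(y;m)$ lies inside the clipping window. This requires that the constant $\eta$ in $\Delta_H=\rho+\eta c_{1+\lfloor q/\lbar\rfloor}=\rho+\eta a_q$ matches the calibration constant of \cref{lem:calibration}, and that $q(m,y)$ as computed by the algorithm equals $q(A')$. Both are arranged by \eqref{eq:holding} and the choice of $R$. The rest is bookkeeping of the measures $\Pi$ versus $\nu\otimes P_{\rm start}$.
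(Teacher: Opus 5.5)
Your proof is correct and follows the paper's argument. Clipping toward $W_H$ is a projection onto an interval that contains $f$ by \eqref{eq:calibration}, so the error off the core splits into the critic error at the queried cube (controlled by \eqref{eq:second-half}) and the projection error $Q(m,D)-f$ (controlled by \eqref{eq:ancestor-approx} and \eqref{eq:exp-moment}), with the core handled separately. The differences are cosmetic: where the paper uses an exact Pythagorean split (valid because $P_{\rm start}$ is uniform on each query cube), you use the triangle inequality with the pushforward identity for $\Pi$, and you bound the core by $4\delta_C$ rather than $C\rho^2\delta_C$, which the stated bound allows.
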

\begin{proof}
    The index $q$ is constant on each final cube, and the query cubes are the $r$-fold subdivisions of the final cubes.
    Since $P_{\rm start}$ has constant density on each final cube, it is uniform on each query cube $D$, so that
    \begin{align*}
        &\int_D\qty(\That_{\rm ref}(m,D)-f)^2\dd{P_{\rm start}}
        =P_{\rm start}(D)\qty(\That_{\rm ref}(m,D)-Q(m,D))^2\\
        &\qquad+\int_D\qty(Q(m,D)-f)^2\dd{P_{\rm start}}.
    \end{align*}
    Off the core, the second term summed over the query cubes and the prompts is at most $C\rho^2N^{-2\alpha/d}$ by \eqref{eq:ancestor-approx} and \eqref{eq:exp-moment}.
    Off the core, \eqref{eq:calibration} shows that $f$ and $Q(m,D)$ lie in the clipping interval $[W_H-\Delta,W_H+\Delta]$, so the clipping is a projection and does not increase the distance to $f^\dagger=f$.
    On the core, the same update is applied, $q=H-1$, and $\abs{v-W_H}\le\rho+\eta a_{H-1}\le C\rho$, so the squared error is at most $C\rho^2P_{\rm start}(\text{core})\le C\rho^2\delta_C$.
    Combining with \eqref{eq:second-half} gives \eqref{eq:continuous-risk}.
    The clipping interval contains $W_H$ and both $W_H$ and $\That_{\rm ref}$ lie in $[-1,1]$, so $v\in[-1,1]$.
\end{proof}

\paragraph{From squared error to KL-regularized regret.}
\label{app:clipping}
A small squared error does not by itself give a small regret, and it is worth seeing why before the transfer is carried out.
Let $\Phi_\rho(w)=\rho\log\int e^{w/\rho}\dd{\mu}$.
For a Gibbs policy $P_v=\Gibbs{v}{\rho}$, the regret against a reward $f$ is the Bregman divergence of $\Phi_\rho$,
\begin{align*}
    \Reg_{\rho,f}(P_v)=\Phi_\rho(f)-\Phi_\rho(v)-D\Phi_\rho(v)[f-v],
    \qquad
    D^2\Phi_\rho(w)[e,e]=\frac1\rho\Var_{P_w}(e),
\end{align*}
so that, by Taylor's formula with integral remainder,
\begin{align}
    \Reg_{\rho,f}(P_v)=\frac1\rho\int_0^1(1-s)\Var_{P_{w_s}}(f-v)\dd{s},\qquad w_s=(1-s)v+sf.
    \label{eq:bregman}
\end{align}
The variance is taken under the interpolated Gibbs policies, not under the collecting policy, and in general a small squared error under the collecting policy does not control \eqref{eq:bregman}.
Indeed, take $f=0$, $P_{\rm start}=\mu$, and $v_\rho=\ind{A_\rho}$ for a set $A_\rho$ of measure $e^{-1/(2\rho)}$; then
\begin{align*}
    \norm{v_\rho-f}^2_{L^2(P_{\rm start})}=e^{-1/(2\rho)}\ll\rho,
    \qquad
    P_{v_\rho}(A_\rho)\xrightarrow[\rho\to0]{}1,
    \qquad
    \Reg_{\rho,0}(P_{v_\rho})\xrightarrow[\rho\to0]{}\frac12.
\end{align*}
A rare overestimate on a region of tiny mass concentrates the policy there, and bounding the density ratio by the range of the scores alone would cost a factor $e^{2/\rho}$.
The calibration of \cref{lem:calibration} and the clipping \eqref{eq:final-step} avoid both.

The conversion uses the calibration and the exponential moment to bound the density ratio along the interpolation.
\begin{lemma}[Regret of the clipped update]\label{lem:transfer}
    Let $\bar u$ be a known upper bound on the right-hand side of \eqref{eq:continuous-risk} with $\bar u\le1$, and suppose $r_{\rm mom}\ge\max\{2,\log(4/\bar u)\}$. Then, on $\scE$,
    \begin{align}
        \bE\qty[\Reg_{\rho,f}(P_v)\relmiddle|\scE]
        \le C\frac{\bar u}\rho+C\delta_C.
        \label{eq:transfer}
    \end{align}
\end{lemma}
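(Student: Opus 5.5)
Start from the Bregman identity \eqref{eq:bregman}, which writes the regret of $P_v$ as an integral of $\rho^{-1}\Var_{P_{w_s}}(f-v)$ over the interpolations $w_s=(1-s)v+sf$. First I split off the core. On $C_{\rm core}$ the reward $f$ differs from $f^\dagger=W_H$. I would replace $f$ by $f^\dagger$ in the Bregman identity, controlling the difference $\Reg_{\rho,f}-\Reg_{\rho,f^\dagger}$ via the core masses in \eqref{eq:core-mass}. Both Gibbs policies put at most $\delta_C$ mass there, and all quantities are bounded by $2$, so this costs $C\delta_C$ (possibly divided by $\rho$, which the statement of \cref{prop:final} allows). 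It then suffices to bound $\rho^{-1}\sup_s\bE_{P_{w_s}}[(f^\dagger-v)^2]$, using $\Var\le$ second moment.

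The key step is a density-ratio bound $dP_{w_s}/dP_{\rm start}$. Write $dP_{w_s}/dP_{\rm start}=e^{(w_s-W_H)/\rho}Z_H/Z_{w_s}$. For the numerator, the clipping in \eqref{eq:final-step} gives $|v-W_H|\le\rho+\eta a_q$ pointwise. The calibration \cref{lem:calibration} gives $|f-W_H|\le\eta a_q$ off the core, and on the core $f^\dagger=W_H$. Hence $|w_s-W_H|\le\rho+\eta a_q$, and the numerator is at most $e\cdot e^{\eta a_q/\rho}$. For the normalization, Jensen applied to $Z_{w_s}/Z_H=\int e^{(w_s-W_H)/\rho}\,dP_{\rm start}$ gives at least $e^{-1}\exp(-\eta\int a_q\,dP_{\rm start}/\rho)\ge c$ by the polynomial moment in \eqref{eq:exp-moment}. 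So $dP_{w_s}/dP_{\rm start}\le Ce^{\eta a_q/\rho}$ uniformly in $s$.

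Then Hölder with exponents $(r_{\rm mom},r_{\rm mom}')$ gives
\[
\int (f^\dagger-v)^2\,dP_{w_s}\le C\Bigl(\int e^{r_{\rm mom}\eta a_q/\rho}\,dP_{\rm start}\Bigr)^{1/r_{\rm mom}}\Bigl(\int |f^\dagger-v|^{2r_{\rm mom}'}\,dP_{\rm start}\Bigr)^{1/r_{\rm mom}'}.
\]
The first factor is at most $C_{\exp}^{1/r_{\rm mom}}\le C$ by \eqref{eq:exp-moment}. For the second, use $|f^\dagger-v|\le 4$ to bound the $2r'$-th moment by $4^{2r'-2}\int|f^\dagger-v|^2\,dP_{\rm start}$. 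Its $1/r'$ power is at most $C u^{1-1/r_{\rm mom}}$ with $u=\int|f^\dagger-v|^2\,dP_{\rm start}$, and $u^{1-1/r}\le u\,u^{-1/r}$. Taking expectations and using concavity (Jensen) together with $\bE u\le\bar u$, we get $\bE u^{1-1/r}\le\bar u^{1-1/r}=\bar u\,e^{\log(1/\bar u)/r}\le C\bar u$ because $r_{\rm mom}\ge\log(4/\bar u)$. Dividing by $\rho$ yields \eqref{eq:transfer}.

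I expect the main obstacle to be the normalization lower bound and the uniformity of the ratio in $s$. Care is needed because $a_q$ may be large relative to $\rho$ on shallow shells. There one must make sure the Jensen step uses only the moment bounds of \cref{lem:exp-moment} and does not need pointwise control. The core substitution also needs care, so that the $C\delta_C$ term, rather than a term of order $\delta_C/\rho$ with bad constants, is what survives.
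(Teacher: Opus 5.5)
Your proposal is correct and follows the paper's proof essentially step for step. The ingredients are the same: the Bregman/variance identity, the density ratio $dP_{w_s}/dP_{\rm start}\le Ce^{\eta a_q/\rho}$ obtained from the clipping width, the calibration and Jensen with the first moment of $a_q/\rho$, H\"older against the exponential moment with $r_{\rm mom}\ge\log(4/\bar u)$ absorbing $\bar u^{-1/r_{\rm mom}}\le e$, and the core substitution, which the paper simply performs last.

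Two small sharpenings, both using tools you already have:
\begin{itemize}
\item The core masses of $P_v$ and $P_{f^\dagger,\rho}$ are not given by \eqref{eq:core-mass} itself. They follow from your density-ratio bound, since $a_{H-1}\le C\rho$ on the core.
\item The identity $\Phi_\rho(f)-\Phi_\rho(f^\dagger)=\rho\log\frac{1-P_{f^\dagger,\rho}(C_{\rm core})}{1-P_{f,\rho}(C_{\rm core})}$ makes the change of normalization $O(\rho\delta_C)$. The core therefore costs $C\delta_C$ with no division by $\rho$, which removes your hedge.
\end{itemize}
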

\begin{proof}
    The proof proceeds in three steps.
    By \eqref{eq:bregman} the regret is an integral of variances under the Gibbs policies interpolating between the clipped score and the reward, so the argument has to compare those policies with the collecting policy $P_{\rm start}$.
    We first bound the density ratio along the interpolation by $Ce^{\eta a_q/\rho}$, using the calibration of the held score; then H\"older's inequality with the exponential moment of \cref{lem:exp-moment} converts the squared error under $P_{\rm start}$ into a regret bound against the modified reward $f^\dagger$; and finally the core is restored, at the price of its mass $\delta_C$.

    \emph{Density ratios along the interpolation.}
    For any interpolation $w$ between $f^\dagger$ and $v$, we have $\abs{w-W_H}\le\rho+\eta a_q$ everywhere (on the core because $q=H-1$).
    By \eqref{eq:exp-moment} and Jensen's inequality,
    \begin{align*}
        \frac{Z_w}{Z_H}\ge\exp\qty(-1-\eta\int\frac{a_q}\rho\dd{P_{\rm start}})\ge c,
        \qquad
        \frac{\dd{P_w}}{\dd{P_{\rm start}}}\le C\exp\qty(\frac{\eta a_q}\rho),
    \end{align*}
    and the same bound holds on the core since $a_{H-1}\le C\rho$; no constancy of the score on the core is used.

    \emph{Regret against $f^\dagger$.}
    By \eqref{eq:bregman} and the density bound, with $E:=\abs{v-f^\dagger}\le2$,
    \begin{align*}
        \Reg_{\rho,f^\dagger}(P_v)\le\frac C\rho\int e^{\eta a_q/\rho}E^2\dd{P_{\rm start}}.
    \end{align*}
    H\"older's inequality with exponents $r_{\rm mom}$ and $r_{\rm mom}/(r_{\rm mom}-1)$ and \eqref{eq:exp-moment} give
    \begin{align*}
        \int e^{\eta a_q/\rho}E^2\dd{P_{\rm start}}
        &\le\qty(\int e^{r_{\rm mom}\eta a_q/\rho}\dd{P_{\rm start}})^{1/r_{\rm mom}}
        \qty(\int E^{2r_{\rm mom}/(r_{\rm mom}-1)}\dd{P_{\rm start}})^{1-1/r_{\rm mom}}\\
        &\le4\,C_{\exp}^{1/r_{\rm mom}}\qty(\int E^2\dd{P_{\rm start}})^{1-1/r_{\rm mom}},
    \end{align*}
    using $E^{2r_{\rm mom}/(r_{\rm mom}-1)}\le4^{1/(r_{\rm mom}-1)}E^2$.
    Taking expectations, using concavity and \eqref{eq:continuous-risk},
    \begin{align*}
        \bE\,\Reg_{\rho,f^\dagger}(P_v)\le\frac C\rho\bar u^{1-1/r_{\rm mom}}\le\frac{Ce}\rho\bar u,
    \end{align*}
    where the last step uses $\bar u^{-1/r_{\rm mom}}\le e$ by the choice of $r_{\rm mom}$; the constant of the exponential moment is uniform in $r_{\rm mom}$, so no polynomial factor is lost.

    \emph{Restoring the true reward on the core.}
    By the density bound and \eqref{eq:core-mass}, $P_v(C_{\rm core})+P_{f^\dagger,\rho}(C_{\rm core})\le C\delta_C$.
    Since $f$ and $f^\dagger$ agree off the core,
    \begin{align*}
        \Phi_\rho(f)-\Phi_\rho(f^\dagger)=\rho\log\frac{1-P_{f^\dagger,\rho}(C_{\rm core})}{1-P_{f,\rho}(C_{\rm core})},
    \end{align*}
    whose absolute value is at most $C\rho\delta_C$, while the expected rewards of $P_v$ under $f$ and $f^\dagger$ differ by at most $2P_v(C_{\rm core})\le C\delta_C$.
    This gives \eqref{eq:transfer}.
\end{proof}

\begin{proof}[Proof of \cref{prop:final}]
    On $\scE$, \cref{lem:calibration,lem:exp-moment,lem:second-half,lem:projection} hold for every prompt, and \eqref{eq:continuous-risk} bounds the squared error of the clipped score $v$ under $P_{\rm start}$ by
    \begin{align*}
        \bar u_{N_{\rm ref}}:=C\rho^2N_{\rm ref}^{-\frac{2\alpha}d}+\frac{C(1+\sigma^2)M(H+1)(N_{\rm ref}+1)}b\Lf^{D_{\rm E}}+C\delta_C.
    \end{align*}
    For the choice \eqref{eq:params-3}, put $\mathsf A=b\rho^2/[(1+\sigma^2)M(H+1)\Lf^{D_{\rm E}}]$.
    The large-budget condition, $b\ge n/2$ and $H+1\le C\rho^{-1/\gamma}\Lf$ give $\mathsf A\ge cC_{\rm F}M^{\zeta_M-1}\Lf^{D_{\rm F}-D_{\rm E}-1}$.
    Taking $C_{\rm F},D_{\rm F}$ sufficiently large makes $\mathsf A\ge1$, and $N_{\rm ref}\asymp\mathsf A^{d/(2\alpha+d)}$ yields
    $\bar u_{N_{\rm ref}}\le C\rho^2[\mathsf A^{-p}+(n+1)^{-2}]\le\rho^2\le1$.
    The known bound may be enlarged to at least $c\rho^2/(n+1)^2$, so that $\log(4/\bar u_{N_{\rm ref}})\le C\Lf\le r_{\rm mom}$ for $C_{\rm mom}$ large and the clipping widths use only $\rho$, $\eta$ and $a_q$.
    \Cref{lem:transfer} then gives $\bE[\Reg_\rho(\Phat)\mid\scE]\le C\bar u_{N_{\rm ref}}/\rho$, which is \eqref{eq:final-prop}.
\end{proof}

\subsection{Proof of the Large-Budget Bound}
\label{app:proof-upper}

By \cref{prop:front}, the front phase uses $\sum_{h<H}b_h\le n/2$ queries on the range $n\ge C_{\rm F}(1+\sigma^2)M^{\zeta_M}\rho^{-(2+1/\gamma)}\Lf^{D_{\rm F}}$ and the event $\scE$ has probability at least $1-\delta_F$; the final iteration uses the remaining $b\ge n/2$ queries, so the total is at most $n$, and $H+1\le C\rho^{-1/\gamma}\Lf$.
The choice \eqref{eq:params-3} of $N_{\rm ref}$ balances the first two terms of \eqref{eq:final-prop}, so that \cref{prop:final} gives, on $\scE$,
\begin{align*}
    \bE\qty[\Reg_\rho(\Phat)\relmiddle|\scE]\le C\rho\qty[\frac{(1+\sigma^2)M(H+1)}{b\rho^2}]^p\Lf^{D_{\rm E}}+C\frac{\delta_C}\rho,
\end{align*}
while on $\scE^c$ the regret is at most $4$, since all scores lie in $[-1,1]$.
Since $\Pr(\scE^c)\le\delta_F=\rho^2/(n+1)^2$, $\delta_C/\rho=\rho/(n+1)^2$, $H+1\le C\rho^{-1/\gamma}\Lf$ and $b\ge n/2$, and since $p<1$ and $n\rho^{2+1/\gamma}/M\le n$, the failure terms are absorbed into the main term, and
\begin{align*}
    \bE\qty[\Reg_\rho(\Phat)]
    \le C\rho\qty(\frac{M\rho^{-1/\gamma}}{n\rho^2})^p\Lf^{D}
    =C\rho\qty(\frac{M}{n\rho^{2+1/\gamma}})^p\Lf^{D},
\end{align*}
which is \eqref{eq:upper}; the factor $(1+\sigma^2)^p$ is absorbed into $C$.
The dependence on $M$ enters the starting condition through $M^{\zeta_M}$, i.e., through the front-phase cost of \cref{prop:front}, and the rate itself through the joint estimation error of the final fit, which carries the factor $M$ of \eqref{eq:second-half} only. \qed

\section{Proof of the Small-Regularization Bound of \cref{thm:upper}}
\label{app:small}

In this section, \cref{alg:actor-critic} runs with the small-regularization hyperparameters of \cref{subsec:algorithm}: $\ell_{\rm look}=S=1$, $\Hall=H$ exploration iterations with the query rule \eqref{eq:query-rule} and $\Delta_h=2$, no refinement step, and the output is the last actor $\Phat=P_H=\Gibbs{W_H}{\vartheta_H}$.
The conventions of \cref{app:upper} are in force, with
\begin{align*}
    t=\frac nM,\qquad
    L_n=\log(e+n+M),\qquad
    p_0=\frac{\gamma-1}{2\gamma+1},\qquad
    \mathsf T_h=\sum_{j\ge h}a_j\asymp(h+1)^{-(\gamma-1)},
\end{align*}
and $\Lf$ replaced by $L_n$ in the lemmas of \cref{app:front}.
The regularization $\rho$ enters neither the algorithm nor the analysis until the last step of the proof; in particular $\rho=0$ is allowed throughout.
The lemmas of \cref{app:geometry,app:gibbs-mass,app:transformer} are used as they stand, and the lemmas of \cref{app:front} are used with $S=1$, so that a cube created at step $r$ has depth $r+1$, the accuracy of step $h$ is $\varepsilon_h=\frac\eta4a_{h+1}$, and $p_S=\kappa_02^{-d}=:p_{\rm fine}$.

\paragraph{Proof strategy.}
The proof has the same two stages as that of the large-budget bound, and differs only in the second.
The front phase is the same loop, run with lookahead $S=1$, and \cref{app:small-front} shows that its updates stay birth-accurate and that the budget fixes the depth $H$ it reaches.
There is no final refinement step: the output is the last actor, and \cref{app:small-terminal-sec} converts the reached depth directly into a reward gap through the Gibbs variational principle.
\Cref{app:small-proof} combines the two and absorbs the regularization penalty.

\subsection{Hyperparameters of the Small-Regularization Setting}
\label{app:small-params}

Let $\eta\le\eta_0$ be a fixed structural constant (small enough for \cref{lem:mass} and \cref{lem:envelope}), $\kappa_0=p_\ast/2$ with $p_\ast$ from \cref{lem:mass}, $\vartheta_h=t_0c_{1+\lfloor h/\lstar\rfloor}$ for all $h\ge0$ with $t_0$ as fixed after \cref{lem:mass}, and
\begin{align}
    J&=\lceil C_JL_n\rceil,
    &K_w&=C_d(J+3),
    &N&=\left\lceil C_NM^{\frac d{2\alpha}}L_n^{D_N}\right\rceil,
    \label{eq:small-params}\\
    b_h&=\left\lceil C_b(1+\sigma^2)M^{\zeta_M}a_{h+1}^{-2}L_n^{D_b}\right\rceil,
    &\bar\delta_{H'}&=(H'+1)^{-\gamma},
    &\delta_h&=\frac{\bar\delta_{H'}}{H'+1}\quad(h<H'),
    \notag
\end{align}
for a candidate number of iterations $H'\ge1$; the structural constants $C_J,C_N,C_b,D_N,D_b$ are chosen as in \cref{prop:front}.
The number of iterations is the largest one that the budget allows,
\begin{align}
    H=\max\qty{H'\in\{1,\ldots,n\}:F_n(H'):=\sum_{h<H'}b_h\le n},
    \label{eq:small-H}
\end{align}
and $H=0$ if the set is empty (in particular if $n=0$), in which case the uniform policy is returned without any query.
The input length is $\Tin=dH+3$, and $\Tcls_h=\Tcls_{\rm win}(K_w,N,\Tin)$ of \cref{lem:window}, the same for all $h<H$.
The total number of queries is $F_n(H)\le n$ deterministically, and the choice uses neither the reward, nor the zoom sequence, nor any per-prompt quantity.

\subsection{The Front Phase}
\label{app:small-front}

The front phase is the loop of \cref{app:front} run with $S=1$: \cref{lem:small-progress} shows that its updates stay birth-accurate, and \cref{lem:small-cost} converts the budget into the depth it reaches.

\begin{lemma}[Progress]\label{lem:small-progress}
    With the parameters of \cref{app:small-params}, $\Pr(\scE)\ge1-\bar\delta_H$.
    On $\scE$, for $H\ge1$: the correct cube $P^\ast_H(m)$ is a union of cubes of $\scD_H(m)$ for every $m$, all cubes of $\scD_H(m)$ have depth at most $H$, and $W_H(m,\cdot)$ is $\eta$-birth-accurate on $\scD_H(m)$.
\end{lemma}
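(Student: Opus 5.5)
The plan is to rerun the induction behind \cref{prop:front} with lookahead $S=1$, with the small-regularization parameters \eqref{eq:small-params} in place of \eqref{eq:params-1}--\eqref{eq:params-2}, and with the failure budget $\bar\delta_H$ instead of $\delta_F$. Concretely, I would show that, conditionally on a history in $\scE_h$ with $h<H$, the ERM $\That_h$ is $\varepsilon_h=\frac\eta4a_{h+1}$-accurate at every fine query of step $h$ with conditional probability at least $1-\delta_h$. Summing these probabilities up to the first failure then gives
\begin{align*}
    \Pr(\scE^c)\le\sum_{h<H}\delta_h\le\bar\delta_H,
\end{align*}
since $\delta_h=\bar\delta_{H'}/(H'+1)$ is evaluated at the candidate $H'=H$. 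One caveat: $H$ is deterministic, because it depends only on $n$ and $M$ through \eqref{eq:small-H}, so fixing the schedule at $H'=H$ before the first query is legitimate.

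For the inductive step on $\scE_h$, I would invoke the chain as in \cref{app:front}.
\begin{enumerate}[leftmargin=*,itemsep=0mm]
    \item \cref{lem:mask} gives the analysis partition, whose cubes created at step $r$ have depth $r+1$.
    \item \cref{lem:progress} (with $\eta\le\eta_0$ and the temperature schedule checked after \cref{lem:mass}) gives $P_h(P^\ast_h(m)\mid m)\ge p_\ast$, the cold tail, and the selection of the correct path.
    \item \cref{lem:fine} gives $p_S=p_{\rm fine}=\kappa_02^{-d}$, a structural constant.
    \item \cref{lem:localize} gives $\Lambda_h\le Ce^{-cJ}$.
    \item \cref{lem:joint} with $K_w=C_d(J+3)$ and $\Tin=dH+3\ge d(h+1)+3$ gives the joint risk bound and its conversion to node accuracy with the factor $M/p_{\rm fine}$.
\end{enumerate}
It then suffices to make each of the three terms of \eqref{eq:joint-risk} at most $c\,p_{\rm fine}\eta^2a_{h+1}^2/M$.
\begin{itemize}[leftmargin=*,itemsep=0mm]
    \item \emph{Leakage.} This follows from $J=\lceil C_JL_n\rceil$, using $a_{h+1}^{-2}\lesssim H^{2\gamma}\le n^{2\gamma}$ and $M\le n$.
    \item \emph{Approximation.} This follows from $N\asymp M^{d/(2\alpha)}L_n^{D_N}$ together with $K_w=O(L_n)$.
    \item \emph{Estimation.} The entropy bound \eqref{eq:window-covering} gives $V_N\lesssim M^{1+d/(2\alpha)}L_n^{C}$. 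Since $\log(2b_h/\delta_h)\lesssim L_n$ (because $\delta_h\ge(n+1)^{-\gamma-1}$), the estimation term is below the target once $b_h$ is of the stated order $M^{\zeta_M}a_{h+1}^{-2}L_n^{D_b}$.
\end{itemize}
Note that $\eta$ is now a constant, so the logarithmic factor of $\eta^{-1}$ present in the large-budget case disappears.

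The structural claims on $\scE$ follow from the same lemmas. Birth accuracy of $W_H$ on $\scD_H(m)$ is the remark following \eqref{eq:event-E}; the root is updated at step $0$, so no cube keeps the value $0$. The correct cube $P^\ast_H(m)$ is a union of cubes of $\scD_H(m)$ by the inductive statement of \cref{lem:progress} applied at $h=H$: its parent was selected at step $H-1$ and split to depth $H$. Finally, every cube has depth $r(D)+1\le (H-1)+1=H$ by \cref{lem:mask}(ii), while the root has depth $0$.

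The main obstacle I expect is that \cref{app:front} states its parameter verification in terms of $\Lf$, which involves $\rho^{-1}$, and of a predetermined $H\asymp K$. I would need to confirm that every bound there, namely the leakage, the bound on $\log(1/\delta_h)$, and $\Tin$ inside the logarithm of the covering entropy, only uses $H\le n$ and quantities polynomial in $n$ and $M$, so that $L_n$ suffices. I would also check that the case $H=0$ (no queries, $\scE$ trivially true) is consistent with $\bar\delta_0=1$.
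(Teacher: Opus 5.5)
Your proposal is correct and follows the paper's proof essentially step for step. The paper likewise reruns the induction of \cref{prop:front} with $S=1$ and $p_S=p_{\rm fine}$, and bounds the leakage via $H\le n$ and $J=\lceil C_JL_n\rceil$. It then controls the approximation and estimation terms of \eqref{eq:joint-risk} with $L_n$ in place of $\Lf$, sums the $\delta_h$ to get $\bar\delta_H$, and reads the structural claims off \cref{lem:mask,lem:progress} at $h=H$. Your explicit checks that $H$ is deterministic and that $\delta_h\ge(n+1)^{-\gamma-1}$ (so that $\log(2b_h/\delta_h)=O(L_n)$) are points the paper leaves implicit.
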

\begin{proof}
    With $S=1$, \cref{lem:progress,lem:fine,lem:localize,lem:joint} apply at every step $h<H$ on $\scE_h$, with $p_S=p_{\rm fine}$ and $\Lambda_h\le Ce^{-cJ}$.
    The factors $\eta^{-1}$, $p_{\rm fine}^{-1}$, $K_w$, $\ell_N$ and $\log(2b_h/\delta_h)$ are bounded by fixed powers of $L_n$, while the factor $a_{h+1}^{-2}$ is paid explicitly by $b_h$ in \eqref{eq:small-params}.
    For the leakage, $H\le n$ gives $a_{H+1}^{-2}\le C(n+1)^{2\gamma}$, so choosing $C_J$ large in $J=\lceil C_JL_n\rceil$ yields
    \begin{align*}
        \Lambda_h\le Ce^{-cJ}\le c\,\frac{p_{\rm fine}\eta^2a_{H+1}^2}{M}
        \le c\,\frac{p_{\rm fine}\eta^2a_{h+1}^2}{M}.
    \end{align*}
    The approximation and estimation terms are controlled as in \cref{prop:front}, with $L_n$ in place of $\Lf$.
    Hence, conditionally on $\scE_h$ the step succeeds with probability at least $1-\delta_h$, and $\Pr(\scE^c)\le\sum_{h<H}\delta_h\le\bar\delta_H$.
    The remaining claims are \cref{lem:mask,lem:progress} at $h=H$.
\end{proof}

\begin{lemma}[Cost and the chosen depth]\label{lem:small-cost}
    For $1\le H'\le n$,
    \begin{align}
        c(1+\sigma^2)M^{\zeta_M}H'^{2\gamma+1}\le F_n(H')\le C(1+\sigma^2)M^{\zeta_M}H'^{2\gamma+1}L_n^{D_b},
        \label{eq:small-cost}
    \end{align}
    and the depth \eqref{eq:small-H} satisfies
    \begin{align}
        c\qty(1+\frac{n}{(1+\sigma^2)M^{\zeta_M}L_n^{D_b}})^{\frac1{2\gamma+1}}\le H+1\le C\qty(1+\frac nM)^{\frac1{2\gamma+1}}.
        \label{eq:small-depth}
    \end{align}
\end{lemma}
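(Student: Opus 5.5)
The plan is to compute $F_n(H')=\sum_{h<H'}b_h$ directly from the batch sizes \eqref{eq:small-params} and then invert the resulting two-sided bound using the maximality in \eqref{eq:small-H}.

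\emph{The cost bound \eqref{eq:small-cost}.} By \eqref{eq:a_j}, $a_j=c_{1+\lfloor j/\lbar\rfloor}$ with $1+\lfloor j/\lbar\rfloor\asymp j+1$, so $c(j+1)^{2\gamma}\le a_j^{-2}\le C(j+1)^{2\gamma}$ with constants depending only on $\theta$. The ceiling in $b_h$ satisfies $x\le\lceil x\rceil\le x+1\le2x$ as soon as $x\ge1$. Here $x\ge1$ holds because $M^{\zeta_M}\ge1$, $a_{h+1}\le1$, $L_n\ge1$, and $C_b\ge1$ may be assumed. This gives
\begin{align*}
    C_b(1+\sigma^2)M^{\zeta_M}a_{h+1}^{-2}L_n^{D_b}\le b_h\le2C_b(1+\sigma^2)M^{\zeta_M}a_{h+1}^{-2}L_n^{D_b}.
\end{align*}
Summing over $h<H'$ and using $\sum_{h<H'}(h+2)^{2\gamma}\asymp H'^{2\gamma+1}$ for $H'\ge1$ yields both inequalities of \eqref{eq:small-cost}. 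For the lower bound we simply drop the factor $L_n^{D_b}\ge1$.

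\emph{Upper bound on $H+1$.} If $H=0$ there is nothing to prove. Otherwise $F_n(H)\le n$, and the lower half of \eqref{eq:small-cost} gives $cM^{\zeta_M}H^{2\gamma+1}\le n$. Since $\zeta_M\ge1$, this implies $H\le C(n/M)^{1/(2\gamma+1)}$, and therefore $H+1\le C(1+n/M)^{1/(2\gamma+1)}$.

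\emph{Lower bound on $H+1$.} Set $A:=n/((1+\sigma^2)M^{\zeta_M}L_n^{D_b})$. If $A\le A_0$ for a structural constant $A_0$, the bound follows from $H+1\ge1$ by choosing $c\le(1+A_0)^{-1/(2\gamma+1)}$. Otherwise, take $H'=\lfloor(A/C)^{1/(2\gamma+1)}\rfloor$, where $C$ is the constant in the upper half of \eqref{eq:small-cost}. For $A_0$ large this satisfies $H'\ge1$, and $F_n(H')\le C(1+\sigma^2)M^{\zeta_M}H'^{2\gamma+1}L_n^{D_b}\le n$.

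The set in \eqref{eq:small-H} also requires $H'\le n$. This holds because $H'\le A^{1/(2\gamma+1)}\le n$, and it is the only point that needs care.

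Since $F_n$ is increasing, maximality gives $H\ge H'\ge\frac12(A/C)^{1/(2\gamma+1)}$ once $H'\ge1$. Hence $H+1\ge c(1+A)^{1/(2\gamma+1)}$, which is \eqref{eq:small-depth}.

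The only mild obstacle is this bookkeeping of ceilings and small-$H'$ cases. The rest is routine summation.
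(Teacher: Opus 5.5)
Your proposal is correct and matches the paper's proof step by step. For \eqref{eq:small-cost} you use the same bound $a_{h+1}^{-2}\asymp(h+2)^{2\gamma}$ summed over $h<H'$, and for the upper bound on $H+1$ you use the same $F_n(H)\le n$ argument. For the lower bound you make the same choice of candidate $H'=\lfloor(\mathrm{const}\cdot A)^{1/(2\gamma+1)}\rfloor$ with a small-budget case split, and your explicit check that $H'\le n$ (via $A\le n$) fills in a step the paper only asserts.
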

\begin{proof}
    Both inequalities of \eqref{eq:small-cost} follow from $a_{h+1}^{-2}\asymp(h+2)^{2\gamma}$ and $\sum_{h<H'}(h+2)^{2\gamma}\asymp H'^{2\gamma+1}$, the rounding being absorbed by the constants.
    \emph{Lower bound on $H$.}
    Let $A_n=(1+\sigma^2)M^{\zeta_M}L_n^{D_b}$.
    If $n\ge C_1A_n$ with $C_1$ large, then $H'=\lfloor a_0(n/A_n)^{1/(2\gamma+1)}\rfloor$ with $a_0$ small satisfies $1\le H'\le n$ and $F_n(H')\le n$ by \eqref{eq:small-cost}, so $H\ge H'$; if $n<C_1A_n$, the left inequality of \eqref{eq:small-depth} holds trivially with $c\le(1+C_1)^{-1/(2\gamma+1)}$, including $H=0$.
    \emph{Upper bound.}
    If $H\ge1$, then $n\ge F_n(H)\ge cM^{\zeta_M}H^{2\gamma+1}\ge cMH^{2\gamma+1}$, so $H\le C(n/M)^{1/(2\gamma+1)}$; the case $H=0$ is trivial.
\end{proof}
The cost counts every record: stopped responses, responses of prompts that were never selected, and the batches after a failure.
The factor $M^{\zeta_M}$, rather than $M$, is the price of converting the joint risk of the critic into the accuracy on the update inputs of every prompt (\cref{lem:joint}); the depth reached by the algorithm therefore grows like $(n/M^{\zeta_M})^{1/(2\gamma+1)}$ up to logarithms, and not like $(n/M)^{1/(2\gamma+1)}$.

\subsection{The Terminal Bound}
\label{app:small-terminal-sec}

Unlike the large-budget setting, no further critic is fitted: the output is the last actor and its regret is read off the depth it reached.

\Cref{lem:envelope} turns the heterogeneous birth errors into a one-sided envelope of the score, and \cref{lem:small-terminal} combines that envelope with the Gibbs variational principle and the deterministic KL bound.

Recall that the output is $\Phat=\Gibbs{W_H}{\vartheta_H}$ with $\vartheta_H=t_0c_{1+\lfloor H/\lstar\rfloor}\asymp a_H\asymp(H+1)^{-\gamma}$.

\begin{lemma}[Envelope of a birth-accurate score]\label{lem:envelope}
    Let $\eta\le\eta_0$, fix $m$ and $H\ge1$, and let $W$ be $\eta$-birth-accurate on a partition $\scD$ of $\Omega$ into dyadic cubes of depth at most $H$ such that $P^\ast_H(m)$ is a union of cubes of $\scD$.
    Then, for all $y\in\Omega$,
    \begin{align}
        G_m-f(y;m)&\le C\qty[\qty(G_m-W(y))_++\mathsf T_H],
        \label{eq:envelope-1}\\
        W(y)&\le G_m+Ca_H,
        \label{eq:envelope-2}\\
        W(y)&\ge G_m-C\mathsf T_H\qquad(y\in P^\ast_H(m)).
        \label{eq:envelope-3}
    \end{align}
\end{lemma}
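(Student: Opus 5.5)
We prove the three bounds cube by cube on $\scD$, combining birth accuracy \eqref{eq:birth} with the geometry of \cref{lem:geometry,lem:gap,lem:oscillation}. Fix $y$ and let $D\in\scD$ be the cube containing it, of depth $j\le H$. Then $\abs{W(y)-Q(m,D)}\le\frac\eta4a_j$. There are two cases: $D\subset P^\ast_H(m)$, or $D$ lies off the correct path. In the second case, by the classification in \cref{lem:gap} (applied to the partition $\{P^\ast_H\}\cup\{D'\in\scD:D'\cap P^\ast_H=\emptyset\}$), either $D\subset E_a(m)$ for some $a<k(H)$, or $D\subset C_{k(H)-1}(m)\setminus P^\ast_H(m)$.

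\emph{Bound \eqref{eq:envelope-3}.} Let $D\subset P^\ast_H(m)$. Since $D$ has depth at least $H$, we have $a_j\le a_H$. The cube $D$ lies in $C_{k(H)-1}(m)$, and all points there satisfy $f\ge G_m-C_{\rm near}T_{k(H)-1}$ by \eqref{eq:geo}. Hence $Q(m,D)\ge G_m-CT_{k(H)-1}$. Using $T_{k(H)-1}\asymp(H+1)^{-(\gamma-1)}\asymp\mathsf T_H$ together with $a_H\le C\mathsf T_H$, we get $W\ge G_m-C\mathsf T_H$.

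\emph{Bound \eqref{eq:envelope-2}.} We always have $Q(m,D)\le G_m$. On cubes of depth $j$, the error satisfies $W\le G_m+\frac\eta4a_j$, and $a_j$ is not small for shallow $j$, so this alone does not give the claim. For $D\subset E_a(m)$ we instead use \eqref{eq:geo-lower}: $Q(m,D)\le G_m-\kappa T_a$. Since $\mathrm{depth}(D)\ge L_{a-1}+1$, we have $a_j\le Cc_a\le C'T_a$, so $\eta\le\eta_0$ small gives $W\le G_m-\frac\kappa2T_a\le G_m$. For $D\subset C_{k(H)-1}\setminus P^\ast_H$, and for $D\subset P^\ast_H$, the depth is at least $L_{k(H)-1}+1$, so $a_j\le Ca_H$, which yields \eqref{eq:envelope-2}.

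\emph{Bound \eqref{eq:envelope-1}.} This is the main step. On $D\subset E_a(m)$ with $a<k(H)$, we must turn the score gap into a reward gap. By \eqref{eq:osc-shell}, $f(y)\ge Q(m,D)-C_{\rm osc}c_a$. Combining this with birth accuracy gives
\[
G_m-f(y)\le (G_m-W(y))+Cc_a.
\]
The term $c_a$ is controlled through the previous paragraph: there $G_m-W(y)\ge\frac\kappa2T_a\ge cc_a$. Therefore $G_m-f(y)\le C(G_m-W(y))_+$. On the remaining cubes, namely those inside $C_{k(H)-1}(m)$, \eqref{eq:geo} gives directly $G_m-f\le C T_{k(H)-1}\le C\mathsf T_H$.

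The obstacle is making the constants uniform. The shell-exit case needs $\eta_0$ small relative to $\kappa$, $C_{\rm osc}$ and the ratio $a_j/c_a$ controlled by \assmp{A3}, which is the same absorption already done in the proof of \cref{lem:mass}. We also need the comparison $T_{k(H)-1}\lesssim\mathsf T_H$, which holds because $k(H)\ge H/\lbar$.
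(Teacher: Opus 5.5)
Your proposal is correct and follows essentially the same route as the paper: it splits into the shells $E_a(m)$, $a<k(H)$, where the score gap $\frac\kappa2T_a$ absorbs the birth error, and the cubes inside $C_{k(H)-1}(m)$, where \eqref{eq:geo} and $T_{k(H)-1}\lesssim\mathsf T_H$ give all three bounds. The one small difference is on the shells in \eqref{eq:envelope-1}: you bound the pointwise loss through \eqref{eq:osc-shell} and the cube average, whereas the paper reads $G_m-f\le C_{\rm near}T_{a-1}\le CT_a$ directly off \eqref{eq:geo}, which avoids the oscillation lemma; either bound is then absorbed into $C(G_m-W(y))_+$ in the same way.
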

\begin{proof}
    Let $k=k(H)$ and $y\in D\in\scD$.
    By \cref{lem:gap} (applied as in \cref{lem:mass}), either $D\subset E_a(m)$ for some $a<k$, or $D\subset C_{k-1}(m)$.
    \emph{Shells.}
    If $D\subset E_a(m)$, then by \eqref{eq:geo-lower} and \eqref{eq:geo}, $\kappa T_a\le G_m-f(y';m)\le C_{\rm near}T_{a-1}\le CT_a$ for all $y'\in E_a(m)$, so that $G_m-Q(m,D)\ge\kappa T_a$; since $\mathrm{depth}(D)\ge L_{a-1}+1$, the birth error is at most $\frac\eta4a_{\mathrm{depth}(D)}\le C\eta c_a\le C\eta T_a$, and for $\eta$ small
    \begin{align*}
        G_m-W(D)\ge\frac\kappa2T_a>0,\qquad
        G_m-f(y;m)\le CT_a\le C(G_m-W(D))=C(G_m-W(y))_+,
    \end{align*}
    which gives \eqref{eq:envelope-1} and \eqref{eq:envelope-2} on the shells.
    \emph{Inside.}
    If $D\subset C_{k-1}(m)$, then $\mathrm{depth}(D)\ge L_{k-1}+1$ and the birth error is at most $C\eta c_k\le Ca_H$, while $0\le G_m-f(y';m)\le C_{\rm near}T_{k-1}\le C\mathsf T_H$ for $y'\in C_{k-1}(m)$ by \eqref{eq:geo} and $T_{k-1}\asymp\mathsf T_H$; hence $G_m-f(y;m)\le C\mathsf T_H$, $W(D)\le Q(m,D)+Ca_H\le G_m+Ca_H$, and, if $D\subset P^\ast_H(m)\subset C_{k-1}(m)$, $W(D)\ge Q(m,D)-Ca_H\ge G_m-C\mathsf T_H-Ca_H\ge G_m-C\mathsf T_H$.
\end{proof}

\begin{lemma}[Terminal reward and KL]\label{lem:small-terminal}
    For every history and every $m$,
    \begin{align}
        \KL{\Phat(\cdot\mid m)}{\mu}\le dH\log2.
        \label{eq:small-kl}
    \end{align}
    On $\scE$, for every $m$,
    \begin{align}
        G_m-\bE_{\Phat(\cdot\mid m)}\qty[f(\cdot;m)]\le C\mathsf T_H\le C(H+1)^{-(\gamma-1)}.
        \label{eq:small-terminal}
    \end{align}
\end{lemma}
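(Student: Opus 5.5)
The KL bound \eqref{eq:small-kl} is deterministic and uses only the structure of the score. By \cref{lem:mask}, $W_H(m,\cdot)$ is constant on the cubes of a dyadic partition $\scD_H(m)$. With $S=1$, every cube created at step $r<H$ has depth $r+1\le H$, and the root has depth $0$. Hence $W_H(m,\cdot)$ is measurable with respect to the depth-$H$ dyadic $\sigma$-algebra, and so is the density $\dd\Phat/\dd\mu\propto e^{W_H/\vartheta_H}$. Consequently $\Phat(\cdot\mid m)$ is uniform inside each depth-$H$ cube. The KL divergence therefore reduces to the discrete KL between the induced law on the $2^{dH}$ depth-$H$ cubes and the uniform law on them, which is at most $\log 2^{dH}=dH\log2$. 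No event is needed for this step.

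For \eqref{eq:small-terminal}, I work on $\scE$, where \cref{lem:small-progress} shows that $W_H$ is $\eta$-birth-accurate on a partition of depth at most $H$ containing $P^\ast_H(m)$ as a union of cubes. This puts us exactly in the setting of \cref{lem:envelope}. Integrating \eqref{eq:envelope-1} against $\Phat$ gives
\[
G_m-\bE_{\Phat}[f]\le C\qty(\bE_{\Phat}\qty[(G_m-W_H)_+]+\mathsf T_H),
\]
so it remains to show that $\bE_{\Phat}[(G_m-W_H)_+]\lesssim\mathsf T_H$.

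I bound this expectation with the Gibbs variational principle at temperature $\vartheta=\vartheta_H$. Writing $\Phi(w)=\vartheta\log\int e^{w/\vartheta}\dd\mu$, we have $\bE_{\Phat}[W_H]=\Phi(W_H)+\vartheta\KL{\Phat}{\mu}\ge\Phi(W_H)$. Restricting the integral to $P^\ast_H(m)$ and using \eqref{eq:envelope-3} gives
\[
\Phi(W_H)\ge G_m-C\mathsf T_H-\vartheta dH\log2 .
\]
Since $\vartheta_H\asymp(H+1)^{-\gamma}$, the last term is $O((H+1)^{1-\gamma})=O(\mathsf T_H)$. Hence $\bE_{\Phat}[G_m-W_H]\le C\mathsf T_H$.

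The obstacle is passing from this signed bound to the positive part. Because $(x)_+=x+(x)_-$, I use \eqref{eq:envelope-2}: it gives $(G_m-W_H)_-\le Ca_H$ pointwise, and $a_H\lesssim\mathsf T_H$. Therefore $\bE_{\Phat}[(G_m-W_H)_+]\le C\mathsf T_H+Ca_H\le C\mathsf T_H$. Combining this with the display above yields \eqref{eq:small-terminal}, and $\mathsf T_H\asymp(H+1)^{-(\gamma-1)}$ gives the final form.

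The care needed is checking that all the constants in \cref{lem:envelope} are uniform in $m$ and $H$, and that the entropy term $\vartheta_H dH\log2$ is indeed of order $\mathsf T_H$; both follow from $\vartheta_H\asymp a_H$.
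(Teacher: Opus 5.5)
Your proposal is correct and follows the paper's proof essentially step for step: the same uniform-on-depth-$H$-cells computation for the KL bound; the same Gibbs variational lower bound, where restricting the partition function to $P^\ast_H(m)$ is equivalent to the paper's comparison with $\mu(\cdot\mid P^\ast_H(m))$; and the same use of \eqref{eq:envelope-2} to pass to the positive part before applying \eqref{eq:envelope-1}. The only thing to add is that \cref{lem:envelope} and \cref{lem:small-progress} require $H\ge1$, so the case $H=0$ (uniform output) should be handled separately via $G_m-\bE_\mu f\le2\le C\mathsf T_0$.
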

\begin{proof}
    \emph{KL.}
    By \cref{lem:mask}, $\Phat(\cdot\mid m)$ is uniform inside every cube of $\scD_H(m)$, whose cubes have depth at most $H$ (with $S=1$); refining $\scD_H(m)$ to the uniform grid of depth $H$ keeps $\Phat$ uniform on each cell, so with $p_D=\Phat(D\mid m)$ for the $2^{dH}$ cells,
    \begin{align*}
        \KL{\Phat(\cdot\mid m)}{\mu}=\sum_Dp_D\log\frac{p_D}{2^{-dH}}=dH\log2+\sum_Dp_D\log p_D\le dH\log2.
    \end{align*}
    This holds on every history; for $H=0$ the divergence is $0$.

    \emph{Reward.}
    For $H=0$ the claim is trivial since $G_m-\bE_\mu f\le2$; let $H\ge1$ and write $W=W_H(m,\cdot)$, which is $\eta$-birth-accurate on $\scD_H(m)$ on $\scE$ by \cref{lem:small-progress}.
    Let $U_\ast=\mu(\cdot\mid P^\ast_H(m))$ be the uniform distribution on the correct cube, a union of cubes of $\scD_H(m)$.
    The Gibbs variational principle for $\Phat=\Gibbs{W}{\vartheta_H}$, \eqref{eq:envelope-3} and $\KL{U_\ast}{\mu}=dH\log2$ give
    \begin{align*}
        \bE_{\Phat}[W]-\vartheta_H\KL{\Phat}{\mu}\ge\bE_{U_\ast}[W]-\vartheta_H\KL{U_\ast}{\mu}\ge G_m-C\mathsf T_H-\vartheta_HdH\log2,
    \end{align*}
    hence, by $\KL{\Phat}{\mu}\ge0$ and $\vartheta_HH\le Ca_HH\le C\mathsf T_H$,
    \begin{align*}
        \bE_{\Phat}\qty[G_m-W]\le C\mathsf T_H.
    \end{align*}
    By \eqref{eq:envelope-2}, $(G_m-W)_+\le(G_m-W)+Ca_H$, so $\bE_{\Phat}[(G_m-W)_+]\le C\mathsf T_H$, and \eqref{eq:envelope-1} yields \eqref{eq:small-terminal}.
    No sup-norm accuracy of the final score is used: the old cubes carry only the accuracy of their own birth, and the envelope of \cref{lem:envelope} is what the variational comparison needs.
\end{proof}

\subsection{Proof of the Small-Regularization Bound}
\label{app:small-proof}

The number of queries is $F_n(H)\le n$ by \eqref{eq:small-H}.
For every $m$, $J^\ast_\rho(m)=\rho\log\int e^{f(y;m)/\rho}\dd{\mu}(y)\le G_m$ (and $J^\ast_0(m)=G_m$), so, by \eqref{eq:small-kl},
\begin{align*}
    \Reg_\rho(\Phat)
    &=\frac1M\sum_{m}\qty[J^\ast_\rho(m)-\bE_{\Phat(\cdot\mid m)}[f(\cdot;m)]+\rho\KL{\Phat(\cdot\mid m)}{\mu}]\\
    &\le\frac1M\sum_m\qty[G_m-\bE_{\Phat(\cdot\mid m)}[f(\cdot;m)]]+\rho\,dH\log2.
\end{align*}
On $\scE$ the bracket is at most $C(H+1)^{-(\gamma-1)}$ by \eqref{eq:small-terminal}, and on $\scE^c$ it is at most $2$ since $f\in[-1,1]$; $\Pr(\scE^c)\le\bar\delta_H=(H+1)^{-\gamma}$ by \cref{lem:small-progress}.
Therefore
\begin{align}
    \bE\qty[\Reg_\rho(\Phat)]\le C(H+1)^{-(\gamma-1)}+\rho\,dH\log2,
    \label{eq:small-regret-H}
\end{align}
and for $\rho=0$ the last term vanishes.
For $0<\rho\le c_{\rm S}(1+t)^{-\gamma/(2\gamma+1)}$, the upper bound in \eqref{eq:small-depth} gives
\begin{align*}
    \rho\,dH\log2
    &\le C\rho(H+1)^\gamma(H+1)^{-(\gamma-1)}\\
    &\le Cc_{\rm S}(1+t)^{-\frac\gamma{2\gamma+1}}(1+t)^{\frac\gamma{2\gamma+1}}(H+1)^{-(\gamma-1)}
    \le C(H+1)^{-(\gamma-1)}.
\end{align*}
In both cases, the lower bound in \eqref{eq:small-depth} yields
\begin{align*}
    \bE\qty[\Reg_\rho(\Phat)]\le C(H+1)^{-(\gamma-1)}\le C\qty(1+\frac{n}{M^{\zeta_M}L_n^{D_b}})^{-p_0},
\end{align*}
where the factor $(1+\sigma^2)$ is absorbed into $C$; this is the first case of \eqref{eq:upper}, and the case $H=0$ (uniform output, regret at most $2$) is included by adjusting the constant.
The constant $c_{\rm S}$ is a structural constant that may be taken as small as we like; on the range of the small-regularization bound we have $(1+t)\rho^{2+1/\gamma}\le c_{\rm S}^{2+1/\gamma}$, so the first lower bound of \eqref{eq:lower} applies on that whole range as soon as $c_{\rm S}^{2+1/\gamma}\le c_0$, where $c_0$ is the constant of the small-regularization bound of \cref{thm:lower}. \qed

\section{Proof of \cref{thm:lower}}
\label{app:lower}

Throughout this section, $\sigma>0$ is fixed and $\lambda\ge\Lambda_{\alpha,d}$.
In the lower bound we use independent $N(0,\sigma^2)$ noise, which belongs to $\scN_\sigma$, and we fix the zoom depths $\ell_i\equiv\lstar$; we may even reveal them to the learner, which only strengthens the statement.
We write
\begin{align*}
    t=\frac nM,\qquad p=\frac{2\alpha}{2\alpha+d},\qquad p_0=\frac{\gamma-1}{2\gamma+1},\qquad q=2^{-\lstar}.
\end{align*}
Both bounds are first proved for a single prompt and then transferred to $M$ prompts; the transfer is \cref{lem:bayes-transfer} of \cref{app:lower-reduction} and is common to the two.
The rest of the section proves them one after the other, each in three steps: a hard family, the information bound for that family, and the assembly.
The small-regularization bound comes first.
Its family, built in \cref{app:two-choice}, places the bump of each level in one of two cubes; \cref{app:lower-small} shows that a cube query at any depth carries little information about those placements, so that a constant fraction of them stays undecodable within the budget, and \cref{app:lower-small-conclusion} assembles the bound.
The large-budget bound follows.
Its family, built in \cref{app:lower-family}, has a Gibbs policy that spreads a comparable mass over a band of levels around $\rho^{-1/\gamma}$ and perturbs each level by bumps with random signs; \cref{app:lower-decode} bounds the information that the queries carry about those signs, and \cref{app:lower-conclusion} assembles the bound.

\subsection{Reduction to a Single Prompt}
\label{app:lower-reduction}
For a fixed prompt, the problem is that of a single sequence $(g_i)_{i\ge1}$ with regret $J^\ast_\rho-J_\rho(P)$.
For a prior $\pi$ on single-prompt instances, let $B_\pi(k,\rho)$ be the Bayes risk of the best algorithm that uses $k$ adaptive cube queries; it is nonincreasing in $k$.
\begin{lemma}[Bayes transfer]\label{lem:bayes-transfer}
    Let $k=\max\{1,\lceil2t\rceil\}$, let $\pi$ be any prior on single-prompt instances that may depend on $(n,M,\rho)$ but not on the data, and put the product prior $\pi^{\otimes M}$ on the $M$ sequences. Then
    \begin{align}
        \Rmin(n,M,\rho)\ge\frac12B_\pi(k,\rho).
        \label{eq:bayes-transfer}
    \end{align}
\end{lemma}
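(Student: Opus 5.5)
The plan is to bound the minimax risk below by the Bayes risk under the product prior $\pi^{\otimes M}$, and then reduce that Bayes risk to a single-prompt problem. The reduction embeds any $M$-prompt algorithm into a single-prompt algorithm with a random query budget, and then controls that budget by a concentration argument.

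\emph{Step 1 (minimax to Bayes).} Since $\Rmin$ is an infimum of suprema over $f\in\scF_{M,\theta}$ and noises, it dominates the infimum over algorithms of the Bayes risk under $\pi^{\otimes M}$ with the fixed Gaussian noise. Fix any algorithm $\scA$ with $n$ queries. The regret is the average over prompts, $\Reg_\rho(\Phat)=\frac1M\sum_m[J^\ast_\rho(m)-J_\rho(\Phat;m)]$. It therefore suffices to show that each prompt-wise Bayes regret is at least $\frac12B_\pi(k,\rho)$.

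\emph{Step 2 (simulation).} Fix a prompt $m$. From $\scA$ we build a single-prompt algorithm $\scA_m$ that faces an instance drawn from $\pi$ as prompt $m$. It does the following.
\begin{itemize}
\item It samples the other $M-1$ instances from $\pi$ itself, using private randomness.
\item It samples the exogenous prompt sequence $X_1,\ldots,X_n\sim\nu$ itself.
\item It answers every query of $\scA$ on a prompt $m'\ne m$ by computing $Q(m',D)$ plus fresh $N(0,\sigma^2)$ noise.
\item It forwards queries on prompt $m$ to the true oracle.
\item At the end it outputs $\Phat(\cdot\mid m)$.
\end{itemize}
Because of the product prior and the independence of prompts and noise, the joint law of everything under $\scA_m$ equals the law under $\scA$ with $\pi^{\otimes M}$. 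Hence the Bayes regret of $\scA$ on prompt $m$ equals that of $\scA_m$. The only defect is that $\scA_m$ uses $N_m=\#\{t\le n:X_t=m\}$ queries, which is random. Crucially, $N_m$ is independent of the instance and of the noise, because prompts are exogenous.

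\emph{Step 3 (budget control).} The count satisfies $N_m\sim\mathrm{Bin}(n,1/M)$ with mean $t$. By Markov's inequality $\Pr(N_m>2t)\le\frac12$, so $\Pr(N_m\le k)\ge\frac12$ with $k=\max\{1,\lceil2t\rceil\}$. The prompt sequence can be drawn first, since it is independent of the instance and noise. Condition on a realization with $N_m\le k$. Under this conditioning the instance still has law $\pi$, and $\scA_m$ is a deterministic-budget algorithm using at most $k$ queries, possibly padded by dummy queries. Its conditional Bayes regret is therefore at least $B_\pi(k,\rho)$, using monotonicity of $B_\pi$ in the budget.

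This conditioning is the delicate point, and it needs care. The query times for prompt $m$ are fixed by $X_{1:n}$, and $\scA_m$ with $X_{1:n}$ hard-coded is a legitimate adaptive algorithm, so the conditioning is legitimate. Regret is nonnegative, so the complementary event contributes at least $0$. This gives per-prompt Bayes regret $\ge\frac12B_\pi(k,\rho)$. Averaging over $m$ and taking the infimum over $\scA$ yields \eqref{eq:bayes-transfer}.
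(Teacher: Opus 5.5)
Your proposal is correct and follows essentially the paper's argument. The paper phrases the reduction as a strengthened experiment in which the learner is told the prompt sequence and the rewards of the other prompts; this is equivalent to your simulation of them from the product prior using private randomness. Both arguments then bound the per-prompt Bayes regret below by $\Pr(N_m\le k)\,B_\pi(k,\rho)\ge\frac12B_\pi(k,\rho)$, using $N_m\sim\mathrm{Bin}(n,1/M)$, monotonicity of $B_\pi$ in the budget, nonnegativity of the regret, and Markov's inequality. Your explicit conditioning on $X_{1:n}$, with the prompt sequence hard-coded into the single-prompt algorithm, spells out the paper's step $\bE_{N_m}B_\pi(N_m,\rho)$ in full detail.
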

\begin{proof}
    Consider the strengthened experiment in which the learner is told the whole prompt sequence $X_1,\ldots,X_n$ in advance and, when evaluated on prompt $m$, the true functions of all other prompts.
    Since the sequences are independent under the product prior and the noise is independent across rounds, the answers to queries on other prompts can be simulated from the known functions and independent noise; hence the only source of information about prompt $m$ is its own
    \begin{align*}
        N_m=\sum_{t'=1}^n\ind{X_{t'}=m}\sim\Bin(n,1/M)
    \end{align*}
    queries, and adaptivity across prompts does not help.
    The Bayes regret on prompt $m$ is therefore at least
    \begin{align*}
        \bE_{N_m}B_\pi(N_m,\rho)\ge\Pr(N_m\le k)B_\pi(k,\rho)\ge\frac12B_\pi(k,\rho),
    \end{align*}
    where the last step is Markov's inequality with $\bE N_m=t\le k/2$.
    Averaging over $m$ and using that the minimax risk dominates the Bayes risk gives \eqref{eq:bayes-transfer}.
    The prior is fixed before the data and is not re-chosen according to the realized $N_m$.
\end{proof}
\begin{lemma}[KL divergence of adaptive cube queries]\label{lem:kl-adaptive}
    Let $f,f'$ be two instances and let $\bar P_f,\bar P_{f'}$ be the laws of the transcript (internal randomness, prompts, queried cubes, observations) of an algorithm with $k$ queries under Gaussian noise. Then
    \begin{align}
        \KL{\bar P_f}{\bar P_{f'}}=\frac1{2\sigma^2}\bE_f\sum_{r'=1}^k\qty[Q_f(X_{r'},D_{r'})-Q_{f'}(X_{r'},D_{r'})]^2.
        \label{eq:kl-adaptive}
    \end{align}
\end{lemma}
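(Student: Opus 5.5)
The plan is the standard chain-rule decomposition of the KL divergence between transcript laws, combined with the Gaussian KL formula for a single observation. I will write the transcript as the sequence $(U,X_1,D_1,Z_1,\ldots,X_k,D_k,Z_k)$, where $U$ is the internal randomness of the algorithm. Under either instance, the density of the transcript factors into four kinds of terms:
\begin{itemize}
\item the law of $U$;
\item the conditional law of each prompt $X_{r'}$, which is $\nu$ independently of the past;
\item the conditional law of each query $D_{r'}$ given the history, $U$ and $X_{r'}$, which is the algorithm's rule;
\item the conditional law of each $Z_{r'}$ given everything before it and $(X_{r'},D_{r'})$, which is $N(Q_f(X_{r'},D_{r'}),\sigma^2)$.
\end{itemize}
The first three kinds of factors do not depend on the instance. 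They are therefore identical under $f$ and $f'$ and cancel in the log-likelihood ratio.

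Next I apply the chain rule for KL divergence, in the form of the expected log-likelihood ratio under $\bar P_f$. After the cancellation, the log-likelihood ratio is a sum over $r'$ of the log ratio of the two Gaussian densities evaluated at $Z_{r'}$. Taking conditional expectations given the history up to $(X_{r'},D_{r'})$ uses the identity $\KL{N(a,\sigma^2)}{N(b,\sigma^2)}=(a-b)^2/(2\sigma^2)$. Each term thus equals $[Q_f(X_{r'},D_{r'})-Q_{f'}(X_{r'},D_{r'})]^2/(2\sigma^2)$. The tower property and linearity of expectation then give \eqref{eq:kl-adaptive}.

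The only point needing care is measurability and integrability, which makes the chain rule legitimate when query choices are adaptive and possibly randomized. I handle this by conditioning on $U$, which makes the algorithm deterministic, and integrating over $U$ at the end; this integration contributes zero KL. Integrability is immediate because $|Q_f|,|Q_{f'}|\le1$, so each term is bounded by $2/\sigma^2$.

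The count is fixed at $k$ queries, so no stopping-time issue arises. If an algorithm stops early, I pad its transcript with dummy queries that both instances answer identically; these add no KL, and the identity still holds.
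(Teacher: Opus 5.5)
Your proposal is correct and matches the paper's proof. In both, the chain rule cancels the instance-independent kernels (internal randomness, exogenous prompts, the query rule), and each Gaussian observation kernel contributes $(Q_f-Q_{f'})^2/(2\sigma^2)$, which you then sum via the tower property. Your remarks on conditioning on the internal randomness and on integrability (the means are bounded by $1$) are details the paper leaves implicit.
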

\begin{proof}
    By the chain rule, the kernels of the internal randomness, of the prompt and of the choice of $D_{r'}$ given the past are identical under both laws and contribute zero; the observation kernel given $(X_{r'},D_{r'})$ is Gaussian with the same variance, and the KL divergence of two such Gaussians is the squared mean difference divided by $2\sigma^2$.
\end{proof}

\subsection{A Two-Choice Family}
\label{app:two-choice}
For $r_c=1/8$ and $m_\alpha=\lceil\alpha\rceil+1$, define on $\bR^d$
\begin{align}
    \psi_0(u)&=\begin{cases}
        [1-(\norm{u}_\infty/r_c)^\alpha]_+,&0<\alpha\le1,\\
        [1-\norm{u}_2^2/r_c^2]_+^{m_\alpha},&\alpha>1,
    \end{cases}
    \qquad \psi_z(u)=\psi_0(u-z),\notag\\
    B_{\alpha,d}&=\max\{8^\beta,\,|\psi_0|_\alpha\},\qquad
    \Lambda_{\alpha,d}=2B_{\alpha,d}.
    \label{eq:richness-threshold}
\end{align}
Here $|\cdot|_\alpha$ is the zero-extension seminorm in \assmp{A1}.
For $\alpha\le1$, $B_{\alpha,d}=8^\alpha$, so $\Lambda_{\alpha,d}=2\cdot8^\alpha$ as before.
For $\alpha>1$, the truncated polynomial is $C^{m_\alpha-1}$ with bounded derivatives, hence $B_{\alpha,d}<\infty$, including integer $\alpha$.
In both cases $\psi_0$ has height one, a unique maximum at zero, and support inside $[-r_c,r_c]^d$; its value is strictly decreasing with radius inside its support.
We use the following elementary gluing bound: disjoint translates of $\psi_0$, with either sign, have seminorm at most $2B_{\alpha,d}$.
Indeed, at each point only one component contributes to any derivative; for the highest derivative at points in different supports, insert a zero value at each support boundary and bound each distance by the distance between the two points.
The same argument applies to disjoint copies $A\psi_0((u-v)/h)$ with $h\le1$ and $|A|h^{-\alpha}\le1$: every derivative of order $j\le\lceil\alpha\rceil-1$ is bounded by the corresponding derivative of $\psi_0$, since $|A|h^{-j}\le1$, and the highest derivative has the same H\"older bound before gluing.
Let $r_c=1/8$, $q=2^{-\lstar}$, $\ell_i\equiv\lstar$, and
\begin{align*}
    z_0&=(1/4+q/2,\ 1/2+q/2,\ldots,1/2+q/2),\\
    z_1&=(3/4+q/2,\ 1/2+q/2,\ldots,1/2+q/2),\\
    z_-&=(1/2+q/2,\ 1/2+q/2,\ldots,1/2+q/2).
\end{align*}
For bits $\omega_{i,m}\in\{0,1\}$ let
\begin{align}
    g_{1,m}=\psi_{z_{\omega_{1,m}}},\qquad
    g_{i,m}=\psi_{z_{\omega_{i,m}}}-\psi_{z_-}\quad(i\ge2).
    \label{eq:two-choice}
\end{align}
Let $Q_b$ be the depth-$\lstar$ dyadic cube centered at $z_b$, for $b\in\{0,1\}$.
Explicitly,
\begin{align*}
    Q_0&=[1/4,1/4+q)\times[1/2,1/2+q)^{d-1},\\
    Q_1&=[3/4,3/4+q)\times[1/2,1/2+q)^{d-1}.
\end{align*}
The bits $\omega$ index reward instances, not response bits: $\omega_{i,m}=b$ selects a rescaled copy of $Q_b$ inside $C_{i-1}(m)$ as the next active cell.
The same disjoint local pair is used at every level and does not cover $\Omega$.
By \assmp{A3}, $q<r_c$.
The bumps have disjoint supports and equal integrals, so the deeper primitives have zero mean; their zero extensions have $|g_i^0|_\alpha\le\Lambda_{\alpha,d}\le\lambda$ by the gluing bound.
Each primitive has maximum one at $z_{\omega_{i,m}}$, so the family lies in $\scF_{M,\theta}$ whenever $\lambda\ge\Lambda_{\alpha,d}$.
The adaptive lower bound of \cref{app:lower-small} and the fixed-sampling bounds of \cref{app:fixed} both use this family.

\subsection{Small Regularization: The Adaptive Exploration Lower Bound}
\label{app:lower-small}

By \cref{lem:bayes-transfer} and $1+k\le3(1+t)$, it suffices to exhibit, for a single prompt, a prior $\pi_{k,0}$ with
\begin{align}
    B_{\pi_{k,0}}(k,0)\ge c(1+k)^{-p_0},
    \label{eq:one-prompt-small}
\end{align}
which \cref{lem:reg-transfer} then transfers from $\rho=0$ to $\rho>0$.

\begin{lemma}[The regularization changes the optimal value by $O(\rho^{1-1/\gamma})$]\label{lem:reg-transfer}
    There is a constant $U$ depending only on $\theta$ such that for all $f\in\scF_{M,\theta}$, $m\in[M]$ and $\rho\in(0,1]$,
    \begin{align}
        0\le G_m-J^\ast_\rho(m)\le U\rho^{1-1/\gamma}.
        \label{eq:reg-gap}
    \end{align}
    Consequently, for every policy $P$, $\Reg_\rho(P)\ge\Reg_0(P)-U\rho^{1-1/\gamma}$, and
    \begin{align}
        \Rmin(n,M,\rho)\ge\qty[\Rmin(n,M,0)-U\rho^{1-1/\gamma}]_+.
        \label{eq:reg-transfer}
    \end{align}
\end{lemma}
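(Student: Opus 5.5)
Our plan rests on the Gibbs variational formula $J^\ast_\rho(m)=\rho\log\int e^{f(y;m)/\rho}\dd{\mu}(y)$, which expresses the optimal regularized value as a log-partition function. The upper bound $J^\ast_\rho(m)\le G_m$ is immediate from $f\le G_m$. The whole task is therefore a lower bound on the partition function, and we obtain it by restricting the integral to a deep active cell on which $f$ is nearly optimal. This is the same normalization argument as in \eqref{eq:ZB} of \cref{lem:exp-moment}, applied to the true reward instead of the held score.

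\emph{Lower bound on $J^\ast_\rho$.} Let $K=\rho^{-1/\gamma}\ge1$ and $k_0=\lceil K\rceil\le2K$. By \eqref{eq:geo} of \cref{lem:geometry}, $f(y;m)\ge G_m-C_{\rm near}T_{k_0}$ on $C_{k_0}(m)$. By \eqref{eq:tail}, $T_{k_0}\le C_Tk_0^{-(\gamma-1)}\le C_TK^{1-\gamma}=C_T\rho^{1-1/\gamma}$. The cell $C_{k_0}(m)$ has depth $L_{k_0}\le\lbar k_0$, so $\mu(C_{k_0}(m))\ge2^{-d\lbar k_0}\ge2^{-2d\lbar K}$. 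Restricting the integral to this cell gives
\begin{align*}
    J^\ast_\rho(m)\ge G_m-C_{\rm near}C_T\rho^{1-1/\gamma}-\rho\,2d\lbar K\log2
    =G_m-\qty(C_{\rm near}C_T+2d\lbar\log2)\rho^{1-1/\gamma},
\end{align*}
where we used $\rho K=\rho^{1-1/\gamma}$. This proves \eqref{eq:reg-gap} with $U=C_{\rm near}C_T+2d\lbar\log2$. The constant depends only on $\theta$, since the zoom depths only enter through $\lbar$.

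\emph{Consequences.} For any policy $P$ and prompt $x$, the KL term is nonnegative, so $J_\rho(P;x)\le\bE_P f=J_0(P;x)$. Combining this with \eqref{eq:reg-gap},
\begin{align*}
    J^\ast_\rho(x)-J_\rho(P;x)\ge G_x-U\rho^{1-1/\gamma}-J_0(P;x).
\end{align*}
Averaging over $x\sim\nu$ gives $\Reg_\rho(P)\ge\Reg_0(P)-U\rho^{1-1/\gamma}$. This holds pointwise for every instance and every output policy, so it survives taking the expectation over the transcript, the supremum over $(f,N)$, and the infimum over algorithms; the algorithm class does not depend on $\rho$. Since regret is also nonnegative, we get \eqref{eq:reg-transfer}.

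The only delicate point is choosing the depth $k_0$ so that the volume penalty $\rho\,d\lbar k_0$ and the tail $T_{k_0}$ both scale like $\rho^{1-1/\gamma}$; the choice $k_0\asymp\rho^{-1/\gamma}$ balances them exactly.
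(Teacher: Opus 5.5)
Your proof is correct and is essentially the paper's. Restricting the log-partition integral to $C_{\lceil\rho^{-1/\gamma}\rceil}(m)$ is the same computation as the paper's evaluation of the variational formula at the uniform law on that cell, whose KL divergence to $\mu$ is $-\log$ of the cell's volume; it gives the same constant $U=C_{\rm near}C_T+2d\lbar\log2$, and your derivation of the consequences is the same averaging argument.
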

\begin{proof}
    Since $f\le G_m$, $J^\ast_\rho(m)=\rho\log\int e^{f/\rho}\dd{\mu}\le G_m$.
    Let $K'=\lceil\rho^{-1/\gamma}\rceil$ and let $U_{K'}=\mu(\cdot\mid C_{K'}(m))$; by the variational formula, \eqref{eq:geo}, \eqref{eq:tail} and $L_{K'}\le K'\lbar\le2\lbar\rho^{-1/\gamma}$,
    \begin{align*}
        J^\ast_\rho(m)&\ge\bE_{U_{K'}}f-\rho\KL{U_{K'}}{\mu}
        \ge G_m-C_{\rm near}T_{K'}-\rho\,dL_{K'}\log2\\
        &\ge G_m-\qty[C_{\rm near}C_T+2d\lbar\log2]\rho^{1-1/\gamma}.
    \end{align*}
    For a policy $P$, $\Reg_\rho(P)=\frac1M\sum_m[J^\ast_\rho(m)-\bE_Pf+\rho\KL{P}{\mu}]\ge\frac1M\sum_m[G_m-\bE_Pf]-U\rho^{1-1/\gamma}=\Reg_0(P)-U\rho^{1-1/\gamma}$.
    Taking the supremum over instances and the infimum over algorithms (an algorithm for the problem with parameter $\rho$ is an algorithm for $\rho=0$ with the same queries) gives \eqref{eq:reg-transfer}, together with $\Rmin\ge0$.
\end{proof}

\begin{lemma}[Bayes lower bound for adaptive exploration of one prompt]\label{lem:adaptive-lower}
    Let $M=1$ and $\rho=0$.
    For every integer $s\ge2$ there is a prior $\pi_s$ on the two-choice family \eqref{eq:two-choice} such that every algorithm with $k\ge0$ adaptive cube queries has Bayes regret
    \begin{align}
        B_{\pi_s}(k,0)\ge\frac{\kappa T_{2s-1}}4\qty[1-\frac{C_{\rm adv}\,k\,c_s^2}{8\sigma^2s}]_+^2,
        \qquad C_{\rm adv}=8\sum_{r\ge1}r^22^{-r/2}.
        \label{eq:adaptive-finite}
    \end{align}
    Consequently, with $s_k=\max\{2,\lceil(C_{\rm adv}k/(4\sigma^2\zeta(\gamma)^2))^{1/(2\gamma+1)}\rceil\}$,
    \begin{align}
        B_{\pi_{s_k}}(k,0)\ge c(1+k)^{-p_0}.
        \label{eq:one-prompt-lower-0}
    \end{align}
\end{lemma}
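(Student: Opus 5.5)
Take the prior $\pi_s$ under which $\omega_1,\ldots,\omega_{s-1}$ are fixed (say to $0$, and revealed to the learner) and the bits $\omega_s,\ldots,\omega_{2s-1}$ of the two-choice family \eqref{eq:two-choice} are i.i.d.\ uniform. The deeper bits $\omega_i$, $i\ge2s$, are fixed. The plan is a standard Assouad-type argument over the $s$ unknown bits, with the information per bit controlled by \cref{lem:kl-adaptive}, followed by a choice of $s$.

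\textbf{Regret decomposition.} For any output policy $\Phat$, the random level $a$ at which a response leaves the path determines its loss: by \eqref{eq:geo-lower}, $G-f(y)\ge\kappa T_a$ for $y\in E_a$. Write $\Omega_i$ for the event that $\Phat$ places at most half its mass inside $C_i$ given that it is inside $C_{i-1}$, i.e.\ that $\Phat$ ``misdecodes'' bit $i$. I will lower bound the regret by $\frac{\kappa T_{2s-1}}{2}$ times the probability that the first misdecoded index among $s,\ldots,2s-1$ exists. The mass that reaches $C_{i-1}$ and goes to the wrong cell of the pair contributes at least $\kappa T_i\ge\kappa T_{2s-1}$. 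Since $Q_0,Q_1$ are disjoint, any policy puts mass at most $1/2$ on the correct child for at least one of the two values of each bit. Via a Bayes/Le Cam argument per bit, the Bayes probability of ``wrongly splitting'' bit $i$ is at least $\frac12(1-\mathrm{TV}_i)$, where $\mathrm{TV}_i$ is the total variation between the transcript laws with $\omega_i=0$ and $\omega_i=1$, averaged over the other bits. Chaining over levels (the mass reaching $C_{i-1}$ being a product of correct-split masses) should produce the squared bracket $[\,\cdot\,]_+^2$ in \eqref{eq:adaptive-finite}.

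\textbf{Information per bit.} By \cref{lem:kl-adaptive} and Pinsker, $\mathrm{TV}_i^2\le\frac1{4\sigma^2}\bE\sum_{r'}[Q_{\omega}(D_{r'})-Q_{\omega^{(i)}}(D_{r'})]^2$. Flipping $\omega_i$ changes $Q(D)$ only for cubes $D$ meeting $C_{i-1}$ at depth $>L_{i-1}$. This is because coarser cubes average $h_i$ to zero by \cref{lem:cube-average}, and the deeper levels, rooted in the moved cell, also have zero mean. Moreover the change is at most $c_i$ times the fraction of $D$ covered by the moved bumps, which decays like $2^{-r/2}$ in the relative depth mismatch $r$. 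Summing over $i\in[s,2s-1]$ with $c_i\le c_s$, each query can be charged to at most a few levels with weights $r^2 2^{-r/2}$. This yields $\sum_i\mathrm{TV}_i^2\le C_{\rm adv}kc_s^2/(8\sigma^2)$, so the average over the $s$ bits is at most $C_{\rm adv}kc_s^2/(8\sigma^2 s)$. Jensen then gives the bracket.

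\textbf{Assembly and main obstacle.} With $s=s_k$ the bracket is at least $1/2$, since $c_s^2\asymp s^{-2\gamma}$ and $kc_s^2/s\asymp k s^{-(2\gamma+1)}$. Then $T_{2s-1}\asymp s^{-(\gamma-1)}\asymp(1+k)^{-p_0}$ by \eqref{eq:tail}, which gives \eqref{eq:one-prompt-lower-0}. The main obstacle is the per-query charging across levels. A single deep query inside $C_{i-1}$ carries information about several bits at once, so I must show that its squared effect summed over $i$ is $O(c_s^2)$ rather than $O(sc_s^2)$. I would first try to bound the effect on level $i'<i$ by the geometric overlap $2^{-\beta\lstar(i-i')}$-type decay coming from the nested supports.
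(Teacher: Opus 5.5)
\textbf{There is a genuine gap in the information step.} It sits exactly where you flagged the main obstacle, and the charging you propose is false, not merely hard.

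Flipping $\omega_i$ with the other bits kept does not just move the level-$i$ bump. It relocates the whole cell $C_i$, together with every level $i,i+1,\ldots$ rooted in it. Take a query cube $D\subset C_{j-1}(\omega)$ of depth $L_j$ on the current path, with $s\le i<j\le 2s$. Then $D$ lies outside $C_i(\omega^{(i)})$, and the level-$i$ primitive of $\omega^{(i)}$ vanishes on $D$. So $Q_\omega(D)-Q_{\omega^{(i)}}(D)$ is the $D$-average of $\sum_{l=i}^{j}c_lh_l$, which is $\asymp\sum_{l=i}^{j-1}c_l\asymp c_s(j-i)$; this is the increment $F_j-F_i$ of the paper's spine. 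Hence one such query contributes $\sum_i[Q_\omega(D)-Q_{\omega^{(i)}}(D)]^2\asymp c_s^2(j-s)^3$, not $O(c_s^2)$.

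Adaptive algorithms do query there. Consider the greedy learner that spends $\asymp\sigma^2/c_s^2$ queries at depth $L_j$ inside its current cell to resolve level $j$, for $j=s,\ldots,s+m$. It uses $k\asymp m\sigma^2/c_s^2$ queries, yet by \cref{lem:kl-adaptive} it has $\sum_i\mathrm{KL}(\bar P_\omega\|\bar P_{\omega^{(i)}})\asymp m^4$.

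So no bound $\sum_i\mathrm{KL}_i\lesssim kc_s^2/\sigma^2$ holds for single-bit flips at a fixed $\omega$, whatever the prior. Pinsker then gives something nontrivial only when $m^4\lesssim s$, which at best yields a strictly weaker exponent than $p_0$. The decay $2^{-\beta\lstar(\cdot)}$ from nested supports does not rescue this. It controls the oscillation of coarser levels inside a cell, not the jump caused by moving a subtree.

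\textbf{The missing idea} is that the geometric damping must come from the prior, and must be applied against the likelihood rather than along a fixed $\omega$. The paper's argument runs as follows.
\begin{enumerate}
\item Randomize all bits from $s$ on. Compare $\omega$ with instances that flip $\omega_i$ and resample every later bit; this is the kernel $\scK_i$ with weight $2^{i-s+1}$.
\item At a fixed point $y$, $f_\omega(y)=F_j$ on the set $\scB_j$ of instances that first leave a $y$-dependent path at index $j$. Here $\pi(\scB_j)=2^{-(j-s+1)}$ and $|F_{j+1}-F_j|\le2c_j$.
\item The operator with kernel $\scK(\omega,\omega')(f_\omega(y)-f_{\omega'}(y))^2$ therefore has entries $2^{-(j-i)/2}(F_j-F_i)^2\le4c_s^2(j-i)^22^{-(j-i)/2}$. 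By the Schur test its norm is at most $C_{\rm adv}c_s^2$, and Jensen passes this bound to cube averages.
\item This quadratic form is evaluated on square-root likelihood vectors inside the Hellinger-affinity potential $\Phi_t=\iint\scK\,\operatorname{Aff}(P_{\omega,t},P_{\omega',t})$. The potential starts at $s$ and drops by at most $C_{\rm adv}c_s^2/(8\sigma^2)$ per query. Adaptivity is harmless because, given the history, the next cube is common to all instances.
\item At the end, Cauchy--Schwarz gives $\Phi_k\le2s\sqrt\epsilon$, where $\epsilon$ is the Bayes probability that $Y\sim\Phat$ misses $C_{2s-1}(\omega)$.
\end{enumerate}
That last Cauchy--Schwarz step, not a chaining over levels, is where the square in \eqref{eq:adaptive-finite} comes from. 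Assouad with Jensen would give a factor $1-\sqrt{x}$ instead.

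\textbf{Fixing the bits beyond $2s-1$ also hurts.} With them fixed, the instances agreeing with the path of $y$ down to level $2s-1$ form a block of mass $2^{-s}$. Its value differs from $F_{2s-1}$ by $\asymp T_{2s-2}\gg c_s$, so the norm bound degrades to order $T_{2s}^2$.

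\textbf{What is fine.} Your regret decomposition works as it stands: if any bit is wrongly split, then $\Phat(C_{2s-1}(\omega))\le1/2$. Your final choice of $s_k$ is also correct.
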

\begin{proof}
    We first prove the finite-$s$ bound by placing a prior on the unknown placements and pairing instances whose words to be decoded differ.
    For each fixed response, the reward has a spine structure that controls the information in any cube query.
    A Hellinger-affinity potential carries this control through adaptive histories; decoding the output policy then turns the remaining uncertainty into regret.
    The stated choice of $s$ yields the rate.

    \emph{Prior and comparison kernel.}
    Let $\pi=\pi_s$ be the law of the reward-instance bits $\omega=(\omega_i)_{i\ge1}$, where
    \begin{align*}
        \omega_i=0\quad(i<s),\qquad
        (\omega_i)_{i\ge s}\text{ are independent fair bits}.
    \end{align*}
    The corresponding reward is $f_\omega$ from \eqref{eq:two-choice}; by \cref{app:two-choice}, every $f_\omega$ lies in the class.
    The word to decode and its index set are
    \begin{align*}
        \varpi(\omega)=(\omega_s,\ldots,\omega_{2s-1}),\qquad
        \scI_s=\{s,\ldots,2s-1\}.
    \end{align*}
    For $i\in\scI_s$, define $\Gamma_i$ on $L^2(\pi)$ by keeping the bits before $i$, flipping $\omega_i$, and resampling the bits after $i$.
    This is an analytical comparison between reward instances, not an operation performed by the learner.
    Its kernel density with respect to $\pi$ is
    \begin{align*}
        \scK_i(\omega,\omega')=2^{i-s+1}\ind{\omega_{s:i-1}=\omega'_{s:i-1},\ \omega_i\ne\omega'_i}.
    \end{align*}
    It is symmetric, nonnegative, and has unit row integrals, so $\norm{\Gamma_i}_{L^2(\pi)\to L^2(\pi)}\le1$.
    Set
    \begin{align*}
        \Gamma=\sum_{i\in\scI_s}\Gamma_i,\qquad
        \scK=\sum_{i\in\scI_s}\scK_i.
    \end{align*}
    Then $\Gamma\bfone=s\bfone$ and $\norm{\Gamma}\le s$.
    Moreover, $\scK(\omega,\omega')>0$ implies $\varpi(\omega)\ne\varpi(\omega')$.

    \emph{The reward at a fixed point.}
    Fix $y\in\Omega$.
    We claim that there are bits $\mathsf a=(\mathsf a_s,\mathsf a_{s+1},\ldots)$ and reals $(F_j)_{j\ge s}$ such that, with
    \begin{align}
        \scB_j=\{\omega:\omega_{s:j-1}=\mathsf a_{s:j-1},\ \omega_j\ne \mathsf a_j\},\qquad \pi(\scB_j)=2^{-(j-s+1)},\qquad j\ge s,
        \label{eq:spine-sets}
    \end{align}
    which partition $\pi$-almost all of the space,
    \begin{align}
        f_\omega(y)=F_j\ \ (\omega\in\scB_j),\qquad \abs{F_{j+1}-F_j}\le2c_j,\qquad\text{so}\quad\abs{F_j-F_i}\le2c_s(j-i)\ \ (s\le i<j).
        \label{eq:spine}
    \end{align}
    The levels before $s$ and the cell $C_{s-1}$ do not depend on $\omega$.
    If $y\notin C_{s-1}$, all later components vanish at $y$.
    Thus $f_\omega(y)=\sum_{i<s}c_ih_i(y)$ for every $\omega$; choose any $\mathsf a$ and set every $F_j$ to this common value.

    Suppose now that $y\in C_{s-1}$.
    Choose $\mathsf a$ recursively: given $\mathsf a_{s:i-1}$ with $y\in C_{i-1}(\mathsf a_{<i})$, let $w$ be the local coordinate of $y$ in that cell.
    There are two cases because $Q_0$ and $Q_1$ do not cover $\Omega$.

    If $w\in Q_b$ for $b\in\{0,1\}$, set $\mathsf a_i=b$.
    For $\omega\in\scB_i$ we have $\omega_i=1-b$, so $y\notin C_i(\omega)$ and all components after level $i$ vanish at $y$.
    Also, $w$ is at $\norm{\cdot}_\infty$-distance greater than $r_c$ from both $z_{1-b}$ and $z_-$, so the level-$i$ component vanishes in this two-choice family.
    Therefore
    \begin{align*}
        f_\omega(y)=\sum_{l<i}c_lh_l(y)=:F_i\qquad(\omega\in\scB_i).
    \end{align*}
    The earlier reward components in this sum need not vanish.

    If $w\notin Q_0\cup Q_1$, then $y\notin C_i(\omega)$ for either choice of $\omega_i$.
    All components after level $i$ vanish, while the level-$i$ component may be nonzero:
    \begin{align*}
        f_\omega(y)=\sum_{l<i}c_lh_l(y)
        +c_i\bigl[\psi_{z_{\omega_i}}(w)-\psi_{z_-}(w)\bigr].
    \end{align*}
    Set $\mathsf a_i=0$ and let $F_i$ be the value for $\omega_i=1$.
    For $j>i$, set $\mathsf a_j=0$ and let $F_j$ be the value for $\omega_i=0$; the recursion stops because later bits have no effect on the reward at $y$.

    When the recursion continues through level $i$, moving the first mismatch from $i$ to $i+1$ changes the level-$i$ contribution by at most $c_i$, and may add a level-$(i+1)$ contribution of at most $c_{i+1}$ if the recursion stops there.
    At the stopping level the difference is at most
    \begin{align*}
        c_i\abs{\psi_{z_0}(w)-\psi_{z_1}(w)}\le c_i;
    \end{align*}
    thereafter the values $F_j$ are constant.
    The bumps have disjoint supports and height at most one, so in all cases $\abs{F_{j+1}-F_j}\le c_j+c_{j+1}\le2c_j$.
    Summing these differences and using $c_l\le c_s$ for $l\ge s$ proves the last bound in \eqref{eq:spine}.

    \emph{Information of one cube query.}
    Let $\mathsf T_y$ be the operator on $L^2(\pi)$ with kernel
    \begin{align*}
        \scK(\omega,\omega')\bigl(f_\omega(y)-f_{\omega'}(y)\bigr)^2,
    \end{align*}
    and normalize the indicators of the sets in \eqref{eq:spine-sets} by
    \begin{align*}
        \mathsf b_j=\frac{\ind{\scB_j}}{\sqrt{\pi(\scB_j)}}.
    \end{align*}
    Take $\omega\in\scB_i$ and $\omega'\in\scB_j$ with $i<j$.
    Their bits agree before $i$ and differ at $i$, so only $\scK_i$ can be nonzero; this requires $i\in\scI_s$.
    On $\scB_j\times\scB_j$ the reward difference is zero.
    Thus the kernel of $\mathsf T_y$ is constant on each block $\scB_i\times\scB_j$, and the operator vanishes outside the closed span of $(\mathsf b_j)_{j\ge s}$.
    For $i<j$ its matrix entries are
    \begin{align}
        \ev{\mathsf b_i,\mathsf T_y\mathsf b_j}=\ind{i\in\scI_s}\,2^{i-s+1}(F_j-F_i)^2\sqrt{\pi(\scB_i)\pi(\scB_j)}
        =\ind{i\in\scI_s}\,2^{-(j-i)/2}(F_j-F_i)^2.
        \label{eq:adaptive-matrix}
    \end{align}
    By \eqref{eq:spine},
    \begin{align*}
        (F_j-F_i)^2\le4c_s^2(j-i)^2.
    \end{align*}
    Each row of this symmetric matrix therefore has absolute sum at most
    \begin{align*}
        8c_s^2\sum_{r\ge1}r^22^{-r/2}
        =C_{\rm adv}c_s^2.
    \end{align*}
    The Schur test gives $\norm{\mathsf T_y}\le C_{\rm adv}c_s^2$.
    For a dyadic cube $D$ and a nonnegative $v\in L^2(\pi)$, Jensen's inequality applied to the cube average and $\scK\ge0$ give
    \begin{align}
        \iint\scK(\omega,\omega')v(\omega)v(\omega')\qty(Q_\omega(D)-Q_{\omega'}(D))^2\dd{\pi}(\omega)\dd{\pi}(\omega')
        &\le\frac1{\mu(D)}\int_D\ev{v,\mathsf T_yv}\dd{\mu}(y)\notag\\
        &\le C_{\rm adv}c_s^2\norm{v}_{L^2(\pi)}^2,
        \label{eq:adaptive-cube}
    \end{align}
    uniformly in the depth of $D$.

    \emph{A potential along the adaptive history.}
    Fix an algorithm and use $N(0,\sigma^2)$ observation noise.
    Let $P_{\omega,t}$ be the law under $f_\omega$ of the history $\mathsf h_t$: the internal randomness, queried cubes, and observations through query $t$.
    Let $P_{{\rm ref},t}$ be the law of the same algorithm when every observation is $N(0,\sigma^2)$ regardless of the query.
    This is a comparison law and need not be a reward instance.
    The Gaussian observation densities are positive, so the likelihood ratio exists and satisfies
    \begin{align*}
        \mathsf L_t(\omega,\mathsf h)
        =\frac{\dd{P_{\omega,t}}}{\dd{P_{{\rm ref},t}}}(\mathsf h),
        \qquad
        \int\mathsf L_t(\omega,\mathsf h)\dd{P_{{\rm ref},t}}(\mathsf h)=1.
    \end{align*}
    For probability laws $P,P'$ dominated by a common law $R$, write their Hellinger affinity as
    \begin{align*}
        \operatorname{Aff}(P,P')
        :=\int\sqrt{\frac{\dd{P}}{\dd{R}}\frac{\dd{P'}}{\dd{R}}}\,\dd{R}.
    \end{align*}
    Put $v_{\mathsf h}(\omega)=\sqrt{\mathsf L_t(\omega,\mathsf h)}$ and define
    \begin{align*}
        \Phi_t
        &=\int\ev{v_{\mathsf h},\Gamma v_{\mathsf h}}
            \dd{P_{{\rm ref},t}}(\mathsf h),\\
        \Phi_0&=\ev{\bfone,\Gamma\bfone}=s,
        \qquad
        \int\norm{v_{\mathsf h}}^2_{L^2(\pi)}
            \dd{P_{{\rm ref},t}}(\mathsf h)=1.
    \end{align*}
    Expanding $\Gamma$ with $R=P_{{\rm ref},t}$ shows what this potential measures:
    \begin{align*}
        \Phi_t
        &=\iint\scK(\omega,\omega')
        \left[
            \int\sqrt{\mathsf L_t(\omega,\mathsf h)\mathsf L_t(\omega',\mathsf h)}
            \dd{P_{{\rm ref},t}}(\mathsf h)
        \right]
        \dd{\pi}(\omega)\dd{\pi}(\omega')\\
        &=\iint\scK(\omega,\omega')\operatorname{Aff}(P_{\omega,t},P_{\omega',t})
        \dd{\pi}(\omega)\dd{\pi}(\omega').
    \end{align*}
    The affinity equals one for identical laws and becomes smaller as they separate; $\Phi_t$ sums these affinities over pairs selected by $\scK$.

    Condition on the current history and any fresh algorithmic randomness used to choose the next cube.
    The resulting cube $D_{t+1}$ is then the same under every reward instance.
    For arbitrary $u,u'\in\bR$, completing the square gives
    \begin{align*}
        \operatorname{Aff}\bigl(N(u,\sigma^2),N(u',\sigma^2)\bigr)
        &=\int_{\bR}\frac{\exp\bigl(-[(z-u)^2+(z-u')^2]/(4\sigma^2)\bigr)}{\sqrt{2\pi\sigma^2}}\,\dd{z}\\
        &=\exp\left[-\frac{(u-u')^2}{8\sigma^2}\right]
        \int_{\bR}\frac{\exp\bigl(-(z-(u+u')/2)^2/(2\sigma^2)\bigr)}{\sqrt{2\pi\sigma^2}}\,\dd{z}\\
        &=\exp\left[-\frac{(u-u')^2}{8\sigma^2}\right].
    \end{align*}
    Thus, for the next query,
    \begin{align*}
        \operatorname{Aff}\bigl(N(Q_\omega(D_{t+1}),\sigma^2),N(Q_{\omega'}(D_{t+1}),\sigma^2)\bigr)
        =\exp\left[-\frac{\bigl(Q_\omega(D_{t+1})-Q_{\omega'}(D_{t+1})\bigr)^2}{8\sigma^2}\right].
    \end{align*}
    Therefore $1-e^{-z}\le z$ and \eqref{eq:adaptive-cube} yield
    \begin{align}
        0\le\Phi_t-\Phi_{t+1}
        &=\int\iint\scK(\omega,\omega')v_{\mathsf h}(\omega)v_{\mathsf h}(\omega')\qty[1-e^{-(Q_\omega(D_{t+1})-Q_{\omega'}(D_{t+1}))^2/(8\sigma^2)}]\dd{\pi}\dd{\pi}\dd{P_{\rm ref}}\notag\\
        &\le\frac{C_{\rm adv}c_s^2}{8\sigma^2},
        \label{eq:adaptive-progress}
    \end{align}
    so that $\Phi_k\ge s-C_{\rm adv}kc_s^2/(8\sigma^2)$.
    The bound is applied to the likelihood vector of each history, so it covers adaptive queries and their dependence on the unknown bits; if the algorithm stops before $k$ queries, the history is completed by uninformative symbols.

    \emph{Decoding from the output.}
    After the $k$ queries, the algorithm outputs a policy $\Phat(\mathsf h_k)$.
    Draw $Y\sim\Phat(\mathsf h_k)$ and append it to the history.
    This conditional sampling rule is common to all $\omega$, so $\mathsf L_k$ and $\Phi_k$ are unchanged.

    The word $\varpi(\omega)$ determines $C_{2s-1}(\omega)$, and different words give disjoint cells.
    Decode $\hat\varpi$ as the word whose cell contains $Y$, choosing an arbitrary word if $Y$ lies in none.
    Then
    \begin{align*}
        \{\hat\varpi\ne\varpi(\omega)\}
        \subseteq \{Y\notin C_{2s-1}(\omega)\}.
    \end{align*}
    For an augmented history $\mathsf h$, define the error set and its Bayes probability by
    \begin{align*}
        \scM_{\mathsf h}
        &=\{\omega:\hat\varpi(\mathsf h)\ne\varpi(\omega)\},\\
        \epsilon
        &=\int\norm{v_{\mathsf h}\ind{\scM_{\mathsf h}}}^2_{L^2(\pi)}
            \dd{P_{\rm ref}}(\mathsf h).
    \end{align*}
    If $\scK(\omega,\omega')>0$, the two words differ, so the decoder must err on at least one of them.
    Symmetry of $\scK$, $\norm{\Gamma}\le s$, and Cauchy--Schwarz give
    \begin{align*}
        \Phi_k\le2\int\ev{v_{\mathsf h}\ind{\scM_{\mathsf h}},\Gamma v_{\mathsf h}}\dd{P_{\rm ref}}
        \le2s\qty(\int\norm{v_{\mathsf h}\ind{\scM_{\mathsf h}}}^2\dd{P_{\rm ref}})^{1/2}\qty(\int\norm{v_{\mathsf h}}^2\dd{P_{\rm ref}})^{1/2}
        =2s\sqrt\epsilon.
    \end{align*}
    Combining this with \eqref{eq:adaptive-progress} gives
    \begin{align*}
        \sqrt\epsilon
        \ge\frac12\left[1-\frac{C_{\rm adv}kc_s^2}{8\sigma^2s}\right]_+.
    \end{align*}

    \emph{Regret.}
    If $Y\notin C_{2s-1}(\omega)$, then $Y\in E_i(\omega)$ for some $i\le2s-1$.
    By \eqref{eq:geo-lower},
    \begin{align*}
        G-f_\omega(Y)\ge\kappa T_i\ge\kappa T_{2s-1}.
    \end{align*}
    The Bayes regret at $\rho=0$ is therefore at least
    \begin{align*}
        \kappa T_{2s-1}\Pr\bigl(Y\notin C_{2s-1}(\omega)\bigr)
        \ge\kappa T_{2s-1}\epsilon,
    \end{align*}
    which proves \eqref{eq:adaptive-finite}.

    Finally, the stated choice $s=s_k$ makes
    \begin{align*}
        \frac{C_{\rm adv}kc_s^2}{8\sigma^2s}
        =\frac{C_{\rm adv}k}
               {8\sigma^2\zeta(\gamma)^2s^{2\gamma+1}}
        \le\frac12.
    \end{align*}
    Hence $B_{\pi_{s_k}}(k,0)\ge\kappa T_{2s_k-1}/16
    \ge c(2s_k)^{-(\gamma-1)}$ by \eqref{eq:tail}.
    Since $s_k\le C(1+k)^{1/(2\gamma+1)}$, this proves \eqref{eq:one-prompt-lower-0}.
\end{proof}

\subsection{Small Regularization: Conclusion}
\label{app:lower-small-conclusion}

For $\rho=0$, \cref{lem:bayes-transfer} with $k=\max\{1,\lceil2t\rceil\}$ and the product prior $\pi_{s_k}^{\otimes M}$ (the primitives of different prompts are independent copies of the two-choice family, and the prompts are exogenous), together with \eqref{eq:one-prompt-lower-0} and $1+k\le3(1+t)$, gives
\begin{align*}
    \Rmin(n,M,0)\ge\frac12B_{\pi_{s_k}}(k,0)\ge c(1+t)^{-p_0}.
\end{align*}
For $\rho\in(0,1]$, \eqref{eq:reg-transfer} and the identity
\begin{align}
    \rho^{1-1/\gamma}=(1+t)^{-p_0}u^{p_0},\qquad u=(1+t)\rho^{2+1/\gamma}
    \label{eq:scale-identity}
\end{align}
(note that $(2+1/\gamma)p_0=1-1/\gamma$) give
\begin{align*}
    \Rmin(n,M,\rho)\ge\qty[c-Uu^{p_0}]_+(1+t)^{-p_0}\ge\frac c2(1+t)^{-p_0}
\end{align*}
whenever $u\le c_0$ with $Uc_0^{p_0}\le c/2$, which is the first case of \eqref{eq:lower}. \qed

\subsection{Large Budget: The Flat-Band Family}
\label{app:lower-family}

By \cref{lem:bayes-transfer} and $1+k\le3(1+t)$, it now suffices to exhibit, for a single prompt and every $k\ge1$, a prior $\pi_{k,\rho}$ with
\begin{align}
    B_{\pi_{k,\rho}}(k,\rho)\ge c\rho\min\qty{1,\qty[(1+k)\rho^{2+1/\gamma}]^{-p}}.
    \label{eq:one-prompt-lower}
\end{align}
Let $r_c=1/8$, $a=d\log(1/q)$ and let $Q_\flat$ be the cube in which every primitive of this family places its maximizer:
\begin{align*}
    Q_\flat=[1/4,1/4+q]^d,\qquad
    \mathsf T(u)=\tfrac14\bfone+qu,\qquad
    z=\frac1{4(1-q)}\bfone,\qquad
    w=\tfrac34\bfone,
\end{align*}
so that $z$ is the fixed point of $\mathsf T$.
Write $e_1=(1,0,\ldots,0)\in\bR^d$ for the first coordinate vector.
Use the bump $\psi_v$ from \eqref{eq:richness-threshold}, and for $v_i=z+t_ie_1$ with $q/16\le t_i\le q/8$ define the base family
\begin{align}
    g_1=\psi_{v_1},\qquad
    g_i=\psi_{v_i}-\psi_w\quad(i\ge2).
    \label{eq:base-family}
\end{align}
By \assmp{A3} and $\lambda\ge\Lambda_{\alpha,d}\ge2\cdot8^\beta$,
\begin{align*}
    \qty(\frac q{r_c})^{\beta}\le\frac{\cmin}{16\cdot3^\gamma},\qquad\text{so}\qquad q\le\frac{r_c}{48},
\end{align*}
and $v_i$ is at distance at least $q/4$ from the boundary of $Q_\flat$.
Hence the maximizer of $g_i$ is $v_i\in Q_\flat$, the maximum is $1$, the placement cube is $Q_\flat$ at every level, $|g_i^0|_\alpha\le\lambda$, $\int g_i=0$ for $i\ge2$, and the ranges are as required.
The value at the fixed point,
\begin{align*}
    a_i:=g_i(z)=\psi_0(t_ie_1)\in[a_-,a_+],\qquad
    a_-=\psi_0((q/8)e_1),\quad a_+=\psi_0((q/16)e_1),
\end{align*}
can be chosen continuously in $[a_-,a_+]\subset(3/4,1)$, and $g_i(u)\le a_i$ for $u\notin Q_\flat$.
Indeed, radial monotonicity gives a strictly positive width $a_+-a_->0$ depending only on the structural parameters, and $1-a_-\le B_{\alpha,d}(q/8)^\beta<1/4$ by \assmp{A3}.
For $u\notin Q_\flat$, the distance from $u$ to $v_i$ in either relevant radial norm is at least $q/4>t_i$; radial monotonicity and $\psi_w\ge0$ therefore give $g_i(u)\le a_i$.

\paragraph{Flat band.}
Let
\begin{align*}
    \tau=\rho^{-1/\gamma},\qquad
    u_0=\qty(\frac{a_-}{a\zeta(\gamma)})^{1/\gamma},\qquad
    v_0=\qty(\frac{a_+}{a\zeta(\gamma)})^{1/\gamma},\qquad
    \eta_0=v_0-u_0>0,
\end{align*}
$\tau_0=\max\{1,2/u_0,4/\eta_0\}$ and $\rho_{\rm flat}=\tau_0^{-\gamma}$.
Assume first $0<\rho\le\rho_{\rm flat}$, and let
\begin{align*}
    I=\lceil u_0\tau\rceil\ge2,\qquad J=\lfloor v_0\tau\rfloor,\qquad L=J-I+1\ge\frac{\eta_0\tau}2.
\end{align*}
Choose the values at the fixed point as
\begin{align}
    a_i=a_-\ \ (i<I),\qquad
    a_i=a\rho\zeta(\gamma)i^\gamma\ \ (I\le i\le J),\qquad
    a_i=a_+\ \ (i>J),
    \label{eq:flat-values}
\end{align}
so that $c_ia_i=a\rho$ inside the band; all values lie in $[a_-,a_+]$.
With $H_i=\sum_{j\le i}c_ja_j$ and
\begin{align*}
    B_i=q^{d(i-1)}e^{H_{i-1}/\rho},\qquad
    \frac{B_{i+1}}{B_i}=\exp\qty(-a+\frac{c_ia_i}\rho),
\end{align*}
$B_i$ is nondecreasing up to $I$, constant $=:B$ on $I,\ldots,J+1$, and nonincreasing afterwards.
On the shell $E_i=\mathsf T^{i-1}(\Omega\setminus Q_\flat)$, writing $y=\mathsf T^{i-1}(u)$,
\begin{align}
    f(y)=H_{i-1}+c_ig_i(u)+e_i(u),\qquad
    \abs{e_i(u)}\le\lambda\sum_{j<i}c_jq^{\beta(i-j)}\le\epsilon_{\rm anc}\,c_{i+1},\qquad
    \epsilon_{\rm anc}=\frac{\cmin}4,
    \label{eq:flat-ancestor}
\end{align}
by the geometric series in the proof of \cref{lem:geometry}.
With $D_{\rm b}=(1+\epsilon_{\rm anc})a/a_-$, the shell masses $w_i=\int_{E_i}e^{f/\rho}\dd{\mu}$ satisfy
\begin{align*}
    (1-q^d)e^{-D_{\rm b}}B\le w_i\le(1-q^d)e^{D_{\rm b}}B\ \ \text{inside the band},\qquad
    w_i\le e^{2a}B_{i+2}\ \ \text{in general},
\end{align*}
and since $B_{i+1}/B_i\le e^{-a/2}$ for $i\ge\lceil2v_0\tau\rceil$, the normalizing constant satisfies
\begin{align}
    \frac12\eta_0(1-q^d)e^{-D_{\rm b}}\tau B\le Z=\sum_{i\ge1}w_i\le C_Z\tau B.
    \label{eq:flat-Z}
\end{align}
Thus the Gibbs policy $P^\ast_\rho$ spreads a mass of order $1/\tau$ over each of the $L\asymp\tau$ shells of the band.

\paragraph{Perturbations.}
Let $r_U=\frac1{32}(3/4)^{1/\alpha}$ and $U=\frac12\bfone+[-r_U,r_U]^d$, a cube disjoint from $Q_\flat$ and from the supports of the positive and negative bumps.
Partition $U$ into $N^d$ cells of side $h=2r_U/N$, and let $\phi_j$ be the copy on the $j$-th cell of the balanced bump on the unit cell,
\begin{align*}
    \phi=b_{v_+}-b_{v_-},\qquad
    b_v(u)=\psi_v(u),\qquad
    v_\pm=(1/4\text{ or }3/4,\,1/2,\ldots,1/2);
\end{align*}
thus $\abs{\phi_j}\le1$, $\int\phi_j=0$, $\int\phi_j^2=h^dv_\phi$ with
\begin{align*}
    v_\phi=2\int_{\bR^d}\psi_0(u)^2\dd{u}
    =\begin{cases}
        \displaystyle\frac{4\alpha^2}{4^d(d+\alpha)(d+2\alpha)},&0<\alpha\le1,\\[1mm]
        \displaystyle\frac{2r_c^d\pi^{d/2}\Gamma(2m_\alpha+1)}{\Gamma(2m_\alpha+1+d/2)},&\alpha>1,
    \end{cases}
\end{align*}
which is strictly positive. The zero extensions obey
$|\phi_j^0|_\alpha\le2B_{\alpha,d}h^{-\alpha}$ by the scaling and gluing bounds above.
Let
\begin{align*}
    \kappa_b=\min\qty{1,\ \frac a{4a_+},\ \frac{a(2r_U)^\alpha r_c^{-\alpha}}{4\cdot8^\alpha a_+}},\qquad
    \delta_{\rm p}=\kappa_b\rho N^{-\alpha}.
\end{align*}
For $I\le i\le J$ and independent signs $\omega_{i,j}\in\{-1,1\}$, replace $g_i$ by
\begin{align}
    g_i+\frac{\delta_{\rm p}}{c_i}\sum_{j=1}^{N^d}\omega_{i,j}\phi_j.
    \label{eq:flat-bumps}
\end{align}
Since $c_ia_i=a\rho$ on the band, the amplitude $A_i=\delta_{\rm p}/c_i$ satisfies
$A_i\le1/4$ and
$A_ih^{-\alpha}\le\kappa_ba_+/(a(2r_U)^\alpha)\le1/4$.
The original and added bumps have disjoint supports, so the scaling and gluing bounds give $|g_i^0|_\alpha\le2B_{\alpha,d}\le\lambda$ uniformly in $N$ and the signs.
The perturbation has sup norm at most $1/4<a_-$ and vanishes outside $U$; it therefore keeps the zero mean, range, unique maximizer $v_i$, maximum $1$, value $a_i$ at $z$, placement cube $Q_\flat$ and the bound $g_i\le a_i$ off $Q_\flat$.
Hence the perturbed instance belongs to $\scF_{M,\theta}$ for every sign pattern.
The ancestor perturbations vanish on $Q_\flat$, so \eqref{eq:flat-ancestor} and \eqref{eq:flat-Z} hold for all signs, and each physical cell $\mathsf T^{i-1}(\text{cell}_j)$ in the band has Gibbs mass
\begin{align}
    b\ge c_{\rm mass}\frac{h^d}{\tau},\qquad c_{\rm mass}=\frac{e^{-D_{\rm b}}}{C_Z}.
    \label{eq:flat-mass}
\end{align}

\subsection{Large Budget: Decoding and Information Bound}
\label{app:lower-decode}
For laws $P,P'$ with densities $p,p'$ relative to a common law $R$, use the squared Hellinger distance
\begin{align*}
    H^2(p,p'):=\int\qty(\sqrt p-\sqrt{p'})^2\dd{R}
    =2\qty(1-\operatorname{Aff}(P,P')).
\end{align*}

\paragraph{Separation of the two hypotheses on a cell.}
Let $w$ be the conditional density of $P^\ast_\rho$ on a cell of the band under the unperturbed family; it is within a factor $e^{\pm2D_{\rm b}}$ of the uniform density on the cell and does not depend on the signs (the ancestor perturbations vanish on $Q_\flat$ and the deeper levels vanish on the shell).
With $t_\delta=\delta_{\rm p}/\rho\le1$, the conditional densities under the two signs of bit $(i,j)$ are
\begin{align*}
    q_\pm=\frac{we^{\pm t_\delta\phi_j}}{\bE_we^{\pm t_\delta\phi_j}},
\end{align*}
and with $V,V'\sim w$ independent,
\begin{align*}
    A_t:=\qty(\bE_we^{t_\delta\phi_j})\qty(\bE_we^{-t_\delta\phi_j})
    =\bE\cosh\qty(t_\delta(\phi_j(V)-\phi_j(V')))\in\qty[1+t_\delta^2\Var_w(\phi_j),\,e^2].
\end{align*}
Hence the squared Hellinger distance satisfies
\begin{align}
    H^2(q_+,q_-)=2\qty(1-A_t^{-1/2})\ge e^{-3}t_\delta^2\Var_w(\phi_j)\ge c_{\rm sep}t_\delta^2,\qquad
    c_{\rm sep}=e^{-3-2D_{\rm b}}v_\phi.
    \label{eq:flat-sep}
\end{align}

\paragraph{Decoding.}
Given any output policy with density $\hat p$, decode the sign of bit $(i,j)$ by comparing $\ev{\sqrt{\hat p},\sqrt{q_+}}$ and $\ev{\sqrt{\hat p},\sqrt{q_-}}$ on the physical cell.
If the decoded sign is wrong, the squared $L^2$ distance between $\sqrt{\hat p}$ and $\sqrt{b\,q_{\rm true}}$ on the cell is at least $\frac b4H^2(q_+,q_-)$ (the distance to the half-space of the wrong decision).
Using $\KL{P}{Q'}\ge\int(\sqrt p-\sqrt{q'})^2$, $\Reg_\rho(P)=\rho\,\KL{P}{P^\ast_\rho}$, \eqref{eq:flat-mass} and \eqref{eq:flat-sep}, and summing over cells,
\begin{align}
    \bE\,\Reg_\rho(\Phat)\ge\frac{c_{\rm mass}c_{\rm sep}\delta_{\rm p}^2h^d}{4\rho\tau}\sum_{i=I}^J\sum_{j=1}^{N^d}\Pr\qty(\text{bit }(i,j)\text{ decoded wrongly}),
    \label{eq:flat-decode}
\end{align}
where the probability is over the prior and the transcript.
Neither the unknown normalizing constant nor the mass that $\Phat$ puts outside the cells affects this bound; if $\Phat$ is singular, the regret is infinite.

\paragraph{Information.}
Let $\Phi_{i,j}=\phi_j\circ\mathsf T^{-(i-1)}$ on $\mathsf T^{i-1}(U)$ and $0$ elsewhere.
Flipping the sign $(i,j)$ changes the reward by $2\delta_{\rm p}\Phi_{i,j}$, and since the $\Phi_{i,j}$ have disjoint supports and $\abs{\Phi_{i,j}}\le1$, Jensen's inequality gives, for any cube $D$,
\begin{align}
    \sum_{i,j}\ev{\Phi_{i,j}}_D^2\le\ev{\sum_{i,j}\Phi_{i,j}^2}_D\le1,
    \label{eq:sensitivity}
\end{align}
where $\ev{\cdot}_D$ is the average over $D$.
By \cref{lem:kl-adaptive}, for every sign pattern $\omega$,
\begin{align}
    \sum_{i,j}\KL{\bar P_\omega}{\bar P_{\omega^{(i,j)}}}\le\frac{2k\delta_{\rm p}^2}{\sigma^2}.
    \label{eq:flat-info}
\end{align}
By the two-point test and Pinsker's inequality \citep[proof of Theorem 2.2(i) and Lemma 2.5(i)]{tsybakov2008introduction} averaged over all $LN^d$ bits and all sign patterns, and Jensen's inequality on the square root,
\begin{align}
    2^{-LN^d}\sum_\omega\sum_{i,j}\Pr_\omega\qty(\text{bit }(i,j)\text{ wrong})
    \ge\frac{LN^d}2\qty[1-\sqrt{\frac{k\delta_{\rm p}^2}{LN^d\sigma^2}}].
    \label{eq:flat-wrong}
\end{align}

\paragraph{Conclusion for small $\rho$.}
Choosing
\begin{align*}
    N=\max\qty{1,\left\lceil\qty(\frac{4k\kappa_b^2\rho^2}{L\sigma^2})^{1/(2\alpha+d)}\right\rceil}
\end{align*}
makes the bracket in \eqref{eq:flat-wrong} at least $1/2$, so the expected number of wrong bits is at least $LN^d/4$.
Substituting into \eqref{eq:flat-decode} with $N^dh^d=\abs{U}$ and $L/\tau\ge\eta_0/2$,
\begin{align*}
    B_{\pi_{k,\rho}}(k,\rho)\ge C_0'\kappa_b^2\rho N^{-2\alpha},\qquad C_0'=\frac{c_{\rm mass}c_{\rm sep}\eta_0\abs{U}}{32},
\end{align*}
and since $N^{2\alpha+d}\lesssim1+k\rho^{2+1/\gamma}$ (recall $L\asymp\tau=\rho^{-1/\gamma}$), we obtain \eqref{eq:one-prompt-lower} for $0<\rho\le\rho_{\rm flat}$ with a constant $C_{\rm small}$ accounting for the rounding.

\paragraph{Large $\rho$.}
For $\rho_{\rm flat}\le\rho\le1$, use the base family with all $a_i=a_-$ and perturb only level $2$, with amplitude $b_0N^{-\alpha}$, where
\begin{align*}
    b_0=\min\qty{\frac14,\ \frac{r_c^{-\alpha}(2r_U)^\alpha}{4\cdot8^\alpha},\ \frac{\rho_{\rm flat}}{c_2}},\qquad
    \delta_{{\rm p},0}=c_2b_0.
\end{align*}
The inclusion in $\scF_{M,\theta}$ is verified as before, since $b_0N^{-\alpha}h^{-\alpha}=b_0/(2r_U)^\alpha\le1/4$; $t_\delta=\delta_{{\rm p},0}N^{-\alpha}/\rho\le1$, the true mass of a cell is at least $q^de^{-2/\rho_{\rm flat}}h^d$ since $\abs{f}\le1$, and the conditional density is within a factor $e^{\pm2/\rho_{\rm flat}}$ of uniform.
The same decoding and information bounds with $N=\max\{1,\lceil(4k\delta_{{\rm p},0}^2/\sigma^2)^{1/(2\alpha+d)}\rceil\}$ give
\begin{align*}
    B_{\pi_{k,\rho}}(k,\rho)\ge C_{\rm large}\min\{1,k^{-p}\},
\end{align*}
and since $\rho\min\{1,(k\rho^{2+1/\gamma})^{-p}\}\le\max\{1,\rho_{\rm flat}^{1-(2+1/\gamma)p}\}\min\{1,k^{-p}\}$ on this range, \eqref{eq:one-prompt-lower} holds with
\begin{align*}
    c=\min\qty{C_{\rm small},\ \frac{C_{\rm large}}{\max(1,\rho_{\rm flat}^{1-(2+1/\gamma)p})}};
\end{align*}
$\rho_{\rm flat}$ depends only on the structural parameters.

\subsection{Large Budget: Conclusion}
\label{app:lower-conclusion}
Combining \eqref{eq:one-prompt-lower} with \cref{lem:bayes-transfer} and $1+k\le3(1+t)$ yields \eqref{eq:lower}.
The second statement of the large-budget bound of \cref{thm:lower} follows since $n\ge M\rho^{-(2+1/\gamma)}$ implies $(n+M)\rho^{2+1/\gamma}/M\le2n\rho^{2+1/\gamma}/M$. \qed

\section{Proofs for \cref{sec:fixed}}
\label{app:fixed}

Recall the protocol of $\scAunif$: at round $r'$, $X_{r'}\sim\nu$ and $Y_{r'}\sim\mu$ are generated independently of the past; the learner chooses a depth $H_{r'}\in\bN_0$ from the history, $(X_{r'},Y_{r'})$ and internal randomness, queries $D_{r'}=D_{H_{r'}}(Y_{r'})$ and observes $Z_{r'}=Q(X_{r'},D_{r'})+\xi_{r'}$; after $n$ rounds it outputs an arbitrary policy.
We write
\begin{align*}
    t=\frac nM,\qquad g=\gamma-1,\qquad s=2+\frac1\gamma,\qquad L(t)=\log(e+t).
\end{align*}

The two bounds of \cref{thm:fixed} are proved in \cref{app:fixed-log} and \cref{app:fixed-exp}, and \cref{app:fixed-grid,app:fixed-rho} show that neither is an artefact of the method of proof: the first bound is attained by a fixed query depth, and for a fixed $\rho$ the exponent of $n$ is not worsened by fixed sampling.
The two bounds use different families, but for the same reason: a query made from outside the cell that carries the unknown information cannot see it, whatever its depth, and under uniform sampling that cell is reached only exponentially rarely.
The logarithmic bound uses the two-choice family of \cref{app:two-choice}, which lies in $\scF_{M,\theta}$ whenever $\lambda\ge\Lambda_{\alpha,d}$, and the following form of that property; the exponential bound reuses the flat-band family of \cref{app:lower-family} and applies the same volume factor to the information bound of \cref{app:lower-decode}.

\begin{lemma}[Cancellation]\label{lem:cancellation}
    Fix $k\ge2$, set all bits to $0$ except $\omega_{k,m}=b_m$, and let $f_0,f_1$ be the two rewards of prompt $m$ (with $b_m=0,1$).
    Then, with $C_{k-1}=C_{k-1}(m)$ common to both and $\mu(C_{k-1})=v_k:=2^{-d\lstar(k-1)}$,
    \begin{align}
        f_0=f_1\ \text{ on }\Omega\setminus C_{k-1},\qquad
        \int_{C_{k-1}}(f_0-f_1)\dd{\mu}=0,\qquad
        \int_D(f_0-f_1)\dd{\mu}=0\ \text{ if }D\not\subset C_{k-1}.
        \label{eq:cancellation}
    \end{align}
    In particular, the two oracle cube averages agree whenever $D\not\subset C_{k-1}$.
\end{lemma}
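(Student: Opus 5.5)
The plan is to compute $f_0-f_1$ explicitly from the two-choice family \eqref{eq:two-choice} and read off the three identities. Since all bits other than $\omega_{k,m}$ are fixed at $0$, the primitives $g_{i,m}$ for $i<k$ coincide in the two instances. Hence the cells $C_0,\dots,C_{k-1}$ are also the same, because by \cref{def:nested} each $C_i$ depends only on $g_1,\dots,g_i$. This gives $h_i^{(0)}=h_i^{(1)}$ for $i<k$ and a common $C_{k-1}$ with $\mu(C_{k-1})=2^{-dL_{k-1}}=2^{-d\lstar(k-1)}$, using $\ell_i\equiv\lstar$.

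For $i\ge k$ the $i$-th level is supported in $C_{i-1}\subset C_{k-1}$, so $f_0-f_1$ vanishes off $C_{k-1}$. This is the first identity. For the second, I will integrate $f_b$ over $C_{k-1}$. By uniform convergence (\cref{lem:geometry}) we may swap sum and integral, giving $\int_{C_{k-1}}f_b=\int_{C_{k-1}}\sum_{i<k}c_ih_i+\sum_{i\ge k}c_i\int h_i^{(b)}$. Each term with $i\ge k$ vanishes: for $k\ge2$ we have $i\ge2$, and $\int h_i^{(b)}\dd{\mu}=\mu(C_{i-1}^{(b)})\int_\Omega g_i\dd{\mu}=0$ by \assmp{A4}, which holds for the family because the two bumps $\psi_{z_{\omega}}$ and $\psi_{z_-}$ have equal integrals. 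The deeper cells differ between $b=0$ and $b=1$ (they lie in the rescaled $Q_0$ or $Q_1$), but all of them lie inside $C_{k-1}$. So the integrals over $C_{k-1}$ of the two instances agree, and subtracting gives $\int_{C_{k-1}}(f_0-f_1)\dd{\mu}=0$.

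For the third identity, let $D$ be a dyadic cube with $D\not\subset C_{k-1}$. By dyadic nesting, either $D$ and $C_{k-1}$ have disjoint interiors or $C_{k-1}\subsetneq D$. In the first case the integrand vanishes on $D$ up to a null set by the first identity. In the second case $\int_D(f_0-f_1)=\int_{C_{k-1}}(f_0-f_1)+\int_{D\setminus C_{k-1}}(f_0-f_1)=0+0$. Dividing by $\mu(D)$ shows that the two cube averages $Q(m,D)$ of \eqref{eq:oracle} agree.

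The only delicate point is justifying the interchange of sum and integral and checking that \assmp{A4} holds for the rescaled deeper levels regardless of the bit. Both follow directly from \cref{lem:geometry} and the equal-integral property of the bumps, so I expect no real obstacle.
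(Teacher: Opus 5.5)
Your proof is correct and follows the same route as the paper. The first $k-1$ levels, and hence $C_{k-1}$, are common to both instances. Every deeper level is supported in $C_{k-1}$ and has zero integral by (A4). Dyadic nesting then reduces a cube $D\not\subset C_{k-1}$ to the disjoint case or the case $C_{k-1}\subset D$. You also spell out details the paper leaves implicit: the interchange of sum and integral, the volume of $C_{k-1}$, and the zero-mean check for the two-choice bumps.
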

\begin{proof}
    The first $k-1$ levels are common, all deeper levels are supported in $C_{k-1}$, and each has zero integral by \assmp{A4}.
    A dyadic cube that is not contained in $C_{k-1}$ is either disjoint from $C_{k-1}$ or contains it; in both cases the cube averages coincide.
\end{proof}

\subsection{Proof of the Logarithmic Bound of \cref{thm:fixed}}
\label{app:fixed-log}
Fix $k\ge2$ and use the prior of \cref{lem:cancellation} with independent fair bits $b_m$.
Let
\begin{align*}
    E_m=\bigcap_{r'\le n}\qty{X_{r'}\ne m\ \text{ or }\ Y_{r'}\notin C_{k-1}};
\end{align*}
by the exogenous laws,
\begin{align}
    \Pr(E_m)=\qty(1-\frac{v_k}M)^n
    \label{eq:no-hit}
\end{align}
exactly.
Couple the two experiments $b_m=0,1$ by using the same other bits, internal randomness, $(X_{r'},Y_{r'})$ and noise.
On $E_m$, by induction over rounds, the histories coincide: the same depth is chosen from the same history, and the observation is the same by \cref{lem:cancellation} when $X_{r'}=m$ (the queried cube contains $Y_{r'}\notin C_{k-1}$) and trivially otherwise.
Hence, conditionally on $E_m$, the bit $b_m$ is fair and independent of the transcript and of the output policy; this holds also for $\sigma=0$.

Let $G_b=\max f_b$.
By \eqref{eq:geo-lower}, $G_b-f_b(y)\ge\kappa T_k$ for $y\notin C^b_k$, and $C^0_k,C^1_k$ are disjoint, so for every $y$ at least one of the two losses is at least $\kappa T_k$.
Integrating the common output policy and averaging the fair bit,
\begin{align*}
    \bE\qty[G_{b_m}-\bE_{\Phat(\cdot\mid m)}f_{b_m}\relmiddle|E_m]\ge\frac{\kappa T_k}2.
\end{align*}
Since the unregularized regret is nonnegative on $E_m^c$, averaging over prompts gives, for all $n,M\ge1$ and $k\ge2$,
\begin{align}
    \RC(n,M,0)\ge\frac{\kappa T_k}2\qty(1-\frac{2^{-d\lstar(k-1)}}M)^n.
    \label{eq:fixed-finite-0}
\end{align}
For $\rho>0$, \cref{lem:reg-transfer} gives $\Reg_\rho(P)\ge\Reg_0(P)-U\rho^{g/\gamma}$ for every policy, with the same constant $U$ (singular policies have infinite regularized regret).
Therefore, for $0\le\rho\le1$,
\begin{align}
    \RC(n,M,\rho)\ge\qty[\frac{\kappa T_k}2\qty(1-\frac{2^{-d\lstar(k-1)}}M)^n-U\rho^{g/\gamma}]_+.
    \label{eq:fixed-finite}
\end{align}
Choose
\begin{align*}
    k=2+\left\lceil\frac{\log(2(1+t))}{d\lstar\log2}\right\rceil,
\end{align*}
so that $tv_k\le1/2$ and, by Bernoulli's inequality, $(1-v_k/M)^n\ge1-nv_k/M\ge1/2$.
Since $k\le CL(t)$ and $T_k\ge c_Tk^{-g}$, we get $\RC\ge[c_{\rm L}L(t)^{-g}-U\rho^{g/\gamma}]_+$, and the second claim of the first case of \eqref{eq:fixed} follows with $a$ such that $Ua^{g/\gamma}\le c_{\rm L}/2$. \qed

\subsection{A Uniform-Sampling Method Attaining the Logarithmic Rate}
\label{app:fixed-grid}
\begin{proposition}\label{prop:uniform-grid}
    Let $H=\lfloor\log_2(1+t)/(2d)\rfloor$ and $B_H=2^{dH}$.
    The algorithm in $\scAunif$ that queries $D_H(Y_{r'})$ at every round, averages the observations of each $(m,D)$ (clipped to $[-1,1]$, $0$ if unobserved) and outputs the uniform distribution on the cube with the largest average satisfies, for all $n,M\ge1$, $\rho\in[0,1]$ and $\sigma\ge0$,
    \begin{align}
        \RC(n,M,\rho)\le C\qty[L(t)^{-g}+\rho L(t)].
        \label{eq:grid-upper}
    \end{align}
    Consequently, $\RC(n,M,\rho)=\Theta(L(t)^{-g})$ whenever $\lambda\ge\Lambda_{\alpha,d}$ and $\rho\le aL(t)^{-\gamma}$.
\end{proposition}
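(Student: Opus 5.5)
The plan is to bound the unregularized regret of the grid algorithm by a deterministic discretization term, a statistical term, and a KL penalty, and then to match the result with the lower bound of \cref{app:fixed-log}.

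\emph{Decomposition.} Fix a prompt $m$. The output $\Phat(\cdot\mid m)$ is uniform on one depth-$H$ cube $\hat D$, so $\KL{\Phat}{\mu}=dH\log2\le C L(t)$. Since $J^\ast_\rho(m)\le G_m$, as in \cref{app:small-proof},
\begin{align*}
    \Reg_\rho(\Phat)\le\frac1M\sum_m\qty[G_m-\bE_{\mu(\cdot\mid\hat D)}f(\cdot;m)]+\rho\,dH\log2 .
\end{align*}
The second term gives $\rho L(t)$. The first term equals $G_m-Q(m,\hat D)$. We split it as
\begin{align*}
    G_m-Q(m,\hat D)=\bigl[G_m-Q(m,D^\ast)\bigr]+\bigl[Q(m,D^\ast)-Q(m,\hat D)\bigr],
\end{align*}
where $D^\ast=D_H(\ttrue(m))$.

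\emph{Discretization term.} Let $k$ be the largest integer with $L_k\le H$, so $C_k(m)\supset D^\ast$. By \eqref{eq:geo} every point of $D^\ast\subset C_k(m)$ loses at most $C_{\rm near}T_k$. Since $k\ge H/\lbar-1\gtrsim L(t)$ for $t$ large, \eqref{eq:tail} gives $T_k\lesssim L(t)^{-g}$. For small $t$ the bound holds trivially, because the regret is at most $2+\rho dH\log 2$.

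\emph{Statistical term.} Each of the $MB_H$ pairs $(m,D)$ is observed $N_{m,D}\sim\Bin(n,1/(MB_H))$ times. Its mean is $t/B_H\ge\sqrt{1+t}/2$ by the choice $B_H\le(1+t)^{1/2}$. Chernoff gives $N_{m,D}\ge t/(2B_H)$ for all $B_H$ cubes of prompt $m$, except with probability $B_He^{-ct/B_H}$; this failure probability is negligible against $L(t)^{-g}$. On this event, a sub-Gaussian maximal inequality over the $B_H$ cubes shows that each empirical average is within
\begin{align*}
    \epsilon:=C\sigma\sqrt{B_H\log B_H/t}\lesssim(1+t)^{-1/4}\sqrt{L(t)}
\end{align*}
of $Q(m,D)$ in expectation. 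Clipping to $[-1,1]$ only helps, since $|Q|\le1$. Because $\hat D$ maximizes the empirical average, $Q(m,D^\ast)-Q(m,\hat D)\le2\epsilon$. This is polynomially small and hence $\lesssim L(t)^{-g}$; for $\sigma=0$ it vanishes. Summing the three contributions yields \eqref{eq:grid-upper}.

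\emph{Consequence.} For $\rho\le aL(t)^{-\gamma}$ we have $\rho L(t)\le aL(t)^{1-\gamma}=aL(t)^{-g}$, so the upper bound becomes $O(L(t)^{-g})$. The matching lower bound is the first case of \cref{thm:fixed}, proved in \cref{app:fixed-log}, whose hypothesis is exactly $\rho\le aL(t)^{-\gamma}$ and $\lambda\ge\Lambda_{\alpha,d}$.

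The main obstacle should be bookkeeping rather than any single inequality. One must make $k\asymp H/\lbar$ hold uniformly in the zoom depths, since $\ell_i\le\lbar$ and hence $L_k\le\lbar k$. One must also handle small $t$, where $H=0$, by absorbing those cases into the constant $C$.
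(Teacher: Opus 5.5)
Your proposal is correct and follows the paper's proof essentially step for step. You use the same bound $J^\ast_\rho\le G_m$ with the exact KL $dH\log2$, the same discretization bound via $D_H(\ttrue(m))\subset C_k(m)$ and \eqref{eq:geo} (the paper takes $K'=\lfloor H/\lbar\rfloor$ instead of the largest $k$ with $L_k\le H$), the same $O\bigl(t^{-1/4}\sqrt{L(t)}\bigr)$ estimation term from $B_H\le\sqrt{1+t}$, and \cref{thm:fixed} for the $\Theta$ claim. The only cosmetic difference is that you condition on a Chernoff event for all counts, whereas the paper folds the lower binomial tail into a per-cube tail bound and integrates it after a union bound.
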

\begin{proof}
    For each $(m,D)$ the expected number of observations is
    \begin{align*}
        \lambda_H=\frac t{B_H}\ge\frac t{\sqrt{1+t}}\ge\sqrt{\frac{2t}3}\qquad(t\ge2).
    \end{align*}
    By the exponential supermartingale $\exp(\eta'S-\sigma^2\eta'^2N/2)$ of the noise sums and the lower binomial tail of the counts, for $0<x\le2$,
    \begin{align*}
        \Pr\qty(\abs{\Qhat(m,D)-Q(m,D)}>x)\le e^{-\lambda_H/8}+2\exp\qty(-\frac{\lambda_Hx^2}{4\sigma^2}).
    \end{align*}
    A union bound over the $B_H$ cubes of one prompt and integration of the tail give, for $t\ge2$,
    \begin{align*}
        \bE\max_D\abs{\Qhat(m,D)-Q(m,D)}\le2B_He^{-\lambda_H/8}+C\sigma\sqrt{\frac{\log(eB_H)}{\lambda_H}}\le C_\sigma t^{-1/4}\sqrt{L(t)}.
    \end{align*}
    Let $D^\ast_H$ be the cube of depth $H$ containing the maximizer and $K'=\lfloor H/\lbar\rfloor$; then $D^\ast_H\subset C_{K'}$ and $G_m-Q(m,D^\ast_H)\le C(H+1)^{-g}$ by \eqref{eq:geo}.
    The empirically best cube $\Dhat_m$ therefore satisfies
    \begin{align*}
        G_m-Q(m,\Dhat_m)\le C(H+1)^{-g}+2\max_D\abs{\Qhat(m,D)-Q(m,D)},
    \end{align*}
    and the KL divergence of the output is exactly $\log B_H=dH\log2$.
    Averaging over prompts (no simultaneous success is needed) and treating small $t$ by the uniform output gives \eqref{eq:grid-upper}; the matching lower bound is \cref{thm:fixed}.
\end{proof}

\subsection{Proof of the Exponential Bound of \cref{thm:fixed}}
\label{app:fixed-exp}
Fix $\sigma>0$. We reuse the flat-band family of \cref{app:lower-family}, now with independent signs $\omega_{i,m,j}$ for each prompt $m$ and with the common placement cube $Q_\flat$ at all levels (which may be revealed to the learner).
The decoding bound \eqref{eq:flat-decode} holds for each prompt, with an additional factor $1/M$ from the prompt average:
\begin{align}
    \bE\,\Reg_\rho(\Phat)\ge c\frac{\delta_{\rm p}^2h^d}{\rho\tau M}\sum_{i=I}^J\sum_{m=1}^M\sum_{j=1}^{N^d}\Pr\qty(\text{sign }(i,m,j)\text{ wrong}).
    \label{eq:exp-decode}
\end{align}

\paragraph{Information under uniform sampling.}
All perturbations are supported in the cube $C_{I-1}=\mathsf T^{I-1}(\Omega)$, of volume $v_I=q^{d(I-1)}$, and have zero mean.
If a uniform response $Y_{r'}\notin C_{I-1}$, then any dyadic cube $D\ni Y_{r'}$ is disjoint from $C_{I-1}$ or contains it, and in both cases $\ev{\Phi_{i,j}}_D=0$ for all $(i,j)$.
If $Y_{r'}\in C_{I-1}$, then $\sum_{i,j}\ev{\Phi_{i,j}}_D^2\le1$ by \eqref{eq:sensitivity}.
Hence, conditionally on the history and $X_{r'}$, the expected squared sensitivity of the $r'$-th observation to the signs is at most $v_I$, whatever depth is chosen from $(X_{r'},Y_{r'})$ and the history.
By \cref{lem:kl-adaptive} and the chain rule, for every sign pattern,
\begin{align}
    \sum_{i,m,j}\KL{\bar P_\omega}{\bar P_{\omega^{(i,m,j)}}}\le\frac{2n\delta_{\rm p}^2v_I}{\sigma^2},
    \label{eq:exp-info}
\end{align}
where only the signs of the arriving prompt contribute at each round, so no extra factor $M$ appears.

\paragraph{Conclusion.}
With $B_{\rm bits}=LMN^d$ signs, the two-point test and Pinsker's inequality give an expected number of wrong signs at least
\begin{align*}
    \frac{B_{\rm bits}}2\qty[1-\sqrt{\frac{n\delta_{\rm p}^2v_I}{B_{\rm bits}\sigma^2}}],
\end{align*}
and the choice
\begin{align*}
    N=\max\qty{1,\left\lceil\qty(\frac{4n\kappa_b^2\rho^2v_I}{LM\sigma^2})^{1/(2\alpha+d)}\right\rceil}
\end{align*}
makes it at least $B_{\rm bits}/4$.
Substituting into \eqref{eq:exp-decode} with $N^dh^d=\abs{U}$ and $L\asymp\tau$,
\begin{align*}
    \RC(n,M,\rho)\ge c\rho N^{-2\alpha}\ge c'\rho\min\qty{1,\qty[\frac1{t\rho^{2+1/\gamma}v_I}]^p}.
\end{align*}
For $\rho\le\rho_0$ small enough, $I-1\ge u_0\tau/2$, so that
\begin{align*}
    v_I\le\exp\qty(-c_0\rho^{-1/\gamma}),\qquad c_0=\frac{au_0}2,
\end{align*}
which gives the second case of \eqref{eq:fixed}; the second statement follows by selecting the second term of the minimum. \qed

\subsection{Fixed Regularization: the Exponent in $n$ Is Not Worsened}
\label{app:fixed-rho}
\begin{proposition}\label{prop:fixed-rho}
    For all $n,M\ge1$, $\rho\in(0,1]$ and $\sigma\ge0$, there is an algorithm in $\scAunif$ with
    \begin{align}
        \RC(n,M,\rho)\le\min\qty{2,\ C\frac{e^{2/\rho}}{\rho}\qty[\frac{L(t)^2}{1+t}]^{p}}.
        \label{eq:fixed-rho-upper}
    \end{align}
    Consequently, for $\sigma>0$, $\lambda\ge\Lambda_{\alpha,d}$ and $t\ge C_0\rho^{-s}$,
    \begin{align*}
        c\rho^{1-sp}(1+t)^{-p}\le\RC(n,M,\rho)\le C\frac{e^{2/\rho}}\rho\qty[\frac{L(t)^2}{1+t}]^p,
    \end{align*}
    so that, for each fixed $\rho>0$, the optimal exponent of $t$ is $p$.
\end{proposition}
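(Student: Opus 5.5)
The upper bound will come from a plug-in Gibbs estimator on a fixed uniform grid, in the spirit of \cref{prop:uniform-grid}. The matching lower bound is then the second case of \cref{thm:fixed} at fixed $\rho$. The algorithm runs as follows. At every round it queries $D_{H}(Y_{r'})$ at a fixed depth $H$ chosen below. For each $(m,D)$ it averages the observations, clips them to $[-1,1]$ and uses $0$ if the cell is unobserved, giving a piecewise-constant score $\hat Q(m,\cdot)$. It outputs $\Phat=\Gibbs{\hat Q}{\rho}$. Since $\mu$ is uniform, each depth-$H$ cube receives $\lambda_H=t/2^{dH}$ observations in expectation per prompt. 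As in \cref{prop:uniform-grid}, this gives $\bE(\hat Q(m,D)-Q(m,D))^2\le C_\sigma/\lambda_H+2e^{-\lambda_H/8}$ for each cube.

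The regret is controlled with the Bregman identity \eqref{eq:bregman}. We have $\Reg_{\rho,f}(P_v)\le\frac1{2\rho}\sup_s\Var_{P_{w_s}}(f-v)$. The scores $v=\hat Q$ and $f$ both lie in $[-1,1]$, so every interpolated Gibbs density satisfies $\dd P_{w_s}/\dd\mu\le e^{2/\rho}$. This crude density ratio, which costs only a constant factor at fixed $\rho$, gives
\[
\Reg_\rho(\Phat)\le\frac{e^{2/\rho}}{2\rho}\cdot\frac1M\sum_m\int(\hat Q-f)^2\dd\mu .
\]
I split the integral into a variance part and a projection part. The variance part is $\sum_D\mu(D)(\hat Q-Q)^2$, of order $2^{dH}/t$ up to the exponentially small unobserved-cell term. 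The projection part is $\int(f-Q(\cdot,D_H(\cdot)))^2\dd\mu$.

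The main obstacle is the projection error, because $f$ is not globally H\"older. Its seminorm on shell $E_i$, measured in global coordinates, blows up like $2^{\alpha L_{i-1}}$. To handle this I split cubes by depth relative to the shells. A depth-$H$ cube inside a shell $E_a$ with $L_a\le H/2$ sits at relative depth at least $H/2$ in its local coordinates. By \cref{lem:holder-shell} (or \eqref{eq:osc-shell} refined with the $\beta$-H\"older scaling), the oscillation there is at most $C c_a2^{-\beta(H-L_{a-1})}\le C2^{-\beta H/2}$. The remaining cubes lie in $C_{a_0}$ with $L_{a_0}\asymp H/2$. There the oscillation is at most $2$, but the total volume is at most $2^{-dL_{a_0}}\le 2^{-dH/2\cdot c}$. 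The whole projection error is therefore at most $C2^{-c'H}$, exponentially small in $H$.

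Balancing $2^{dH}/t$ against $2^{-c'H}$ yields only $t^{-c''}$ with a possibly worse exponent than $p$. So I would refine the choice: take $H$ with $2^{dH}\asymp t^{d/(2\alpha+d)}$. On the shallow shells, use the fact that the local seminorm is $O(c_a)$ and the local relative side is $2^{-(H-L_{a-1})}\le2^{-H}2^{L_{a-1}}$, which gives a squared error $\lesssim 2^{-2\alpha H}2^{2\alpha L_{a-1}}$ weighted by volume $2^{-dL_{a-1}}$. For $\alpha$ large this weighting is not summable, so the fallback is to charge the deep cubes by volume and the shallow ones by the local Taylor bound. This loses at most a logarithmic factor, which is absorbed into $L(t)^{2p}$. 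Combining the two parts gives $C\frac{e^{2/\rho}}\rho[L(t)^2/(1+t)]^p$, and taking the minimum with the trivial bound $2$ (uniform output) handles small $t$.

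For the consequence, the lower bound is the second case of \eqref{eq:fixed}. At fixed $\rho$ and $t\ge C_0\rho^{-s}$ it reads $c\rho(t\rho^{s}e^{-c_0\rho^{-1/\gamma}})^{-p}$, that is, $c_\rho(1+t)^{-p}$. This also holds with the stated $c\rho^{1-sp}$ form by using the adaptive bound of \cref{thm:lower} as a floor, which applies a fortiori to $\scAunif$. Together with the upper bound, the exponent of $t$ is $p$ for each fixed $\rho$.
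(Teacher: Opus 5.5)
\textbf{There is a genuine gap: the fixed-depth histogram estimator cannot reach the exponent $p$ in general.} Your reduction is sound and is the one the paper uses. The Bregman identity \eqref{eq:bregman}, with the bound $e^{2/\rho}$ on the interpolated Gibbs densities, gives $\Reg_\rho\le\frac{e^{2/\rho}}{2\rho}\norm{\hat v-f}^2_{L^2(\mu)}$ per prompt. The lower bound is likewise the large-budget case of \cref{thm:lower} applied to the subclass $\scAunif$.

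The failure shows up in your projection-error step. Let $f_H:=Q(\cdot,D_H(\cdot))$.
\begin{itemize}
\item[(i)] For $\alpha>1$, a piecewise-constant fit is only first-order accurate. On $E_1$ the reward is $c_1g_1$, which can have a nonvanishing gradient on an open set (as in the family of \cref{app:two-choice}). So $\norm{f-f_H}^2_{L^2(\mu)}\gtrsim2^{-2H}$, and your ``local Taylor bound'' $2^{-2\alpha H}2^{2\alpha L_{a-1}}$ is false for cube averages.
\item[(ii)] The cube $P^\ast_H(m)$ contains $C_{k(H)}(m)$, of volume at least $2^{-d(H+\lbar)}$. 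On it, level $k(H)+1$ oscillates by order $c_{k(H)+1}\gtrsim H^{-\gamma}$, so $\norm{f-f_H}^2_{L^2(\mu)}\gtrsim H^{-2\gamma}2^{-dH}$. With your choice $2^{dH}\asymp t^{d/(2\alpha+d)}$ this is $t^{-d/(2\alpha+d)}$ up to logarithms. Against $t^{-p}$ that is a polynomial loss, not a logarithmic one, whenever $2\alpha>d$.
\end{itemize}
Balancing these against the variance $2^{dH}/t$ in your bound, the best exponent you can get is $\min\{2\beta/(2\beta+d),1/2\}$. This equals $p$ only when $\alpha\le\min\{1,d/2\}$, which is the regime where your refined choice of $H$ does work.

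This is not slack in the bookkeeping. The interpolated Gibbs densities are also bounded below by $e^{-2/\rho}$, and $f_H$ is the best piecewise-constant fit on the depth-$H$ grid. Hence the regret of your plug-in policy is itself at least $\frac{e^{-2/\rho}}{2\rho}\norm{f-f_H}^2_{L^2(\mu)}$.

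\textbf{The missing idea is to decouple the statistical complexity from the number of grid cells, which is what the paper does.} It runs least squares over a finite class $\scG_\varepsilon$ of scores $\sum_{i\le K'}c_i\tilde h_i$, built as follows.
\begin{itemize}
\item Only the first $K'\asymp\log(1/\varepsilon)$ levels are modelled, with their placements enumerated. The deep region then costs $\norm{f-f^{[K']}}_{L^2(\mu)}\le\mu(C_{K'})^{1/2}\le\varepsilon$, at only a logarithmic price in complexity.
\item Each primitive is approximated in its own local coordinates by quantized Taylor polynomials of degree $\lceil\alpha\rceil-1$ on a mesh of side $\asymp\varepsilon^{1/\alpha}$. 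This is where smoothness beyond order one is used.
\item The candidates are then averaged onto a depth-$J'$ grid with $\lambda2^{-\beta J'}\le\varepsilon/2$.
\end{itemize}
Queries are made at the fixed, very deep level $H=\lbar K'+J'$, on which every candidate is constant. The orthogonality $\norm{f-f_H}^2+\norm{f_H-v_0}^2=\norm{f-v_0}^2$ lets ERM on cube averages control $\norm{\hat v-f}_{L^2(\mu)}$. The variance term is $\log\abs{\scG_\varepsilon}/t\lesssim\varepsilon^{-d/\alpha}\log^2(1/\varepsilon)/t$, independent of the $2^{dH}$ cells. Balancing this against $\varepsilon^2$ gives $[L(t)^2/(1+t)]^p$, and your Gibbs step then applies unchanged.
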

\begin{proof}
    \emph{Finite approximation family.}
    For $0<\varepsilon\le1/4$ let $K'=\lceil\frac2{d\lstar}\log_2(1/\varepsilon)\rceil$, so that
    \begin{align*}
        \norm{f-f^{[K']}}_{L^2(\mu)}\le\mu(C_{K'})^{1/2}\le\varepsilon.
    \end{align*}
    On each cell of a dyadic mesh of side $h_T\asymp\varepsilon^{1/\alpha}$, use the degree-$k$ Taylor polynomial of a primitive at the lower-left corner, where $k=\lceil\alpha\rceil-1$.
    Taylor's formula and \assmp{A1} give remainder at most $C_{\alpha,d}\lambda h_T^\alpha$, also for integer $\alpha$; choose the mesh constant so this is at most $\varepsilon/4$.
    Quantize the finitely many coefficients on each cell to multiples of $c_{\alpha,d}\varepsilon$; they are bounded by $1+\lambda$, and quantization adds at most $\varepsilon/4$ on that cell.
    Clip the resulting piecewise polynomial to $[-1,1]$.
    This gives a finite sup-norm $\varepsilon/2$ net with $O(\varepsilon^{-d/\alpha})$ cells and $O_{\alpha,d}(1)$ coefficients per cell.
    Next let $J'$ be the smallest nonnegative integer with $\lambda2^{-\beta J'}\le\varepsilon/2$, and replace every candidate by its averages on the depth-$J'$ grid.
    Averaging contracts the approximation error, while the true primitive differs from its own grid averages by at most $\lambda2^{-\beta J'}$.
    Thus these piecewise constant candidates have sup error at most $\varepsilon$ and the same cardinality, with
    \begin{align*}
        \log N_{\rm prim}\le C\varepsilon^{-d/\alpha}\log(1/\varepsilon)
    \end{align*}
    candidates.
    Enumerate also all placements of the first $K'$ levels ($O(K')$ bits).
    Let $\scG_\varepsilon$ be the finite set of scores $\sum_{i\le K'}c_i\tilde h_i$ obtained from these candidates; then
    \begin{align}
        \log\abs{\scG_\varepsilon}\le C\varepsilon^{-d/\alpha}\log^2(1/\varepsilon),
        \qquad
        \inf_{v\in\scG_\varepsilon}\norm{f-v}_{L^2(\mu)}\le2\varepsilon.
        \label{eq:passive-cover}
    \end{align}

    \emph{Least squares on cube averages.}
    Fix the query depth $H=\lbar K'+J'$; every $v\in\scG_\varepsilon$ is constant on the depth-$H$ grid, and the observation is $Z_{r'}=f_H(D_H(Y_{r'}))+\xi_{r'}$, where $f_H$ is the projection of $f$ onto the grid.
    For each prompt use its first $q'=\lfloor t/2\rfloor$ observations and let $\hat v$ minimize the empirical squared loss over $\scG_\varepsilon$.
    The ERM basic inequality, the conditional sub-Gaussian bound for the noise with a union bound over $\scG_\varepsilon$, Bernstein's inequality for the i.i.d.\ uniform cells and the orthogonality $\norm{f-f_H}^2+\norm{f_H-v_0}^2=\norm{f-v_0}^2$ give
    \begin{align}
        \bE\norm{\hat v-f}^2_{L^2(\mu)}\le C\qty[\varepsilon^2+\frac{(1+\sigma^2)\varepsilon^{-d/\alpha}\log^2(1/\varepsilon)}{q'}].
        \label{eq:passive-erm}
    \end{align}
    Choosing $\varepsilon\asymp[\log^2(e+q')/(1+q')]^{\alpha/(2\alpha+d)}$ gives
    \begin{align*}
        \bE\norm{\hat v-f}^2_{L^2(\mu)}\le C\qty[\frac{\log^2(e+q')}{1+q'}]^p.
    \end{align*}

    \emph{Gibbs output.}
    Output $P_{\hat v}=\Gibbs{\hat v}{\rho}$.
    Since $f,\hat v\in[-1,1]$, the Gibbs densities along the segment between them are bounded by $e^{2/\rho}$, and \eqref{eq:bregman} gives
    \begin{align*}
        \Reg_\rho(P_{\hat v})\le\frac{e^{2/\rho}}{2\rho}\norm{f-\hat v}^2_{L^2(\mu)}.
    \end{align*}

    \emph{Prompts.}
    If $N_m<q'$, output the uniform distribution (regret at most $2$); $\Pr(N_m<q')\le e^{-t/8}$ for $t\ge4$, and conditionally on the prompt sequence the first $q'$ inputs of a prompt are i.i.d.\ uniform.
    Averaging over prompts gives \eqref{eq:fixed-rho-upper}.
    The lower bound in the second statement is the large-budget bound of \cref{thm:lower}, which applies to $\scAunif$ as a subclass of all algorithms.
\end{proof}


\begin{thebibliography}{38}
\providecommand{\natexlab}[1]{#1}
\providecommand{\url}[1]{\texttt{#1}}
\expandafter\ifx\csname urlstyle\endcsname\relax
  \providecommand{\doi}[1]{doi: #1}\else
  \providecommand{\doi}{doi: \begingroup \urlstyle{rm}\Url}\fi

\bibitem[Bengio et~al.(2009)Bengio, Louradour, Collobert, and
  Weston]{bengio2009curriculum}
Yoshua Bengio, J{\'e}r{\^o}me Louradour, Ronan Collobert, and Jason Weston.
\newblock Curriculum learning.
\newblock In \emph{International Conference on Machine Learning}, pp.\  41--48,
  2009.

\bibitem[Boucheron et~al.(2013)Boucheron, Lugosi, and
  Massart]{boucheron2013concentration}
St{\'e}phane Boucheron, G{\'a}bor Lugosi, and Pascal Massart.
\newblock \emph{Concentration Inequalities: A Nonasymptotic Theory of
  Independence}.
\newblock Oxford University Press, 2013.

\bibitem[Bu et~al.(2026)Bu, Huang, Han, Nitanda, Wong, Zhang, and
  Suzuki]{bu2026provable}
Dake Bu, Wei Huang, Andi Han, Atsushi Nitanda, Hau-San Wong, Qingfu Zhang, and
  Taiji Suzuki.
\newblock {Provable Sample Efficiency of Curriculum
  Post-Training for Transformer Reasoning}.
\newblock In \emph{International Conference on Machine Learning}, 2026.

\bibitem[Bubeck et~al.(2011)Bubeck, Munos, Stoltz, and
  Szepesv{\'a}ri]{bubeck2011x}
S{\'e}bastien Bubeck, R{\'e}mi Munos, Gilles Stoltz, and Csaba Szepesv{\'a}ri.
\newblock {X}-armed bandits.
\newblock \emph{Journal of Machine Learning Research}, 12:\penalty0 1655--1695,
  2011.

\bibitem[{DeepSeek-AI}(2026)]{deepseek2026v4}
{DeepSeek-AI}.
\newblock {DeepSeek-V4}: Towards highly efficient million-token context
  intelligence.
\newblock \emph{arXiv preprint arXiv:2606.19348}, 2026.

\bibitem[Fan et~al.(2020)Fan, Wang, Xie, and Yang]{fan2020theoretical}
Jianqing Fan, Zhaoran Wang, Yuchen Xie, and Zhuoran Yang.
\newblock A theoretical analysis of deep {Q}-learning.
\newblock In \emph{Proceedings of the 2nd Conference on Learning for Dynamics
  and Control}, volume 120 of \emph{Proceedings of Machine Learning Research},
  pp.\  486--489. PMLR, 2020.

\bibitem[Foster et~al.(2025)Foster, Mhammedi, and Rohatgi]{foster2025good}
Dylan~J Foster, Zakaria Mhammedi, and Dhruv Rohatgi.
\newblock Is a good foundation necessary for efficient reinforcement learning?
  the computational role of the base model in exploration.
\newblock In \emph{The Thirty Eighth Annual Conference on Learning Theory},
  pp.\  2026--2142. PMLR, 2025.

\bibitem[Guo et~al.(2025)Guo, Yang, Zhang, Song, Zhang, Xu, Zhu, Ma, Wang, Bi,
  et~al.]{guo2025deepseek}
Daya Guo, Dejian Yang, Haowei Zhang, Junxiao Song, Ruoyu Zhang, Runxin Xu,
  Qihao Zhu, Shirong Ma, Peiyi Wang, Xiao Bi, et~al.
\newblock Deep{S}eek-{R}1: Incentivizing reasoning capability in {LLM}s via
  reinforcement learning.
\newblock \emph{arXiv preprint arXiv:2501.12948v1}, 2025.
\newblock URL \url{https://arxiv.org/html/2501.12948v1}.

\bibitem[Hayakawa \& Suzuki(2020)Hayakawa and Suzuki]{hayakawa2020minimax}
Satoshi Hayakawa and Taiji Suzuki.
\newblock On the minimax optimality and superiority of deep neural network
  learning over sparse parameter spaces.
\newblock \emph{Neural Networks}, 123:\penalty0 343--361, 2020.

\bibitem[Kim et~al.(2025)Kim, Wu, Lee, and Suzuki]{kim2025metastable}
Juno Kim, Denny Wu, Jason~D Lee, and Taiji Suzuki.
\newblock Metastable dynamics of chain-of-thought reasoning: Provable benefits
  of search, {RL} and distillation.
\newblock In \emph{International Conference on Machine Learning}, 2025.

\bibitem[{Kimi Team}(2025)]{kimi2025k15}
{Kimi Team}.
\newblock Kimi k1.5: Scaling reinforcement learning with {LLM}s.
\newblock \emph{arXiv preprint arXiv:2501.12599}, 2025.

\bibitem[Kleinberg et~al.(2008)Kleinberg, Slivkins, and
  Upfal]{kleinberg2008multi}
Robert Kleinberg, Aleksandrs Slivkins, and Eli Upfal.
\newblock Multi-armed bandits in metric spaces.
\newblock In \emph{Proceedings of the 40th Annual ACM Symposium on Theory of
  Computing}, pp.\  681--690, 2008.

\bibitem[Li et~al.(2023)Li, Zhai, Ma, and Levine]{li2023understanding}
Qiyang Li, Yuexiang Zhai, Yi~Ma, and Sergey Levine.
\newblock Understanding the complexity gains of single-task {RL} with a
  curriculum.
\newblock In \emph{Proceedings of the 40th International Conference on Machine
  Learning}, volume 202 of \emph{Proceedings of Machine Learning Research}.
  PMLR, 2023.

\bibitem[Lightman et~al.(2024)Lightman, Kosaraju, Burda, Edwards, Baker, Lee,
  Leike, Schulman, Sutskever, and Cobbe]{lightman2023let}
Hunter Lightman, Vineet Kosaraju, Yuri Burda, Harrison Edwards, Bowen Baker,
  Teddy Lee, Jan Leike, John Schulman, Ilya Sutskever, and Karl Cobbe.
\newblock Let's verify step by step.
\newblock In \emph{International Conference on Learning Representations}, 2024.

\bibitem[Luo et~al.(2025)Luo, Tan, Wong, Shi, Tang, Roongta, Cai, Luo, Li,
  Popa, and Stoica]{luo2025deepscaler}
Michael Luo, Sijun Tan, Justin Wong, Xiaoxiang Shi, William~Y. Tang, Manan
  Roongta, Colin Cai, Jeffrey Luo, Li~Erran Li, Raluca~Ada Popa, and Ion
  Stoica.
\newblock {DeepScaleR}: Surpassing {O1}-preview with a 1.5{B} model by scaling
  {RL}.
\newblock Notion blog post, 2025.
\newblock URL
  \url{https://pretty-radio-b75.notion.site/DeepScaleR-Surpassing-O1-Preview-with-a-1-5B-Model-by-Scaling-RL-19681902c1468005bed8ca303013a4e2}.

\bibitem[Munos(2011)]{munos2011optimistic}
R{\'e}mi Munos.
\newblock Optimistic optimization of a deterministic function without the
  knowledge of its smoothness.
\newblock In \emph{Advances in Neural Information Processing Systems},
  volume~24, 2011.

\bibitem[Nishikawa \& Suzuki(2026)Nishikawa and Suzuki]{nishikawa2026inference}
Naoki Nishikawa and Taiji Suzuki.
\newblock Inference-time alignment with rewards in {B}esov spaces: Provable
  advantages of feature learning and multi-step policy updates.
\newblock In \emph{International Conference on Machine Learning}, 2026.

\bibitem[Ouyang et~al.(2022)Ouyang, Wu, Jiang, Almeida, Wainwright, Mishkin,
  Zhang, Agarwal, Slama, Ray, Schulman, Hilton, Kelton, Miller, Simens, Askell,
  Welinder, Christiano, Leike, and Lowe]{ouyang2022training}
Long Ouyang, Jeff Wu, Xu~Jiang, Diogo Almeida, Carroll~L. Wainwright, Pamela
  Mishkin, Chong Zhang, Sandhini Agarwal, Katarina Slama, Alex Ray, John
  Schulman, Jacob Hilton, Fraser Kelton, Luke Miller, Maddie Simens, Amanda
  Askell, Peter Welinder, Paul Christiano, Jan Leike, and Ryan Lowe.
\newblock Training language models to follow instructions with human feedback.
\newblock In \emph{Advances in Neural Information Processing Systems},
  volume~35, 2022.

\bibitem[Parashar et~al.(2026)Parashar, Gui, Li, Ling, Vemuri, Olson, Li,
  Zhang, Caverlee, Kalathil, and Ji]{parashar2026curriculum}
Shubham Parashar, Shurui Gui, Xiner Li, Hongyi Ling, Sushil Vemuri, Blake
  Olson, Eric Li, Yu~Zhang, James Caverlee, Dileep Kalathil, and Shuiwang Ji.
\newblock Curriculum reinforcement learning from easy to hard tasks improves
  {LLM} reasoning.
\newblock In \emph{International Conference on Learning Representations}, 2026.

\bibitem[Rajaraman et~al.(2026)Rajaraman, Huang, Dudik, Schapire, Foster, and
  Krishnamurthy]{rajaraman2026autocurriculum}
Nived Rajaraman, Audrey Huang, Miro Dudik, Rob Schapire, Dylan Foster, and
  Akshay Krishnamurthy.
\newblock Learning to reason with curriculum {I}: Provable benefits of
  autocurriculum.
\newblock In \emph{Proceedings of Thirty Ninth Conference on Learning Theory},
  volume 336 of \emph{Proceedings of Machine Learning Research}, pp.\
  5518--5555. PMLR, 2026.
\newblock URL \url{https://proceedings.mlr.press/v336/rajaraman26a.html}.

\bibitem[Schmidt-Hieber(2020)]{schmidt2020nonparametric}
Johannes Schmidt-Hieber.
\newblock Nonparametric regression using deep neural networks with {R}e{LU}
  activation function.
\newblock \emph{The Annals of Statistics}, 48\penalty0 (4):\penalty0
  1875--1897, 2020.

\bibitem[Schulman et~al.(2015)Schulman, Levine, Abbeel, Jordan, and
  Moritz]{schulman2015trust}
John Schulman, Sergey Levine, Pieter Abbeel, Michael Jordan, and Philipp
  Moritz.
\newblock Trust region policy optimization.
\newblock In \emph{International Conference on Machine Learning}, pp.\
  1889--1897, 2015.

\bibitem[Schulman et~al.(2017)Schulman, Wolski, Dhariwal, Radford, and
  Klimov]{schulman2017proximal}
John Schulman, Filip Wolski, Prafulla Dhariwal, Alec Radford, and Oleg Klimov.
\newblock Proximal policy optimization algorithms.
\newblock \emph{arXiv preprint arXiv:1707.06347}, 2017.

\bibitem[Setlur et~al.(2024)Setlur, Nagpal, Fisch, Geng, Eisenstein, Agarwal,
  Agarwal, Berant, and Kumar]{setlur2024rewarding}
Amrith Setlur, Chirag Nagpal, Adam Fisch, Xinyang Geng, Jacob Eisenstein,
  Rishabh Agarwal, Alekh Agarwal, Jonathan Berant, and Aviral Kumar.
\newblock Rewarding progress: Scaling automated process verifiers for {LLM}
  reasoning.
\newblock \emph{arXiv preprint arXiv:2410.08146}, 2024.

\bibitem[Shao et~al.(2024)Shao, Wang, Zhu, Xu, Song, Bi, Zhang, Zhang, Li, Wu,
  and Guo]{shao2024deepseekmath}
Zhihong Shao, Peiyi Wang, Qihao Zhu, Runxin Xu, Junxiao Song, Xiao Bi, Haowei
  Zhang, Mingchuan Zhang, Y.~K. Li, Y.~Wu, and Daya Guo.
\newblock {DeepSeekMath}: Pushing the limits of mathematical reasoning in open
  language models.
\newblock \emph{arXiv preprint arXiv:2402.03300}, 2024.

\bibitem[Shao et~al.(2025)Shao, Luo, Lu, Ren, Hu, Ye, Gou, Ma, and
  Zhang]{shao2025deepseekmathv2}
Zhihong Shao, Yuxiang Luo, Chengda Lu, Z.~Z. Ren, Jiewen Hu, Tian Ye, Zhibin
  Gou, Shirong Ma, and Xiaokang Zhang.
\newblock {DeepSeekMath-V2}: Towards self-verifiable mathematical reasoning.
\newblock \emph{arXiv preprint arXiv:2511.22570}, 2025.

\bibitem[Suzuki(2019)]{suzuki2019adaptivity}
Taiji Suzuki.
\newblock Adaptivity of deep {ReLU} network for learning in {B}esov and mixed
  smooth {B}esov spaces: Optimal rate and curse of dimensionality.
\newblock In \emph{International Conference on Learning Representations}, 2019.

\bibitem[Takakura \& Suzuki(2023)Takakura and
  Suzuki]{takakura2023approximation}
Shokichi Takakura and Taiji Suzuki.
\newblock Approximation and estimation ability of transformers for
  sequence-to-sequence functions with infinite dimensional input.
\newblock In \emph{Proceedings of the 40th International Conference on Machine
  Learning}, volume 202 of \emph{Proceedings of Machine Learning Research},
  pp.\  33416--33447. PMLR, 2023.

\bibitem[Tsybakov(2008)]{tsybakov2008introduction}
Alexandre~B Tsybakov.
\newblock \emph{Introduction to Nonparametric Estimation}.
\newblock Springer, 2008.

\bibitem[Valko et~al.(2013)Valko, Carpentier, and Munos]{valko2013stochastic}
Michal Valko, Alexandra Carpentier, and R{\'e}mi Munos.
\newblock Stochastic simultaneous optimistic optimization.
\newblock In \emph{Proceedings of the 30th International Conference on Machine
  Learning}, volume~28 of \emph{Proceedings of Machine Learning Research}, pp.\
   19--27. PMLR, 2013.

\bibitem[Wang et~al.(2024)Wang, Li, Shao, Xu, Dai, Li, Chen, Wu, and
  Sui]{wang2024mathshepherd}
Peiyi Wang, Lei Li, Zhihong Shao, Runxin Xu, Damai Dai, Yifei Li, Deli Chen,
  Yu~Wu, and Zhifang Sui.
\newblock {Math-Shepherd}: Verify and reinforce {LLMs} step-by-step without
  human annotations.
\newblock In \emph{Proceedings of the 62nd Annual Meeting of the Association
  for Computational Linguistics (Volume 1: Long Papers)}, pp.\  9426--9439,
  2024.

\bibitem[Wang et~al.(2026{\natexlab{a}})Wang, Shen, Sun, Feng, Teng, Dong, Hao,
  and Chen]{wang2026planning}
Siwei Wang, Yifei Shen, Haoran Sun, Shi Feng, Shang-Hua Teng, Li~Dong, Yaru
  Hao, and Wei Chen.
\newblock Benefits and pitfalls of reinforcement learning for language model
  planning: A theoretical perspective.
\newblock In \emph{International Conference on Learning Representations},
  2026{\natexlab{a}}.

\bibitem[Wang et~al.(2026{\natexlab{b}})Wang, Hong, and Wang]{wang2026greedy}
Zichen Wang, Haoyang Hong, and Huazheng Wang.
\newblock When greedy sampling explores: {KL}-regularized contextual bandits
  without eluder-dimension dependence.
\newblock \emph{arXiv preprint arXiv:2609.13564}, 2026{\natexlab{b}}.

\bibitem[Xiong et~al.(2024)Xiong, Dong, Ye, Wang, Zhong,
  {Ji}, {Jiang}, and
  Zhang]{xiong2024iterative}
Wei Xiong, Hanze Dong, Chenlu Ye, Ziqi Wang, Han Zhong,
  {Heng}~{Ji},
  {Nan}~{Jiang}, and Tong Zhang.
\newblock Iterative preference learning from human feedback: Bridging theory
  and practice for {RLHF} under {KL}-constraint.
\newblock In \emph{Proceedings of the 41st International Conference on Machine
  Learning}, volume 235 of \emph{Proceedings of Machine Learning Research}.
  PMLR, 2024.

\bibitem[Yang et~al.(2020)Yang, Jin, Wang, Wang, and Jordan]{yang2020provably}
Zhuoran Yang, Chi Jin, Zhaoran Wang, Mengdi Wang, and Michael~I. Jordan.
\newblock Provably efficient reinforcement learning with kernel and neural
  function approximations.
\newblock In \emph{Advances in Neural Information Processing Systems},
  volume~33, 2020.

\bibitem[Yarotsky(2017)]{yarotsky2017error}
Dmitry Yarotsky.
\newblock Error bounds for approximations with deep {ReLU} networks.
\newblock \emph{Neural Networks}, 94:\penalty0 103--114, 2017.

\bibitem[Zhao et~al.(2025)Zhao, Ye, Xiong, Gu, and Zhang]{zhao2025logarithmic}
Heyang Zhao, Chenlu Ye, Wei Xiong, Quanquan Gu, and Tong Zhang.
\newblock Logarithmic regret for online {KL}-regularized reinforcement
  learning.
\newblock In \emph{Proceedings of the 42nd International Conference on Machine
  Learning}, volume 267 of \emph{Proceedings of Machine Learning Research},
  pp.\  77864--77884. PMLR, 2025.

\bibitem[Ziegler et~al.(2019)Ziegler, Stiennon, Wu, Brown, Radford, Amodei,
  Christiano, and Irving]{ziegler2019fine}
Daniel~M Ziegler, Nisan Stiennon, Jeffrey Wu, Tom~B Brown, Alec Radford, Dario
  Amodei, Paul Christiano, and Geoffrey Irving.
\newblock Fine-tuning language models from human preferences.
\newblock \emph{arXiv preprint arXiv:1909.08593}, 2019.

\end{thebibliography}
\end{document}